\theoremstyle{definition}
\newtheorem{definition}{Definition}
\newtheorem{theorem}{Theorem}
\newtheorem{corollary}{Corollary}
\newtheorem{lemma}{Lemma}
\newtheorem{proposition}{Proposition}
\newtheorem{assumption}{Assumption}
\theoremstyle{remark}
\newtheorem{remark}{Remark}
\DeclareMathOperator{\E}{\mathbb{E}}
\newcommand{\abs}[1]{\left\lvert #1 \right\rvert}
\newcommand{\norm}[1]{\left\lVert #1 \right\rVert}
\newcommand{\brk}[1]{\left[ #1 \right]}
\newcommand{\cbrk}[1]{\left\{ #1 \right\}}
\newcommand{\prt}[1]{\left( #1 \right)}
\newcommand{\tr}[1]{\textcolor{red}{#1}}
\newcommand{\cD}{\mathcal{D}}
\newcommand{\cE}{\mathcal{E}}
\newcommand{\cG}{\mathcal{G}}
\newcommand{\cN}{\mathcal{N}}
\newcommand{\cS}{\mathcal{S}}
\newcommand{\cA}{\mathcal{A}}
\newcommand{\cO}{\mathcal{O}}
\newcommand{\sG}{\mathscr{G}}
\newcommand{\sN}{\mathscr{N}}
\newcommand{\sE}{\mathscr{E}}
\newcommand{\bE}{\mathbb{E}}
\newcommand{\bR}{\mathbb{R}}
\newcommand{\bs}{\mathbf{s}}
\newcommand{\ba}{\mathbf{a}}
\newcommand{\bc}{\mathbf{c}}
\newcommand{\bz}{\mathbf{z}}
\newcommand{\bo}{\boldsymbol{\omega}}
\newcommand{\calG}{\mathcal{G}}
\newcommand{\calV}{\mathcal{V}}
\title{Causality Meets Locality: Provably Generalizable and Scalable Policy Learning for Networked Systems}
\author{
  \textbf{Hao Liang}\textsuperscript{\textbf{1},}\thanks{Equal contribution.}\thanks{Correspondence to: Hao Liang (\texttt{hao.liang@kcl.ac.uk}).}
  ,
  \textbf{Shuqing Shi}\textsuperscript{\textbf{1},}\footnotemark[1],
  \textbf{Yudi Zhang}\textsuperscript{\textbf{2}},
  \textbf{Biwei Huang}\textsuperscript{\textbf{3}},
  \textbf{Yali Du}\textsuperscript{\textbf{1,4}}\\[1ex]
  \textsuperscript{1}King's College London \quad
  \textsuperscript{2}Eindhoven University of Technology \\
  \textsuperscript{3}University of California, San Diego \quad
  \textsuperscript{4}The Alan Turing Institute
}
\begin{document}

\maketitle

\begin{abstract} 
Large‑scale networked systems, such as traffic, power, and wireless grids, challenge reinforcement‑learning agents with both scale and environment shifts. To address these challenges, we propose \texttt{GSAC} (\textbf{G}eneralizable and \textbf{S}calable \textbf{A}ctor‑\textbf{C}ritic), a framework that couples  causal representation learning with meta actor‑critic learning to achieve both scalability and domain generalization. Each agent first learns a sparse local causal mask that provably identifies the minimal neighborhood variables influencing its dynamics, yielding exponentially tight approximately compact representations (ACRs) of state and domain factors. These ACRs bound the error of truncating value functions to $\kappa$-hop neighborhoods, enabling efficient learning on graphs. A meta actor‑critic then trains a shared policy across multiple source domains while conditioning on the compact domain factors; at test time, a few trajectories suffice to estimate the new domain factor and deploy the adapted policy. We establish finite‑sample guarantees on causal recovery, actor-critic convergence, and adaptation gap, and show 
that \texttt{GSAC} adapts rapidly and significantly outperforms learning-from-scratch and conventional adaptation baselines.

\end{abstract}

\section{Introduction}
\label{sec:intro}

Large-scale networked systems, such as traffic networks \citep{zhang2023global}, power grids \citep{chen2022reinforcement}, and wireless communication systems \citep{bi2016wireless,zocca2019temporal}, present significant challenges for reinforcement learning (RL) due to their massive scale, sparse local interactions, and structural heterogeneity. These characteristics impose two fundamental difficulties: scalability and generalizability. On one hand, the joint state-action space grows exponentially with the number of agents, making conventional RL algorithms \citep{sutton1998reinforcement} computationally intractable. On the other hand, real-world networked systems often experience environment shifts and structural changes, necessitating algorithms that can generalize and adapt efficiently across different environments. Therefore, a natural question arises:
\begin{center}
\textit{Is it feasible to design a provably generalizable and scalable MARL algorithm for networked system?}
\end{center}
While this question has attracted increasing attention in single-agent RL domain generalization literature \citep{zhang2021bisimulation,huang2022adar,ding2022generalizable}, its resolution remains open in the multi-agent reinforcement learning (MARL) context, especially for learning in networked system. In this work, we provide an affirmative answer by developing a causality-inspired framework, \texttt{GSAC} (\textbf{G}eneralizable and \textbf{S}calable \textbf{A}ctor‑\textbf{C}ritic), which couples causal representation learning with meta actor‑critic learning to achieve both scalability and generalizability. We summarize our main contributions as follows.

\begin{itemize}[leftmargin=*]
    \item We establish \emph{structural identifiability} in networked MARL, providing the first sample complexity results of causal mask recovery and domain factor estimation.
    \item We introduce efficient algorithms to construct \emph{approximately compact representations} (ACRs) of states and domain factors. ACR improves both scalability, by significantly reducing the input dimensionality required for learning and computation, and generalizability, by isolating the minimal and most informative components. This approach may be of independent interest.
    \item We propose a meta actor-critic algorithm, which jointly learns scalable localized policies across multiple source domains, conditioning on the compact domain factors to generalize effectively.
    \item We provide rigorous theoretical guarantees on finite-sample convergence and adaptation gap, and empirically validate our method on two benchmarks demonstrating rapid adaptation and superior performance over learning-from-scratch and conventional adaptation baselines.
\end{itemize}


\subsection{Related works}
\paragraph{Networked MARL with guarantees.}  
Our work is closely related to the line of research on \emph{MARL in networked systems} \cite{qu2020scalable, lin2021multi, Qu2022}. These studies leverage local interactions to enable scalability, proposing decentralized policy optimization algorithms that learn local policies for each agent with finite-time convergence guarantees. In particular, \cite{Qu2022} provides the first provably efficient MARL framework for networked systems under the discounted reward setting. The works in \cite{qu2020scalable} and \cite{lin2021multi} extend this framework to the average-reward setting and to stochastic and non-local network structures, respectively. However, to the best of our knowledge, no existing methodology addresses both the design and theoretical analysis of policies that are simultaneously generalizable across domains and scalable in networked MARL. Our work fills this gap by introducing a principled framework that achieves provable generalization and scalability via causal representation learning and domain-conditioned policy optimization. \vspace{-2ex}
\paragraph{Domain generalization and adaptation in RL.}  
RL agents often encounter \emph{environmental shifts} between training and deployment, prompting recent efforts to improve generalization and adaptation.  
\citep{huang2022adar} proposes learning factored representations along with individual change factors across domains, while \citep{factored_adaption_huang2022} extends this approach to handle non-stationary environments. \citep{varing_causal_structure, wang2022causal} aim to enhance generalization to unseen states by eliminating redundant dependencies between state and action variables in causal dynamics models. Beyond causal approaches, a substantial body of work focuses on learning domain-invariant representations without causal modeling, such as bisimulation metrics \citep{zhang2021bisimulation} that preserve decision-relevant structure while filtering out nuisance features. Causal representation learning has also been applied to model goal-conditioned transitions \citep{ding2022generalizable}, offering theoretical guarantees via structure-aware meta-learning \citep{huang2022adar}, and credit assignment~\citep{zhang2023interpretable,wangmacca}. However, these works focus solely on single-agent settings or multi-agent settings with a limited number of agents. In contrast, we provide the first sample complexity guarantees for structural identifiability in networked MARL and introduce ACRs, a novel mechanism that enables both scalable learning and provable generalization across domains.

A detailed discussion of related work is provided in Appendix \ref{app:rel}.


\section{Preliminaries}
\label{sec:pre}
For clarity, we provide a complete table of notation in Appendix \ref{app:tab}.
\vspace{-2ex}
\paragraph{Networked MARL.}
We consider networked MARL represented as a graph $\sG = (\sN, \sE)$, where $\sN = {1, \dots, n}$ denotes the set of agents and $\sE \subseteq \sN \times \sN$ encodes the interaction edges. Each agent \(i \in \sN\) observes a local state $\bs_i \in \mathcal{S}_i$ and selects an action $\ba_i \in \mathcal{A}_i$, forming the global state $\bs = (\bs_1, \dots, \bs_n) \in \cS := \cS_1 \times \cdots \times \cS_n$ and the joint action $\ba = (\ba_1, \dots, \ba_n) \in \cA := \cA_1 \times \cdots \times \cA_n$. At each time step $t$, the system evolves according to the following decentralized transition dynamics:
\begin{equation}
\label{eqt:dyn}
P(\bs(t+1) \mid \bs(t), \ba(t)) = \prod_{i=1}^n P_i\big(\bs_i(t+1) \mid \bs_{\cN_i}(t), \ba_i(t)\big),
\end{equation}
where $\cN_i \subseteq \sN$ denotes the set of neighbors of agent $i$ in the interaction graph, including $i$ itself, and $\bs_{\cN_i}(t)$ collects the states of agents in $\cN_i$ at time $t$. Each agent $i$ adopts a localized policy $\pi_i^{\theta_i}$, parameterized by $\theta_i \in \Theta_i$, which specifies a distribution over local actions conditioned on its local neighborhood state: $\pi_i(\ba_i \mid \bs_{\cN_i})$. Agents act independently according to their respective policies. We write $\theta := (\theta_1, \dots, \theta_n)$ to denote the tuple of all local policy parameters, and define the joint policy as $\pi^\theta(\ba \mid \bs) := \prod_{i=1}^n \pi_i^{\theta_i}(\ba_i \mid \bs_{\cN_i})$. We only consider localized policy and use $\theta$ and $\pi$ interchangeably throughout the paper when referring to policies. 

Each agent receives a local reward $r_i(\bs_i, \ba_i)$ depending on its local state and action, and the global reward is defined as the average across all agents: $r(\bs, \ba) := \frac{1}{n} \sum_{i=1}^n r_i(\bs_i, \ba_i).$ The goal is to learn a set of localized policies $\theta$ that maximize the expected discounted sum of global rewards, starting from an initial state distribution $\rho_0$:
\begin{equation}
\max_{\theta \in \Theta} J(\theta) := \mathbb{E}_{\bs \sim \rho_0} \mathbb{E}_{\substack{\ba(t) \sim \pi^\theta(\cdot \mid \bs(t)), \ \bs(t+1) \sim P(\cdot \mid \bs(t), \ba(t))}}
\left[
\sum_{t=0}^\infty \gamma^t r(\bs(t), \ba(t)) \Big| \bs(0) = \bs
\right].
\end{equation}
For clarity, we assume the reward function is known; however, our framework and analysis readily extend to the setting with unknown rewards.

\paragraph{Truncation as efficient approximation in networked MARL}

A central challenge in applying reinforcement learning to networked systems is the curse of dimensionality: while each agent’s local state and action spaces are relatively small, the global state and action spaces grow exponentially with the number of agents $n$. This renders standard RL methods computationally intractable at scale. \cite{Qu2022} exploit the local interaction structure and demonstrate that the $Q$-function exhibits exponential decay with respect to graph distance. This property enables a principled truncation of the $Q$-function for scalable approximation. Specifically, for a state-action pair $(\bs, \ba)$ and a  policy $\pi$:
\begin{equation}
\label{eat:def_Q}
    \begin{aligned}
        &Q^{\pi}(\bs, \ba) 
:= \mathbb{E}_{\ba(t) \sim \pi^{\theta}(\cdot \mid \bs(t))} \left[
\sum_{t=0}^{\infty} \gamma^t r(\bs(t), \ba(t)) \,\Big|\, \bs(0) = \bs,\, \ba(0) = \ba \right] \\
&\quad= \frac{1}{n} \sum_{i=1}^n \mathbb{E}_{\ba(t) \sim \pi^{\theta}(\cdot \mid \bs(t))} \left[
\sum_{t=0}^{\infty} \gamma^t r_i(\bs_i(t), \ba_i(t)) \,\Big|\, \bs(0) = \bs,\, \ba(0) = \ba
\right]
:= \frac{1}{n} \sum_{i=1}^n Q_i^{\pi}(\bs, \ba).
    \end{aligned}
\end{equation}

For an integer $\kappa \geq 0$, let $\cN_i^\kappa$ denote the $\kappa$-hop neighborhood of agent $i$, and define $\cN_{-i}^\kappa := \sN \setminus \cN_i^\kappa$ as the set of agents outside this neighborhood. We write the global state and action as $\bs = (\bs_{\cN_i^\kappa}, \bs_{\cN_{-i}^\kappa})$ and $\ba = (\ba_{\cN_i^\kappa}, \ba_{\cN_{-i}^\kappa})$, respectively. \cite{Qu2022} show that for any  $\pi$, agent $i \in \sN$, and any tuples $\bs_{\cN_i^\kappa}, \bs_{\cN_{-i}^\kappa}, \bs_{\cN_{-i}^\kappa}^{\prime}, \ba_{\cN_i^\kappa}, \ba_{\cN_{-i}^\kappa}, \ba_{\cN_{-i}^\kappa}^{\prime}, Q_i^\pi$ satisfies the exponential decay property:
\begin{equation}
\label{eqt:trun_er}
\left| Q_i^\pi(\bs_{\cN_i^\kappa}, \bs_{\cN_{-i}^\kappa}, \ba_{\cN_i^\kappa}, \ba_{\cN_{-i}^\kappa})
- Q_i^\pi(\bs_{\cN_i^\kappa}, \bs_{\cN_{-i}^\kappa}^{\prime}, \ba_{\cN_i^\kappa}, \ba_{\cN_{-i}^\kappa}^{\prime}) \right| \leq \frac{\bar{r}}{1-\gamma} \gamma^{\kappa+1},
\end{equation}
where $\bar{r}$ is the upper bound on the reward functions. Thus the influence of distant agents on $Q_i^\pi$ diminishes rapidly with graph distance. This motivates the use of a truncated $Q$-function:
\begin{equation}
\label{eqt:trun}
   \hat{Q}_i^\pi\left(s_{\cN_i^\kappa}, \ba_{\cN_i^\kappa}\right):= Q_i^\pi\left(\bs_{\cN_i^\kappa}, \bar{\bs}_{\cN_{-i}^\kappa}, \ba_{\cN_i^\kappa}, \bar{\ba}_{\cN_{-i}^\kappa}\right), 
\end{equation}
where $\bar{\bs}_{\cN_{-i}^\kappa}$ and $\bar{\ba}_{\cN_{-i}^\kappa}$ are fixed (and arbitrary) placeholders for the unobserved components. Due to exponential decay, the truncated $Q$-function approximates the true $Q$-function with bounded error: $\sup _{(\bs, \ba) \in \mathcal{S} \times \mathcal{A}}\left|\hat{Q}_i^\pi\left(s_{\cN_i^\kappa}, a_{\cN_i^\kappa}\right)-Q_i^\pi(\bs, \ba)\right| \leq \frac{\bar{r}}{1-\gamma} \gamma^{\kappa+1}$. Furthermore, this truncated approximation can be directly used in actor-critic updates, enabling scalable policy gradient estimation. The approximation error in the truncated policy gradient is also exponentially small\footnote{See Appendix~\ref{app:bg} for a detailed review of key concepts in networked MARL.}. Crucially, the truncation technique has significantly reduced input dimensionality, making it much more efficient to compute, store, and optimize in large-scale networked settings.

\paragraph{Networked MARL under domain generalization}

The standard formulation of networked MARL, as reviewed above, assumes a fixed environment or domain. We extend this framework to incorporate domain generalization, where the environment may shift between training and deployment. Each environment \(\cE(\bo)\) is characterized by its own transition dynamics (see Equation~\ref{eqt:model}), and is uniquely specified by a latent domain factor \(\bo = (\bo_i)_{i \in \sN} \in \Omega\), where \(\bo_i\) encodes the domain-specific dynamics for agent $i$. To model structured variation across environments, we define the local generative process for each agent $i$. Let \(\bs_i = (s_{i,1}, \dots, s_{i,d_i^{\bs}})\), where \(d_i^{\bs}\) is its dimensionality. The state and reward for agent $i$ evolve as:
\begin{equation}
\label{eqt:model}
\begin{aligned}
s_{i,j}(t+1) & =f_{i,j}\left(\mathbf{c}_{\cN_i,j}^{\mathbf{s} \rightarrow \mathbf{s}} \odot \mathbf{s}_{\cN_i}(t), \bc_{i,j}^{\ba \rightarrow \mathbf{s}} \odot \ba_i(t), \mathbf{c}_{i,j}^{\boldsymbol{\omega} \rightarrow \mathbf{s}} \odot \boldsymbol{\omega}_{i}, \epsilon_{i,j}^{\bs}(t)\right), 
\end{aligned}
\end{equation}
where $\odot$ denotes element-wise multiplication, and $\epsilon$ denotes i.i.d. stochastic noise. The binary vectors \(\bc^{\cdot \rightarrow \cdot}\) are causal masks, indicating structural dependencies between variables. Crucially, the functions $f_{i,j}$ and the causal masks \(\bc^{\cdot \rightarrow \cdot}\) are shared across all environments, and only  \(\bo_i\) varies. This decomposition allows us to disentangle invariant causal structure from domain-specific variations.

We consider a training setup with $M$ source domains \(\langle \cE_1, \dots, \cE_M \rangle\) and a target domain $\cE_{M+1}$, each drawn i.i.d. from an unknown domain distribution \(\bo_m \sim \cD\) for $m=1,\cdots, M+1$. The goal is to learn networked MARL policies from the source domains that generalize effectively to new, unseen target domains, using as little adaptation data as possible.

\section{Approximately compact representation}
\label{sec:acr}

In networked MARL, each agent’s truncated value function and localized policy are given by $\hat{Q}_i^{\pi}: \cS_{\cN^{\kappa}_i} \times \cA_{\cN^{\kappa}_i} \rightarrow \bR$ and $\pi_i:\cS_{\cN_i}\rightarrow \Delta(\cA)$, respectively.
The input dimensionality of the value function is $\dim(\cS_{\cN^{\kappa}_i} \times \cA_{\cN^{\kappa}_i})=\sum_{j\in\cN^\kappa_i} (d_j^{\bs} +d^{\ba}_i)$, and that of the policy is $\dim(\cS_{\cN_i})=\sum_{j\in\cN_i} d_j^{\bs}$. While truncation to the $\kappa$-hop neighborhood significantly reduces the input size compared to the global space \(\dim(\cS \times \cA) = \sum_{j \in \sN} (d_j^{\bs} + d_j^{\ba})\), the input dimensionality can still be large, especially when $\kappa$ or node degrees are high. To further mitigate this issue, we propose constructing ACRs by leveraging the identifiable causal masks \(\bc_i\) for each agent $i$, as defined in Equation~\ref{eqt:model}. Specifically, we use the causal structure to extract a reduced set of relevant variables from $\bs_{\cN^{\kappa}_i}$, denoted $\bs^{\circ}_{\cN^{\kappa}_i} \subset \bs_{\cN^{\kappa}_i}$, which preserves the predictive information for the value function. This leads to a strictly smaller input space $\dim(\bs^{\circ}_{\cN^{\kappa}_i})<\dim(\bs_{\cN^{\kappa}_i})$, with approximation error that decays exponentially in $\kappa$.

Moreover, we extend this ACR framework to include the policy inputs and domain-specific factors, enabling efficient transfer across domains. The resulting ACR framework improves both scalability, by significantly reducing the input dimensionality required for learning and computation, and generalizability, by isolating the minimal and most informative domain-specific components relevant to each agent. In the following subsections, we detail the algorithmic construction of ACRs, beginning with the fixed-environment setting. For clarity, we assume the causal masks and domain factors are given; their identification and the estimation of domain factors are addressed in Section~\ref{sec:cs}.

\vspace{-1ex}

\subsection{ACR for fixed environment}

\vspace{-1ex}


We begin by formalizing the notion of an ACR for truncated $Q$-functions.
\begin{definition}[Value ACR]
\label{def:acr-v}
Given the graph $\sG$ encoded by the binary masks $\bc$, for each agent $i$ and its $\kappa$-hop neighborhood $\cN^{\kappa}_i$, we recursively define the $\kappa$-hop ACR $\bs^{\circ}_{\cN^{\kappa}_i}(t)$ as the set of state components $s_{\cN^{\kappa}_i,j}$ satisfying at least one of the following:
    \begin{itemize}
        \item has a direct link to agent $i$'s reward $r_{i}$, i.e., $c^{s \rightarrow r}_{i,j} = 1$, and $\bs^{\circ}_{\cN^{0}_i}:=\cup_{j:c^{s \rightarrow r}_{i,j} = 1} s_{i,j}$.
        \item for $1\leq \kappa'\leq\kappa$: $s_{\cN^{\kappa'}_i,j}(t)$ has a link to  $s_{\cN^{\kappa'-1}_i,l}(t+1)$  such that $s_{\cN^{\kappa'-1}_i,l}(t) \in \bs^{\circ}_{\cN^{\kappa'-1}_i}$. 
    \end{itemize}
\end{definition}
\vspace{-1ex}
\begin{minipage}[t]{0.66\textwidth}
Figure~\ref{fig:acr} illustrates how the proposed ACR framework identifies the key state components influencing an agent’s reward through local causal structure. At each time step, the agent’s reward \( r_i \) depends directly on a small subset of neighboring state variables at the previous time step 
. These relevant components are recursively traced backward along causal edges, yielding a compact representation \( s^{\circ}_{\mathcal{N}^\kappa_i} \) within the \(\kappa\)-hop neighborhood. The figure also visualizes how the cumulative discounted rewards \( r_i(t), \gamma r_i(t+1), \ldots, \gamma^\kappa r_i(t+\kappa) \) are aggregated, emphasizing that only the identified influential components contribute significantly to the value function. This provides the structural foundation for the exponentially tight approximation guarantees in Proposition~\ref{prop:app_acr-v}.
Based on this, we define the approximately compact $Q$-function as:
\begin{equation}
   \tilde{Q}_i^\pi\left(\bs^{\circ}_{\cN^{\kappa}_i}, \ba_{\cN_i^\kappa}\right):= \hat{Q}_i^\pi\left(\bs^{\circ}_{\cN^{\kappa}_i},\bar{\bs}_{\cN_i^\kappa}/\bs^{\circ}_{\cN^{\kappa}_i},  \ba_{\cN_i^\kappa}\right),
\end{equation}
\end{minipage}\hfill
\begin{minipage}[t]{0.32\textwidth}
\centering
\vspace*{-\baselineskip}
\includegraphics[width=\linewidth]{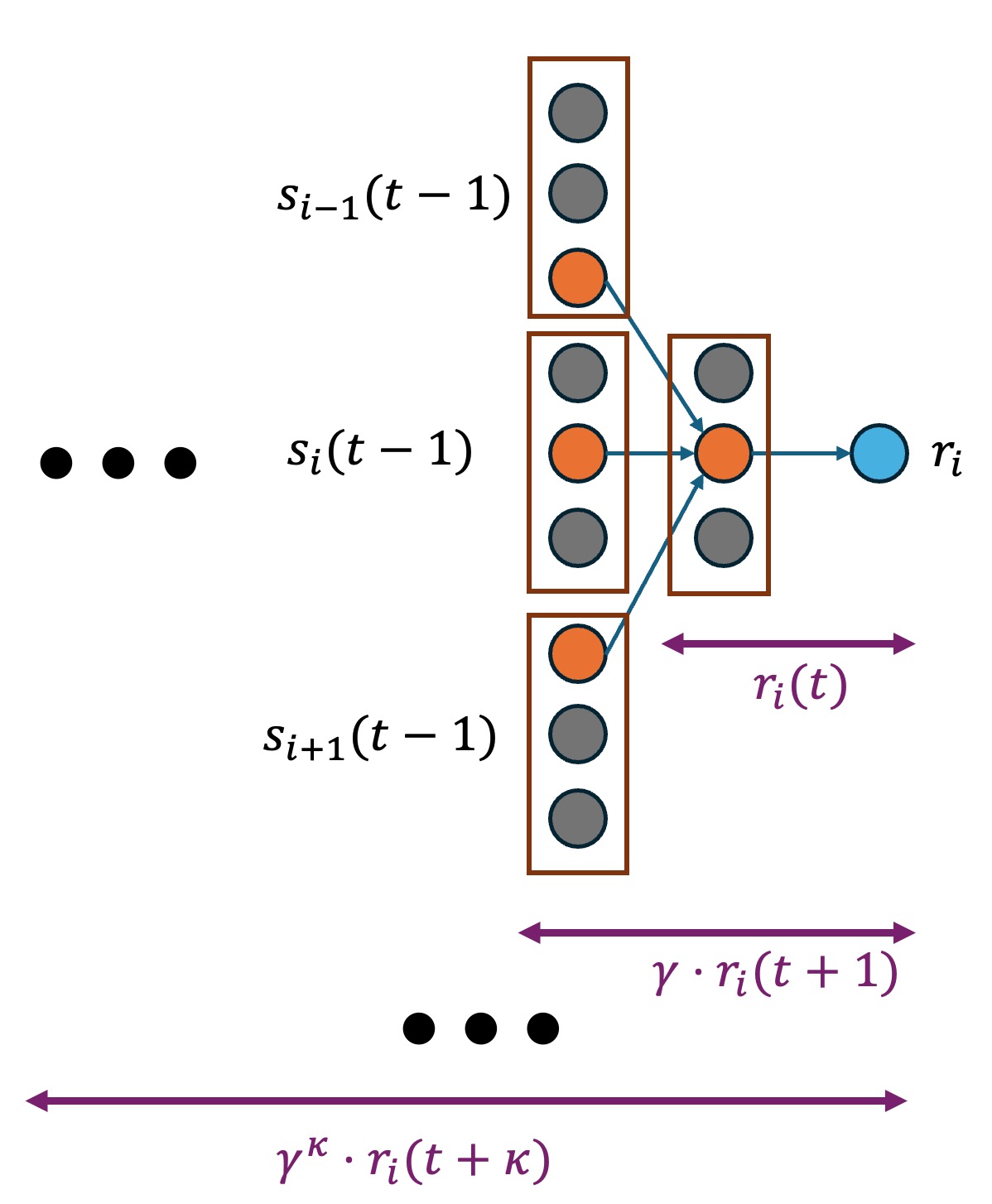}
    \captionof{figure}{Illustration of  ACR.}
\label{fig:acr}
\end{minipage}
where $\bar{s}_{\cN_i^\kappa}/\bs^{\circ}_{\cN^{\kappa}_i}$ denotes fixed, arbitrary values for the irrelevant components. The approximation error of $\tilde{Q}_i^\pi$ compared to the full $Q$-function is exponentially small in $\kappa$, as shown below.
\begin{proposition}[Approximation error of value ACR]
\label{prop:app_acr-v}
For any agent $i$, policy $\pi$, and  $\kappa\ge0$, the approximation error between $\tilde{Q}_i^\pi$ and $Q_i^\pi$ satisfies:
\begin{align*}
\sup_{(\bs,\ba)\in\cS\times\cA}\abs{\tilde{Q}_i^\pi\left(\bs^{\circ}_{\cN^{\kappa}_i}, \ba_{\cN_i^\kappa}\right) - Q_i^\pi\left(\bs,  \ba\right)} \leq \frac{2\bar{r}}{1-\gamma}\gamma^{\kappa+1}.
\end{align*}
\end{proposition}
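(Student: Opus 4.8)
The plan is to reach the stated bound by a triangle inequality through the neighborhood‑truncated value function $\hat{Q}_i^\pi$ of \eqref{eqt:trun}, using the already‑established exponential‑decay property \eqref{eqt:trun_er} for one half and an ACR‑analogue of it for the other. Concretely, for fixed $(\bs,\ba)$ I would bound $\abs{\tilde{Q}_i^\pi(\bs^{\circ}_{\cN_i^\kappa},\ba_{\cN_i^\kappa})-Q_i^\pi(\bs,\ba)}$ above by $\abs{\tilde{Q}_i^\pi(\bs^{\circ}_{\cN_i^\kappa},\ba_{\cN_i^\kappa})-\hat{Q}_i^\pi(\bs_{\cN_i^\kappa},\ba_{\cN_i^\kappa})}+\abs{\hat{Q}_i^\pi(\bs_{\cN_i^\kappa},\ba_{\cN_i^\kappa})-Q_i^\pi(\bs,\ba)}$. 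The second term is at most $\tfrac{\bar{r}}{1-\gamma}\gamma^{\kappa+1}$ directly from \eqref{eqt:trun_er}, so it suffices to show the first term obeys the same bound; taking $\sup_{(\bs,\ba)}$ then proves the proposition.

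For the first term, observe that $\hat{Q}_i^\pi$ and $\tilde{Q}_i^\pi$ are the single functional $Q_i^\pi$ evaluated at two initial conditions that (i) carry the same placeholders $\bar{\bs}_{\cN_{-i}^\kappa},\bar{\ba}_{\cN_{-i}^\kappa}$ outside the $\kappa$‑hop ball, (ii) carry the same actions $\ba_{\cN_i^\kappa}$ and the same true ACR coordinates $\bs^{\circ}_{\cN_i^\kappa}$, and (iii) differ only on the residual coordinates $\bs_{\cN_i^\kappa}\setminus\bs^{\circ}_{\cN_i^\kappa}$. Writing the per‑agent, per‑step decomposition $Q_i^\pi=\sum_{\tau\ge0}\gamma^\tau R_i^\tau$ with $R_i^\tau(\bs,\ba):=\E[r_i(\bs_i(\tau),\ba_i(\tau))\mid \bs(0)=\bs,\ \ba(0)=\ba]$, as in \eqref{eat:def_Q}, the task reduces to the structural claim: \emph{for every $\tau\le\kappa$, $R_i^\tau$ does not depend on the residual coordinates}. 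Granting it, $\hat{Q}_i^\pi$ and $\tilde{Q}_i^\pi$ agree on their first $\kappa+1$ summands and differ only through the tails $\sum_{\tau\ge\kappa+1}\gamma^\tau R_i^\tau$; since $r_i\in[0,\bar{r}]$, each such tail lies in $[0,\tfrac{\bar{r}}{1-\gamma}\gamma^{\kappa+1}]$, so the two differ by at most $\tfrac{\bar{r}}{1-\gamma}\gamma^{\kappa+1}$ --- the same bookkeeping that produces \eqref{eqt:trun_er}.

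To prove the structural claim I would unroll the generative model \eqref{eqt:model} backward in time along the causal masks, tracking which coordinates of $\bs(0)$ can reach $r_i(\tau)$. The reward $r_i$ reads only the $\bc^{\bs\to r}_i$‑flagged coordinates of $\bs_i(\tau)$, which form $\bs^{\circ}_{\cN_i^0}$ at time $\tau$ by the first bullet of Definition~\ref{def:acr-v}, together with $\ba_i(\tau)$. Each such coordinate $s_{i,l}(\tau)$ is, by \eqref{eqt:model}, a function of its $\bc^{\bs\to\bs}$‑parents inside $\bs_{\cN_i}(\tau-1)$, of $\ba_i(\tau-1)$, and of noise, and the second bullet of Definition~\ref{def:acr-v} identifies those parents as $\bs^{\circ}_{\cN_i^1}$ at time $\tau-1$. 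Iterating this recursion $\tau$ times shows that the initial coordinates feeding $r_i(\tau)$ through the state transitions lie in $\bs^{\circ}_{\cN_i^\tau}\subseteq\bs^{\circ}_{\cN_i^\kappa}$ for $\tau\le\kappa$ (the ACR sets being monotone in the hop index by construction), while the intermediate actions stay confined to $\cN_i^\kappa$ by the locality argument that already underlies \eqref{eqt:trun_er}. Hence no residual coordinate influences $R_i^\tau$ for $\tau\le\kappa$, establishing the claim.

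The step I expect to be the crux is this backward unrolling: one must verify that introducing the intermediate sampled actions $\ba_j(t)\sim\pi_j(\cdot\mid\bs_{\cN_j}(t))$ does not drag residual coordinates into $R_i^\tau$ within the first $\kappa$ transitions --- equivalently, that $\bs^{\circ}_{\cN_i^\kappa}$ is closed under the dependence generated jointly by the transition masks $\bc^{\bs\to\bs},\bc^{\ba\to\bs}$, the policy input sets, and the reward mask $\bc^{\bs\to r}$. This closure is exactly the property that the recursive definition of $\bs^{\circ}_{\cN_i^\kappa}$ is built to guarantee; making the induction fully rigorous (and, where the localized policies read coordinates outside the value ACR, composing it with the policy‑ACR reduction developed in Section~\ref{sec:cs}) is where the substantive work lies, after which the discounted‑tail accounting is routine.
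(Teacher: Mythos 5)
Your proposal is correct and follows essentially the same route as the paper: both arguments rest on showing that, for $t\le\kappa$, the reward-relevant marginals $R_i^t$ are unaffected by the non-ACR coordinates (by unrolling the recursive causal-mask structure of Definition~\ref{def:acr-v}) and then bounding the discounted tail $\sum_{t\ge\kappa+1}\gamma^t \bar{r}$ by $\frac{\bar{r}}{1-\gamma}\gamma^{\kappa+1}$. The only differences are bookkeeping and candor: you pass through $\hat{Q}_i^\pi$ with a triangle inequality and pay the tail bound twice (giving exactly the stated $\frac{2\bar{r}}{1-\gamma}\gamma^{\kappa+1}$), whereas the paper perturbs the outside-$\kappa$-hop and the residual in-neighborhood coordinates simultaneously and obtains $\frac{\bar{r}}{1-\gamma}\gamma^{\kappa+1}$ in a single step (so the constant $2$ in the statement is slack), and the crux you explicitly flag—that intermediate actions are drawn from localized policies which may read non-ACR coordinates—is likewise asserted rather than elaborated in the paper's proof ("by the definition of ACR, $\rho_{t,i}=\rho'_{t,i}$"), so your sketch is at the same level of rigor on that point.
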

The proof is deferred to Appendix \ref{app:pf_acr}. We now define ACRs for policies. 
\begin{definition}[Policy ACR]
\label{def:pi_acr}
Given the graph $\sG$ encoded by binary causal masks $\bc$, the ACR for agent $i$’s policy over its neighborhood $\cN_i$ is defined recursively using Algorithm~\ref{alg:acr-pi-trun} (see Appendix~\ref{app:alg}).
\end{definition}
Algorithm~\ref{alg:acr-pi-trun} takes as input the causal masks $\bc_{\cN^\kappa_i}$ and the local states $\bs_{\cN_i}$, and outputs a compact representation $\bs^{\circ}_{\cN_i} \subset \bs_{\cN_i}$ such that $\dim(\bs^{\circ}_{\cN_i})<\dim(\bs_{\cN_i})$. We then define the \emph{approximately compact  policy}  $\tilde{\pi}_i: \mathcal{S}^{\circ}_{\mathcal{N}_i} \mapsto \Delta_{\mathcal{A}_i}$ by $\tilde{\pi}_i\left(\cdot \mid \bs^{\circ}_{\cN_i}\right):=\pi_i\left(\cdot \mid \bs^{\circ}_{\cN_i},\bar{s}_{\cN_i}/\bs^{\circ}_{\cN_i}\right)$, where $\bar{s}_{\cN_i}/\bs^{\circ}_{\cN_i}$ are fixed values for the non-influential components.  By combining the value and policy ACRs, we define the doubly compact $Q$-function $\tilde{Q}_i^{\tilde{\pi}}\left(\bs^{\circ}_{\cN^{\kappa}_i}, \ba_{\cN_i^\kappa}\right)$, which reduces input complexity while retaining performance guarantees.
\begin{proposition}[Approximation error]
\label{prop:acr-pi-main}
For any policy $\pi$,  let $\tilde{\pi}$ be the corresponding approximately compact policy constructed using  Algorithm \ref{alg:acr-pi-trun}, for any $i$, and $\kappa \ge 0$, we have
\begin{align*}
\sup_{(\bs,\ba)\in\cS\times\cA}\abs{\tilde{Q}_i^{\tilde{\pi}}\left(\bs^{\circ}_{\cN^{\kappa}_i}, \ba_{\cN_i^\kappa}\right) - Q_i^\pi\left(\bs,  \ba\right)} \leq 3\frac{\bar{r}}{1-\gamma}\gamma^{\kappa+1}.
\end{align*}
\end{proposition}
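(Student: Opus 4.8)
The plan is to bound the error in two stages, using Proposition~\ref{prop:app_acr-v} as a black box for the value-truncation part and a standard policy-perturbation argument for the switch from $\pi$ to $\tilde\pi$. Concretely, I would write
\[
\abs{\tilde{Q}_i^{\tilde{\pi}}\left(\bs^{\circ}_{\cN^{\kappa}_i}, \ba_{\cN_i^\kappa}\right) - Q_i^\pi(\bs,\ba)}
\le \abs{\tilde{Q}_i^{\tilde{\pi}}\left(\bs^{\circ}_{\cN^{\kappa}_i}, \ba_{\cN_i^\kappa}\right) - Q_i^{\tilde\pi}(\bs,\ba)}
+ \abs{Q_i^{\tilde\pi}(\bs,\ba) - Q_i^\pi(\bs,\ba)}.
\]
The first term is exactly the value-ACR approximation error for the policy $\tilde\pi$ (a localized policy just like any other), so Proposition~\ref{prop:app_acr-v} gives the bound $\tfrac{2\bar r}{1-\gamma}\gamma^{\kappa+1}$. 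It remains to show the second term, the difference between the true $Q$-functions of $\pi$ and its approximately compact version $\tilde\pi$, is at most $\tfrac{\bar r}{1-\gamma}\gamma^{\kappa+1}$.

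For the second term the key structural fact is that $\tilde\pi$ and $\pi$ differ only in how they treat state components that are non-influential for the policy ACR, and that these components are precisely the ones that the value function depends on only weakly — i.e. they lie outside the part of the $\kappa$-hop (or, for the policy, the relevant) neighborhood that the reward at agent $i$ cares about. More precisely, I would argue that along any trajectory the two policies induce the same action distribution except through coordinates that enter $Q_i$ only via the exponential-decay channel of Equation~\ref{eqt:trun_er}. The cleanest route is a coupling/telescoping argument: write $Q_i^{\tilde\pi} - Q_i^\pi$ as a telescoping sum over time of one-step policy deviations (a performance-difference-lemma style identity), and observe that each deviation happens only at agents whose influence on the cumulative reward of $i$ is discounted by at least $\gamma^{\kappa+1}$; summing the geometric series yields the $\tfrac{\bar r}{1-\gamma}\gamma^{\kappa+1}$ bound. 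Alternatively, if the paper's Algorithm~\ref{alg:acr-pi-trun} is set up so that $\tilde\pi_i$ and $\pi_i$ agree on all components within the relevant sub-neighborhood, one can invoke the exponential-decay property directly on the occupancy measures, much as is done for the truncated policy gradient bound referenced in the preliminaries.

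Summing the two pieces gives $\tfrac{2\bar r}{1-\gamma}\gamma^{\kappa+1} + \tfrac{\bar r}{1-\gamma}\gamma^{\kappa+1} = \tfrac{3\bar r}{1-\gamma}\gamma^{\kappa+1}$, which is the claimed bound, and since the right-hand side is uniform in $(\bs,\ba)$ the supremum is controlled as well.

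The main obstacle I anticipate is making the second step rigorous: unlike the value ACR, where Definition~\ref{def:acr-v} cleanly traces causal edges backward from $r_i$, the policy ACR is defined operationally through Algorithm~\ref{alg:acr-pi-trun}, so I would need to extract from that algorithm the invariant that the discarded policy inputs influence $Q_i^\pi$ only through paths of graph-length exceeding $\kappa$ (or, at worst, contribute an extra additive $\gamma^{\kappa+1}$ term). Establishing that invariant — and correctly handling the fact that changing $\pi$ to $\tilde\pi$ at one agent changes the state distribution at all future times, which then feeds back into every agent's policy — is where the real care is needed; everything else is geometric-series bookkeeping on top of the already-established exponential decay property.
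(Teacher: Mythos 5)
Your top-level decomposition is exactly the paper's: insert $Q_i^{\tilde{\pi}}(\bs,\ba)$ via the triangle inequality, bound the first term by Proposition~\ref{prop:app_acr-v} applied to $\tilde{\pi}$ (giving $\tfrac{2\bar r}{1-\gamma}\gamma^{\kappa+1}$), and bound $\abs{Q_i^{\tilde{\pi}}(\bs,\ba)-Q_i^{\pi}(\bs,\ba)}$ by $\tfrac{\bar r}{1-\gamma}\gamma^{\kappa+1}$. The genuine gap is the second bound, which you leave as a plan and which the paper proves as a separate result (Proposition~\ref{prop:appx_pi-acr}). Your main route --- a performance-difference/telescoping sum of one-step policy deviations --- does not obviously produce the $\gamma^{\kappa+1}$ factor: under $\tilde{\pi}$ \emph{every} agent's action distribution can deviate from $\pi$'s at \emph{every} time step (the placeholders $\bar{\bs}_{\cN_i}/\bs^{\circ}_{\cN_i}$ generally differ from the true values), so a per-step deviation bound only yields an error of order $\tfrac{1}{1-\gamma}$ with no $\kappa$-dependence. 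The $\gamma^{\kappa+1}$ must come from a temporal statement that the discarded policy inputs cannot influence any reward-relevant variable within $\kappa$ steps --- precisely the invariant of Algorithm~\ref{alg:acr-pi-trun} that you acknowledge you have not extracted, and the feedback/compounding issue you flag is left unresolved.

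The paper closes this hole with a different device: it introduces the policy $\tilde{\pi}^{\infty}$ produced by the un-truncated recursion (Algorithm~\ref{alg:acr-pi}), shows $Q_i^{\tilde{\pi}^{\infty}}=Q_i^{\pi}$ \emph{exactly} (Proposition~\ref{prop:comp_pi-acr}, by an induction establishing that the joint law of the reward-relevant components $(\bs^{\circ}_i(t),\ba_i(t))$ is preserved), and then compares $\tilde{\pi}=\tilde{\pi}^{\kappa}$ with $\tilde{\pi}^{\infty}$: since the $\kappa$-step construction retains every component that can affect any reward within $\kappa$ steps, the two policies induce identical state--action distributions for all $t\le\kappa$, so the $Q$-difference lives only on the discounted tail and is at most $\sum_{t\ge\kappa+1}\gamma^{t}\bar r=\tfrac{\bar r}{1-\gamma}\gamma^{\kappa+1}$. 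Your second alternative (agreement on the relevant sub-neighborhood plus exponential decay of occupancy measures) is in the right spirit, but without the intermediate exact-equivalence step and the ``first $\kappa$ steps coincide'' coupling, the argument as written does not close.
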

The proof of Proposition \ref{prop:acr-pi-main} is deferred to Appendix \ref{app:pf_acr}. These results show that identifying ACRs enables dramatic reductions in the state-space dimensionality for each agent, making learning and computation tractable even in large-scale networks. In particular, the effective input dimension satisfies $|\bs^{\circ}_{\cN^\kappa_i}| \ll |\bs_{\cN^\kappa_i}|$. In the next subsection, we extend ACR construction to domain factors, enabling generalization across multiple environments.

\subsection{ACR for efficient domain generalization}
We now extend the ACR framework to support generalization across environments characterized by latent domain-specific factors. Given a policy $\pi$ and environment \(\bo = (\bo_i)_{i \in \sN}\), the \emph{environment-conditioned $Q$-function} is defined as 
\begin{align*}
    Q^{\pi}_i(\bs,\ba,\bo) &:= \mathbb{E}_{\ba(t) \sim \pi(\cdot \mid \bs(t)),\bs(t+1)\sim P_{\bo}(\cdot|\bs(t),\ba(t))}\left[\sum_{t=0}^{\infty} \gamma^t r_i(\bs_i(t), \ba_i(t))  \mid \bs(0)=\bs,\ba(0)=\ba\right].
\end{align*}
We show (Lemma~\ref{lem:exp_decay_ds}) that this $\bo$-conditioned $Q$-function retains the exponential decay property.
This motivates defining a truncated $\bo$-conditioned $Q$-function by fixing irrelevant variables:
\begin{equation}
   \hat{Q}_i^\pi\left(\bs_{\cN_i^\kappa}, \ba_{\cN_i^\kappa}, \bo_{\cN_i^\kappa}\right):= Q_i^\pi\left(\bs_{\cN_i^\kappa}, \bar{\bs}_{\cN_{-i}^\kappa}, \ba_{\cN_i^\kappa}, \bar{\ba}_{\cN_{-i}^\kappa}, \bo_{\cN_i^\kappa}, \bar{\bo}_{\cN^\kappa_{-i}} \right). 
\end{equation}
We consider $\bo$-conditioned policies $\pi=\left(\pi_1, \ldots, \pi_n\right)$ such that each $\pi_i:\cS_{\mathcal{N}_i}\times \Omega_{\mathcal{N}_i}\mapsto \Delta_{\mathcal{A}_i}$, i.e., $\ba_i(t)\sim \pi_i(\cdot|\bs_{\mathcal{N}_i}(t),\bo_{\mathcal{N}_i})$.
To reduce the input space further, we define a domain-factor ACR \(\bo^\circ\) (Definition~\ref{def:acr-v_ds}), which identifies the minimal subset of \(\bo_{\cN_i^\kappa}\) influencing the reward. Using both the value ACR and domain ACR, we define the approximately compact $\bo$-conditioned $Q$-function  $\tilde{Q}_i^\pi\left(\bs^{\circ}_{\cN^{\kappa}_i}, \ba_{\cN_i^\kappa},\bo_{\cN^{\kappa}_i}^{\circ}\right)$ (Definition~\ref{def:ac-om}). Similarly, we define policy ACRs for $\bo$-conditioned policies (Definition~\ref{def:om-acr-pi}), and let $\tilde{\pi}$ denote the corresponding approximately compact policy. We then obtain a fully compact representation $\tilde{Q}_i^{\tilde{\pi}}$ of $Q$-function, which provably approximates the true $Q$-function.  
\begin{proposition}
\label{prop:app_er_ds}
For any policy $\pi$,  
and any $i$, we have
\begin{equation}
\sup_{(\bs,\ba,\bo)\in\cS\times\cA\times\Omega}\abs{\tilde{Q}_i^{\tilde{\pi}}\left(\bs^{\circ}_{\cN^{\kappa}_i}, \ba_{\cN_i^\kappa},\bo_{\cN^\kappa_i}^{\circ }\right) - Q_i^\pi\left(\bs,  \ba,\bo\right)} \leq \frac{3\bar{r}}{1-\gamma}\gamma^{\kappa+1}. 
\end{equation}
\end{proposition}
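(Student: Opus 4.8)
The plan is to control the error by a telescoping decomposition into three contributions, one per compaction step, each bounded by $\frac{\bar{r}}{1-\gamma}\gamma^{\kappa+1}$; this mirrors how Proposition~\ref{prop:app_acr-v} and Proposition~\ref{prop:acr-pi-main} were assembled in the fixed-environment case, now carrying the static domain factors $\bo$ along as additional exogenous inputs. Concretely, fix $(\bs,\ba,\bo)$ and insert the intermediate quantities $\tilde{Q}_i^{\pi}\prt{\bs^{\circ}_{\cN^{\kappa}_i}, \ba_{\cN_i^\kappa}, \bo^{\circ}_{\cN^{\kappa}_i}}$ and $\hat{Q}_i^{\pi}\prt{\bs_{\cN_i^\kappa}, \ba_{\cN_i^\kappa}, \bo_{\cN_i^\kappa}}$, so that by the triangle inequality it suffices to bound (i) the policy-compaction gap $\abs{\tilde{Q}_i^{\tilde{\pi}} - \tilde{Q}_i^{\pi}}$ (same compact inputs, $\pi$ versus $\tilde\pi$), (ii) the state-and-domain compaction gap $\abs{\tilde{Q}_i^{\pi}\prt{\bs^{\circ}_{\cN^\kappa_i}, \ba_{\cN^\kappa_i}, \bo^{\circ}_{\cN^\kappa_i}} - \hat{Q}_i^{\pi}\prt{\bs_{\cN^\kappa_i}, \ba_{\cN^\kappa_i}, \bo_{\cN^\kappa_i}}}$, and (iii) the truncation gap $\abs{\hat{Q}_i^{\pi}\prt{\bs_{\cN^\kappa_i}, \ba_{\cN^\kappa_i}, \bo_{\cN^\kappa_i}} - Q_i^{\pi}\prt{\bs, \ba, \bo}}$, each by $\frac{\bar{r}}{1-\gamma}\gamma^{\kappa+1}$; since the three bounds are uniform in $(\bs,\ba,\bo)$, taking the supremum then yields $\frac{3\bar{r}}{1-\gamma}\gamma^{\kappa+1}$.

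Gap (iii) is immediate from Lemma~\ref{lem:exp_decay_ds}: the $\bo$-conditioned $Q_i^\pi$ still enjoys exponential decay in graph distance, so replacing the states, actions, and domain factors outside $\cN^\kappa_i$ by the fixed placeholders costs at most $\frac{\bar{r}}{1-\gamma}\gamma^{\kappa+1}$, exactly as in \cite{Qu2022}. For gap (ii) I would use the causal-reachability structure that Definitions~\ref{def:acr-v} and~\ref{def:acr-v_ds} build in: every state coordinate in $\bs_{\cN^\kappa_i}\setminus\bs^{\circ}_{\cN^\kappa_i}$ and every domain coordinate in $\bo_{\cN^\kappa_i}\setminus\bo^{\circ}_{\cN^\kappa_i}$ admits no directed path of length $\le\kappa$ reaching agent $i$'s reward node, so perturbing these coordinates leaves the joint law of $\prt{r_i(0),\dots,r_i(\kappa)}$ unchanged; hence the two $Q$-values in (ii) differ only through the discounted tail $\sum_{t>\kappa}\gamma^t r_i(t)$, whose expectation lies in $[0,\frac{\bar r}{1-\gamma}\gamma^{\kappa+1}]$ under either configuration. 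The one genuinely new bookkeeping point here, relative to Proposition~\ref{prop:app_acr-v}, is that the domain ACR is defined to absorb \emph{exactly} those domain factors that feed the retained state ACR, so compacting states and domain factors \emph{together} costs a single $\frac{\bar r}{1-\gamma}\gamma^{\kappa+1}$ rather than two — this is what keeps the final constant at $3$ rather than $4$.

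Gap (i) is where the work lies and, I expect, the main obstacle. The goal is to show that replacing each neighbor's $\bo$-conditioned policy $\pi_j$ by its ACR version $\tilde\pi_j$ (Definition~\ref{def:om-acr-pi}, via Algorithm~\ref{alg:acr-pi-trun}) does not change the law of $\prt{r_i(0),\dots,r_i(\kappa)}$, after which the same discounted-tail estimate closes the bound. Following the inductive scheme sketched for the fixed-environment proof of Proposition~\ref{prop:acr-pi-main}, I would prove by induction on $m=0,1,\dots,\kappa$ that the conditional law of the retained coordinates $\bs^{\circ}_{\cN_i}(t+m)$ given $\big(\bs^{\circ}_{\cN_j}(t)\big)_{j\in\cN^m_i}$, the relevant domain factors, and the actions within $\cN^m_i$ is invariant both under dropping the non-retained state/domain coordinates and under substituting $\tilde\pi$ for $\pi$; the base case $m=0$ is the defining property of the policy ACR, and the inductive step combines the factorized dynamics in Equation~\ref{eqt:model} with the fact that Algorithm~\ref{alg:acr-pi-trun} closes the set of retained variables under "predecessors of retained variables'' without ever leaving the $\kappa$-hop neighborhood. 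The delicate points, which I expect to be the crux, are (a) checking that the compacted policies $\tilde\pi_j$ are simultaneously well defined as functions on a common reduced domain, and (b) verifying that the recursion generating $\bs^{\circ}$ and $\bo^{\circ}$ is stable under a single dynamics step so that the induction never enlarges the conditioning set — this "valid construction'' check is already flagged in the proof sketch of Proposition~\ref{prop:acr-pi-main} and is inherited here, now with $\bo^{\circ}$ propagated alongside $\bs^{\circ}$.
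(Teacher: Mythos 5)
Your proposal is correct and takes essentially the same route as the paper: the paper proves this result by augmenting each agent's state with the static factor $\bo_i$ and invoking the fixed-environment results (the value/domain ACR bound worth $\tfrac{2\bar{r}}{1-\gamma}\gamma^{\kappa+1}$ and the policy-ACR substitution worth $\tfrac{\bar{r}}{1-\gamma}\gamma^{\kappa+1}$), which is exactly your three error sources—truncation, joint state/domain compaction, and policy compaction—combined by the triangle inequality. The only difference is bookkeeping: you perform the policy swap on the compact $Q$ and keep $\pi$ in the ACR/truncation steps, whereas the paper swaps the policy on the full $Q$ and does the compaction under $\tilde{\pi}$, and the inductive invariance argument you outline for gap (i) is precisely the fixed-environment policy-ACR proof that the paper imports wholesale via the augmentation trick.
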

The proof of Proposition \ref{prop:app_er_ds} is deferred to Appendix \ref{app:pf_acr}. Proposition \ref{prop:app_er_ds} shows that it suffices to operate on the key components of the $\kappa$-hop state and latent domain factors, which substantially reduces input dimensionality. As a result, both training and test-time inference become significantly more scalable—without sacrificing theoretical guarantees. In the next section, we present our main algorithm, which leverages these ACRs to achieve generalizable and scalable networked policy learning. For simplicity, we will omit "approximately compact" when referring to the functions and policies $Q$, knowing that all components operate on the identified ACRs.

\section{Generalizable and scalable actor-critic}
\label{sec:gsac}
\subsection{Roadmap}

We propose \texttt{GSAC} (\textbf{G}eneralizable and \textbf{S}calable \textbf{A}ctor-\textbf{C}ritic), a principled framework for scalable and generalizable networked MARL. Our \texttt{GSAC} framework (Algorithm \ref{alg:gsac}) integrates causal discovery, representation learning, and meta actor-critic optimization into a unified pipeline, with each phase supported by theoretical results. Phase 1 (causal discovery and domain factor estimation) is underpinned by Theorem \ref{thm:id} and Propositions \ref{prop:cs_id}-\ref{prop:dom_est}, which establish structural identifiability and sample complexity guarantees for recovering causal masks and latent domain factors. Phase 2 (construction of ACRs) leverages the causal structure to build compact representations of value functions and policies, with bounded approximation errors rigorously characterized by Propositions \ref{prop:app_acr-v}-\ref{prop:app_er_ds}. Phase 3 (meta actor-critic learning) performs scalable policy optimization across multiple source domains, with convergence of the critic and actor updates guaranteed by Theorem \ref{thm:critic_bd} (critic error bound) and Theorem \ref{thm:act_bd} (policy gradient convergence). Finally, Phase 4 (fast adaptation to new domains) exploits the learned meta-policy and compact domain factors to achieve rapid adaptation, where the adaptation performance gap is formally controlled by Theorem \ref{thm:gen_bd}. Together, these results demonstrate that each algorithmic component is theoretically justified and collectively leads to provable scalability and generalization in networked MARL.  Figure \ref{fig:gsac_pipeline} visually illustrates the \texttt{GSAC} pipeline.


\subsection{Algorithm overview}
\texttt{GSAC} (Algorithm \ref{alg:gsac}) consists of four sequential phases:

\textbf{Phase 1: Causal discovery and domain factor estimation.}
In each source environment, agents estimate their local causal masks and latent domain factors to disentangle invariant structure from domain-specific variations; details are deferred to Section \ref{sec:cs}. 

\textbf{Phase 2: Constructing ACRs.} 
Using the recovered causal masks, each agent constructs ACRs for value functions and policies as described in Section~\ref{sec:acr}. These ACRs significantly reduce the input dimensionality while preserving decision-relevant information for the following phases. 

\textbf{Phase 3: Meta-learning via actor-critic optimization.}
Agents are trained on across $M$ source environments by optimizing policies through local actor-critic updates using ACR-based inputs. We provide a detailed description of this procedure in the following Section \ref{sec:meta-ac}. 

\textbf{Phase 4: Fast adaptation to target domain.}
In a new, unseen environment, each agent collects a few trajectories and estimates its domain factor $\hat{\bo}_i^{M+1}$. The learned meta-policy $\pi_i^{\theta(K)}$ is then conditioned on the adapted ACR input $(\bs_{\cN_i}^\circ, \hat{\bo}_{\cN_i}^{M+1})$ for immediate deployment. This process allows efficient generalization without requiring further policy training from scratch.

\subsection{Key idea: meta-learning via actor-critic optimization}
\label{sec:meta-ac}
At each outer iteration $k$, a source domain $m(k)$ is sampled and each agent $i$ roll out trajectories using their current policies $\pi_i^{\theta(k)}$, which are conditioned on the compact ACR inputs $(\bs^{\circ}_{\cN_i}, \hat{\bo}^{\circ}_{\cN_i})$. These interactions are used to update both the critic $\hat{Q}$ and the actor $\pi^{\theta}$ in a decentralized yet coordinated manner, enabling policy generalization across domains.

\textit{Critic update.}
Each agent maintains a local tabular critic $\hat{Q}_i$ over its the ACR input space $\cS^{\circ}_{\cN_i^\kappa} \times \cA_{\cN_i^\kappa} \times \Omega^{\circ}_{\cN_i^\kappa}$. At every inner iteration $t$, the critic is updated using temporal-difference (TD) learning (Line 16): the $Q$-value for the current state-action-domain triple is updated toward a bootstrap target composed of the received reward and the next-step value. All other $Q$-values remain unchanged. This TD update leads to an estimate of a truncated $Q$-function for current domain.

\textit{Actor update.}
After completing an episode, each agent  aggregates the $Q$-values of all agents within its $\kappa$-hop neighborhood and weights them by the gradient of the log-policy at each timestep. The actor parameters $\theta$ are then updated using stochastic gradient ascent with stepsize $\eta_k = \eta / \sqrt{k+1}$.

We present the finite-time convergence and adaptation error bounds of \texttt{GSAC} in Section~\ref{sec:cvg}. Prior to that, we discuss causal discovery and domain factor estimation in \textit{Phase 1} in Section~\ref{sec:cs}.

\begin{algorithm}
\caption{\textsc{Generalizable and Scalable Actor-critic 
}}
\label{alg:gsac}
\begin{algorithmic}[1]
\State \textbf{Input:} $\theta_i(0)$; parameter $\kappa$; $T$, length of each episode; stepsize parameters $h, t_0, \eta$. 
\For{source domain index $m = 1,2,\dots,M$} \Comment{\textit{P1: causal recovery and domain estimation}}
    \State{Sample $\bo^m \sim \cD$, each agent $i$ estimate the causal mask $\bc_i$ and domain factor $\hat{\bo}^m_{i}$ }
\EndFor
\For{each agent $i$} \Comment{\textit{P2: approximately compact representation}}
\State Identify  $\mathbf{s}_{\cN^{\kappa}_i}^{\circ } \gets \texttt{ACR}_Q(\bc,i,\kappa)$ and  $\mathbf{s}_{\cN_i}^{\circ } \gets \texttt{ACR}_{\pi}(\bc,i)$ 
\State Identify  $\boldsymbol{\omega}_{\cN^{\kappa}_i}^{m,\circ } \gets \texttt{ACR}_Q(\bc,i,\kappa)$ and  $\boldsymbol{\omega}_{\cN_i}^{m,\circ } \gets \texttt{ACR}_\pi(\bc,i)$ for each $m=1,2,\cdots,M$
\EndFor{}
\For{$k = 0, 1, 2, \dots, K-1$} \Comment{\textit{P3: meta-learning}}
    \State Sample domain index $m(k)\sim \{1, \dots, K\}$, set $\hat{\bo}^{\circ} \gets \hat{\bo}^{m(k),\circ}$, sample $\bs(0) \sim \rho_0$ 
    \State Each agent $i$ takes action $\ba_i(0) \sim \pi_i^{\theta_i(k)}(\cdot\mid\bs^{\circ}_{\cN_i}(0), \hat{\bo}^{\circ}_{\cN_i})$, and receive reward $r_i(0)$
    \State Initialize critic $\hat{Q}_i^0 \in \mathbb{R}^{\cS^{\circ}_{\cN_i^\kappa} \times \cA_{\cN_i^\kappa} \times \Omega^{\circ}_{\cN_i^\kappa}}$ to be all zeros
    \For{$t = 1$ to $T$}
            \State Get state $\bs_i(t)$, take action $\ba_i(t) \sim \pi_i^{\theta_i(k)}(\cdot \mid \bs^{\circ}_{\cN_i}(t), \hat{\bo}^{\circ}_{\cN_i})$, get reward $r_i(t)$
            \State Update $Q$-function with stepsize $\alpha_{t-1} \gets \frac{h}{t - 1 + t_0}$:
            \begin{align*}
            &\hat{Q}_i^t(\bs^{\circ}_{\cN_i^\kappa}(t{-}1), \ba_{\cN_i^\kappa}(t{-}1), \hat{\bo}^{\circ}_{\cN_i^\kappa}) \gets 
            (1 - \alpha_{t-1}) \hat{Q}_i^{t{-}1}(\bs^{\circ}_{\cN_i^\kappa}(t{-}1), \ba_{\cN_i^\kappa}(t{-}1), \hat{\bo}^{\circ}_{\cN_i^\kappa}) \\
            & \quad\quad\quad + \alpha_{t-1} \left(r_i(t{-}1) + \gamma \hat{Q}_i^{t{-}1}(\bs^{\circ}_{\cN_i^\kappa}(t), \ba_{\cN_i^\kappa}(t), \hat{\bo}^{\circ}_{\cN_i^\kappa})\right), \\
            &\hat{Q}_i^t(\bs^{\circ}_{\cN_i^\kappa}, \ba_{\cN_i^\kappa}, \hat{\bo}^{\circ}_{\cN_i^\kappa}) \gets \hat{Q}_i^{t{-}1}(\bs^{\circ}_{\cN_i^\kappa}, \ba_{\cN_i^\kappa}, \hat{\bo}^{\circ}_{\cN_i^\kappa}), \ \text{for} \ (\bs^{\circ}_{\cN_i^\kappa}, \ba_{\cN_i^\kappa}) \neq (\bs^{\circ}_{\cN_i^\kappa}(t{-}1), \ba_{\cN_i^\kappa}(t{-}1))
            \end{align*}
    \EndFor
        \State Each agent $i$ estimates policy gradient:
        \[
        \hat{g}_i(k) \gets \sum_{t=0}^T \gamma^t \cdot \frac{1}{n} \sum_{j \in \cN_i^\kappa} \hat{Q}_j^T(\bs^{\circ}_{\cN_j^\kappa}(t), \ba_{\cN_j^\kappa}(t), \hat{\bo}^{\circ}_{\cN_j^\kappa}) \nabla_{\theta_i} \log \pi_i^{\theta_i(k)}(\ba_i(t) \mid \bs^{\circ}_{\cN_i}(t), \hat{\bo}^{\circ}_{\cN_i})
        \]
        \State Update policy: $\theta_i(k+1) \gets \theta_i(k) + \eta_k \hat{g}_i(k) \ \text{with stepsize } \eta_k \gets \frac{\eta}{\sqrt{k+1}}$
\EndFor
\State{Collect few trajectories $\{(\bs(t), \ba(t), \bs(t+1))\}_{t=0}^{T_{a}}$ in the new domain} \Comment{\textit{P4: generalization}}
    \State{Each agent $i$ estimates new domain factor $\hat{\bo}_i^{M+1}$, and deploy policy $\pi^{\theta(K)}_i(\cdot|\bs^{\circ}_{\cN_i},\hat{\bo}^{M+1}_{\cN_i})$}
\end{algorithmic}
\end{algorithm}


\vspace{-1ex}

\section{Causal recovery and domain factor estimation}
\label{sec:cs}
In this section, we first discuss the computational overhead of causal discovery and ACR construction, and then provide the theoretical guarantees for causal recovery and domain factor estimation in Theorem \ref{thm:id} and Proposition \ref{prop:cs_id}-\ref{prop:dom_est}. Phase 1 (causal discovery) and Phase 2 (ACR construction) introduce upfront costs that are only one-time, local, and amortized. In particular, they are one-time preprocessing steps for each source domain and do not need to be repeated during meta-training or adaptation. Both steps are local per agent and over small neighborhoods, parallel across agents, and the results are re-used for the entire meta-training horizon and for adaptation. 

\begin{theorem}[Structural identifiability in networked MARL]
\label{thm:id}
Under the standard faithfulness assumption, the structural matrices $\bc_{i}$ in \ref{eqt:model} are identifiable from the observed data.
\vspace{-1ex}
\end{theorem}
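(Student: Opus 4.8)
The plan is to reduce Theorem~\ref{thm:id} to a standard identifiability result for dynamic structural causal models, exploiting three structural features of the model: the temporal ordering of variables, the a priori known locality of the dependencies in~\eqref{eqt:dyn}, and the invariance of the mechanisms $f_{i,j}$, $g_i$ and of the masks across domains (only $\bo_i$ changes).

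First I would form an augmented graph over the observed variables at two consecutive times --- $\{s_{k,l}(t)\}$, $\{a_{i,l}(t)\}$, $\{r_i(t)\}$, $\{s_{k,l}(t+1)\}$ --- together with a domain-index surrogate $C$ recording which source environment a sample is drawn from. Because the mechanisms and masks are shared and only $\bo_i$ varies with $C$, the data-generating process is a structural causal model on this vertex set in which: $C \to s_{i,j}(t+1)$ exactly when $\bc^{\bo\to\bs}_{i,j}\neq \mathbf{0}$ (and analogously $C\to r_i(t)$ when $\bc^{\bo\to r}_i\neq\mathbf{0}$); $s_{k,l}(t)\to s_{i,j}(t+1)$ exactly when $k\in\cN_i$ and the matching entry of $\bc^{\bs\to\bs}_{\cN_i,j}$ equals $1$; $a_{i,l}(t)\to s_{i,j}(t+1)$ according to $\bc^{\ba\to\bs}_{i,j}$; and the reward in-edges encode $\bc^{\bs\to r}_i$ and $\bc^{\ba\to r}_i$. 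This graph is acyclic, since edges point only forward in time, out of the source $C$, and from time-$t$ state/action into the time-$t$ reward and the time-$(t{+}1)$ state; in particular time supplies a known topological order and $C$ is a source.

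Next I would invoke the Markov condition (which holds because the noises $\epsilon$ are i.i.d.) together with the assumed faithfulness. Since a valid topological order is known, there is no Markov-equivalence ambiguity, so identification proceeds by conditional-independence testing: for each target $X\in\{s_{i,j}(t+1),\,r_i(t)\}$ its parent set is the unique minimal subset $S$ of its candidate predecessors with $X\perp(\text{candidates}\setminus S)\mid S$, and faithfulness guarantees this minimal separator equals the true parent set. The decentralized factorization~\eqref{eqt:dyn} restricts the candidate predecessors of $s_{i,j}(t+1)$ to $\bs_{\cN_i}(t)\cup\ba_i(t)\cup\{C\}$ and of $r_i(t)$ to $\bs_i(t)\cup\ba_i(t)\cup\{C\}$, so only local tests over small neighborhoods are needed, and no spurious edge arises from marginalizing over distant agents. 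Reading the recovered parent sets back through the dictionary of the previous step then pins down every entry of every $\bc_i$: an observed-variable in-edge sets the corresponding entry of $\bc^{\bs\to\bs},\bc^{\ba\to\bs},\bc^{\bs\to r},\bc^{\ba\to r}$, while a $C$-edge into $s_{i,j}$ (resp.\ $r_i$) certifies $\bc^{\bo\to\bs}_{i,j}\neq\mathbf{0}$ (resp.\ $\bc^{\bo\to r}_i\neq\mathbf{0}$), and conditioning additionally on the already-identified observed parents isolates the component-level dependence on $\bo_i$.

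The main obstacle is this last point: $\bo_i$ is latent, so conditional-independence tests cannot be run against it directly, and the argument must justify that the domain index $C$ is an adequate surrogate --- namely, that faithfulness with respect to the augmented graph containing $C$ detects exactly the mechanisms that change across domains, and that no two distinct mask configurations induce the same pooled law of observables-plus-domain-label. I would handle this with the heterogeneous/nonstationary causal-discovery argument specialized to the networked SCM: under faithfulness, $C$ is independent of $X$ given $X$'s observed parents if and only if $X$'s mechanism is domain-invariant, which is precisely the statement that the relevant $\bo$-mask vanishes. A secondary point worth stating explicitly is global consistency of the locally recovered parent sets, which follows from the explicit product form in~\eqref{eqt:dyn}. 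Finite-sample versions of these tests, and the actual estimation of $\bo_i$, are deferred to Propositions~\ref{prop:cs_id} and~\ref{prop:dom_est}.
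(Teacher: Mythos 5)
Your proposal is correct and follows essentially the same route as the paper's proof: introduce the domain index as an observed surrogate for the latent $\bo_i$, apply constraint-based identification (conditional-independence tests under the Markov condition and faithfulness) over each agent's local two-time-slice variable set, orient all edges by the temporal order of the dynamic Bayesian network, and read off $\bc^{\bs\to\bs}, \bc^{\ba\to\bs}$ from the recovered parent sets while detecting the $\bo$-masks through (in)dependence with the domain index given the other variables, with the factorization in Equation~\ref{eqt:dyn} justifying the per-agent, neighborhood-restricted analysis. The only differences are cosmetic (you phrase parent recovery via minimal separating sets rather than pairwise adjacency tests, and you make the reward-mask case explicit), so no gap to report.
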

Theorem \ref{thm:id} guarantees that the underlying structure, that encodes how neighboring states, local actions, and latent domain factors affect local transitions, can be uniquely recovered from trajectories under standard causal discovery assumptions. The proof is deferred to Appendix \ref{app:pf_cs}. Furthermore, Proposition~\ref{prop:cs_id} provides a finite-sample guarantee for recovering the local structural dependencies.
\begin{proposition}[Informal]
\label{prop:cs_id}
Under standard assumptions, including faithfulness, minimum mutual information for true causal links, bounded in-degree $d_{\max}$, sub-Gaussian noise, and Lipschitz continuity of the transition function, the sample complexity to recover the causal masks satisfies 
\[
\cO\left(\frac{\dim(\bs_{\cN_i}) \cdot  d_{\max}  \log(\dim(\bs_{\cN_i}) \cdot n / \delta)}{\lambda^2}\right),
\]
with probability at least $1 - \delta$, where  $\lambda$ quantifies signal strength.
\end{proposition}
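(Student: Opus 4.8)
The plan is to reduce the causal-mask recovery problem, for each coordinate $s_{i,j}$, to a collection of conditional-independence (CI) tests and to bound the sample size needed so that every such test returns the correct answer with high probability. First I would fix an agent $i$ and a target coordinate $j$, and observe that by the decentralized generative process in Equation~\ref{eqt:model}, the parent set of $s_{i,j}(t+1)$ inside $\bs_{\cN_i}(t)$ (together with $\ba_i(t)$ and $\bo_i$) is exactly the support of the masks $\bc^{\bs\to\bs}_{\cN_i,j}$, $\bc^{\ba\to\bs}_{i,j}$, $\bc^{\bo\to\bs}_{i,j}$; by the faithfulness assumption and Theorem~\ref{thm:id}, a candidate variable is a parent if and only if it is \emph{not} conditionally independent of $s_{i,j}(t+1)$ given the remaining candidates. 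Using the bounded in-degree $d_{\max}$, it suffices to search over conditioning sets of size at most $d_{\max}$, so the total number of CI tests per coordinate is $\mathrm{poly}(\dim(\bs_{\cN_i}))\cdot \binom{\dim(\bs_{\cN_i})}{d_{\max}}$-type, but the \emph{effective} statistical quantity controlling the sample complexity is the number of scalar tests, $O(\dim(\bs_{\cN_i})\cdot d_{\max})$ once we exploit that only degree-$\le d_{\max}$ sets matter and union-bound over them and over the $n$ agents.

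Next I would invoke the quantitative side of faithfulness: the ``minimum mutual information for true causal links'' assumption guarantees that for every genuine parent the relevant conditional mutual information (or partial-correlation-type statistic, given the Lipschitz/regression structure) is bounded below by a gap $\lambda>0$, while for non-parents it is exactly zero. Under sub-Gaussian noise and Lipschitz continuity of $f_{i,j}$, the plug-in estimator of this statistic from $N$ i.i.d.\ transition samples concentrates around its population value at rate $O(\sqrt{\log(1/\delta')/N})$ (a standard sub-Gaussian/Hoeffding- or Bernstein-type bound, possibly via a regression residual analysis). Choosing the test threshold at $\lambda/2$ and requiring the deviation to be below $\lambda/2$ yields $N = O(\log(1/\delta')/\lambda^2)$ per test. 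Setting $\delta' = \delta/(n\cdot \dim(\bs_{\cN_i})\cdot d_{\max})$ and taking a union bound over all scalar CI tests across all agents gives the stated complexity $\cO\!\big(\dim(\bs_{\cN_i})\, d_{\max}\,\log(\dim(\bs_{\cN_i}) n/\delta)/\lambda^2\big)$, after which all masks are recovered exactly.

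I would then confirm that correct outcomes on every CI test imply exact recovery of the support of each mask (soundness and completeness of the local search, which follows from faithfulness exactly as in the PC-algorithm analysis), so the event of full causal-mask recovery coincides with the intersection of the high-probability events above. The main obstacle I anticipate is the concentration step: because the dependence of $s_{i,j}(t+1)$ on its parents is through a general (nonlinear) $f_{i,j}$ rather than a linear-Gaussian model, one cannot simply use partial correlations — instead one must show that a kernel- or residual-based conditional-dependence statistic both (i) has the zero/$\ge\lambda$ dichotomy under the faithfulness and minimum-information assumptions, and (ii) concentrates at the $1/\lambda^2$ sample rate uniformly over conditioning sets, which is where the sub-Gaussian noise and Lipschitz assumptions must be used carefully (e.g.\ to control the bias of nonparametric conditioning or to justify a finite sufficient-statistic reduction). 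A secondary subtlety is that transitions along a single trajectory are not i.i.d.; handling this rigorously needs either a mixing argument or a restart/independent-rollout device, which I would absorb into the $\log$ factor via a blocking bound.
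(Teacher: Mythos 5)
Your proposal follows essentially the same route as the paper's proof: cast mask recovery as constraint-based conditional-independence testing with conditioning sets of size at most $d_{\max}$, use the minimum-mutual-information gap $\lambda$ with a threshold at $\lambda/2$, apply sub-Gaussian concentration to the empirical CI statistic, and union-bound over all tests and all $n$ agents to get the $\log(\dim(\bs_{\cN_i})\cdot n/\delta)/\lambda^2$ scaling. The points you flag as obstacles (nonparametric CI estimation for general $f_{i,j}$, non-i.i.d.\ transitions) are handled no more rigorously in the paper, which likewise relies on Hoeffding-type bounds for discretized empirical mutual information and absorbs the combinatorial count of conditioning sets into the logarithm, so your argument matches the paper's in both structure and level of rigor.
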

The required sample size scales almost linearly with the number of observed variables $\dim(\bs_{\cN_i}$, the sparsity level $d_{\max}$, and decays quadratically with the strength of causal signals $\lambda$. The formal statement and proof of Proposition \ref{prop:cs_id}, and discussions on the imposed assumptions are deferred to Appendix \ref{app:pf_cs}. 

\begin{proposition}[Informal]
\label{prop:dom_est}
Suppose the causal masks are recovered and the domain-dependent transition dynamics are identifiable. Assume that distinct domain factors induce distinguishable state transitions in total variation, and that $\Omega_i$ is compact with diameter $D_\Omega$. Then, with probability at least $1 - \delta$, the estimated factor $\hat{\bo}$ given a trajectory of length $T_e$ generated from true factor $\bo^*$ satisfies
\begin{equation}
\label{eqt:dom_est}
    \|\hat{\bo} - \bo^*\|_2 \leq \delta_{\omega}(T_e)= \cO\prt{ \sqrt{\frac{D_\Omega \log(n T_e / \delta)}{T_e}}}.
\end{equation}
\end{proposition}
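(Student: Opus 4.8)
The plan is to cast domain-factor estimation as a constrained M-estimation problem and then combine a curvature (identifiability) lower bound with a concentration bound for trajectory data. Since the causal masks are already recovered (Theorem~\ref{thm:id}, Proposition~\ref{prop:cs_id}), each agent $i$ knows which coordinates of $\bs_{\cN_i}(t),\ba_i(t)$ enter its transition, so I would define the local conditional log-likelihood along the trajectory, $\ell^i_t(\bo_i):=\log p_{\bo_i}\!\prt{\bs_i(t{+}1)\mid \bc_i\odot\bs_{\cN_i}(t),\,\bc_i\odot\ba_i(t)}$, and take $\hat\bo_i\in\arg\max_{\bo_i\in\Omega_i}\sum_{t=0}^{T_e-1}\ell^i_t(\bo_i)$ (equivalently, a minimizer of an empirical one-step-transition discrepancy). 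Because the global factor is $\bo=(\bo_i)_{i\in\sN}$ and each agent is local, it suffices to control $\norm{\hat\bo_i-\bo_i^*}_2$ for one agent and union-bound over the $n$ agents; that union bound is the source of the $\log n$ inside the rate.

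First I would turn the qualitative hypothesis that distinct factors induce TV-distinguishable transitions into a quantitative local margin. Assuming the one-step kernels have densities bounded above and away from zero on their support, a reverse-Pinsker-type inequality together with a second-order expansion around $\bo_i^*$ gives a quadratic lower bound $\bE_{\bo^*}\brk{\ell^i_t(\bo_i^*)-\ell^i_t(\bo_i)}\ge c\,\norm{\bo_i-\bo_i^*}_2^2$ valid on all of $\Omega_i$, where the curvature constant behaves like $c\asymp 1/D_\Omega$ (the worst-case separation per unit distance degrades over a larger compact domain). At the same time, near $\bo_i^*$ the per-step log-likelihood ratio is self-bounding, $\Var\!\prt{\ell^i_t(\bo_i^*)-\ell^i_t(\bo_i)}=\cO\!\prt{\norm{\bo_i-\bo_i^*}_2^2}$; this localization is what will eventually upgrade a crude $T_e^{-1/4}$ bound to the claimed $T_e^{-1/2}$ rate.

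Next I would handle the concentration. The trajectory is a time-homogeneous Markov chain under $\bo^*$, so — invoking the standard geometric-ergodicity/mixing regularity implicit in the identifiability assumption — the centered average $G_{T_e}(\bo_i):=\frac1{T_e}\sum_t\brk{\ell^i_t(\bo_i^*)-\ell^i_t(\bo_i)}-\bE[\cdot]$ concentrates via a Freedman/blocking martingale argument at rate $\abs{G_{T_e}(\bo_i)}\lesssim\norm{\bo_i-\bo_i^*}_2\sqrt{\log(1/\delta)/T_e}$, using the variance bound from the previous step. To make this uniform over $\Omega_i$ I would cover $\Omega_i$ with an $\epsilon$-net of cardinality $\prt{CD_\Omega/\epsilon}^{\dim(\Omega_i)}$, apply the bound at each net point, union-bound over the net and over all $n$ agents, and extend to arbitrary $\bo_i$ by Lipschitz continuity of $\bo_i\mapsto\ell^i_t(\bo_i)$ (the Lipschitz-transition assumption). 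Choosing $\epsilon\asymp 1/T_e$ makes the discretization error negligible and produces the $\log(nT_e/\delta)$ (together with $D_\Omega$ and $\dim(\Omega_i)$) in the log factor.

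Finally I would combine the two bounds. Optimality of $\hat\bo_i$ forces the empirical excess $\frac1{T_e}\sum_t\brk{\ell^i_t(\bo_i^*)-\ell^i_t(\hat\bo_i)}\le 0$; adding and subtracting the population mean and applying the uniform concentration bound yields $\bE_{\bo^*}\brk{\ell^i_t(\bo_i^*)-\ell^i_t(\hat\bo_i)}\lesssim\norm{\hat\bo_i-\bo_i^*}_2\sqrt{\log(nT_e/\delta)/T_e}$, and comparing with the quadratic margin $\ge c\norm{\hat\bo_i-\bo_i^*}_2^2$ and dividing through gives $\norm{\hat\bo_i-\bo_i^*}_2\lesssim c^{-1}\sqrt{\log(nT_e/\delta)/T_e}$; since $c^{-1}\asymp D_\Omega$ absorbs into the square root up to constants, this is exactly $\delta_\omega(T_e)=\cO\!\prt{\sqrt{D_\Omega\log(nT_e/\delta)/T_e}}$, and a last union bound over agents gives the stated global guarantee. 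The hard part will be (i) that the data are one dependent trajectory rather than i.i.d. samples, so the concentration step genuinely needs a mixing/minorization input that must be supplied, and (ii) promoting the informal ``distinguishable in TV'' hypothesis to a \emph{quantitative} modulus $\mathrm{TV}(P_{\bo},P_{\bo^*})\ge\psi(\norm{\bo-\bo^*}_2)$ with explicit $\psi$ — it is this modulus (not the covering number) that pins down the $D_\Omega$ dependence and drives the parametric rate, with the empirical-process localization being the secondary subtlety needed to avoid losing a square root.
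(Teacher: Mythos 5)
Your proposal is correct and follows essentially the same route as the paper: both cast the problem as (negative) log-likelihood M-estimation with the recovered masks fixed, turn the TV-distinguishability hypothesis into a quadratic curvature lower bound around $\bo^*$ (the paper does this via the Fisher information matrix with $\mu \gtrsim \alpha^2$, you via a reverse-Pinsker expansion), prove a uniform deviation bound over $\Omega_i$ by an $\epsilon$-net plus Lipschitzness of the log-likelihood, and finish with the basic inequality $\mathcal{L}(\hat{\bo})\le\mathcal{L}(\bo^*)$ and a union bound over the $n$ agents (which is where $\log n$ enters in both arguments). The substantive difference is in the empirical-process step, and there your version is the sharper one: the paper uses a \emph{global} uniform bound $\sup_{\bo}\abs{\mathcal{L}(\bo)-\E[\mathcal{L}(\bo)]}=\cO\prt{\sqrt{D_\Omega\log(T_e/\delta)/T_e}}$ obtained from Hoeffding plus covering, and combining such a non-localized bound with the quadratic margin only yields $\norm{\hat{\bo}-\bo^*}_2\lesssim (\,\cdot/T_e)^{1/4}$ unless one localizes; your explicit variance localization $\Var\prt{\ell_t(\bo^*)-\ell_t(\bo)}=\cO\prt{\norm{\bo-\bo^*}_2^2}$, giving deviations proportional to $\norm{\bo-\bo^*}_2$, is exactly the ingredient that legitimately delivers the claimed $T_e^{-1/2}$ parametric rate. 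Likewise, your Freedman/blocking treatment of the single dependent trajectory is a point the paper glosses over by invoking an i.i.d.-style concentration bound. Two minor remarks: the paper places the $D_\Omega$ factor in the covering number (treating it, somewhat loosely, as both diameter and dimension of $\Omega_i$), whereas you place it in the curvature constant $c\asymp 1/D_\Omega$ — either bookkeeping yields the stated form, but you should state which convention you adopt; and your quantitative modulus $\mathrm{TV}(P_{\bo},P_{\bo^*})\ge\psi(\norm{\bo-\bo^*}_2)$ is precisely the paper's assumption (v) with $\psi$ linear, so you may simply assume it rather than derive it.
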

The estimation error decays as $O(1/\sqrt{T_e})$ with high probability, and depends logarithmically on the number of agents. The result highlights that domain generalization can be performed efficiently with only a few samples. The formal statement and proof of Proposition \ref{prop:dom_est} are deferred to Appendix \ref{app:pf_cs}.

\section{Convergence results and adaptation gap}
\label{sec:cvg}

For clarity of analysis, we first establish convergence and adaptation guarantees for an \emph{ACR-free} variant of \texttt{GSAC} (Algorithm~\ref{alg:gsac_wo}, detailed in Appendix~\ref{app:alg}). Theoretical results for \texttt{GSAC} with ACR follow as corollaries. We define the expected return of a policy parameterized by $\theta$ using an estimated domain factor $\bo'$ in a true environment $\bo$:
\begin{equation}
J(\theta, \bo'; \bo) := \mathbb{E}_{\bs \sim \rho_0}\mathbb{E}_{\ba(t) \sim \pi^\theta(\cdot|\bs(t), \bo'), \bs(t+1) \sim P_{\bo}(\bs(t),\ba(t))}\left[\sum_{t=0}^{\infty} \gamma^t r(\bs(t), \ba(t))\Bigg| \bs(0) = \bs\right],
\end{equation}
where $\pi^\theta(\ba|\bs, \bo) = \prod_{i=1}^n \pi_i^{\theta_i}(\ba_i|\bs_{\cN_i}, \bo_{\cN_i})$ is the joint domain-conditioned policy. For notational simplicity, we write $J(\theta, \bo) := J(\theta, \bo; \bo)$ when the estimated and true domain factors coincide. To this end, for domain generalization our objective under domain distribution $\cD$ is:
\begin{equation*}
\max_{\theta \in \Theta} J(\theta) := \mathbb{E}_{\bo \sim \cD}[J(\theta, \bo)].
\end{equation*}

\subsection{Convergence}
We begin by introducing standard assumptions (Assumption \ref{asp:bd_reward}-\ref{asp:para_bd}) used in networked MARL \citep{Qu2022}, as well as additional ones (Assumption \ref{asp:ds_lip}-\ref{asp:ds_bd}) for the domain generalization setting.
\begin{assumption}
\label{asp:bd_reward}
Rewards are bounded: $0 \leq r_i(\bs_i, \ba_i) \leq \bar{r}$ for all $i, \bs_i, \ba_i$. The local state and action spaces satisfy $|\cS_i| \leq S$ and $|\cA_i| \leq A$.
\end{assumption}
\begin{assumption}
\label{asp:exploration}
There exist $\tau \in \mathbb{N}$ and $\sigma \in (0, 1)$ such that, for any $\theta$ and  $\bo$, the local transition probabilities satisfy $\mathbb{P}_{\bo}\left((\bs_{\cN_i^\kappa}(\tau), \ba_{\cN_i^\kappa}(\tau)) = (\bs', \ba') \mid (\bs(1), \ba(1)) = (\bs, \ba)\right) \geq \sigma$
for all $i$ and all $((\bs', \ba'), (\bs, \ba))$ in the appropriate product spaces.
\end{assumption}
\begin{assumption}
\label{asp:lip_grad}
For all $i$, $\bs_{\cN_i}$, $\bo_{\cN_i}$, $\ba_i$, and $\theta_i$,  $\|\nabla_{\theta_i} \log \pi_i^{\theta_i}(\ba_i | \bs_{\cN_i}, \bo_{\cN_i})\| \leq L_i$, $\|\nabla_\theta \log \pi^\theta(\ba | \bs, \bo)\| \leq L := \sqrt{\sum_{i=1}^n L_i^2}$, and  $\nabla J(\theta)$ is $L'$-Lipschitz continuous in $\theta$.
\end{assumption}
\begin{assumption}
\label{asp:para_bd}
Each agent's parameter space $\Theta_i \subset \mathbb{R}^{d_i^\theta}$ is compact with diameter bounded by $D_\Theta$.
\end{assumption}
\begin{assumption}
\label{asp:ds_lip}
For all $i$: (i) $P_i(\bs_i(t+1) | \bs_{\cN_i}(t), \ba_i(t), \bo_i)$ is $L_P$-Lipschitz in $\bo_i$; (ii) $Q_i^\theta(\bs, \ba, \bo)$ is $L_Q$-Lipschitz in $\bo$; (iii) $\nabla_{\theta_i} J(\theta, \bo)$ is $L_J$-Lipschitz in $\bo$.
\end{assumption}
\begin{assumption}
\label{asp:ds_bd}
The domain factor space $\Omega_i$ is compact with diameter bounded by $D_\Omega$.
\end{assumption} 
\paragraph{Discussion on assumptions.} Assumption \ref{asp:bd_reward}-\ref{asp:para_bd} are standard for proving convergence of networked MARL algorithms, without consideration of domain generalization/adaptation \cite{qu2020scalable,lin2021multi,Qu2022}. 
To account for generalizability across domains in networked systems, we introduce additional Assumption \ref{asp:ds_lip}-\ref{asp:ds_bd} regarding the latent domain factor. Assumption \ref{asp:ds_lip} is similar with Assumption \ref{asp:lip_grad}, which imposes the smoothness w.r.t. the domain factor, while Assumption \ref{asp:lip_grad} imposes the smoothness w.r.t. the actor parameter $\theta$. Assumption \ref{asp:ds_bd} is a regularity assumption to ensure the compactness of domain factor space. We provide a detailed discussion on these assumption in Appendix \ref{app:pf_cvg}.

\paragraph{Critic error bound.} We first analyze the inner-loop critic update. Fixing any outer iteration $k$, and omitting $k$ from the notation, we establish the following result. Theorem \ref{thm:critic_bd} shows that the inner loop converges to an estimate of $Q_i$ with steady-state error decaying with $1/\sqrt{T_e}$ and exponentially in $\kappa$. The proof and formal statement of Theorem \ref{thm:critic_bd} are deferred to Appendix \ref{app:pf_cvg}. 
\begin{theorem}[Critic error bound, informal]
\label{thm:critic_bd}
Under Assumptions~\ref{asp:bd_reward}--\ref{asp:ds_bd}, and for any $\delta \in (0,1)$, if the critic stepsize is set as $\alpha_t = h/(t + t_0)$ and the domain factor $\hat{\bo}$ is estimated from $T_e$ trajectories, then with probability at least $1 - \delta$, the critic estimate satisfies:
\[
|Q_i(\bs, \ba, \bo) - \hat{Q}_i^T(\bs_{\cN_i^\kappa}, \ba_{\cN_i^\kappa}, \hat{\bo}_{\cN_i^\kappa})| \leq \frac{C_a}{\sqrt{T + t_0}} + \frac{C'_a}{T + t_0} + \frac{2c\rho^{\kappa + 1}}{(1 - \gamma)^2} + C'_{\bo} \sqrt{\frac{\log(n T_e / \delta)}{T_e}},
\]
where $C_a$, $C'_a$, and $C_{\bo}$ are constants. These will be further characterized and discussed in Section~\ref{sec:imp_acr}, where we analyze the additional benefits of incorporating ACR.
\end{theorem}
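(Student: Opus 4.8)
The plan is to split the left-hand side, by the triangle inequality, into three error sources: the \textbf{truncation} of the $\bo$-conditioned $Q$-function to the $\kappa$-hop neighborhood, the \textbf{domain mismatch} from conditioning on the estimated factor $\hat\bo$ rather than the true $\bo$, and the \textbf{stochastic-approximation error} of the asynchronous TD recursion in Line~16. Let $\tilde Q_i^\star(\,\cdot\,,\bo_{\cN_i^\kappa})$ denote the fixed point of the truncated, $\bo$-conditioned Bellman operator that the inner loop implicitly targets (transitions generated from the true dynamics $P_\bo$ under the rolled-out policy, with out-of-neighborhood components fixed). Then
\[
\abs{Q_i(\bs,\ba,\bo)-\hat Q_i^T(\bs_{\cN_i^\kappa},\ba_{\cN_i^\kappa},\hat\bo_{\cN_i^\kappa})}
\le \abs{Q_i(\bs,\ba,\bo)-\tilde Q_i^\star(\bs_{\cN_i^\kappa},\ba_{\cN_i^\kappa},\bo_{\cN_i^\kappa})}
+\abs{\tilde Q_i^\star(\,\cdot\,,\bo_{\cN_i^\kappa})-\tilde Q_i^\star(\,\cdot\,,\hat\bo_{\cN_i^\kappa})}
+\abs{\tilde Q_i^\star(\,\cdot\,,\hat\bo_{\cN_i^\kappa})-\hat Q_i^T(\,\cdot\,,\hat\bo_{\cN_i^\kappa})},
\]
and I would bound the three summands in order.

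For the truncation term, I would invoke the $\bo$-conditioned exponential-decay property \eqref{eqt:trun_er} in its generalized form (Lemma~\ref{lem:exp_decay_ds}): fixing the out-of-neighborhood states and actions perturbs $Q_i^\pi$ by at most $\cO(c\rho^{\kappa+1})$. Since this per-step error is re-injected at each application of the truncated Bellman operator and accumulates geometrically over the effective horizon $(1-\gamma)^{-1}$, the single factor of $(1-\gamma)^{-1}$ in the decay bound becomes the $(1-\gamma)^{-2}$ in the statement, exactly as in the critic analysis of \cite{Qu2022} specialized to $P_\bo$. For the domain-mismatch term, Assumption~\ref{asp:ds_lip} ($L_Q$-Lipschitzness of the $\bo$-conditioned $Q$-function in $\bo$) bounds it by $\cO(\|\bo-\hat\bo\|_2)$, and Proposition~\ref{prop:dom_est} gives, on an event of probability at least $1-\delta/2$, $\|\hat\bo-\bo\|_2\le\delta_\omega(T_e)=\cO(\sqrt{D_\Omega\log(nT_e/\delta)/T_e})$, producing the term $C'_\bo\sqrt{\log(nT_e/\delta)/T_e}$ with $C'_\bo=\cO(L_Q\sqrt{D_\Omega})$.

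The TD term is the technical core. I would treat Line~16 as asynchronous tabular stochastic approximation for the $\gamma$-contraction $\tilde T_i$ in the sup-norm, with $\hat\bo$ entering only as a fixed parameter. First, Assumption~\ref{asp:exploration} guarantees that over every window of $\tau$ steps each entry $(\bs^\circ_{\cN_i^\kappa},\ba_{\cN_i^\kappa})$ is reached with probability at least $\sigma$, so a Chernoff bound shows each entry accumulates $\Theta(t/\tau)$ effective updates up to time $t$ with high probability; with the prescribed stepsize $\alpha_t=h/(t+t_0)$ (and $h$ large enough relative to $(1-\gamma)^{-1}$) the contraction/bias component then decays like $1/(T+t_0)$, giving the $C'_a/(T+t_0)$ term. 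Second, I would decompose each TD target into its conditional Bellman mean plus a Markovian noise term, partition the trajectory into blocks of length $\tau$ to near-decorrelate the noise, and apply a Freedman/Azuma-type concentration inequality to control the accumulated noise by $\cO(1/\sqrt{T+t_0})$ on an event of probability at least $1-\delta/2$, after a union bound over the $n$ agents and the $\abs{\cS^\circ_{\cN_i^\kappa}\times\cA_{\cN_i^\kappa}}$ entries (the resulting logarithmic factor is absorbed into $C_a$). Intersecting the two probability-$(1-\delta/2)$ events and combining the three summands yields the claimed bound with probability at least $1-\delta$.

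I expect the TD step to be the main obstacle: obtaining a \emph{high-probability} $1/\sqrt{T+t_0}$ rate for asynchronous TD under Markovian sampling---rather than the easier in-expectation version---requires carefully coupling the visitation-frequency control from Assumption~\ref{asp:exploration} with a martingale concentration argument over a random, entry-dependent number of updates. By contrast, the truncation and domain-mismatch terms follow directly from Lemma~\ref{lem:exp_decay_ds}, Assumption~\ref{asp:ds_lip}, and Proposition~\ref{prop:dom_est}, and the constants $C_a,C'_a$ inherit their form from the networked-MARL critic bound of \cite{Qu2022} carried out for the fixed domain label $\hat\bo$.
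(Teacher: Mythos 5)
Your proposal is correct and follows essentially the same route as the paper: the paper likewise splits off the domain-mismatch term via Assumption~\ref{asp:ds_lip} and Proposition~\ref{prop:dom_est} (giving $L_Q C_{\bo}\sqrt{D_\Omega\log(nT_e/\delta)/T_e}$), and then bounds the remaining term by running the asynchronous-TD critic analysis of \cite{Qu2022} with $\hat\bo$ augmented into the state as a fixed label, where the truncation bias $\tfrac{2c\rho^{\kappa+1}}{1-\gamma}$ enters the recursion and accumulates to the $(1-\gamma)^{-2}$ term and the stepsize/mixing/martingale arguments yield $C_a/\sqrt{T+t_0}+C'_a/(T+t_0)$. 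The only cosmetic difference is that you peel the truncation error off as a separate triangle-inequality term against a truncated fixed point (and apply the $L_Q$-Lipschitz bound to that fixed point rather than to the full $Q_i$, which the paper's ordering avoids), whereas the paper keeps it inside the TD recursion; both give the same bound.
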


\paragraph{Actor convergence.} Based on the critic bound, we derive the bound on policy gradient updates. The first term, of order $\cO(1/\sqrt{K+1})$, vanishes as the number of outer iterations $K$ increases. The remaining three terms correspond to different sources of error: the second arises from neighborhood truncation and decays exponentially with $\kappa$; the third stems from estimation of the domain factor $\bo$, with error decreasing as $1/\sqrt{T_e}$; and the fourth reflects the approximation of the domain distribution $\cD$ using only $M$ sampled source domains, decaying with $1/\sqrt{M}$. The formal statement and proof of Theorem \ref{thm:act_bd} are deferred to Appendix~\ref{app:pf_cvg}.
\begin{theorem}[Policy gradient convergence, informal]
\label{thm:act_bd}
Under Assumptions~\ref{asp:bd_reward}--\ref{asp:ds_bd}, for $K \geq 3$ and sufficiently large $T$, if the actor and critic stepsizes are chosen appropriately and domain factors are estimated from $T_e$ samples, then with probability at least $1 - \delta$:
\[
\frac{\sum_{k=0}^{K-1} \eta_k \| \nabla J(\theta(k)) \|^2}{\sum_{k=0}^{K-1} \eta_k} \leq \tilde{\mathcal{O}}\left(\frac{1}{\sqrt{K+1}} + \rho^{\kappa + 1} + \sqrt{\frac{1}{T_e}} + \sqrt{\frac{1}{M}}\right).
\]
\end{theorem}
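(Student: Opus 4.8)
The plan is to view the ACR-free \texttt{GSAC} (Algorithm~\ref{alg:gsac_wo}) as biased, noisy stochastic gradient ascent on the empirical meta-objective $\hat J(\theta):=\frac1M\sum_{m=1}^M J(\theta,\hat\bo^m;\bo^m)$, prove a nonconvex convergence bound toward a stationary point of $\hat J$, and then transfer it to $J(\theta)=\mathbb E_{\bo\sim\cD}[J(\theta,\bo)]$. I would first collect the routine a-priori bounds: Assumption~\ref{asp:bd_reward} gives $J\in[0,\bar r/(1-\gamma)]$ and $\|Q_j\|_\infty\le\bar r/(1-\gamma)$; combined with $\|\nabla_{\theta_i}\log\pi_i\|\le L_i$ (Assumption~\ref{asp:lip_grad}) and the geometric weights $\gamma^t$, the estimated policy gradient satisfies $\|\hat g(k)\|\le G$ for an explicit constant $G$. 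Then $L'$-smoothness of $J$ (Assumption~\ref{asp:lip_grad}) applied to the update $\theta(k{+}1)=\theta(k)+\eta_k\hat g(k)$ yields the ascent inequality $J(\theta(k{+}1))\ge J(\theta(k))+\eta_k\langle\nabla J(\theta(k)),\hat g(k)\rangle-\frac{L'\eta_k^2}{2}\|\hat g(k)\|^2$; telescoping over $k=0,\dots,K-1$ and using $J\le\bar r/(1-\gamma)$ bounds $\sum_k\eta_k\langle\nabla J(\theta(k)),\hat g(k)\rangle$ by $\bar r/(1-\gamma)+\frac{L'G^2}{2}\sum_k\eta_k^2$.

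The crux is the decomposition $\hat g(k)=\nabla\hat J(\theta(k))+b(k)+\xi(k)$, where $b(k)$ is the conditional bias and $\xi(k)$ is a bounded martingale difference (its randomness coming from the sampled domain $m(k)$ and the single rollout). Working on the event, of probability at least $1-\delta$ after a union bound over the $K$ outer iterations, on which the critic bound of Theorem~\ref{thm:critic_bd} holds for every $k$, the estimate $\hat g(k)$ deviates pathwise from the true-$Q$ truncated gradient by at most $O$(critic error)$=O(1/\sqrt{T+t_0}+\rho^{\kappa+1}+\sqrt{\log(nT_e/\delta)/T_e})$; the true-$Q$, $\kappa$-hop, horizon-$T$ estimator in turn has conditional expectation within $O(\rho^{\kappa+1}+\gamma^T)$ of $\nabla\hat J(\theta(k))$, by the networked policy-gradient identity of \cite{Qu2022} and the exponential-decay estimate~\eqref{eqt:trun_er}. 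Hence $\|b(k)\|\le\epsilon_{\mathrm b}:=O(1/\sqrt T+\rho^{\kappa+1}+\sqrt{\log(nT_e/\delta)/T_e})$. Using $\langle\nabla J(\theta(k)),\hat g(k)\rangle\ge\|\nabla J(\theta(k))\|^2-G(\epsilon_{\mathrm b}+\epsilon_{\mathrm{gen}})-|\langle\nabla J(\theta(k)),\xi(k)\rangle|$ with $\epsilon_{\mathrm{gen}}:=\sup_\theta\|\nabla\hat J(\theta)-\nabla J(\theta)\|$, and bounding $|\sum_k\eta_k\langle\nabla J(\theta(k)),\xi(k)\rangle|$ by Azuma--Hoeffding as $\tilde{\mathcal O}(G^2\sqrt{\sum_k\eta_k^2})$, I divide through by $\sum_k\eta_k=\Theta(\eta\sqrt K)$ and use $\sum_k\eta_k^2=O(\eta^2\log K)$; the leading, noise, and bias-(truncation/critic) terms then collapse into $\tilde{\mathcal O}(1/\sqrt{K+1}+\rho^{\kappa+1}+\sqrt{1/T_e})$ once $T$ is large enough for $1/\sqrt T$ to be dominated.

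It remains to control $\epsilon_{\mathrm{gen}}$, the source of the $\sqrt{1/M}$ term. I split it by the triangle inequality into (i) a domain-factor error, $\frac1M\sum_m\|\nabla_\theta J(\theta,\hat\bo^m;\bo^m)-\nabla_\theta J(\theta,\bo^m)\|\le L_J\max_m\|\hat\bo^m-\bo^m\|\le L_J\,\delta_\omega(T_e)$ by Assumption~\ref{asp:ds_lip}(iii) and Proposition~\ref{prop:dom_est}, which folds into the $\sqrt{1/T_e}$ term; and (ii) a sampling error $\sup_\theta\|\frac1M\sum_m\nabla_\theta J(\theta,\bo^m)-\mathbb E_{\bo\sim\cD}\nabla_\theta J(\theta,\bo)\|$. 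Since the iterates $\theta(k)$ depend on the sampled source domains, I make (ii) uniform over the compact set $\Theta$ (Assumption~\ref{asp:para_bd}): take an $\varepsilon$-net of $\Theta$ of cardinality $(3D_\Theta/\varepsilon)^{d^\theta}$ with $d^\theta=\sum_i d_i^\theta$, apply a vector Hoeffding/Bernstein bound to the i.i.d.\ bounded summands $\nabla_\theta J(\theta,\bo^m)$ at each net point, union-bound, and extend off-net using $L'$-Lipschitzness of $\nabla J$; choosing $\varepsilon\asymp1/M$ gives $\epsilon_{\mathrm{gen}}=\tilde{\mathcal O}(\sqrt{1/M}+\sqrt{1/T_e})$. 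Assembling the three pieces yields the stated bound, and the corollary for \texttt{GSAC} with ACR follows by replacing the inputs with their ACR versions and invoking Proposition~\ref{prop:app_er_ds} (which only changes the truncation constant).

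I expect the main obstacle to be the bias analysis of the second paragraph: showing rigorously that the single-trajectory, TD-critic, $\kappa$-hop-truncated, horizon-$T$ estimator $\hat g(k)$ is, up to the claimed error, an ascent direction for $\hat J$ at $\theta(k)$. This requires threading together the networked policy-gradient decomposition and exponential decay so that truncation costs only $O(\rho^{\kappa+1})$, propagating the per-iteration high-probability critic bound of Theorem~\ref{thm:critic_bd} uniformly over the $K$ outer loops (costing an absorbed $\log K$ and a $\delta\to\delta/K$), and checking that the finite-horizon cutoff is negligible. The remaining concentration and telescoping arguments are standard nonconvex-SGD bookkeeping.
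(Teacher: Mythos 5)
Your proposal is correct and follows essentially the same route as the paper's proof: the paper likewise applies the $L'$-smoothness ascent inequality with telescoping to a decomposition of $\hat g(k)$ into a critic-error term, an Azuma--Hoeffding martingale term, a $\kappa$-hop/finite-horizon truncation term bounded via exponential decay, a domain-factor term controlled by Assumption~\ref{asp:ds_lip} and Proposition~\ref{prop:dom_est}, and an $M$-domain sampling term handled by a uniform (covering-number) concentration bound over the compact set $\Theta$. Your grouping into bias $b(k)$, martingale noise $\xi(k)$, and $\epsilon_{\mathrm{gen}}$ is just a repackaging of the paper's five error terms $e^1(k),\dots,e^5(k)$, with the same conditions on $T$ and the same stepsize bookkeeping.
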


\subsection{Generalization}
In \textit{Phase 4}, for a new domain $\bo^{M+1} \sim \cD$, \texttt{ACR-free GSAC} collects $T_a$ trajectories, estimates $\hat{\bo}^{M+1}$, and deploys the policy $\pi^{\theta(K)}(\cdot|\bs, \hat{\bo}^{M+1})$. The expected return is:
\begin{align*}
J(\theta(K),\hat{\bo}^{M+1};\bo^{M+1})=\mathbb{E}_{\bs(0)\sim\rho_0}\mathbb{E}_{\ba(t)\sim\pi^\theta(\cdot|
\bs(t), \hat{\bo}^{M+1}), \bs(t+1)\sim P_{\bo^{M+1}}(\bs(t),\ba(t))}\left[\sum_{t=0}^{\infty} \gamma^t r_t\right].
\end{align*}
\begin{theorem}[Adaptation guarantee]
\label{thm:gen_bd}
Under Assumptions~\ref{asp:bd_reward}--\ref{asp:ds_bd}, with probability at least $1 - \delta$:
\[
\mathbb{E}\left[J(\theta(K), \hat{\bo}^{M+1}; \bo^{M+1}) \mid \theta(K)\right] \geq J(\theta(K)) - L_{\bo'} C_{\bo} \sqrt{\frac{\log(n / \delta)}{T_a}}.
\]
\end{theorem}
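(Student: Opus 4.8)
The plan is to reduce the adaptation gap to two essentially independent ingredients: a Lipschitz/performance-difference bound for conditioning the policy on an estimated rather than the true domain factor \emph{in a fixed environment}, and the finite-sample domain-estimation bound of Proposition~\ref{prop:dom_est}. For the realized target domain $\bo^{M+1}\sim\cD$ I would first write the exact decomposition
\begin{align*}
J(\theta(K),\hat{\bo}^{M+1};\bo^{M+1}) = J(\theta(K),\bo^{M+1}) - \big[\,J(\theta(K),\bo^{M+1};\bo^{M+1}) - J(\theta(K),\hat{\bo}^{M+1};\bo^{M+1})\,\big],
\end{align*}
using the convention $J(\theta,\bo)=J(\theta,\bo;\bo)$, so that everything reduces to controlling the bracketed mismatch penalty and then taking expectation over $\bo^{M+1}\sim\cD$ via $\mathbb{E}_{\bo\sim\cD}[J(\theta(K),\bo)]=J(\theta(K))$.

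Next I would bound the mismatch penalty by a Lipschitz argument in the policy's domain input. Holding the dynamics $P_{\bo^{M+1}}$ fixed, the two returns differ only through the policies $\pi^{\theta(K)}(\cdot\mid\bs,\hat{\bo}^{M+1})$ and $\pi^{\theta(K)}(\cdot\mid\bs,\bo^{M+1})$; combining the smoothness of the localized product policy in $\bo$ with a standard performance-difference (simulation-lemma) estimate over the effective horizon $(1-\gamma)^{-1}$ gives
\[
\big|J(\theta(K),\bo^{M+1};\bo^{M+1}) - J(\theta(K),\hat{\bo}^{M+1};\bo^{M+1})\big| \;\le\; L_{\bo'}\,\|\hat{\bo}^{M+1}-\bo^{M+1}\|_2,
\]
where the constant $L_{\bo'}$ collects $\bar r$, the $(1-\gamma)^{-2}$ horizon factor, and the per-agent policy-Lipschitz constant. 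Because $\pi^{\theta}$ factorizes over agents, the total-variation perturbation induced by a change in $\bo$ is additive over the relevant neighborhoods, which I would verify keeps $L_{\bo'}$ independent of $n$.

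Then I would invoke Proposition~\ref{prop:dom_est} with $T_e=T_a$: with probability at least $1-\delta$ over the $T_a$ adaptation trajectories, $\|\hat{\bo}^{M+1}-\bo^{M+1}\|_2 \le \delta_{\omega}(T_a)=\cO\big(\sqrt{D_\Omega\log(nT_a/\delta)/T_a}\big)$, and folding $\sqrt{D_\Omega}$ and the $\log T_a$ factor into $C_{\bo}$ yields the mismatch penalty $\le L_{\bo'}C_{\bo}\sqrt{\log(n/\delta)/T_a}$ on that event. On the complementary event (probability $\le\delta$) I would use reward nonnegativity (Assumption~\ref{asp:bd_reward}), which gives $J(\theta(K),\hat{\bo}^{M+1};\bo^{M+1})\ge 0$, together with the uniform bound $J\le \bar r/(1-\gamma)$; the resulting residual of order $\cO(\delta\bar r/(1-\gamma))$ is negligible in the regime of interest and absorbed into the constant (or handled by choosing $\delta$ appropriately). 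Taking expectation over $\bo^{M+1}\sim\cD$ and using $\mathbb{E}_{\bo\sim\cD}[J(\theta(K),\bo)]=J(\theta(K))$ then yields $\mathbb{E}[J(\theta(K),\hat{\bo}^{M+1};\bo^{M+1})\mid\theta(K)]\ge J(\theta(K)) - L_{\bo'}C_{\bo}\sqrt{\log(n/\delta)/T_a}$.

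I expect the second step to be the main obstacle. Assumption~\ref{asp:ds_lip} provides smoothness of the \emph{on-diagonal} objective $\bo\mapsto J(\theta,\bo)=J(\theta,\bo;\bo)$ and of $Q_i^\theta$ in $\bo$, but the adaptation gap perturbs only the policy, not the dynamics, so a dedicated performance-difference estimate with a policy-smoothness input is needed; the delicate part is tracking how the localized, product-form policy structure enters the effective Lipschitz constant and confirming it does not scale with the number of agents. The remaining pieces — the concentration bound, the expectation over $\cD$, and the bad-event bookkeeping — are routine.
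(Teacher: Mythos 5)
Your proposal follows essentially the same route as the paper's proof: the same decomposition of $J(\theta(K),\hat{\bo}^{M+1};\bo^{M+1})$ around $J(\theta(K),\bo^{M+1})$, a Lipschitz bound in the estimated domain factor, Proposition~\ref{prop:dom_est} invoked with $T_e=T_a$, and the final expectation step $\mathbb{E}_{\bo\sim\cD}[J(\theta(K),\bo)]=J(\theta(K))$. The one place you flag as the main obstacle---deriving the constant $L_{\bo'}$ via a performance-difference argument from policy smoothness in $\bo$---is not something the paper attempts: its proof simply asserts Lipschitz continuity of $J(\theta,\bo';\bo)$ with respect to $\bo'$ with constant $L_{\bo'}$, which is never derived from Assumptions~\ref{asp:bd_reward}--\ref{asp:ds_bd} (Assumption~\ref{asp:ds_lip} only covers the on-diagonal objects, exactly as you observe), so carrying out your derivation would strengthen rather than depart from the paper's argument. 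You are also more careful than the paper about the failure event: the paper plugs the high-probability bound $\|\hat{\bo}^{M+1}-\bo^{M+1}\|\le C_{\bo}\sqrt{\log(n/\delta)/T_a}$ directly into the conditional expectation without the $\cO(\delta\bar r/(1-\gamma))$ bad-event bookkeeping you describe, and it likewise absorbs the $\sqrt{D_\Omega}$ and $\log T_a$ factors from Proposition~\ref{prop:dom_est} into the constant, just as you propose.
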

Theorem~\ref{thm:gen_bd} establishes that the adaptation gap decreases at a rate of $\cO(1/\sqrt{T_a})$, relative to the return of the meta-trained policy. The proof is deferred to Appendix~\ref{app:pf_cvg}.

\subsection{Additional gains from ACR}
\label{sec:imp_acr}
Notably, ACR introduces only a constant-factor increase in approximation error over truncation (multiplicative factor 3), and thus all ACR-free convergence and generalization results naturally extend to \texttt{GSAC}. Beyond computational gains discussed in Section~\ref{sec:acr}, ACR also improves \emph{sample efficiency}. In Theorem~\ref{thm:critic_bd}, the explicit constants are defined as $C_a := \frac{6\bar{\epsilon}}{1-\sqrt{\gamma}}\sqrt{\frac{\tau h}{\sigma}[\log(\frac{2\tau T^2}{\delta}) + f(\kappa)\log SA]}$ with $\bar{\epsilon} := 4\frac{\bar{r}}{1-\gamma} + 2\bar{r}$, $C'_a := \frac{2}{1-\sqrt{\gamma}}\max(\frac{16\bar{\epsilon}h\tau}{\sigma}, \frac{2\bar{r}}{1-\gamma}(\tau + t_0))$, and $C'_{\bo}:=L_Q C_{\bo}\sqrt{D_\Omega}$, and the stepsize $\alpha_t = \frac{h}{t+t_0}$ satisfying $h \geq \frac{1}{\sigma}\max(2, \frac{1}{1-\sqrt{\gamma}})$, $t_0 \geq \max(2h, 4\sigma h, \tau)$. By leveraging ACR, the effective dimensionality of the $\kappa$-hop state space is dramatically reduced, leading to smaller $\tau$ and larger $\sigma$ in Assumption~\ref{asp:exploration}. This reduces $C_a$, $C'_a$, and $t_0$, enabling faster convergence and reducing the required inner-loop iterations $T$ to achieve a given accuracy in $Q$-value estimation of within critic, thereby decreasing the total sample complexity.

\section{Numerical experiments}
\label{sec:exp}
We evaluate \texttt{GSAC} on two standard benchmarks for networked MARL algorithms: wireless communications \citep{zocca2019temporal,Qu2022,ying2023scalable} and traffic control \citep{varaiya2013max,Qu2022} (the latter deferred to Appendix~\ref{app:traffic}). A detailed description of the experimental setup and comprehensive results are provided in Appendix~\ref{app:exp}. We consider $M = 3$ source domains, each with domain factors $\bo\in\{0.2, 0.5, 0.8\}$, while the target domain uses $\bo_{\text{target}} = 0.65$ unless otherwise specified. In each domain, \texttt{GSAC} is trained for $K$ outer iterations (depending on convergence), with a inner loop horizon $T = 10$.  For domain factor estimation, we collect $T_e = 20$ trajectories per domain. To evaluate adaptation performance in the target domain, we compare \texttt{GSAC} against three baselines:
\begin{itemize}[leftmargin=*]
    \item \textbf{\texttt{GSAC} (ours)}: Learn $\pi_{\theta}(a|s,\omega)$ by optimizing $\theta$ over source domains. 
    At test time, estimate $\omega'$ and directly deploy $\pi_{\theta}(a|s,\omega')$ in the target domain without further training.
    
    \item \textbf{\texttt{SAC-MTL} (multi-task)}: Learn $\pi_{\theta}(a|s,z)$, where $z$ denotes the \emph{pre-specified} one-hot encoding of each source domain, and jointly optimize $\theta$ across domains \citep{yu2020meta}; deploy $\pi_{\theta}(a|s,z')$ in the target domain.
    
    \item \textbf{\texttt{SAC-FT} (fine-tune)}: Train a single policy $\pi_{\theta}(a|s)$ across source domains without domain-factor conditioning and fine-tune $\theta$ in the target domain.
    
    \item \textbf{\texttt{SAC-LFS} (learning from scratch)}: Train $\pi_{\theta}(a|s)$ in the target domain without prior meta-training.
\end{itemize}

To evaluate both scalability and generalizability, we vary the grid size of the wireless communication network ($\text{grid size} \in \{3, 4, 5\}$), which affects both the number of users and the connectivity structure (see Figure \ref{fig:grid}-\ref{fig:neighbor}). As shown in Figure \ref{fig:grid-adapt} and Figure \ref{fig:grid-train}, \texttt{GSAC} consistently maintains high training and adaptation performance across grid sizes, demonstrating its scalability and generalizability. In particular, \texttt{GSAC} achieves the best few-shot performance (Episodes 1–30), reflecting rapid adaptation from minimal target data. 
In contrast, \texttt{SAC-MTL} exhibits moderate performance during the early adaptation phase but improves steadily over time. This is because multi-task learning leverages shared structure across source domains, yet lacks explicit domain-factor conditioning, leading to slower adaptation compared to \texttt{GSAC}. \texttt{SAC-FT} performs worse in the early phase due to its need to fine-tune policy parameters directly in the target domain, resulting in higher sample complexity. \texttt{SAC-LFS} suffers the slowest convergence and lowest overall return, underscoring the importance of meta-training and domain factor conditioning for efficient cross-domain adaptation.
\begin{figure}[t]
    \centering
    \begin{subfigure}[t]{0.32\textwidth}
        \centering
        \includegraphics[width=\textwidth]{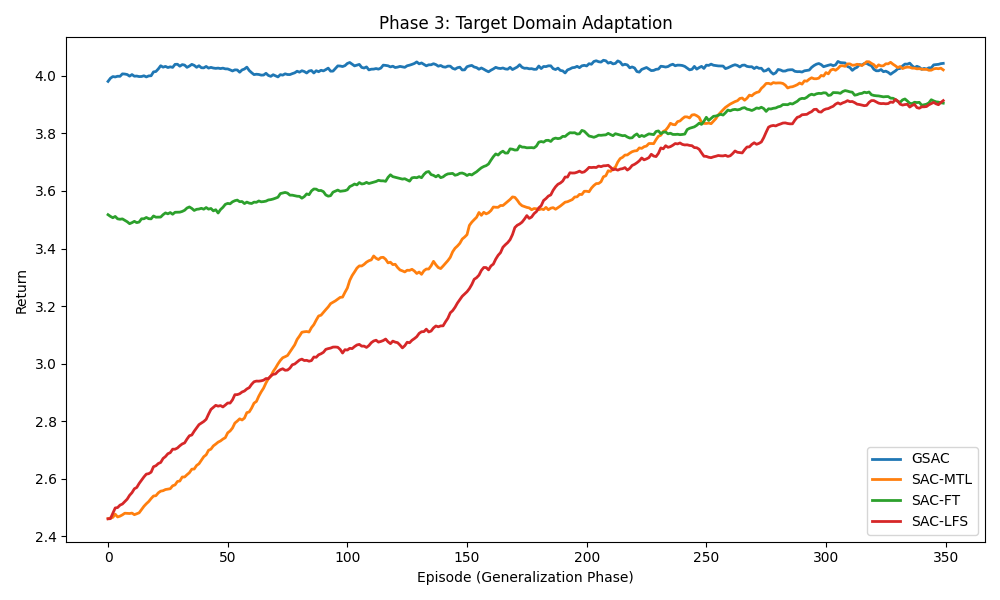}
        \caption{grid size 3 (16 agents)}
        \label{fig:grid-adapt-gs3}
    \end{subfigure}
    \begin{subfigure}[t]{0.32\textwidth}
        \centering
        \includegraphics[width=\textwidth]{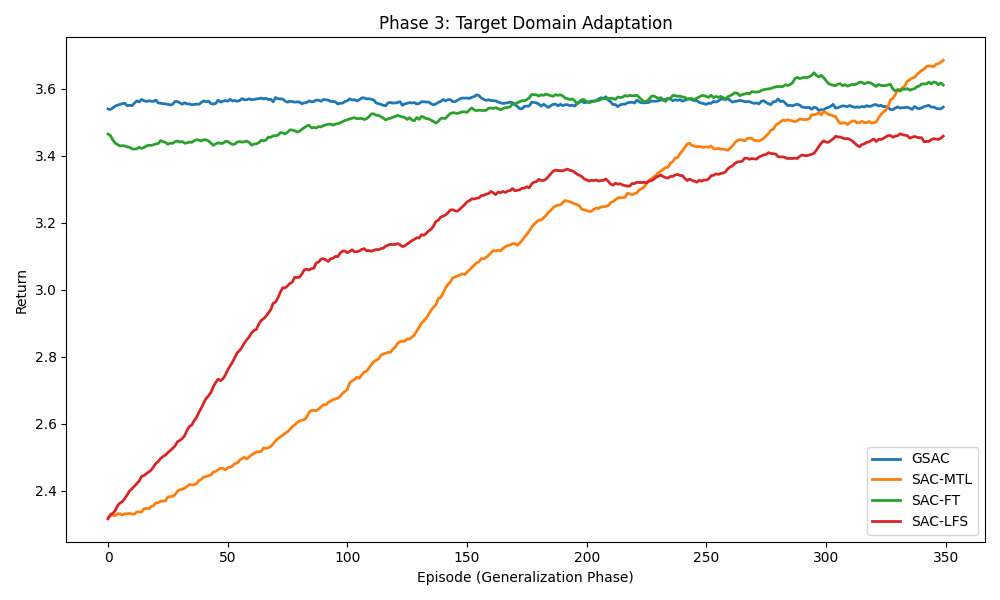}
        \caption{grid size 4 (25 agents)}
        \label{fig:grid-adapt-gs4}
    \end{subfigure}
    \begin{subfigure}[t]{0.32\textwidth}
        \centering
        \includegraphics[width=\textwidth]{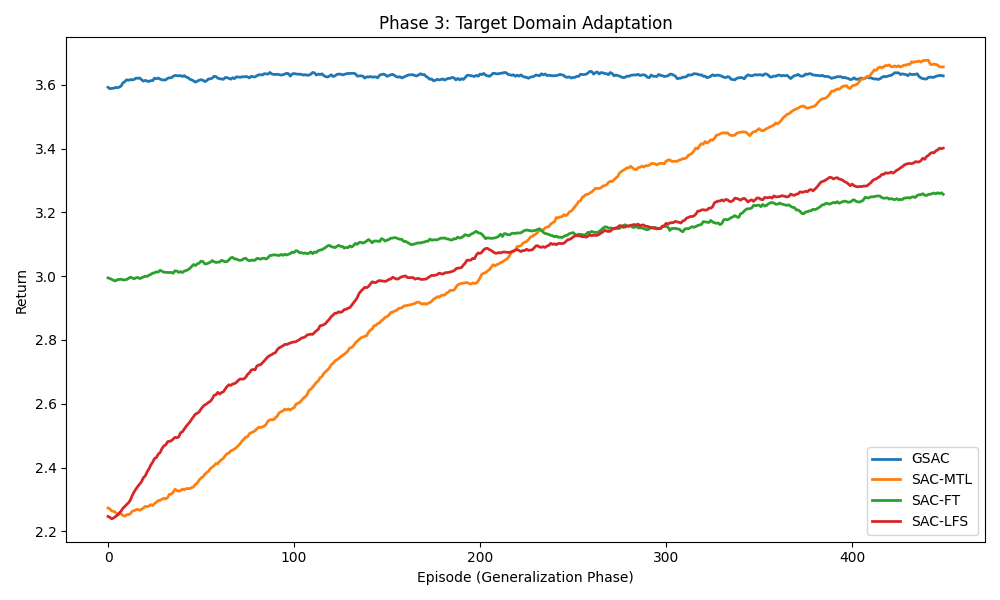}
        \caption{grid size 5 (36 agents)}
        \label{fig:grid-adapt-gs5}
    \end{subfigure}
    \caption{Adaptation comparison for different grid sizes in wireless communication benchmarks.}
    \label{fig:grid-adapt}
\end{figure}

\section{Conclusions and future work}
\label{sec:con}
We presented \texttt{GSAC}, a causality-aware MARL framework that integrates ACR construction with meta actor–critic learning, achieving provable scalability, fast adaptation, and strong generalization across domains. We established quasi-linear sample complexity and finite-sample convergence guarantees, and showed empirically that \texttt{GSAC} consistently outperforms competitive baselines on challenging networked MARL benchmarks.

The main limitation of our current work is that \texttt{GSAC} has only been evaluated on tabular and fully observed benchmarks. Nonetheless, it establishes a principled foundation for practical control of large-scale networked systems. Promising future directions include:
(i) extending \texttt{GSAC} to continuous state and action spaces with function approximation based on ACRs \cite{huang2022adar};
(ii) broadening empirical evaluation to more diverse and large-scale networked systems \cite{ma2024efficient}; and
(iii) incorporating partial observability into the framework.

\newpage
\section*{Acknowledgments}
We would like to thank all the anonymous reviewers for their careful proofreading and constructive feedback, which have greatly improved the quality of this work. In particular, we thank one reviewer for the insightful suggestions on adding the roadmap linking theorems and algorithms in Section~\ref{sec:gsac}, as well as for recommending additional discussions on Assumptions~\ref{asp:bd_reward}–\ref{asp:ds_bd}. We also thank the other reviewers for suggesting the inclusion of two additional baseline methods and an extra benchmark, along with further discussions on the assumptions underlying Theorem~\ref{thm:id} and Proposition~\ref{prop:cs_id}. We are grateful to Chengdong Ma for valuable initial discussions. This work was supported by the Engineering and Physical Sciences Research Council (EPSRC) under grant numbers EP/Y003187/1 and UKRI849.

\bibliography{ref.bib}
\bibliographystyle{plain}

\newpage
\appendix

\section{Related works}
\label{app:rel}
\paragraph{Networked MARL.}  Networked systems are prevalent across a wide range of real-world domains, including power grids \cite{chen2022reinforcement,cheng2024economic}, transportation networks \cite{zhang2023global,hu2025real}, and wireless communications \cite{bi2016wireless,zocca2019temporal}. Our work is closely related to the rapidly growing body of research on \emph{multi-agent reinforcement learning in networked systems}~\cite{qu2020scalable,lin2021multi,Qu2022}. These studies exploit localized interactions to improve scalability, typically by designing decentralized policy optimization algorithms that learn per-agent policies with theoretical convergence guarantees. Notably, \cite{Qu2022} provide the first provably efficient MARL framework for networked systems under the discounted reward setting. Subsequent extensions address average-reward formulations~\cite{qu2020scalable} and stochastic, non-local network topologies~\cite{lin2021multi}. However, none of these methods simultaneously address \emph{both} generalization across domains and scalability in large networked systems. Our work fills this gap by proposing a principled framework—built on causal representation learning and domain-conditioned policy optimization—that achieves provable generalization and scalability in networked MARL.

A complementary but distinct line of work focuses on \textit{fully decentralized MARL via neighbor-to-neighbor communication}. \cite{zhang2018fully} propose a fully decentralized actor–critic algorithm, where agents update local policies using only local observations and consensus-based messages from neighbors, and prove convergence under linear function approximation. Other strategies such as mean-field MARL~\cite{yang2018meanfield} approximate interactions by aggregating neighborhood effects, trading some fidelity for tractability. Graph neural networks and message-passing schemes have also gained traction: \cite{jiang2018graph} introduce Deep Graph Network, employing attention-weighted graph convolution to improve cooperation in large-scale MARL environments.

More recent theoretical efforts in decentralized learning emphasize convergence speed and communication complexity under peer-to-peer protocols. \cite{Lidard2022} show that message passing improves regret bounds over non-communicative baselines. \cite{Stankovic2022} establish almost-sure convergence for consensus-based multi-task actor–critic algorithms. \cite{Xiang2023} propose a communication protocol for distributed adaptive control with provable bounds on consensus error and policy suboptimality. \cite{Lim2024} provide the first finite-sample analysis of distributed tabular Q-learning, linking sample complexity to spectral properties of the network and Markov mixing times. Finally, \cite{Hairi2024} design a communication-efficient decentralized TD-learning algorithm that matches optimal sample complexity while reducing communication overhead.

\paragraph{Domain Generalization and Adaptation in RL.}  
RL agents frequently face \emph{environmental shifts} between training and deployment, motivating research into policy generalization and fast adaptation. One line of work leverages \textit{causal representation learning} to identify invariant features and disentangle spurious correlations. For example, \cite{Lu2022InvariantRL} introduce a framework for learning invariant causal representations, with provable generalization across environments. Similarly, \cite{Zhang2023COREP} trace causal origins of non-stationarity to construct stable representations. \cite{huang2022adar} propose \texttt{AdaRL}, which learns parsimonious graphical models that pinpoint minimal domain-specific differences, enabling efficient adaptation from limited target data. Related efforts seek \emph{domain-invariant representations} through bisimulation metrics~\cite{zhang2021bisimulation}, which preserve decision-relevant dynamics while filtering nuisance factors. Causal modeling has also been applied to goal-conditioned transitions~\cite{ding2022generalizable} and structure-aware meta-learning~\cite{huang2022adar}. However, these methods are primarily limited to single-agent settings.

In contrast, our work introduces \emph{approximately compact representations} in networked MARL, providing the first sample complexity guarantees for structural identifiability and enabling both scalable learning and provable generalization across domains.

Additional approaches not directly aligned with ours include \textit{meta-RL} and \textit{sim-to-real transfer}. Meta-RL methods such as PEARL~\cite{Rakelly2019PEARL}, VariBAD~\cite{Zintgraf2020VariBAD}, and ProMP~\cite{Rothfuss2019ProMP} remain foundational for few-shot adaptation by optimizing meta-objectives or latent context embeddings. In sim-to-real transfer, \cite{Wagenmaker2024Explore} demonstrate provable improvements in exploration efficiency by leveraging simulated environments during training. In imitation learning, \cite{Bica2023ICIL} propose ICIL, which learns invariant causal features to replicate expert behavior across different domains. While effective in their respective domains, these works do not address the unique structural and scalability challenges posed by large-scale networked MARL.

\section{Table of notation}
\label{app:tab}

\begin{center}
\begin{tabular}{p{0.2\linewidth}|p{0.7\linewidth}}
\textbf{Symbol} & \textbf{Definition} \\ \hline
\multicolumn{2}{c}{\textit{Networked MARL Setting}} \\
\hline
$\sN$ & Set of agents, indexed by $i \in \sN$ \\
$\sG = (\sN, \sE)$ & Agent interaction graph \\
$\cN_i$, $\cN_i^\kappa$ & 1-hop and $\kappa$-hop neighborhood of agent $i$ \\
$\cN_{-i}^\kappa$ & Agents not in $\cN_i^\kappa$, i.e., $\sN \setminus \cN_i^\kappa$ \\
$\cS, \cS_i$ & Global and agent $i$'s state space \\
$\cA, \cA_i$ & Global and agent $i$'s action space \\
$\bs, \bs_i, \bs_{\cN_i}$ & Global state, local state of agent $i$, and neighbor states \\
$\ba, \ba_i, \ba_{\cN_i}$ & Global action, agent $i$'s action, and neighborhood actions \\
$\pi^\theta$, $\pi_i^{\theta_i}$ & Joint policy, $\prod_{i=1}^n \pi_i^{\theta_i}$; Agent $i$'s local policy, parameterized by $\theta_i$ \\
$\theta$, $\theta_i$ & Joint and local policy parameters \\
$r_i(\bs_i, \ba_i)$, $r(\bs, \ba)$ & Local reward function, Global reward: $\frac{1}{n} \sum_i r_i(\bs_i, \ba_i)$ \\
$\rho_0$ & Initial state distribution \\
$\gamma$ & Discount factor \\
\hline
\multicolumn{2}{c}{\textit{Causal Modeling and Domain Generalization}} \\
\hline
$\bo_i$, $\bo$ & Latent domain factor for agent $i$, and full domain vector \\
$\Omega$, $\Omega_i$ & Domain factor space and local domain space \\
$\cD$ & Distribution over domain factors $\bo$ \\
$\bc^{\cdot \rightarrow \cdot}$ & Binary causal masks (e.g., $\bc^{\ba \rightarrow \bs}$) \\
$f_{i,j}(\cdot)$ & Transition function for $j$-th state dimension of agent $i$ \\
$\epsilon_{i,j}^{\bs}(t)$ & Random noise in transition dynamics \\
$s_{i,j}$ & $j$-th component of agent $i$'s state \\
$d_i^\bs, d_i^\ba$ & Dimensionality of state and action space of agent $i$ \\
\hline
\multicolumn{2}{c}{\textit{Value Functions and ACR}} \\
\hline
$Q^\pi(\bs, \ba)$, $Q_i^\pi(\bs, \ba)$ & Global action-value function, Local value contribution of agent $i$ \\
$\hat{Q}_i^\pi$, $\tilde{Q}_i^\pi$ & Truncated $Q$-function, ACR-based approximate $Q$-function \\
$\bs_{\cN_i^\kappa}^\circ$, $\bs_{\cN_i}^\circ$ & ACR  of $\kappa$-hop state, ACR of agent $i$'s local neighborhood state \\
$\bo_{\cN_i^\kappa}^\circ$, $\bo_{\cN_i}^\circ$ & ACR of $\kappa$-hop domain factor, ACR of local domain factor \\
$\tilde{\pi}_i$ & ACR-based approximately compact local policy \\
$\bar{\bs}, \bar{\ba}, \bar{\bo}$ & Fixed placeholders for unobserved variables in truncation \\
\hline
\multicolumn{2}{c}{\textit{Algorithm and Learning}} \\
\hline
$T$ & Inner-loop episode horizon \\
$K$ & Number of outer-loop meta-training iterations \\
$T_e$ & Number of trajectories used for estimating $\bo$ during meta-training \\
$T_a$ & Number of trajectories used to estimate $\bo$ in target domain \\
$\alpha_t$, $\eta_k$ & Critic and actor learning rates \\
$\hat{Q}_i^t$ & Estimated critic for agent $i$ at time $t$ \\
$\hat{g}_i(k)$ & Estimated policy gradient at outer iteration $k$ \\
$h, t_0$ & Critic learning rate parameters: $\alpha_t = \frac{h}{t + t_0}$ \\
\hline
\multicolumn{2}{c}{\textit{Theoretical Constants and Bounds}} \\
\hline
$\bar{r}$ & Upper bound on local rewards \\
$\lambda$ & Minimum signal strength for causal discovery \\
$D_\Omega$ & Diameter of domain factor space \\
$L_P, L_Q, L_J$ & Lipschitz constants for dynamics, $Q$-function, gradient in $\bo$ \\
$L, L'$ & Gradient and smoothness bounds of policy $\pi^\theta$ \\
$\sigma, \tau$ & Minimum visitation probability and mixing time constants \\
$C_a, C'_a$ & Constants in critic approximation bounds \\
$\delta, \delta_\omega$ & Confidence level and domain factor estimation error \\
$\rho$ & Exponential decay rate in $Q$-function approximation \\
$\kappa$ & Truncation radius (number of hops) \\
$M$ & Number of source domains \\
$n$ & Number of agents \\
$S, A$ & Maximum size of local state/action space \\
\hline
\end{tabular}
\label{table:notation}
\end{center}

\section{Background in networked MARL}
\label{app:bg}
In this section, we provide background on networked MARL, beginning with the definition of the exponential decay property introduced in \cite{qu2020scalable}.
\begin{definition}[Exponential decay property]
The $(c, \rho)$-exponential decay property holds if, for any localized policy $\theta$, for any $i \in \sN, \bs_{\cN_i^\kappa} \in \mathcal{S}_{\cN_i^\kappa}, \bs_{\cN_{-i}^\kappa}, \bs_{\cN_{-i}^\kappa}^{\prime} \in \mathcal{S}_{\cN_{-i}^\kappa}, \ba_{\cN_i^\kappa} \in \mathcal{A}_{\cN_i^\kappa}, \ba_{\cN_{-i}^\kappa}, \ba_{\cN_{-i}^\kappa}^{\prime} \in \mathcal{A}_{\cN_{-i}^\kappa}, Q_i^\pi$ satisfies,
$$
\left|Q_i^\pi\left(\bs_{\cN_i^\kappa}, \bs_{\cN^\kappa_{-i}}, \ba_{\cN_i^\kappa}, \ba_{\cN^\kappa_{-i}}\right)-Q_i^\pi\left(\bs_{\cN_i^\kappa}, \bs_{\cN_{-i}^\kappa}^{\prime}, a_{\cN_i^\kappa}, \ba_{\cN_{-i}}^{\prime}\right)\right| \leq c \rho^{\kappa+1}.
$$
\end{definition}
Furthermore, \cite{Qu2022} proves that the exponential decay property holds generally with $\rho=\gamma$ for discounted MDPs.
\begin{lemma}[] 
\label{lem:exp_decay_app}
Assume $\forall i, r_i \leq \bar{r}$. Then  the $\left(\frac{\bar{r}}{1-\gamma}, \gamma\right)$-exponential decay property holds.
\end{lemma}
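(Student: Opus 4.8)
The plan is to prove the $(\tfrac{\bar r}{1-\gamma},\gamma)$-exponential decay property for $Q_i^\pi$ by a coupling argument that exploits the locality built into the transition dynamics \eqref{eqt:dyn} and into the localized policy $\pi_i(\cdot\mid\bs_{\cN_i})$ (this is the argument of \cite{Qu2022}). Fix an agent $i$, a localized policy $\pi$, an integer $\kappa\ge 0$, and two global configurations $(\bs,\ba)$ and $(\bs',\ba')$ that agree on the $\kappa$-hop neighborhood $\cN_i^\kappa$ but may differ on $\cN_{-i}^\kappa$. I would realize the two induced processes on a common probability space: the internal randomness of each kernel $P_j$ is shared across the two copies, and each policy factor $\pi_j$ is sampled with common random numbers (e.g.\ via a shared uniform and the inverse CDF). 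On this space, whenever the inputs feeding a transition kernel $P_j$ or a policy factor $\pi_j$ coincide in the two copies, the corresponding outputs coincide as well.

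The core of the proof is a ``light-cone'' induction on time: for every $0\le t\le\kappa$ and every agent $j\in\cN_i^{\kappa-t}$, the coupled states satisfy $\bs_j(t)=\bs_j'(t)$. The base case $t=0$ is exactly the hypothesis that the two configurations agree on $\cN_i^\kappa$. For the inductive step, take $j\in\cN_i^{\kappa-t-1}$, so that $\cN_j\subseteq\cN_i^{\kappa-t}$; since $\bs_j(t+1)$ is a function only of $\bs_{\cN_j}(t)$, of $\ba_j(t)$, and of the internal randomness of $P_j$, and since $\ba_j(t)$ is in turn a function of $\bs_{\cN_j}(t)$ and the shared policy randomness, the induction hypothesis applied to each $l\in\cN_j$ forces $\bs_j(t+1)=\bs_j'(t+1)$. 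Specializing $j=i$ gives $\bs_i(t)=\bs_i'(t)$ for all $t\le\kappa$; combining this with the one-agent locality of $r_i(\bs_i,\ba_i)$ and one further propagation step through $\pi_i$, the per-step rewards $r_i(0),\dots,r_i(\kappa)$ are identical in the two copies.

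It then suffices to bound the discounted tail. Since $0\le r_i\le\bar r$, we have $|r_i(t)-r_i'(t)|\le\bar r$ for every $t$, and the first $\kappa+1$ rewards cancel under the coupling, hence
\[
\bigl|Q_i^\pi(\bs,\ba)-Q_i^\pi(\bs',\ba')\bigr|
=\Bigl|\,\E\sum_{t=0}^{\infty}\gamma^t\bigl(r_i(t)-r_i'(t)\bigr)\Bigr|
\le\bar r\sum_{t=\kappa+1}^{\infty}\gamma^t
=\frac{\bar r}{1-\gamma}\,\gamma^{\kappa+1}.
\]
Since $i$, $\pi$, $\kappa$ and the configurations were arbitrary, this is the asserted exponential decay property.

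The step I expect to be the main obstacle is the light-cone induction together with the hop bookkeeping at its boundary: one must verify that a single time step of propagation enlarges the ``frozen'' neighborhood by exactly one hop, handle correctly the extra hop contributed by the localized policy $\pi_i(\cdot\mid\bs_{\cN_i})$ relative to the single-agent locality of the reward $r_i(\bs_i,\ba_i)$, and check that the common-randomness coupling is legitimate, so that ``equal inputs $\Rightarrow$ equal outputs'' holds simultaneously for all agents and all times. The remaining ingredients — passing from laws to a coupling and summing the geometric series — are routine.
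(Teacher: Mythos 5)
Your overall strategy -- couple the two trajectories with common randomness, run a light-cone induction showing state agreement on $\cN_i^{\kappa-t}$ at time $t$, and then bound the uncancelled discounted tail by a geometric series -- is exactly the argument this lemma rests on. The paper itself does not reprove the lemma (it defers to \cite{Qu2022}); its closest in-house proof, that of Proposition~\ref{prop:app_acr-v}, runs the same locality argument at the level of the conditional marginals $\rho_{t,i}$ together with a total-variation bound, which is an equivalent packaging of your common-random-numbers coupling. So the route is the same; the issue is the boundary step you yourself flagged.

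That step is where the argument, as written, has a genuine gap under the policy class of this paper. Here $\pi_i$ conditions on the one-hop neighborhood, $\pi_i(\cdot\mid\bs_{\cN_i})$. Your induction yields $\bs_j(t)=\bs_j'(t)$ for $j\in\cN_i^{\kappa-t}$, so at $t=\kappa$ only $\bs_i(\kappa)$ is matched. But $r_i(\kappa)=r_i(\bs_i(\kappa),\ba_i(\kappa))$ with $\ba_i(\kappa)\sim\pi_i(\cdot\mid\bs_{\cN_i}(\kappa))$, and the one-hop neighbours' states at time $\kappa$ depend on initial states at graph distance $\kappa+1$, which the hypothesis does not fix; the coupling therefore cannot match $\ba_i(\kappa)$, and the induction cannot be strengthened to include it. Consequently your coupling only cancels $r_i(0),\dots,r_i(\kappa-1)$, and what you can actually conclude is $\bigl|Q_i^\pi(\bs,\ba)-Q_i^\pi(\bs',\ba')\bigr|\le \bar r\sum_{t=\kappa}^{\infty}\gamma^t=\frac{\bar r}{1-\gamma}\gamma^{\kappa}$, one power of $\gamma$ short of the stated bound. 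The $\gamma^{\kappa+1}$ constant is correct when the localized policy conditions only on the agent's own state, $\pi_i(\cdot\mid\bs_i)$, which is the setting of the cited source: there $\ba_i(\kappa)$ is a function of $\bs_i(\kappa)$ alone and no ``further propagation step'' is needed. So either work in that policy class (or strengthen the hypothesis to agreement of states on $\cN_i^{\kappa+1}$), or settle for the exponent $\kappa$ -- note that the paper's own $\bo$-conditioned analogue, Lemma~\ref{lem:exp_decay_ds}, is indeed stated with $\gamma^{\kappa}$. The remaining ingredients of your write-up (the legitimacy of the coupling and the tail summation) are fine.
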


The exponential decay property implies that the dependence of $Q_i^\pi$ on other agents shrinks quickly as the distance grows, which motivates the truncated $Q$-functions \citep{Qu2022},
\begin{equation*}
\label{eqt:trun_app}
   \hat{Q}_i^\pi\left(\bs_{\cN_i^\kappa}, \ba_{\cN_i^\kappa}\right):=\sum_{\bs_{\cN_{-i}^\kappa} \in \mathcal{S}_{\cN_{-i}^\kappa}, \ba_{\cN_{-i}^\kappa} \in \mathcal{A}_{\cN_{-i}^\kappa}} w_i\left(\bs_{\cN_{-i}^\kappa}, \ba_{\cN_{-i}^\kappa} ; \bs_{\cN_i^\kappa}, \ba_{\cN_i^\kappa}\right) Q_i^\pi\left(s_{\cN_i^\kappa}, \bs_{\cN_{-i}^\kappa}, \ba_{\cN_i^\kappa}, \ba_{\cN_{-i}^\kappa}\right), 
\end{equation*}
where $w_i\left(\bs_{\cN_{-i}^\kappa}, \ba_{\cN_{-i}} ; \bs_{\cN_i^\kappa}, \ba_{\cN_i^\kappa}\right)$ are any non-negative weights satisfying
$$\sum_{\bs_{\cN_{-i}} \in \mathcal{S}_{\cN_{-i}^\kappa}, \ba_{\cN_{-i}^\kappa} \in \mathcal{A}_{\cN^\kappa_{-i}}} w_i\left(\bs_{\cN_{-i}^\kappa}, \ba_{\cN_{-i}} ; \bs_{\cN_i^\kappa}, \ba_{\cN_i^\kappa}\right)=1, \quad \forall\left(\bs_{\cN_i^\kappa}, \ba_{\cN_i^\kappa}\right) \in \mathcal{S}_{\cN_i^\kappa} \times \mathcal{A}_{\cN_i^\kappa}.$$
With the definition of the truncated $Q$-function, when the exponential decay property holds, the truncated $Q$-function  approximates the full $Q$-function with high accuracy and can be used to approximate the policy gradient. 
\begin{lemma}
\label{lem:trun_exp_app}
Under the $(c, \rho)$-exponential decay property, the following holds:
(a) Any truncated $Q$-function  satisfies,
$$
\sup _{(\bs, \ba) \in \mathcal{S} \times \mathcal{A}}\left|\hat{Q}_i^\pi\left(\bs_{\cN_i^\kappa}, \ba_{\cN_i^\kappa}\right)-Q_i^\pi(\bs, \ba)\right| \leq c \rho^{\kappa+1}.
$$
(b) Given $i$, define the following truncated policy gradient,
$$
\hat{h}_i(\theta)=\frac{1}{1-\gamma} \mathbb{E}_{\bs \sim \rho^\theta, \ba \sim \pi^\theta(\cdot \mid \bs)}\left[\frac{1}{n} \nabla_{\theta_i} \log \pi_i^{\theta_i}\left(\ba_i \mid \bs_i\right) \sum_{j \in \cN_i^\kappa} \hat{Q}_j^\pi\left(\bs_{\cN_j^\kappa}, \ba_{\cN_j^\kappa}\right)\right].
$$
Then, if $\left\|\nabla_{\theta_i} \log \pi_i^{\theta_i}\left(\ba_i \mid \bs_i\right)\right\| \leq$ $L_i, \forall \ba_i, \bs_i$, we have $\left\|\hat{h}_i(\theta)-\nabla_{\theta_i} J(\theta)\right\| \leq \frac{c L_i}{1-\gamma} \rho^{\kappa+1}$.
\end{lemma}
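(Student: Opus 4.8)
For part (a), the plan is to use that $\hat{Q}_i^\pi(\bs_{\cN_i^\kappa},\ba_{\cN_i^\kappa})$ is a convex combination of the values $Q_i^\pi(\bs_{\cN_i^\kappa},\cdot,\ba_{\cN_i^\kappa},\cdot)$, since the weights $w_i$ are nonnegative and sum to one over $(\bs_{\cN_{-i}^\kappa},\ba_{\cN_{-i}^\kappa})$. Fixing an arbitrary global pair $(\bs,\ba)=(\bs_{\cN_i^\kappa},\bs_{\cN_{-i}^\kappa},\ba_{\cN_i^\kappa},\ba_{\cN_{-i}^\kappa})$ and using $\sum w_i=1$, I would write $\hat{Q}_i^\pi(\bs_{\cN_i^\kappa},\ba_{\cN_i^\kappa})-Q_i^\pi(\bs,\ba)$ as a $w_i$-weighted sum of differences of the form $Q_i^\pi(\bs_{\cN_i^\kappa},\bs'_{\cN_{-i}^\kappa},\ba_{\cN_i^\kappa},\ba'_{\cN_{-i}^\kappa})-Q_i^\pi(\bs_{\cN_i^\kappa},\bs_{\cN_{-i}^\kappa},\ba_{\cN_i^\kappa},\ba_{\cN_{-i}^\kappa})$, each bounded in absolute value by $c\rho^{\kappa+1}$ by the $(c,\rho)$-exponential decay property. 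The triangle inequality then yields $|\hat{Q}_i^\pi(\bs_{\cN_i^\kappa},\ba_{\cN_i^\kappa})-Q_i^\pi(\bs,\ba)|\le c\rho^{\kappa+1}$, and taking the supremum over $(\bs,\ba)$ gives the claim.

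For part (b), I would start from the policy gradient theorem. Since only the $i$-th factor of $\pi^\theta=\prod_k\pi_k^{\theta_k}$ depends on $\theta_i$, we have $\nabla_{\theta_i}\log\pi^\theta(\ba|\bs)=\nabla_{\theta_i}\log\pi_i^{\theta_i}(\ba_i|\bs_{\cN_i})$, and combined with $Q^\pi=\tfrac1n\sum_{j=1}^nQ_j^\pi$ this gives
\[
\nabla_{\theta_i}J(\theta)=\frac{1}{1-\gamma}\,\mathbb{E}_{\bs\sim\rho^\theta,\,\ba\sim\pi^\theta(\cdot|\bs)}\!\left[\frac1n\,\nabla_{\theta_i}\log\pi_i^{\theta_i}(\ba_i|\bs_{\cN_i})\sum_{j=1}^nQ_j^\pi(\bs,\ba)\right].
\]
Subtracting this from the definition of $\hat h_i(\theta)$ leaves $\tfrac{1}{1-\gamma}\mathbb{E}\big[\tfrac1n\nabla_{\theta_i}\log\pi_i^{\theta_i}(\ba_i|\bs_{\cN_i})\big(\sum_{j\in\cN_i^\kappa}(\hat Q_j^\pi-Q_j^\pi)-\sum_{j\notin\cN_i^\kappa}Q_j^\pi\big)\big]$. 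For the near block $j\in\cN_i^\kappa$, part (a) bounds $\|\hat Q_j^\pi-Q_j^\pi\|_\infty\le c\rho^{\kappa+1}$ termwise. For the far block $j\notin\cN_i^\kappa$, the symmetry of graph distance gives $i\notin\cN_j^\kappa$, so by exponential decay $Q_j^\pi$ changes by at most $c\rho^{\kappa+1}$ when $\ba_i$ alone is perturbed; I would therefore split $Q_j^\pi(\bs,\ba)=\bar Q_j(\bs,\ba_{-i})+\big(Q_j^\pi(\bs,\ba)-\bar Q_j(\bs,\ba_{-i})\big)$ with $\bar Q_j(\bs,\ba_{-i}):=Q_j^\pi(\bs,\bar\ba_i,\ba_{-i})$, the remainder bounded by $c\rho^{\kappa+1}$. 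Because actions are drawn independently given $\bs$, conditioning on $(\bs,\ba_{-i})$ the inner expectation of $\nabla_{\theta_i}\log\pi_i^{\theta_i}(\ba_i|\bs_{\cN_i})$ vanishes by the score-function identity, so the $\bar Q_j$ contribution is exactly zero. Collecting the bounds, using $\|\nabla_{\theta_i}\log\pi_i^{\theta_i}\|\le L_i$ and $|\cN_i^\kappa|+|\cN_{-i}^\kappa|=n$, the $\tfrac1n$ normalization collapses the $n$ error terms of size $c\rho^{\kappa+1}$ into a single one, giving $\|\hat h_i(\theta)-\nabla_{\theta_i}J(\theta)\|\le\tfrac{cL_i}{1-\gamma}\rho^{\kappa+1}$.

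The main obstacle I anticipate is the far-term handling in (b): one must carefully justify that conditioning on the global state $\bs$ the joint action factorizes, so that $\ba_{-i}$ is independent of $\ba_i$ and the score-function identity genuinely kills the $\bar Q_j$ contribution, and then invoke the exponential decay property ``from agent $j$'s side'' (via $i\notin\cN_j^\kappa$) to control the remainder, being careful that the $\tfrac1n$ factor prevents the sum of $n$ terms each of order $c\rho^{\kappa+1}$ from blowing up by a factor of $n$. The convexity argument in (a) and the invocation of the policy gradient theorem itself are routine.
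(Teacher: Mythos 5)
Your proposal is correct: the convex-combination argument for (a) and the baseline/score-function decomposition for (b) (splitting far terms $j\notin\cN_i^\kappa$ into a $\bar Q_j(\bs,\ba_{-i})$ part killed by $\mathbb{E}_{\ba_i\sim\pi_i}[\nabla_{\theta_i}\log\pi_i^{\theta_i}]=0$ under the conditional factorization of the joint policy, plus a remainder of size $c\rho^{\kappa+1}$ via $i\notin\cN_j^\kappa$) is exactly the standard argument, and the $\tfrac1n$ bookkeeping works out as you state. The paper itself does not prove this lemma — it imports it from Qu et al.\ (2022) as background — and your proof coincides with the argument given there.
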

Lemma \ref{lem:trun_exp_app} shows that the truncated $Q$-function has a much smaller dimension than the true $Q$-function, and is thus scalable to compute and store. However, despite the reduction in dimension, the error resulting from the approximation is small.

\section{Algorithm design}
\label{app:alg}

\subsection{Roadmap and pipeline}
Figure \ref{fig:gsac_pipeline} visually illustrates the GSAC pipeline, complementing the roadmap. It complements Algorithm \ref{alg:gsac} by showing how causal recovery and ACR construction (Phases 1–2) compress the state-action space, enabling efficient meta-policy training (Phase 3), and how this meta-policy rapidly adapts to new domains (Phase 4), as guaranteed by Theorems \ref{thm:critic_bd}-\ref{thm:gen_bd}.

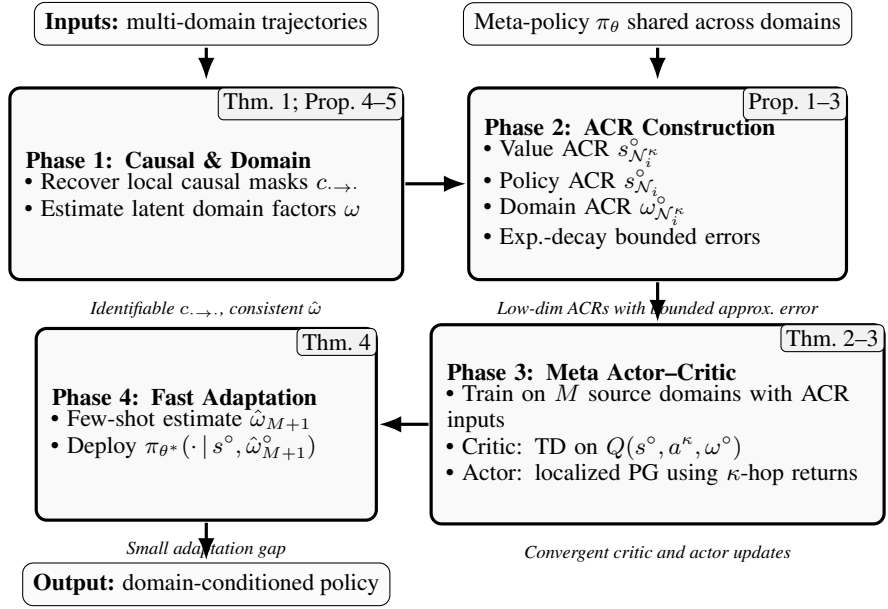
\begin{figure}[t]
\centering
\begin{tikzpicture}[
  font=\small,
  >=Latex,
  phase/.style={rounded corners, draw, very thick, fill=gray!5, inner sep=6pt, align=left},
  badge/.style={draw, rounded corners=2pt, fill=black!5, inner sep=2pt, font=\footnotesize},
  line/.style={-Latex, very thick},
  callout/.style={draw, rounded corners, fill=gray!3, inner sep=4pt, font=\footnotesize},
  tinycap/.style={font=\scriptsize\itshape},
  every node/.style={outer sep=1pt}
]

\node[phase, text width=4.8cm, minimum height=2.55cm] (p1) at (0,0) 
{%
  \textbf{Phase 1: Causal \& Domain}\\[-2pt]
  • Recover local causal masks \(c_{\cdot\to\cdot}\)\\
  • Estimate latent domain factors \(\omega\)
};
\node[badge, anchor=north east] at (p1.north east) {Thm.~1; Prop.~4–5};

\node[phase, text width=4.6cm, minimum height=2.55cm] (p2) at (6.0,0) {%
  \textbf{Phase 2: ACR Construction}\\[-2pt]
  • Value ACR \(s^{\circ}_{\mathcal N_i^\kappa}\)\\
  • Policy ACR \(s^{\circ}_{\mathcal N_i}\)\\
  • Domain ACR \(\omega^{\circ}_{\mathcal N_i^\kappa}\)\\
  • Exp.-decay bounded errors
};
\node[badge, anchor=north east] at (p2.north east) {Prop.~1–3};

\node[phase, text width=5.6cm, minimum height=2.65cm] (p3) at (6.0,-3.2) {%
  \textbf{Phase 3: Meta Actor–Critic}\\[-2pt]
  • Train on \(M\) source domains with ACR inputs\\
  • Critic: TD on \(Q(s^{\circ}, a^{\kappa}, \omega^{\circ})\)\\
  • Actor: localized PG using \(\kappa\)-hop returns
};
\node[badge, anchor=north east] at (p3.north east) {Thm.~2–3};

\node[phase, text width=4.1cm, minimum height=2.55cm] (p4) at (0,-3.2) {%
  \textbf{Phase 4: Fast Adaptation}\\[-2pt]
  • Few-shot estimate \(\hat{\omega}_{M+1}\)\\
  • Deploy \(\pi_{\theta^*}(\cdot\,|\,s^{\circ}, \hat{\omega}^{\circ}_{M+1})\)
};
\node[badge, anchor=north east] at (p4.north east) {Thm.~4};

\draw[line] (p1) -- (p2);
\draw[line] (p2) -- (p3);
\draw[line] (p3) -- (p4);

\node[callout, above=5mm of p1] (inA) {\textbf{Inputs:} multi-domain trajectories};
\draw[line] (inA.south) -- ([yshift=2pt]p1.north);

\node[callout, above=5mm of p2, align=left] (meta) {Meta-policy \(\pi_\theta\) shared across domains};
\draw[line] (meta.south) -- ([yshift=2pt]p2.north);

\node[callout, below=5mm of p4, align=left] (outA) {\textbf{Output:} domain-conditioned policy};
\draw[line] (p4.south) -- (outA.north);

\node[tinycap, below=1mm of p1] {Identifiable \(c_{\cdot\to\cdot}\), consistent \(\hat{\omega}\)};
\node[tinycap, below=1mm of p2] {Low-dim ACRs with bounded approx.\ error};
\node[tinycap, below=1mm of p3] {Convergent critic and actor updates};
\node[tinycap, below=1mm of p4] {Small adaptation gap};

\end{tikzpicture}
\caption{GSAC pipeline (cf.\ Algorithm~\ref{alg:gsac}). Each phase is annotated with its supporting results.}
\label{fig:gsac_pipeline}
\end{figure}

\subsection{Policy ACR}
We present the complete pseudocode of the algorithm used in defining the policy ACR in Definition~\ref{def:pi_acr}. We begin with Algorithm~\ref{alg:acr-pi}, which constructs the policy ACR using a variable number of steps. As shown in Proposition~\ref{prop:comp_pi-acr}, Algorithm~\ref{alg:acr-pi} precisely identifies the relevant components of $\bs_i$ for all agents while preserving the values of the $Q$-function. However, the number of steps required by Algorithm~\ref{alg:acr-pi} is problem-dependent and may vary. To address this, we propose Algorithm~\ref{alg:acr-pi-trun}, which constructs the policy ACR with a fixed step budget $\kappa$. At a high level, Algorithm~\ref{alg:acr-pi-trun} identifies all components of $\bs_i$ that influence future rewards within $\kappa$ time steps, resulting in an approximation error that decays with increasing $\kappa$. This approximation guarantee is formally established in Proposition~\ref{prop:appx_pi-acr}.

\begin{algorithm}
\caption{Policy \textsc{ACR}}
\label{alg:acr-pi}
\textbf{Input: causal mask $\bc=\bc_{i\in \sN}$}  
\begin{algorithmic}[1]
\State // \textit{Initialization}:
\State {$\bs^{\circ}_{i} \leftarrow \varnothing$ for all $i\in\sN$}
\For{ agent $i = 1,2,\dots,$}
    \For{ component $j = 1,2,\dots,d^s_i$}
        \If{there is a direct link from $s_{i,j}$  to  $r_{i}$, i.e., $c^{s \rightarrow r}_{i,j} = 1$} 
        {$\bs^{\circ}_{i} \leftarrow \bs^{\circ}_{i} \cup \cbrk{s_{i,j}}$}
        \EndIf
    \EndFor
\EndFor
\State // \textit{Recursively identify with local communication}:
\While {$\bs^{\circ}_{i}$ not converged for some $i$}
\For{ agent $i = 1,2,\dots,$}
    \State {agent $i$ receives $\prt{\bs^{\circ}_{i'}}_{i'\in N_i/i}$ from its neighbors and $\bs^{\circ}_{\cN_i} \leftarrow \cup_{i'\in N_i}\cbrk{\bs^{\circ}_{i'}}$}
    \For{ component $j = 1,2,\dots,d^s_i$}
        \If{there is an edge from $s_{i,j}$ to (component of) $\bs^{\circ}_{\cN_i}$} 
        {$\bs^{\circ}_{i} \leftarrow \bs^{\circ}_{i} \cup \cbrk{s_{i,j}}$}
        \EndIf
    \EndFor
\EndFor
\EndWhile
\end{algorithmic}
\textbf{Output: policy ACR for each agent $\bs^{\circ}_{\cN_i}$}
\end{algorithm}

\begin{algorithm}
\caption{Policy \textsc{ACR} with finite steps}
\label{alg:acr-pi-trun}
\textbf{Input: causal mask $\mathbf{c}=(\mathbf{c}_i)_{i\in \sN}$}  
\begin{algorithmic}[1]
\State // \textit{Initialization}:
\State {$\bs^{\circ}_{i}(1) \leftarrow \varnothing$ for all $i\in\sN$}
\For{ agent $i = 1,2,\dots,$}
    \For{ component $j = 1,2,\dots,d^s_i$}
        \If{there is a direct link from $s_{i,j}$  to  $r_{i}$, i.e., $c^{s \rightarrow r}_{i,j} = 1$} 
        {$\bs^{\circ}_{i}(1) \leftarrow \bs^{\circ}_{i}(1) \cup \cbrk{s_{i,j}}$}
        \EndIf
    \EndFor
\EndFor
\State // \textit{Recursive identification with local communication in finite steps}:
\For {step $l=1,2,\dots,\kappa-1$}
\For{ agent $i = 1,2,\dots,$}
    \State {agent $i$ receives $\prt{\bs^{\circ}_{i'}(l)}_{i'\in N_i/i}$ from its neighbors and $\bs^{\circ}_{\cN_i}(l) \leftarrow \cup_{i'\in N_i}\cbrk{\bs^{\circ}_{i'}(l)}$}
    \State {$\bs^{\circ}_{i}(l+1) \leftarrow \bs^{\circ}_{i}(l)$}
    \For{ component $j = 1,2,\dots,d^s_i$}
        \If{there is an edge from $s_{i,j}$ to  $\bs^{\circ}_{\cN_i}(l)$} 
        {$\bs^{\circ}_{i}(l+1) \leftarrow \bs^{\circ}_{i}(l+1) \cup \cbrk{s_{i,j}}$}
        \EndIf
    \EndFor
\EndFor
\EndFor
\For{ agent $i = 1,2,\dots,$}
    \State {agent $i$ receives $\prt{\bs^{\circ}_{i'}(\kappa)}_{i'\in \cN_i/i}$ from its neighbors and $\bs^{\circ}_{\cN_i} \leftarrow \cup_{i'\in \cN_i}\cbrk{\bs^{\circ}_{i'}(\kappa)}$}
\EndFor
\end{algorithmic}
\textbf{Output: policy ACR $\bs^{\circ}_{\cN_i}$ for each agent $i$}
\end{algorithm}


\subsection{GSAC}
We provide the complete pseudo-code of \texttt{GSAC} in Algorithm \ref{alg:gsac_com}.
\begin{algorithm}
\caption{\textsc{Generalizable and Scalable Actor-critic 
}}
\label{alg:gsac_com}
\begin{algorithmic}[1]
\State \textbf{Input:} $\theta_i(0)$; parameter $\kappa$; $T$, length of each episode; stepsize parameters $h, t_0, \eta$. 
\State // \textit{Phase 1: local causal model learning}
\For{source domain index $m = 1,2,\dots,M$}
    \State{Sample $\bo^m \sim \cD$}
    \State Collect  trajectories $\tau^{m} = \left(\bs^m(t), \ba^m(t), r^m(t), \mathbf{s}^m(t+1) \right)_{t=1}^{T_e}$ 
    \State Split $\tau^{m}$ into  $\tau^m_{i} = \left(\mathbf{s}^m_{\cN_i}(t), \ba^m_{i}(t), r_{i}(t), \mathbf{s}^m_{i}(t+1) \right)_{t=1}^{T_e}  $ for each agent $i$
    \State Each agent $i$ estimate the causal mask $\bc_i$, model parameter $\psi_i$, and domain factor $\hat{\bo}^m_{i}$ 
\EndFor
\State // \textit{Phase 2: approximately compact representation}
\For{each agent $i$}
\State Identify  $\mathbf{s}_{\cN^{\kappa}_i}^{\circ } \gets \texttt{ACR}_v(\bc,i,\kappa)$ and  $\mathbf{s}_{\cN_i}^{\circ } \gets \texttt{ACR}_{\pi}(\bc,i)$ 
\State Identify  $\boldsymbol{\omega}_{\cN^{\kappa}_i}^{m,\circ } \gets \texttt{ACR}_v(\bc,i,\kappa)$ and  $\boldsymbol{\omega}_{\cN_i}^{m,\circ } \gets \texttt{ACR}_\pi(\bc,i)$ for each $m=1,2,\cdots,M$
\EndFor{}
\State // \textit{Phase 3: meta-learning}
\For{$k = 0, 1, 2, \dots, K-1$}
    \State Sample domain index $m(k)$ uniformly from $\{1, \dots, K\}$
    \State Set $\hat{\bo}= \hat{\bo}^{m(k)}$ \Comment{Current domain factors}
    \State Sample initial state $\bs(0) \sim \rho_0$
    \For{each agent $i$}
        \State Take action $\ba_i(0) \sim \pi_i^{\theta_i(k)}(\cdot \mid \bs_i(0), \hat{\bo}_i)$
        \State Receive reward $r_i(0) = r_i(\bs_i(0), \ba_i(0))$
        \State Initialize critic $\hat{Q}_i^0 \in \mathbb{R}^{\cS_{\cN_i^\kappa} \times \cA_{\cN_i^\kappa} \times \Omega_{\cN_i^\kappa}}$ to zero
    \EndFor
    \For{$t = 1$ to $T$}
        \For{each agent $i$}
            \State Get state $\bs_i(t)$, take action $\ba_i(t) \sim \pi_i^{\theta_i(k)}(\cdot \mid \bs^{\circ}_i(t), \hat{\bo}^{\circ}_i)$
            \State Get reward $r_i(t) = r_i(\bs_i(t), \ba_i(t))$
            \State $\alpha_{t-1} = \frac{h}{t - 1 + t_0}$
            \State Update TD target:
            \begin{align*}
            \hat{Q}_i^t(\bs^{\circ}_{\cN_i^\kappa}(t{-}1), \ba_{\cN_i^\kappa}(t{-}1), \hat{\bo}^{\circ}_{\cN_i^\kappa}) &\gets 
            (1 - \alpha_{t-1}) \hat{Q}_i^{t{-}1}(\bs^{\circ}_{\cN_i^\kappa}(t{-}1), \ba_{\cN_i^\kappa}(t{-}1), \hat{\bo}^{\circ}_{\cN_i^\kappa}) \\
            &+ \alpha_{t-1} \left(r_i(t{-}1) + \gamma \hat{Q}_i^{t{-}1}(\bs^{\circ}_{\cN_i^\kappa}(t), \ba_{\cN_i^\kappa}(t), \hat{\bo}^{\circ}_{\cN_i^\kappa})\right)
            \end{align*}
            \State For all other $(\bs^{\circ}_{\cN_i^\kappa}, \ba_{\cN_i^\kappa}) \neq (\bs^{\circ}_{\cN_i^\kappa}(t{-}1), \ba_{\cN_i^\kappa}(t{-}1))$, set:
            \[
            \hat{Q}_i^t(\bs^{\circ}_{\cN_i^\kappa}, \ba_{\cN_i^\kappa}, \hat{\bo}^{\circ}_{\cN_i^\kappa}) = \hat{Q}_i^{t{-}1}(\bs^{\circ}_{\cN_i^\kappa}, \ba_{\cN_i^\kappa}, \hat{\bo}^{\circ}_{\cN_i^\kappa})
            \]
        \EndFor
    \EndFor
    \For{each agent $i$}
        \State Estimate policy gradient:
        \[
        \hat{g}_i(k) = \sum_{t=0}^T \gamma^t \cdot \frac{1}{n} \sum_{j \in \cN_i^\kappa} \hat{Q}_j^T(\bs^{\circ}_{\cN_j^\kappa}(t), \ba_{\cN_j^\kappa}(t), \hat{\bo}^{\circ}_{\cN_j^\kappa}) \nabla_{\theta_i} \log \pi_i^{\theta_i(k)}(\ba_i(t) \mid \bs^{\circ}_i(t), \hat{\bo}^{\circ}_i)
        \]
        \State Update policy: $\theta_i(k+1) = \theta_i(k) + \eta_k \hat{g}_i(k), \ \text{where } \eta_k = \frac{\eta}{\sqrt{k+1}}$
    \EndFor
\EndFor
\State // \textit{Phase 4: adaptation to new domain}
\State{Collect few trajectories $\tau^{M+1} = \{(\bs(t), \ba(t), \bs(t+1))\}_{t=0}^{T_{a}}$ in the new domain}
\For{each agent $i$}
    \State{Infer domain factor $\hat{\bo}_i^{M+1}$}
    \State{Deploy policy $\pi^{\theta(K)}_i(\cdot|\bs^{\circ}_i,\hat{\bo}^{M+1,\circ}_i)$}
\EndFor
\end{algorithmic}
\end{algorithm}

\subsection{ACR-free GSAC}
We provide the complete pseudo-code of \texttt{ACR-free GSAC} in Algorithm \ref{alg:gsac_wo}.
\begin{algorithm}
\caption{\textsc{Generalizable and Scalable Actor-critic without ACR}}
\label{alg:gsac_wo}
\begin{algorithmic}[1]
\State \textbf{Input:} $\theta_i(0)$; parameter $\kappa$; $T$, length of each episode; stepsize parameters $h, t_0, \eta$. 
\State // \textit{Phase 1: domain estimation}
\For{source domain index $m = 1,2,\dots,M$}
    \State{Sample $\bo^m \sim \cD$}
    \State Collect  trajectories $\tau^{m} = \left(\bs^m(t), \ba^m(t), r^m(t), \mathbf{s}^m(t+1) \right)_{t=1}^{T_e}$ 
    \State Split $\tau^{m}$ into  $\tau^m_{i} = \left(\mathbf{s}^m_{\cN_i}(t), \ba^m_{i}(t), r_{i}(t), \mathbf{s}^m_{i}(t+1) \right)_{t=1}^{T_e}  $ for each agent $i$
    \State Each agent $i$ estimate the domain factor $\hat{\bo}^m_{i}$ 
\EndFor
\State // \textit{Phase 2: meta-learning}
\For{$k = 0, 1, 2, \dots, K-1$}
    \State Sample domain index $m(k)$ uniformly from $\{1, \dots, K\}$
    \State Set $\hat{\bo}= \hat{\bo}^{m(k)}$ \Comment{Current domain factors}
    \State Sample initial state $\bs(0) \sim \rho_0$
    \For{each agent $i$}
        \State Take action $\ba_i(0) \sim \pi_i^{\theta_i(k)}(\cdot \mid \bs_i(0), \hat{\bo}_i)$
        \State Receive reward $r_i(0) = r_i(\bs_i(0), \ba_i(0))$
        \State Initialize critic $\hat{Q}_i^0 \in \mathbb{R}^{\cS_{\cN_i^\kappa} \times \cA_{\cN_i^\kappa} \times \Omega_{\cN_i^\kappa}}$ to zero
    \EndFor
    \For{$t = 1$ to $T$}
        \For{each agent $i$}
            \State Get state $\bs_i(t)$, take action $\ba_i(t) \sim \pi_i^{\theta_i(k)}(\cdot \mid \bs_i(t), \hat{\bo}_i)$
            \State Get reward $r_i(t) = r_i(\bs_i(t), \ba_i(t))$
            \State $\alpha_{t-1} = \frac{h}{t - 1 + t_0}$
            \State Update TD target:
            \begin{align*}
            \hat{Q}_i^t(\bs_{\cN_i^\kappa}(t{-}1), \ba_{\cN_i^\kappa}(t{-}1), \hat{\bo}_{\cN_i^\kappa}) &\gets 
            (1 - \alpha_{t-1}) \hat{Q}_i^{t{-}1}(\bs_{\cN_i^\kappa}(t{-}1), \ba_{\cN_i^\kappa}(t{-}1), \hat{\bo}_{\cN_i^\kappa}) \\
            &+ \alpha_{t-1} \left(r_i(t{-}1) + \gamma \hat{Q}_i^{t{-}1}(\bs_{\cN_i^\kappa}(t), \ba_{\cN_i^\kappa}(t), \hat{\bo}_{\cN_i^\kappa})\right)
            \end{align*}
            \State For all other $(\bs_{\cN_i^\kappa}, \ba_{\cN_i^\kappa}) \neq (\bs_{\cN_i^\kappa}(t{-}1), \ba_{\cN_i^\kappa}(t{-}1))$, set:
            \[
            \hat{Q}_i^t(\bs_{\cN_i^\kappa}, \ba_{\cN_i^\kappa}, \hat{\bo}_{\cN_i^\kappa}) = \hat{Q}_i^{t{-}1}(\bs_{\cN_i^\kappa}, \ba_{\cN_i^\kappa}, \hat{\bo}_{\cN_i^\kappa})
            \]
        \EndFor
    \EndFor
    \For{each agent $i$}
        \State Estimate policy gradient:
        \[
        \hat{g}_i(k) = \sum_{t=0}^T \gamma^t \cdot \frac{1}{n} \sum_{j \in \cN_i^\kappa} \hat{Q}_j^T(\bs_{\cN_j^\kappa}(t), \ba_{\cN_j^\kappa}(t), \hat{\bo}_{\cN_j^\kappa}) \nabla_{\theta_i} \log \pi_i^{\theta_i(k)}(\ba_i(t) \mid \bs_i(t), \hat{\bo}^{\circ}_i)
        \]
        \State Update policy: $\theta_i(k+1) = \theta_i(k) + \eta_k \hat{g}_i(k), \ \text{where } \eta_k = \frac{\eta}{\sqrt{k+1}}$
    \EndFor
\EndFor
\State // \textit{Phase 3: adaptation to new domain}
\State{Collect few trajectories $\tau^{M+1} = \{(\bs(t), \ba(t), \bs(t+1))\}_{t=0}^{T_{a}}$ in the new domain}
\For{each agent $i$}
    \State{Infer domain factor $\hat{\bo}_i^{M+1}$}
    \State{Deploy policy $\pi^{\theta(K)}_i(\cdot|\bs_i,\hat{\bo}^{M+1}_i)$}
\EndFor
\end{algorithmic}
\end{algorithm}

\section{Missing proofs in Section \ref{sec:acr}}
\label{app:pf_acr}

\subsection{Value ACR}
Definition~\ref{def:acr-v} recursively identifies the key state components within the $\kappa$-hop neighborhood that influence an agent’s reward, yielding a compact subset $\bs^{\circ}_{\cN^{\kappa}_i} \subset \bs_{\cN^{\kappa}_i}$ with an exponentially small approximation error in the $Q$-function. This definition assumes a fixed number of time steps $\kappa$. In contrast, Definition~\ref{def:acr-v_app} exactly identifies all state components in $\bs_{\cN^{\kappa}_i}$ that influence the agent’s reward but requires a variable number of time steps. Notably, Definition~\ref{def:acr-v_app} is stricter and implies Definition~\ref{def:acr-v}. Specifically, the former identifies only the components that influence the reward within $\kappa$ steps, while the latter includes all components in $\bs_{\cN^{\kappa}_i}$ that have a direct or indirect influence on the reward, possibly through longer dependencies.
\begin{definition}[Value ACR]
\label{def:acr-v_app}
Given the graph $\sG$ encoded by the binary masks $\bc$, for each agent $i$ and its $\kappa$-hop neighborhood $\cN^{\kappa}_i$, we recursively define the $\kappa$-hop ACR $\bs^{\circ}_{\cN^{\kappa}_i}(t)$ as the set of state components $s_{\cN^{\kappa}_i,j}$ satisfying at least one of the following:
    \begin{itemize}
        \item $s_{i,j}$ has a direct link to agent $i$'s reward $r_{i}$, i.e., $c^{s \rightarrow r}_{i,j} = 1$, or
        \item $s_{\cN^{\kappa}_i,j}(t)$  has an edge to another state component $s_{l,\cN^{\kappa}_i}(t+1)$  such that $s_{l,\cN^{\kappa}_i}(t) \in \bs^{\circ}_{\cN^{\kappa}_i}$, 
    \end{itemize}
\end{definition}

We provide the proof of Proposition \ref{prop:app_acr-v} here.
\begin{proof}[Proof of Proposition \ref{prop:app_acr-v}]
Denote $\bs=\left(\bs^{\circ}_{\cN_i^\kappa}, \bs/\bs^{\circ}_{\cN_i^\kappa}\right)$,
$\ba=\left(\ba_{\cN_i^\kappa}, \ba_{\cN_{-i}^\kappa}\right) ; \bs^{\prime}=\left(\bs^{\circ}_{\cN_i^\kappa}, \bs'/\bs^{\circ}_{\cN_i^\kappa}\right)$ and $\ba^{\prime}=\left(\ba_{\cN_i^\kappa}, \ba'_{\cN_{-i}^\kappa}\right)$. Let $\rho_{t, i}$ be the distribution of $\left(\bs_i(t), \ba_i(t)\right)$ conditioned on $(\bs(0), \ba(0))=(\bs, \ba)$ under policy $\pi$, and let $\rho_{t, i}^{\prime}$ be the distribution of $\left(\bs_i(t), \ba_i(t)\right)$ conditioned on $(\bs(0), \ba(0))=\left(\bs^{\prime}, \ba^{\prime}\right)$ under policy $\pi$. 

Due to the local dependence structure,  and the localized policy structure, $\rho_{t, i}$ only depends on $\left(\bs_{\cN_i^t}, \ba_{\cN_i^t}\right)$ which is the same as $\left(\bs_{\cN_i^t}^{\prime}, \ba_{\cN_i^t}^{\prime}\right)$ when $t \leq \kappa$. Furthermore, by the definition of ACR, we must have $\rho_{t, i}=\rho_{t, i}^{\prime}$ for all $t \leq \kappa$ per the way the initial state $(\bs, \ba),\left(\bs^{\prime}, \ba^{\prime}\right)$ are chosen. With these definitions, we expand the definition of $Q_i^\pi$,
$$
\begin{aligned}
& \left|Q_i^\pi(\bs, \ba)-Q_i^\pi\left(\bs^{\prime}, \ba^{\prime}\right)\right| \\
& \leq \sum_{t=0}^{\infty}\left|\mathbb{E}\left[\gamma^t r_i\left(\bs_i(t), \ba_i(t)\right) \mid(\bs(0), \ba(0))=(\bs, \ba)\right]-\mathbb{E}\left[\gamma^t r_i\left(\bs_i(t), \ba_i(t)\right) \mid(\bs(0), \ba(0))=\left(\bs^{\prime}, \ba^{\prime}\right)\right]\right| \\
& =\sum_{t=0}^{\infty}\left|\gamma^t \mathbb{E}_{\left(\bs_i, \ba_i\right) \sim \rho_{t, i}} r_i\left(\bs_i, \ba_i\right)-\gamma^t \mathbb{E}_{\left(\bs_i, \ba_i\right) \sim \rho_{t, i}^{\prime}} r_i\left(\bs_i, \ba_i\right)\right| \\
& =\sum_{t=\kappa+1}^{\infty}\left|\gamma^t \mathbb{E}_{\left(\bs_i, \ba_i\right) \sim \rho_{t, i}} r_i\left(\bs_i, \ba_i\right)-\gamma^t \mathbb{E}_{\left(\bs_i, \ba_i\right) \sim \rho_{t, i}^{\prime}} r_i\left(\bs_i, \ba_i\right)\right| \\
& \leq \sum_{t=\kappa+1}^{\infty} \gamma^t \bar{r} \mathrm{TV}\left(\rho_{t, i}, \rho_{t, i}^{\prime}\right) \leq \frac{\bar{r}}{1-\gamma} \gamma^{\kappa+1},
\end{aligned}
$$
where $\operatorname{TV}\left(\rho_{t, i}, \rho_{t, i}^{\prime}\right)\leq1$ is the total variation distance between $\rho_{t, i}$ and $\rho_{t, i}^{\prime}$. 
\end{proof}

\subsection{Policy ACR}
We begin by presenting Proposition~\ref{prop:comp_pi-acr} and its proof, which establishes the  guarantee of Algorithm~\ref{alg:acr-pi}, and then proceed to the proof of Proposition~\ref{prop:acr-pi-main}.

\begin{proposition}[Compactness of policy ACR]
\label{prop:comp_pi-acr}
Let $\pi$ be an arbitrary localized policy and $\tilde{\pi}$ be the corresponding approximately compact localized policy output by Algorithm \ref{alg:acr-pi}, for each agent $i$, and any $(\bs,\ba)\in\cS\times\cA$, we have
\[     Q_i^{\tilde{\pi}}(\bs,\ba) = Q_i^{\pi}(\bs,\ba).    \]
\end{proposition}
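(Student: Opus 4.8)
The plan is to show that the approximately compact policy $\tilde\pi$ produced by Algorithm~\ref{alg:acr-pi} induces, for every agent $i$, exactly the same distribution over the reward-relevant state components as $\pi$, so that the discounted reward sums coincide term by term. The key structural fact is that Algorithm~\ref{alg:acr-pi} runs the recursion to full convergence, so the resulting sets $\{\bs^\circ_{\cN_j}\}_{j\in\sN}$ are closed under the backward causal reachability relation: if any state component $s_{k,l}$ has a causal edge into some component of $\bs^\circ_{\cN_j}$ (for $j$ a neighbor of $k$), then $s_{k,l}\in\bs^\circ_{k}$. Consequently, for the purpose of predicting the future trajectory of the union $\bigcup_j \bs^\circ_{\cN_j}$, only the current values of components in these $\bs^\circ$ sets matter — the dynamics restricted to the ACR components form a self-contained sub-process under any localized policy that depends only on the ACR inputs.

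First I would make precise the invariance claim: for any localized policy $\pi$ and its compact counterpart $\tilde\pi$ (with $\tilde\pi_i(\cdot\mid\bs^\circ_{\cN_i}) = \pi_i(\cdot\mid\bs^\circ_{\cN_i},\bar s_{\cN_i}/\bs^\circ_{\cN_i})$), and for every $t\ge 0$, the conditional law of $\bigl(\bs^\circ_{\cN_i}(t)\bigr)_{i\in\sN}$ given $(\bs(0),\ba(0))=(\bs,\ba)$ is the same under $\pi$ and under $\tilde\pi$. This is proved by induction on $t$. The base case $t=0$ is immediate since the initial state is fixed. For the inductive step, I would use the decentralized transition factorization in Equation~\eqref{eqt:dyn} together with the generative model Equation~\eqref{eqt:model}: a component $s_{i,j}(t+1)$ belonging to $\bs^\circ_{\cN_i}(t+1)$ depends only on $\bc^{\bs\to\bs}_{\cN_i,j}\odot\bs_{\cN_i}(t)$ and $\ba_i(t)$; by closedness of the ACR sets, every state-component on which it depends lies in some $\bs^\circ$ set at time $t$, and the action $\ba_i(t)$ is drawn from $\tilde\pi_i(\cdot\mid\bs^\circ_{\cN_i}(t))=\pi_i(\cdot\mid\bs^\circ_{\cN_i}(t),\cdot)$, which by construction equals the corresponding marginal under $\pi$ given the ACR components. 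Hence the transition kernel for the ACR sub-process agrees under the two policies, and the inductive hypothesis closes the step.

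Given this invariance, the conclusion follows quickly. Agent $i$'s reward $r_i(t)=r_i(\bs_i(t),\ba_i(t))$ depends only on $\bs_i(t)$ through the components with $c^{\bs\to r}_{i,j}=1$, all of which are in $\bs^\circ_{\cN_i}(t)\subseteq\bs^\circ_i$ by the initialization of the algorithm, and on $\ba_i(t)\sim\tilde\pi_i(\cdot\mid\bs^\circ_{\cN_i}(t))$, whose law (jointly with the reward-relevant state components) is identical under $\pi$ and $\tilde\pi$ by the invariance claim. Therefore $\E[\gamma^t r_i(t)\mid(\bs(0),\ba(0))=(\bs,\ba)]$ is the same under both policies for every $t$, and summing over $t$ gives $Q_i^{\tilde\pi}(\bs,\ba)=Q_i^\pi(\bs,\ba)$.

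The main obstacle I anticipate is rigorously justifying the "validity of the construction" step — namely that a single fixed choice of placeholder values $\bar s_{\cN_i}/\bs^\circ_{\cN_i}$ genuinely yields a well-defined $\tilde\pi_i$ whose output distribution, conditioned on the ACR components, matches that of $\pi_i$; this requires knowing that $\pi_i$'s dependence on the non-ACR components does not actually influence the ACR sub-process, which is exactly what the closedness property buys us, but the bookkeeping across all agents simultaneously (the joint induction over $(\bs^\circ_{\cN_i})_{i\in\sN}$, not agent-by-agent) needs care. A secondary subtlety is ensuring the recursion in Algorithm~\ref{alg:acr-pi} terminates and that "convergence" yields a genuine fixed point of the reachability operator — this is finite since there are finitely many state components, so the monotone set-growth stabilizes.
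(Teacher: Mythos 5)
Your proposal follows essentially the same route as the paper's proof: an induction over time showing that the joint law of the ACR components and actions coincides under $\pi$ and $\tilde{\pi}$, using the closedness of the converged sets $\bs^{\circ}$ under backward causal reachability (which the paper formalizes via its path-characterization lemmas for Algorithm~\ref{alg:acr-pi}, proved by induction on the while-loop iterations), and then concluding that the per-step reward distributions, hence the discounted sums, agree. The bookkeeping concerns you flag (joint induction across all agents, termination of the recursion, and validity of the fixed placeholder values) are exactly the points the paper's lemmas and inductive step handle, so no new idea is missing.
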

To prove Proposition~\ref{prop:comp_pi-acr}, we start with some technical lemmas. Using the definition of policy ACR (Algorithm \ref{alg:acr-pi}), we obtain Lemma \ref{lem:acr-pi}.
\begin{lemma}
\label{lem:acr-pi}
Fix $i\in\sN$.  For any $j\in[d^s_i]$,  $s_{i,j} \in \bs^{\circ}_{i}$ if and only if either of the following holds
\begin{itemize}
    \item (i) $s_{i,j}$ has an edge to $r_i$, or 
    \item (ii) there exists $i'\in\sN$ such that there exists a path from $s_{i,j}(t)$ to $r_{i'}(t+l)$
    \[ s_{i,j}(t) \rightarrow s_{i(t+1),j(t+1)}(t+1) \rightarrow s_{i(t+2),j(t+2)}(t+2) \cdots \rightarrow s_{i(t+l),j(t+l)}(t+l) \rightarrow r_{i'}(t+l), \]
    where $i(t+l)=i'$ and $i(t+m) \in \cN_{i(t+m-1)}$ for any $m\in [l]$ and for some $l\ge1$. Moreover, $s_{j(t+m),i(t+m)}(t+m)\in \bs^{\circ}_{i(t+m)}$ for all $m\in[l]$.
\end{itemize}
\end{lemma}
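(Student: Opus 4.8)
The plan is to characterize the fixed point of Algorithm~\ref{alg:acr-pi} by induction on the round in which a state component is inserted into some agent's set $\bs^{\circ}_{\cdot}$. First I would note that, since each $\bs^{\circ}_{i}$ is a monotonically non-decreasing subset of the finite set $\{s_{i,j}: j \in [d^s_i]\}$, the \textbf{while}-loop terminates; I will interpret its updates synchronously, so that in round $l+1$ agent $i$ uses the sets $\bs^{\circ}_{i'}$ as produced at the end of round $l$. I also record the dictionary between the algorithm's tests and causal edges: the test "$s_{i,j}$ has an edge to a component of $\bs^{\circ}_{\cN_i}$'' means there exist $i' \in \cN_i$ and $s_{i',j'} \in \bs^{\circ}_{i'}$ with the mask $c^{\bs\to\bs}_{\cN_{i'},j'}$ activating the coordinate of $s_{i,j}$, which is exactly a causal edge $s_{i,j}(t) \to s_{i',j'}(t+1)$ with $i' \in \cN_i$; and the initialization test "$c^{s\to r}_{i,j}=1$'' is exactly the edge $s_{i,j} \to r_i$.

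For the forward direction ($\Rightarrow$), I would induct on the round $l \ge 0$ at which $s_{i,j}$ first enters $\bs^{\circ}_i$ (round $0$ being initialization). In the base case the insertion is due to $c^{s\to r}_{i,j}=1$, which is case~(i). For the inductive step, if $s_{i,j}$ is inserted in round $l+1$, the algorithm found an edge $s_{i,j}(t) \to s_{i',j'}(t+1)$ with $i' \in \cN_i$ and $s_{i',j'} \in \bs^{\circ}_{i'}$ inserted by round $l$; by the induction hypothesis $s_{i',j'}$ satisfies (i) or (ii). If (i), then $s_{i,j}(t) \to s_{i',j'}(t+1) \to r_{i'}(t+1)$ is a valid length-$1$ path whose single intermediate node $s_{i',j'}$ lies in $\bs^{\circ}_{i'}$, giving (ii) for $s_{i,j}$. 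If (ii), I prepend the edge $s_{i,j}(t)\to s_{i',j'}(t+1)$ to the path witnessing (ii) for $s_{i',j'}$ and verify the three requirements: the neighbor condition at the new first step holds because $i'\in\cN_i$, the remaining neighbor conditions and the terminal reward edge are inherited, and every component on the extended path lies in the appropriate $\bs^{\circ}$ set (the new first component $s_{i',j'}$ because it was inserted and $\bs^{\circ}$ is monotone, the later ones by the hypothesis on $s_{i',j'}$).

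For the converse ($\Leftarrow$): if (i) holds then $s_{i,j}$ is inserted at initialization. If (ii) holds, fix the witnessing path; its first component $s_{i(t+1),j(t+1)}(t+1)$ lies in $\bs^{\circ}_{i(t+1)}$ with $i(t+1)\in\cN_{i(t)}=\cN_i$. Since the loop runs until every $\bs^{\circ}_{\cdot}$ has converged, $s_{i(t+1),j(t+1)}$ is eventually present in the messages agent $i$ receives from $\cN_i$, and the edge $s_{i,j}(t)\to s_{i(t+1),j(t+1)}(t+1)$ then forces $s_{i,j}$ into $\bs^{\circ}_i$; equivalently, if $s_{i,j}\notin\bs^{\circ}_i$ the configuration would not be a fixed point, a contradiction. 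Here the "moreover'' clause of (ii) is precisely what makes this argument go through without circularity, since it guarantees the first hop lands in an already-populated ACR set. The main obstacle I anticipate is purely bookkeeping: making the synchronous-round induction interact cleanly with the fact that (ii) is phrased in terms of the \emph{final} sets $\bs^{\circ}_{\cdot}$ (so an a priori statement about round-$l$ membership must be upgraded to the terminated state, which monotonicity handles), and being careful that the neighbor relation used by the algorithm matches the direction of the causal edges $s(t)\to s(t+1)$ along the path; once the dictionary above is in place, both inductions are routine.
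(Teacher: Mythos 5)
Your proposal is correct and follows essentially the same route as the paper: an induction on the rounds of the \textbf{while}-loop in Algorithm~\ref{alg:acr-pi} for the forward direction (prepending the discovered edge to the path supplied by the inductive hypothesis, using monotonicity of the sets), and the convergence/fixed-point property of the loop together with the ``moreover'' clause for the converse. The only cosmetic difference is that the paper folds the converse into the same round-indexed induction (bounding the path length by the round count), whereas you dispatch it directly from the fixed point; both are sound and equivalent in substance.
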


\begin{proof}[Proof of Lemma \ref{lem:acr-pi}]
    We prove this by induction on the iterations of the while loop in Algorithm \ref{alg:acr-pi}.
    
    \textbf{Base case:} After initialization (lines 2-8 of Algorithm \ref{alg:acr-pi}), $\bs^{\circ}_i$ contains exactly those state components $s_{i,j}$ that have a direct edge to $r_i$ (i.e., where $\bc^{s\to r}_{i,j} = 1$). This corresponds precisely to condition (i).
    
    \textbf{Inductive hypothesis:} Suppose that after $k$ iterations of the while loop, for any agent $i$ and component $j$, $s_{i,j} \in \bs^{\circ}_i$ if and only if either:
    \begin{itemize}
        \item $s_{i,j}$ has a direct edge to $r_i$, or
        \item there exists a path from $s_{i,j}(t)$ to some $r_{i'}(t+m)$ where $m \leq k$, such that all intermediate state components are in their respective $\bs^{\circ}$ sets, and each consecutive agent in the path is a neighbor of the previous agent.
    \end{itemize}
    
    \textbf{Inductive step:} Consider the $(k+1)$-th iteration. In this iteration, according to Algorithm \ref{alg:acr-pi} (lines 10-17), we add $s_{i,j}$ to $\bs^{\circ}_i$ if there is an edge from $s_{i,j}$ to some component of $\bs^{\circ}_{\cN_i}$, where $\bs^{\circ}_{\cN_i} = \cup_{i' \in N_i} \bs^{\circ}_{i'}$ contains all minimal components of $i$'s neighbors from the previous iterations.
    
    By the inductive hypothesis, for any $i' \in \cN_i$ and any component $j'$, $s_{i',j'} \in \bs^{\circ}_{i'}$ if and only if either:
    \begin{itemize}
        \item $s_{i',j'}$ has a direct edge to $r_{i'}$, or
        \item there exists a path from $s_{i',j'}(t)$ to some $r_{i''}(t+m)$ where $m \leq k$, with all intermediate components in their respective $\bs^{\circ}$ sets.
    \end{itemize}
    
    \textbf{Forward direction:} Suppose $s_{i,j}$ is added to $\bs^{\circ}_i$ in the $(k+1)$-th iteration. Then there must be an edge from $s_{i,j}$ to some component $s_{i',j'} \in \bs^{\circ}_{i'}$ for some $i' \in \cN_i$. By the inductive hypothesis, either:
    
    Case 1: If $s_{i',j'}$ has a direct edge to $r_{i'}$, then we have a path:
    $$s_{i,j}(t) \to s_{i',j'}(t+1) \to r_{i'}(t+1),$$
    which is a path of length 2 ($l=1$) from $s_{i,j}(t)$ to $r_{i'}(t+1)$, satisfying condition (ii).
    
    Case 2: If there is a path from $s_{i',j'}(t+1)$ to $r_{i''}(t+1+m)$ for some $m \leq k$, then we have a longer path:
    $$s_{i,j}(t) \to s_{i',j'}(t+1) \to \cdots \to r_{i''}(t+1+m),$$
    which is a path of length $m+2$ ($l=m+1 \leq k+1$) from $s_{i,j}(t)$ to $r_{i''}(t+1+m)$, satisfying condition (ii).
    
    In both cases, $i' \in \cN_i$ (the neighborhood constraint is satisfied), and all intermediate state components are in their respective $\bs^{\circ}$ sets by the inductive hypothesis.
    
    \textbf{Backward direction:} Conversely, suppose there exists a path from $s_{i,j}(t)$ to $r_{i'}(t+l)$ for some $l \leq k+1$, where all intermediate components are in their respective $\bs^{\circ}$ sets and each agent is a neighbor of the previous agent.
    
    If $l = 0$, then $s_{i,j}$ has a direct edge to $r_i$ (condition (i)), so $s_{i,j} \in \bs^{\circ}_i$ after initialization.
    
    If $l \geq 1$, consider the path:
    $$s_{i,j}(t) \to s_{i(t+1),j(t+1)}(t+1) \to \cdots \to r_{i'}(t+l).$$
    The first transition in this path is from $s_{i,j}(t)$ to $s_{i(t+1),j(t+1)}(t+1)$, where $i(t+1) \in N_i$ (by the neighborhood constraint). By the path condition, $s_{i(t+1),j(t+1)}(t+1) \in \bs^{\circ}_{i(t+1)}$.
    
    Furthermore, there exists a path from $s_{i(t+1),j(t+1)}(t+1)$ to $r_{i'}(t+l)$ of length $l-1 \leq k$. By the inductive hypothesis, $s_{i(t+1),j(t+1)} \in \bs^{\circ}_{i(t+1)}$.
    
    Since $i(t+1) \in \cN_i$, we have $s_{i(t+1),j(t+1)} \in \bs^{\circ}_{\cN_i}$. Therefore, there is an edge from $s_{i,j}$ to a component in $\bs^{\circ}_{\cN_i}$, which means $s_{i,j}$ will be added to $\bs^{\circ}_i$ in the $(k+1)$-th iteration (or earlier).
    
    By induction, $s_{i,j} \in \bs^{\circ}_i$ if and only if either condition (i) or condition (ii) holds, completing the proof.
\end{proof}

\begin{lemma}
\label{lem:acr-pi_2}
Fix $i\in\sN$.  For any $j\in[d^s_i]$, $s_{i,j} \not\in \bs^{\circ}_{i}$, if and only if each of the following holds
\begin{itemize}
    \item (i) $s_{i,j}$ does not have an edge to $r_i$
    \item (ii) for any $i'\in \sN$, there exists no path from $s_{i,j}(t)$ to $r_{i'}(t+l)$ for some $l$
\end{itemize}
\end{lemma}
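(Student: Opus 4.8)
The plan is to obtain Lemma~\ref{lem:acr-pi_2} directly from Lemma~\ref{lem:acr-pi} by taking the logical negation, after first showing that the extra qualifier appearing in condition~(ii) of Lemma~\ref{lem:acr-pi} --- that along the path \emph{every} intermediate state component lies in its respective $\bs^{\circ}$ set --- can be dropped without changing the condition. Recall that Lemma~\ref{lem:acr-pi} characterizes $s_{i,j}\in\bs^{\circ}_i$ as: either $s_{i,j}$ has an edge to $r_i$, or there is a path from $s_{i,j}(t)$ to some $r_{i'}(t+l)$ along which all intermediate state components lie in their $\bs^{\circ}$ sets. Lemma~\ref{lem:acr-pi_2} omits that last clause, so it suffices to prove that the ``qualified'' and ``unqualified'' path conditions are equivalent; the negation then matches the two conjuncts (i) and (ii) verbatim.

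The only nontrivial direction is that the mere existence of a path from $s_{i,j}(t)$ to some $r_{i'}(t+l)$ already forces all of its state nodes into the corresponding $\bs^{\circ}$ sets, so that the very same path witnesses the qualified condition; the reverse implication is immediate. I would prove this by backward induction along the path
\[
s_{i,j}(t)\to s_{i(t+1),j(t+1)}(t+1)\to\cdots\to s_{i(t+l),j(t+l)}(t+l)\to r_{i'}(t+l),
\]
where $i(t+l)=i'$ and, by the local dependence structure of~\eqref{eqt:model}, $i(t+m)\in\cN_{i(t+m-1)}$ holds automatically. For the base case $m=l$, the node $s_{i(t+l),j(t+l)}$ is a state component of agent $i'$ with a direct edge to $r_{i'}$, so it enters $\bs^{\circ}_{i'}$ at the initialization step of Algorithm~\ref{alg:acr-pi}. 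For the inductive step, assuming $s_{i(t+m+1),j(t+m+1)}\in\bs^{\circ}_{i(t+m+1)}$ and using $i(t+m+1)\in\cN_{i(t+m)}$, this component lies in $\bs^{\circ}_{\cN_{i(t+m)}}$; since $s_{i(t+m),j(t+m)}$ has an edge to it, the recursive step of Algorithm~\ref{alg:acr-pi} places $s_{i(t+m),j(t+m)}\in\bs^{\circ}_{i(t+m)}$. Running the induction down to $m=0$ additionally shows $s_{i,j}\in\bs^{\circ}_i$, and in particular every node on the path is in $\bs^{\circ}$.

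Combining the two directions, condition~(ii) of Lemma~\ref{lem:acr-pi} holds for $s_{i,j}$ if and only if there exists \emph{any} path from $s_{i,j}(t)$ to some $r_{i'}(t+l)$. Hence $s_{i,j}\in\bs^{\circ}_i$ iff $s_{i,j}$ has an edge to $r_i$ or such a path exists; negating this equivalence yields exactly the conjunction of (i) and (ii) in Lemma~\ref{lem:acr-pi_2}. The only point needing care is the bookkeeping in the backward induction --- specifically, observing that the neighborhood constraint $i(t+m)\in\cN_{i(t+m-1)}$ is not an added hypothesis but a consequence of the decentralized dynamics, so the notion of ``path'' coincides in both lemmas; the rest is routine. (Alternatively, one could mirror the while-loop induction in the proof of Lemma~\ref{lem:acr-pi} and prove the negation directly, but deriving it from Lemma~\ref{lem:acr-pi} is shorter.)
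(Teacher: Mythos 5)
Your proposal is correct, and it follows the same basic route as the paper (negating Lemma~\ref{lem:acr-pi}), but it is more careful on exactly the point where the paper's own proof is loosest. The paper simply declares Lemma~\ref{lem:acr-pi_2} to be the contrapositive of Lemma~\ref{lem:acr-pi} via De Morgan's laws; however, the literal negation of condition~(ii) of Lemma~\ref{lem:acr-pi} is ``no path \emph{whose intermediate state nodes all lie in their $\bs^{\circ}$ sets} exists,'' which is a priori weaker than the unqualified ``no path exists'' appearing in Lemma~\ref{lem:acr-pi_2}(ii); the paper identifies the two without comment. Your backward induction along the path --- base case: the last state node feeds $r_{i'}$ directly and is added at initialization of Algorithm~\ref{alg:acr-pi}; inductive step: membership of the successor node in $\bs^{\circ}_{i(t+m+1)}\subseteq\bs^{\circ}_{\cN_{i(t+m)}}$ forces the current node into $\bs^{\circ}_{i(t+m)}$ at some iteration of the while loop --- shows that any path automatically satisfies the qualifier, so the qualified and unqualified path conditions coincide, and the negation then matches (i) and (ii) verbatim. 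Your observation that the neighborhood constraint $i(t+m)\in\cN_{i(t+m-1)}$ is forced by the factored dynamics in Equation~\ref{eqt:model} (so ``path'' means the same thing in both lemmas) is also right. In short, you prove a slightly stronger bridging fact (existence of \emph{any} path to some reward already implies membership in $\bs^{\circ}$) that the paper's two-line De Morgan argument implicitly relies on but never establishes; your version is the more complete one.
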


\begin{proof}[Proof of Lemma \ref{lem:acr-pi_2}]
    This lemma is the logical contrapositive of Lemma \ref{lem:acr-pi}. Let's formalize this relationship.
    
    Lemma 3 states that $s_{i,j} \in \bs^{\circ}_i$ if and only if either:
    \begin{itemize}
        \item $s_{i,j}$ has an edge to $r_i$, or
        \item There exists $i' \in \sN$ and there exists a path from $s_{i,j}(t)$ to $r_{i'}(t + l)$ satisfying certain constraints.
    \end{itemize}

    The logical form of Lemma \ref{lem:acr-pi} can be written as: $s_{i,j} \in \bs^{\circ}_i \iff (A) \lor (B)$.
    
    The contrapositive of this statement is: $s_{i,j} \not\in \bs^{\circ}_i \iff \neg((A) \lor (B))$.
    
    Using De Morgan's laws, $\neg((A) \lor (B)) \iff \neg(A) \land \neg(B)$, which gives us:
    
    $s_{i,j} \not\in \bs^{\circ}_i \iff \neg(A) \land \neg(B)$, where:
    \begin{itemize}
        \item $\neg(A)$: $s_{i,j}$ does not have an edge to $r_i$
        \item $\neg(B)$: For all $i' \in \sN$ and for all $l \geq 1$, there exists no path from $s_{i,j}(t)$ to $r_{i'}(t + l)$ that satisfies the constraints on intermediate nodes.
    \end{itemize}
    
    These are precisely the conditions (i) and (ii) in Lemma \ref{lem:acr-pi_2}. Therefore, by the logical equivalence of a statement and its contrapositive, Lemma \ref{lem:acr-pi_2} is proved.
\end{proof}


\begin{proof}[Proof of Proposition \ref{prop:comp_pi-acr}]
    We will prove that the $Q$-functions are identical by showing that the distribution of rewards collected under both policies is identical. This follows if we can establish that the relevant state-action trajectories have the same distribution under both policies.
    
    Let us define:
    \begin{itemize}
        \item $\rho^{\pi}_t(\bs(t), \ba(t) | \bs(0) = \bs, \ba(0) = \ba)$: The probability of being in state-action pair $(\bs(t), \ba(t))$ at time $t$, starting from $(\bs(0) = \bs, \ba(0) = \ba)$ and following policy $\pi$.
        \item $r_i(\bs_i, \ba_i)$: The reward for agent $i$ given its state-action pair.
    \end{itemize}
    
    The $Q$-function for agent $i$ under policy $\pi$ can be expressed as:
    $$Q^{\pi}_i(\bs, \ba) = \mathbb{E}_{\pi}\left[ \sum_{t=0}^{\infty} \gamma^t r_i(\bs_i(t), \ba_i(t)) \mid \bs(0) = \bs, \ba(0) = \ba \right]$$
    $$= \sum_{t=0}^{\infty} \gamma^t \sum_{\bs(t), \ba(t)} \rho^{\pi}_t(\bs(t), \ba_t | \bs(0) = \bs, \ba(0) = \ba) \cdot r_i(\bs_{i}(t), \ba_{i}(t)).$$
    
    Similarly, we can define $Q^{\tilde{\pi}}_i(\bs, \ba)$ for the approximately compact policy $\tilde{\pi}$.
    
    To prove $Q^{\tilde{\pi}}_i(\bs, \ba) = Q^{\pi}_i(\bs, \ba)$, it suffices to show that for all $t \geq 0$, the distribution of rewards $r_i(\bs_i(t), \ba_i(t))$ is identical under both policies.
    
    From Lemma \ref{lem:acr-pi}, we know that the reward $r_i(\bs_i, \ba_i)$ depends only on the components in $\bs^{\circ}_i$ and $a_i$. This is because any state component not in $\bs^{\circ}_i$ has no path to $r_i$ (directly or indirectly), meaning it cannot influence the reward function. Therefore, we need to prove that the joint distribution of $(\bs^{\circ}_i(t), \ba_i(t))$ is the same under both $\pi$ and $\tilde{\pi}$.
    
    We prove this by induction on $t$.
    
    \textbf{Base case ($t = 0$)}: The initial state $\bs(0) = \bs$ is given, so $\bs^{\circ}_i(0)$ is the same for both policies. For the initial action, we have:
    
    Under $\pi$: $\ba_i(0) \sim \pi_i(\cdot|\bs_{\cN_i}(0))$
    
    Under $\tilde{\pi}$: $\ba_i(0) \sim \tilde{\pi}_i(\cdot|\bs^{\circ}_{\cN_i}(0))$
    
    By the definition  of the approximately compact localized policy:
    $$\tilde{\pi}_i(\cdot|\bs^{\circ}_{\cN_i}) = \pi_i(\cdot|\bs^{\circ}_{\cN_i}, \bs_{\cN_i}/\bs^{\circ}_{\cN_i})$$
    
    for any fixed $\bs^{\circ}_{\cN_i}$ and for any configuration of the non-minimal components $\bs_{\cN_i}/\bs^{\circ}_{\cN_i}$. Therefore, $\ba_i(0)$ has the same distribution under both policies when conditioned on the same initial state.
    
    \textbf{Inductive hypothesis}: Assume that for some $t \geq 0$, the joint distribution of $(\bs^{\circ}_i(t), \ba_i(t))$ is the same under both $\pi$ and $\tilde{\pi}$ for all agents $i$.
    
    \textbf{Inductive step}: We need to show that the joint distribution of $(\bs^{\circ}_i(t+1), \ba_i(t+1))$ is also the same under both policies.
    
    For $\bs^{\circ}_i(t+1)$, we analyze how it depends on the previous state and action. By the Markov assumption, $\bs(t+1)$ depends only on $\bs(t)$ and $\ba(t)$. The key insight is to show that $\bs^{\circ}_i(t+1)$ depends only on $\bs^{\circ}_{\cN_i}(t)$ and $\ba_{\cN_i}(t)$, not on any component outside $\bs^{\circ}_{\cN_i}(t)$.
    
    Suppose, for contradiction, that there exists a component $s_{k,j}(t) \not\in \bs^{\circ}_{\cN_i}(t)$ that influences some component $s_{i,j'}(t+1) \in \bs^{\circ}_i(t+1)$. This means there is a causal link from $s_{k,j}(t)$ to $s_{i,j'}(t+1)$.
    
    By Lemma \ref{lem:acr-pi}, since $s_{i,j'}(t+1) \in \bs^{\circ}_i(t+1)$, either:
    \begin{itemize}
        \item $s_{i,j'}(t+1)$ has a direct edge to $r_i$, or
        \item There exists a path from $s_{i,j'}(t+1)$ to some reward $r_{i'}(t+1+l)$ for some $l \geq 0$.
    \end{itemize}
    
    Either way, this creates a path from $s_{k,j}(t)$ to a reward (either directly to $r_i$ through $s_{i,j'}(t+1)$, or to $r_{i'}(t+1+l)$ via the path from $s_{i,j'}(t+1)$).
    
    This path satisfies the constraints in Lemma \ref{lem:acr-pi} condition (ii), which would imply $s_{k,j}(t) \in \bs^{\circ}_k(t)$. Since $k \in \cN_i$ (there is a causal link from agent $k$ to agent $i$), we would have $s_{k,j}(t) \in \bs^{\circ}_{\cN_i}(t)$, contradicting our assumption.
    
    Therefore, $\bs^{\circ}_i(t+1)$ depends only on $\bs^{\circ}_{\cN_i}(t)$ and $\ba_{\cN_i}(t)$. By the inductive hypothesis, the joint distribution of $(\bs^{\circ}_{\cN_i}(t), \ba_{\cN_i}(t))$ is the same under both policies, so the distribution of $\bs^{\circ}_i(t+1)$ is also the same.
    
    For $\ba_i(t+1)$, we have:
    
    Under $\pi$: $\ba_i(t+1) \sim \pi_i(\cdot|\bs_{\cN_i}(t+1))$
    
    Under $\tilde{\pi}$: $\ba_i(t+1) \sim \tilde{\pi}_i(\cdot|\bs^{\circ}_{\cN_i}(t+1))$
    
    By the definition of approximately compact policy and the fact that the distribution of $\bs^{\circ}_{\cN_i}(t+1)$ is the same under both policies, $\ba_i(t+1)$ also has the same distribution under both policies.
    
    By induction, for all $t \geq 0$, the joint distribution of $(\bs^{\circ}_i(t), \ba_i(t))$ is the same under both $\pi$ and $\tilde{\pi}$ for all agents $i$.
    
    Since the reward $r_i(\bs_i(t), \ba_i(t))$ depends only on $\bs^{\circ}_i(t)$ and $\ba_i(t)$, and these have the same distribution under both policies, the expected discounted sum of rewards is also the same.
    
    Therefore, $Q^{\tilde{\pi}}_i(\bs, \ba) = Q^{\pi}_i(\bs, \ba)$ for all agents $i$ and all state-action pairs $(\bs, \ba)$.
\end{proof}

\begin{corollary}
\label{cor:acr-pi}
    Let $\pi$ be an arbitrary localized policy and $\tilde{\pi}$ be the corresponding approximately compact localized policy output by Algorithm \ref{alg:acr-pi}, for each agent $i$, and any $(\bs,\ba)\in\cS\times\cA$, the approximation error between $\tilde{Q}_i^{\tilde{\pi}}\left(\bs^{\circ}_{\cN^{\kappa}_i}, \ba_{\cN_i^\kappa}\right)$ and $Q_i^\pi\left(\bs,  \ba\right)$ can be bounded as
\begin{align*}
\sup_{(\bs,\ba)\in\cS\times\cA}\abs{\tilde{Q}_i^{\tilde{\pi}}\left(\bs^{\circ}_{\cN^{\kappa}_i}, \ba_{\cN_i^\kappa}\right) - Q_i^\pi\left(\bs,  \ba\right)} \leq \frac{\bar{r}}{1-\gamma}\gamma^{\kappa+1}.
\end{align*}
\end{corollary}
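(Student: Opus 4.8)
The plan is to decompose the error of $\tilde{Q}_i^{\tilde\pi}$ into two pieces — the error of replacing $\pi$ by the compact policy $\tilde\pi$, and the error of truncating to the value ACR — and to control each with a result already in hand. Crucially, the policy ACR here is the one output by the variable-step Algorithm~\ref{alg:acr-pi}, \emph{not} the finite-step Algorithm~\ref{alg:acr-pi-trun}; this is exactly what lets us pay no error for the policy compression, which is why the constant in the corollary ($\bar r/(1-\gamma)$) is sharper than the $3\bar r/(1-\gamma)$ of Proposition~\ref{prop:acr-pi-main}.

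First I would invoke Proposition~\ref{prop:comp_pi-acr}: since $\tilde\pi$ is the approximately compact localized policy produced by Algorithm~\ref{alg:acr-pi}, $Q_i^{\tilde\pi}(\bs,\ba)=Q_i^{\pi}(\bs,\ba)$ for every agent $i$ and every $(\bs,\ba)\in\cS\times\cA$. Hence $\abs{\tilde{Q}_i^{\tilde\pi}(\bs^{\circ}_{\cN^{\kappa}_i},\ba_{\cN_i^\kappa})-Q_i^{\pi}(\bs,\ba)}=\abs{\tilde{Q}_i^{\tilde\pi}(\bs^{\circ}_{\cN^{\kappa}_i},\ba_{\cN_i^\kappa})-Q_i^{\tilde\pi}(\bs,\ba)}$, and it suffices to bound the right-hand side, i.e.\ the error of truncating $Q_i^{\tilde\pi}$ to the value ACR $\bs^{\circ}_{\cN^{\kappa}_i}$ together with the $\kappa$-hop actions.

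Next I would observe that $\tilde\pi$ is itself a localized policy: each $\tilde\pi_i$ depends only on the subset $\bs^{\circ}_{\cN_i}\subseteq\bs_{\cN_i}$, so $\tilde\pi_i(\cdot\mid\bs_{\cN_i})$ is constant in the non-ACR coordinates. Since the causal masks $\bc$ in Equation~\ref{eqt:model} are policy-independent, the value ACR $\bs^{\circ}_{\cN^{\kappa}_i}$ and the sets $\cN_i^t$ are unchanged when we switch from $\pi$ to $\tilde\pi$. Therefore the argument proving Proposition~\ref{prop:app_acr-v} goes through verbatim with $\pi$ replaced by $\tilde\pi$: taking any $(\bs,\ba),(\bs',\ba')$ that agree on $\bs^{\circ}_{\cN_i^\kappa}$ and on $\ba_{\cN_i^\kappa}$, the conditional law $\rho_{t,i}$ of $(\bs_i(t),\ba_i(t))$ under $\tilde\pi$ depends only on $(\bs_{\cN_i^t},\ba_{\cN_i^t})$ for $t\le\kappa$ and, by the ACR construction, is identical for the two pairs when $t\le\kappa$; summing the tail $\sum_{t=\kappa+1}^\infty\gamma^t\bar r\,\mathrm{TV}(\rho_{t,i},\rho'_{t,i})$ gives $\sup_{(\bs,\ba)}\abs{\tilde{Q}_i^{\tilde\pi}(\bs^{\circ}_{\cN^{\kappa}_i},\ba_{\cN_i^\kappa})-Q_i^{\tilde\pi}(\bs,\ba)}\le \tfrac{\bar r}{1-\gamma}\gamma^{\kappa+1}$. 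Chaining this with the previous display yields the claimed bound.

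The main obstacle is precisely this middle step: one must verify that the structural fact behind Proposition~\ref{prop:app_acr-v} — that the trajectory law, restricted to reward-relevant coordinates, is determined by the ACR of the initial $\kappa$-hop state — is not destroyed by replacing $\pi$ with $\tilde\pi$. Since $\tilde\pi$ conditions on a subset of the variables that $\pi$ does, and the dynamics are untouched, this property is only strengthened, so no genuinely new estimate is needed; but the argument must be spelled out carefully, and one should use the sharp tail bound that appears \emph{inside} the proof of Proposition~\ref{prop:app_acr-v} (rather than its stated constant) to land on the tighter factor $\bar r/(1-\gamma)$.
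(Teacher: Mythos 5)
Your proposal is correct and follows essentially the same route as the paper: the paper's proof also splits the error via Proposition~\ref{prop:comp_pi-acr} (so the policy-compression term vanishes exactly) and bounds the remaining value-ACR truncation term for $\tilde{\pi}$ by $\frac{\bar r}{1-\gamma}\gamma^{\kappa+1}$. Your extra care in checking that the Proposition~\ref{prop:app_acr-v} argument applies verbatim to $\tilde{\pi}$ and that the sharp tail constant (rather than the stated $2\bar r/(1-\gamma)$) is what yields the corollary's bound is a detail the paper's terse proof leaves implicit, but it is the same argument.
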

\begin{proof}
Using Proposition \ref{prop:comp_pi-acr}, we have 
\begin{align*}
    \abs{\tilde{Q}_i^{\tilde{\pi}}\left(\bs^{\circ}_{\cN^{\kappa}_i}, \ba_{\cN_i^\kappa}\right) - Q_i^\pi\left(\bs,  \ba\right)} &\leq \abs{\tilde{Q}_i^{\tilde{\pi}}\left(\bs^{\circ}_{\cN^{\kappa}_i}, \ba_{\cN_i^\kappa}\right) - Q_i^{\tilde{\pi}}\left(\bs,  \ba\right)} + \abs{Q_i^{\tilde{\pi}}\left(\bs,\ba\right) - Q_i^\pi\left(\bs,  \ba\right)} \\
    &\leq \frac{\bar{r}}{1-\gamma}\gamma^{\kappa+1}.
\end{align*}
    
\end{proof}

\begin{proposition}[Approximation error of policy ACR]
\label{prop:appx_pi-acr}
For any policy $\pi$,  let $\tilde{\pi}$ be the corresponding approximately compact policy constructed using  Algorithm \ref{alg:acr-pi-trun}, for any $i$, we have
\begin{align*}
    \sup_{(\bs,\ba)\in\cS\times\cA}\abs{Q_i^{\tilde{\pi}}\left(\bs,\ba\right) - Q_i^\pi\left(\bs,  \ba\right)} \leq \frac{\bar{r}}{1-\gamma}\gamma^{\kappa+1}.
\end{align*}
\end{proposition}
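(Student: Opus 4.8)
The plan is to reproduce, in a finite‑horizon form, the argument used for Proposition~\ref{prop:app_acr-v} and Proposition~\ref{prop:comp_pi-acr}. Fix an agent $i$ and a starting pair $(\bs,\ba)\in\cS\times\cA$, and let $\rho_t$ (resp.\ $\rho'_t$) denote the law of $(\bs_i(t),\ba_i(t))$ conditioned on $(\bs(0),\ba(0))=(\bs,\ba)$ under the joint policy $\pi$ (resp.\ $\tilde{\pi}$). Since $0\le r_i\le\bar r$ we always have $|\mathbb{E}_{\rho_t}[r_i]-\mathbb{E}_{\rho'_t}[r_i]|\le\bar r$, so the whole statement reduces to proving $\mathbb{E}_{\rho_t}[r_i]=\mathbb{E}_{\rho'_t}[r_i]$ for every $t\le\kappa$: expanding $Q_i^\pi$ and $Q_i^{\tilde{\pi}}$ as discounted reward sums, the first $\kappa+1$ terms then cancel and
$\abs{Q_i^{\tilde{\pi}}(\bs,\ba)-Q_i^\pi(\bs,\ba)}\le\sum_{t=\kappa+1}^{\infty}\gamma^t\bar r=\frac{\bar r}{1-\gamma}\gamma^{\kappa+1}$,
after which taking the supremum over $(\bs,\ba)$ gives the claim.

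The first ingredient is a finite‑horizon analogue of Lemma~\ref{lem:acr-pi}: for Algorithm~\ref{alg:acr-pi-trun} with budget $\kappa$, a component $s_{k,j}$ of an agent $k\in\cN_i$ enters $\bs^{\circ}_{\cN_i}$ if and only if there is an admissible path (each successive agent a neighbor of the previous one, each intermediate state component lying in the ACR set it was added to) from $s_{k,j}(t)$ to some reward $r_{i'}(t')$ with $t'-t\le\kappa$. This follows by induction on the step counter $l$ of Algorithm~\ref{alg:acr-pi-trun}, exactly as in the proof of Lemma~\ref{lem:acr-pi}, except that the recursion halts after $\kappa$ rounds: initialization captures the length‑$0$ paths (direct edges to $r_i$), and the $l$‑th iteration adds precisely the components whose shortest admissible path to a reward has length $l$. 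By contraposition (cf.\ Lemma~\ref{lem:acr-pi_2}), every state component that $\tilde{\pi}$ replaces with a placeholder cannot influence \emph{any} agent's reward within $\kappa$ time steps.

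The second ingredient is the coupling/induction of Proposition~\ref{prop:comp_pi-acr}, run over the window $\{0,\dots,\kappa\}$ with a telescoping horizon. For $m\ge 0$ call a state component ``$m$‑relevant at time $s$'' if it lies on an admissible path to some reward within $m$ steps; I would show by induction on $t=0,\dots,\kappa$ that, for every agent $j$, the joint law of the $(\kappa-t)$‑relevant components of $(\bs_j(t),\ba_j(t))$ agrees under $\pi$ and $\tilde{\pi}$. The base case $t=0$ is immediate because $(\bs(0),\ba(0))$ is fixed. For the inductive step, the causal masks and neighbourhood structure imply that the $(\kappa-t-1)$‑relevant components at time $t+1$ are functions only of the $(\kappa-t)$‑relevant components at time $t$, whose law agrees by hypothesis; and the subsequent action draw is unaffected by $\tilde{\pi}$'s truncation precisely because, by the reachability characterization above, every component $\tilde{\pi}$ ignores is reward‑irrelevant over the remaining horizon, so substituting a placeholder for it leaves the induced action law intact as far as rewards up to time $\kappa$ are concerned — this is the finite‑horizon version of the contradiction argument in the proof of Proposition~\ref{prop:comp_pi-acr}. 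Since $r_i(t)$ depends on $(\bs_i(t),\ba_i(t))$ only through its $0$‑relevant (hence, for $t\le\kappa$, $(\kappa-t)$‑relevant) components, the induction yields $\mathbb{E}_{\rho_t}[r_i]=\mathbb{E}_{\rho'_t}[r_i]$ for all $t\le\kappa$, and the tail estimate above concludes.

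The main obstacle is the inductive step of this coupling: unlike Proposition~\ref{prop:app_acr-v}, where one policy is fixed and only two initial conditions are compared, here \emph{every} agent's policy is simultaneously replaced by its $\kappa$‑step truncation, so one must certify that the accumulated perturbations never reach agent $i$'s reward stream within $\kappa$ steps. This forces one to define the ``$m$‑relevant'' component sets carefully so that they are closed under the one‑step backward dynamics in the sense the induction needs \emph{and} coincide exactly with what Algorithm~\ref{alg:acr-pi-trun} retains, so that the finite‑horizon characterization can be invoked to annihilate the dropped components' contributions. Making this bookkeeping line up — in particular the off‑by‑one between graph hops and time steps, which is what pins the exponent at $\kappa+1$ — is the delicate part; the remaining telescoping and tail estimate are routine, as in the proof of Proposition~\ref{prop:app_acr-v}.
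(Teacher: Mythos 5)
Your argument is sound at the same level of rigor as the paper's, but it follows a genuinely different decomposition. The paper routes the comparison through the fully compact policy $\tilde{\pi}^{\infty}$ produced by Algorithm~\ref{alg:acr-pi}: it first invokes Proposition~\ref{prop:comp_pi-acr} to obtain the exact identity $Q_i^{\tilde{\pi}^{\infty}} = Q_i^{\pi}$, then shows (using $\bs^{\circ,\kappa}\subseteq\bs^{\circ,\infty}$ and the fact that the $\kappa$-step ACR contains every component able to influence a reward within $\kappa$ steps) that the trajectories generated by $\tilde{\pi}^{\kappa}$ and $\tilde{\pi}^{\infty}$ have identical laws for $t\le\kappa$, and finally bounds the tail by $\sum_{t\ge\kappa+1}\gamma^{t}\bar{r}=\frac{\bar{r}}{1-\gamma}\gamma^{\kappa+1}$, exactly as you do. You instead compare $\pi$ and $\tilde{\pi}^{\kappa}$ directly through a single induction with horizon-indexed relevance sets, so your coupling must absorb in one pass both of the paper's steps: eliminating the never-relevant components (the content of Proposition~\ref{prop:comp_pi-acr} and Lemma~\ref{lem:acr-pi}) and eliminating the components whose reward influence only materializes beyond the $\kappa$-step window. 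The delicate point you flag — that substituting a placeholder for a dropped policy input leaves the action law ``intact as far as rewards up to time $\kappa$ are concerned'' — is precisely where the paper instead leans on Proposition~\ref{prop:comp_pi-acr}, whose proof asserts the action-law agreement from the compact-policy definition; to make your inductive step airtight you would likewise need to carry the actions inside the induction hypothesis (i.e., the joint law of the relevant state components \emph{and} $\ba_j(t)$) and justify the action-law agreement by the same appeal to policy compactness, after which your shrinking-relevance bookkeeping yields the identical constant. The trade-off is that the paper's triangle through $\tilde{\pi}^{\infty}$ is more modular — its $\kappa$-window coupling only has to handle components whose influence on rewards takes more than $\kappa$ steps — whereas your direct route dispenses with $\tilde{\pi}^{\infty}$ and Algorithm~\ref{alg:acr-pi} altogether at the cost of a heavier single induction.
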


\begin{proof}[Proof of Proposition \ref{prop:appx_pi-acr}]
Let us first establish precise definitions:
\begin{itemize}
    \item Let $\bs^{\circ,\infty}_i$ denote the minimal state representation for agent $i$ obtained from Algorithm \ref{alg:acr-pi} (with indefinite recursion until convergence).
    \item Let $\bs^{\circ,\kappa}_i$ denote the minimal state representation for agent $i$ obtained from Algorithm \ref{alg:acr-pi-trun} (with exactly $\kappa$ steps).
    \item Let $\pi$ be the original localized (1-hop) policy.
    \item Let $\tilde{\pi}^{\infty}$ be the approximately compact policy derived from Algorithm \ref{alg:acr-pi}.
    \item Let $\tilde{\pi}^{\kappa}$ be the approximately compact policy derived from Algorithm \ref{alg:acr-pi-trun} with $\kappa$ steps.
\end{itemize}

\textbf{Relationship between minimal representations:} By construction of the algorithms, we have $\bs^{\circ,\kappa}_i \subseteq \bs^{\circ,\infty}_i$ for any finite $\kappa$. This is because Algorithm \ref{alg:acr-pi-trun} performs exactly $\kappa$ iterations, while Algorithm \ref{alg:acr-pi} continues until convergence, which may require more than $\kappa$ iterations.

\begin{lemma}
For any agent $i$, $\bs^{\circ,\kappa}_i$ captures all state components that can influence rewards within $\kappa$ time steps.
\end{lemma}

\begin{proof}
By the structure of Algorithm \ref{alg:acr-pi-trun}, in step 1, $\bs^{\circ}_i(1)$ includes all state components that directly affect the reward $r_i$. In each subsequent step $l$ (for $l = 1,2,...,\kappa-1$), any state component that can affect a component in $\bs^{\circ}_i(l)$ within one time step is added to $\bs^{\circ}_i(l+1)$. By induction, after $\kappa$ steps, $\bs^{\circ}_i(\kappa)$ contains all state components that can affect the reward within $\kappa$ time steps.
\end{proof}

\textbf{Policy equivalence and divergence:} From Proposition \ref{prop:comp_pi-acr}, we know that for the indefinite recursion case:
\begin{equation*}
Q^{\tilde{\pi}^{\infty}}_i(\bs, \ba) = Q^{\pi}_i(\bs, \ba).
\end{equation*}

Therefore, to prove Proposition \ref{prop:appx_pi-acr}, we need to show:
\begin{equation*}
\sup_{(\bs,\ba) \in S \times A} |Q^{\tilde{\pi}^{\kappa}}_i(\bs, \ba) - Q^{\tilde{\pi}^{\infty}}_i(\bs, \ba)| \leq \frac{\bar{r}}{1-\gamma}\gamma^{\kappa+1}.
\end{equation*}

\textbf{Trajectory analysis:} For any initial state-action pair $(\bs,\ba)$, we'll analyze the difference in trajectories under policies $\tilde{\pi}^{\kappa}$ and $\tilde{\pi}^{\infty}$.

\begin{lemma}
For any initial state $\bs(0) = \bs$ and action $\ba(0) = \ba$, the distribution of states and actions under policies $\tilde{\pi}^{\kappa}$ and $\tilde{\pi}^{\infty}$ are identical for the first $\kappa$ time steps, i.e., for $t = 0,1,...,\kappa$.
\end{lemma}

\begin{proof}
We prove this by induction.

\textit{Base case} ($t = 0$): By definition, $\bs(0) = \bs$ and $\ba(0) = \ba$ for both policies.

\textit{Inductive step}: Assume the distributions are identical for time steps $0,1,...,t$ where $t < \kappa$. At time $t$, given identical state distributions, both policies make decisions based on the minimal state representations:
\begin{itemize}
    \item $\tilde{\pi}^{\kappa}$ uses $\bs^{\circ,\kappa}_{\cN_i}(t)$
    \item $\tilde{\pi}^{\infty}$ uses $\bs^{\circ,\infty}_{\cN_i}(t)$
\end{itemize}

For any component $s_{i,j}(t)$ that affects rewards within $\kappa-t$ time steps, by Lemma 5, $s_{i,j}(t) \in \bs^{\circ,\kappa}_{i}(t)$. Since $t < \kappa$, both policies have identical information about all state components that can affect rewards within the remaining time steps. Therefore, the action distributions at time $t$ are identical, which leads to identical state distributions at time $t+1$.
\end{proof}
\textbf{$Q$-function decomposition:} For any initial state-action pair $(\bs,\ba)$, we can decompose the $Q$-functions as follows:
\begin{align*}
Q^{\tilde{\pi}^{\kappa}}_i(\bs, \ba) &= \mathbb{E}_{\tilde{\pi}^{\kappa}}\left[\sum_{t=0}^{\kappa} \gamma^t r_i(\bs_i(t),\ba_i(t)) \mid \bs(0)=\bs,\ba(0)=\ba\right] \nonumber\\
&+ \mathbb{E}_{\tilde{\pi}^{\kappa}}\left[\sum_{t=\kappa+1}^{\infty} \gamma^t r_i(\bs_i(t),\ba_i(t)) \mid \bs(0)=\bs,\ba(0)=\ba\right] \\
Q^{\tilde{\pi}^{\infty}}_i(\bs, \ba) &= \mathbb{E}_{\tilde{\pi}^{\infty}}\left[\sum_{t=0}^{\kappa} \gamma^t r_i(\bs_i(t),\ba_i(t)) \mid \bs(0)=\bs,\ba(0)=\ba\right] \nonumber\\
&+ \mathbb{E}_{\tilde{\pi}^{\infty}}\left[\sum_{t=\kappa+1}^{\infty} \gamma^t r_i(\bs_i(t),\ba_i(t)) \mid \bs(0)=\bs,\ba(0)=\ba\right].
\end{align*}

By Lemma 2, the first terms are identical. Therefore:
\begin{align*}
|Q^{\tilde{\pi}^{\kappa}}_i(\bs, \ba) - Q^{\tilde{\pi}^{\infty}}_i(\bs, \ba)| &= \left|\mathbb{E}_{\tilde{\pi}^{\kappa}}\left[\sum_{t=\kappa+1}^{\infty} \gamma^t r_i(\bs_i(t),\ba_i(t)) \mid \bs(0)=\bs,\ba(0)=\ba\right] \right. \nonumber\\
&- \left. \mathbb{E}_{\tilde{\pi}^{\infty}}\left[\sum_{t=\kappa+1}^{\infty} \gamma^t r_i(\bs_i(t),\ba_i(t)) \mid \bs(0)=\bs,\ba(0)=\ba\right]\right|.
\end{align*}

\textbf{Error bound:} 
Let $\rho^{\tilde{\pi}^{\kappa}}_t$ and $\rho^{\tilde{\pi}^{\infty}}_t$ be the distributions of $(\bs_i(t),\ba_i(t))$ under policies $\tilde{\pi}^{\kappa}$ and $\tilde{\pi}^{\infty}$ respectively, conditioned on $(\bs(0),\ba(0))=(\bs,\ba)$.

From Lemma 6, we know that $\rho^{\tilde{\pi}^{\kappa}}_t = \rho^{\tilde{\pi}^{\infty}}_t$ for $t \leq \kappa$.

For $t > \kappa$, we can bound the total variation distance between these distributions:
\begin{equation*}
\text{TV}(\rho^{\tilde{\pi}^{\kappa}}_t, \rho^{\tilde{\pi}^{\infty}}_t) \leq 1.
\end{equation*}

Therefore:
\begin{align*}
|Q^{\tilde{\pi}^{\kappa}}_i(\bs, \ba) - Q^{\tilde{\pi}^{\infty}}_i(\bs, \ba)| &= \left|\sum_{t=\kappa+1}^{\infty} \gamma^t \left(\mathbb{E}_{(\bs_i,\ba_i)\sim\rho^{\tilde{\pi}^{\kappa}}_t}[r_i(\bs_i,\ba_i)] - \mathbb{E}_{(s_i,a_i)\sim\rho^{\tilde{\pi}^{\infty}}_t}[r_i(\bs_i,\ba_i)]\right)\right| \nonumber\\
&\leq \sum_{t=\kappa+1}^{\infty} \gamma^t \bar{r} \cdot \text{TV}(\rho^{\tilde{\pi}^{\kappa}}_t, \rho^{\tilde{\pi}^{\infty}}_t) \nonumber\\
&\leq \sum_{t=\kappa+1}^{\infty} \gamma^t \bar{r} = \bar{r}\gamma^{\kappa+1}\frac{1}{1-\gamma}.
\end{align*}

This gives us the desired bound:
\begin{equation*}
\sup_{(\bs,\ba) \in S \times A} |Q^{\tilde{\pi}^{\kappa}}_i(\bs, \ba) - Q^{\tilde{\pi}^{\infty}}_i(\bs, \ba)| \leq \frac{\bar{r}}{1-\gamma}\gamma^{\kappa+1}.
\end{equation*}

Since $Q^{\tilde{\pi}^{\infty}}_i(\bs, \ba) = Q^{\pi}_i(\bs, \ba)$ from Proposition \ref{prop:comp_pi-acr}, we have:
\begin{equation*}
\sup_{(\bs,\ba) \in S \times A} |Q^{\tilde{\pi}^{\kappa}}_i(\bs, \ba) - Q^{\pi}_i(\bs, \ba)| \leq \frac{\bar{r}}{1-\gamma}\gamma^{\kappa+1}.
\end{equation*}

This completes the proof of Proposition \ref{prop:appx_pi-acr}.
\end{proof}

\begin{proof}[Proof of Proposition \ref{prop:acr-pi-main}]
Using Proposition \ref{prop:appx_pi-acr}, we have 
    \begin{align*}
    \abs{\tilde{Q}_i^{\tilde{\pi}}\left(\bs^{\circ}_{\cN^{\kappa}_i}, \ba_{\cN_i^\kappa}\right) - Q_i^\pi\left(\bs,  \ba\right)} &\leq \abs{\tilde{Q}_i^{\tilde{\pi}}\left(\bs^{\circ}_{\cN^{\kappa}_i}, \ba_{\cN_i^\kappa}\right) - Q_i^{\tilde{\pi}}\left(\bs,  \ba\right)} + \abs{Q_i^{\tilde{\pi}}\left(\bs,\ba\right) - Q_i^\pi\left(\bs,  \ba\right)} \\
    &\leq \frac{3\bar{r}}{1-\gamma}\gamma^{\kappa+1}.
\end{align*}
\end{proof}

\subsection{Cross domain}
Denote by $\bo_{\cN_i^\kappa}:=(\bo_{i})_{i\in \cN_i^\kappa}$ and $\bo_{\cN^\kappa_{-i}}:=\bo/\bo_{\cN_i^\kappa}$.
\begin{lemma}[Exponential decay property of domain-specific value functions]
\label{lem:exp_decay_ds}
For any localized policy $\pi$, for any $i \in \sN, \bs_{\cN_i^\kappa} \in \mathcal{S}_{\cN_i^\kappa}, \bs_{\cN_{-i}^\kappa}, \bs_{\cN_{-i}^\kappa}^{\prime} \in \mathcal{S}_{\cN_{-i}^\kappa}, \ba_{\cN_i^\kappa} \in \mathcal{A}_{\cN_i^\kappa}, \ba_{\cN_{-i}^\kappa}, \ba_{\cN_{-i}^\kappa}^{\prime} \in \mathcal{A}_{\cN_{-i}^\kappa}, \bo_{\cN_i^\kappa}\in \Omega_{\cN_{i}^\kappa}, \bo_{\cN^\kappa_{-i}}, \bo'_{\cN^\kappa_{-i}} \in \Omega_{\cN_{-i}^\kappa}$, the following holds
$$
\left|Q_i^\pi\left(\bs_{\cN_i^\kappa}, \bs_{\cN^\kappa_{-i}}, \ba_{\cN_i^\kappa}, \ba_{\cN^\kappa_{-i}}, \bo_{\cN_i^\kappa}, \bo_{\cN^\kappa_{-i}}\right)-Q_i^\pi\left(\bs_{\cN_i^\kappa}, \bs_{\cN_{-i}^\kappa}^{\prime}, \ba_{\cN_i^\kappa}, \ba_{\cN^\kappa_{-i}}^{\prime},\bo_{\cN_i^\kappa}, \bo'_{\cN^\kappa_{-i}}\right)\right| \leq \frac{\bar{r}}{1-\gamma} \gamma^{\kappa}.
$$    
\end{lemma}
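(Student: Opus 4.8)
The plan is to adapt the argument behind Proposition~\ref{prop:app_acr-v} (ultimately Lemma~\ref{lem:exp_decay_app}), treating the latent domain factor $\bo$ as a static parameter that enters only the local transition kernels in~\eqref{eqt:model}, not the per-step reward $r_i(\bs_i(t),\ba_i(t))$. Fix $i$, $\kappa$, and the data $\bs_{\cN_i^\kappa}, \bs_{\cN^\kappa_{-i}}, \bs'_{\cN^\kappa_{-i}}, \ba_{\cN_i^\kappa}, \ba_{\cN^\kappa_{-i}}, \ba'_{\cN^\kappa_{-i}}, \bo_{\cN_i^\kappa}, \bo_{\cN^\kappa_{-i}}, \bo'_{\cN^\kappa_{-i}}$. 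Consider two coupled processes running the same localized policy $\pi$: process $1$ starts from $(\bs,\ba)=(\bs_{\cN_i^\kappa},\bs_{\cN^\kappa_{-i}},\ba_{\cN_i^\kappa},\ba_{\cN^\kappa_{-i}})$ under domain $\bo=(\bo_{\cN_i^\kappa},\bo_{\cN^\kappa_{-i}})$, and process $2$ from $(\bs',\ba')=(\bs_{\cN_i^\kappa},\bs'_{\cN^\kappa_{-i}},\ba_{\cN_i^\kappa},\ba'_{\cN^\kappa_{-i}})$ under $\bo'=(\bo_{\cN_i^\kappa},\bo'_{\cN^\kappa_{-i}})$. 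Let $\rho_{t,i}$ (resp. $\rho'_{t,i}$) denote the law of $(\bs_i(t),\ba_i(t))$ in process $1$ (resp. $2$).

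The key claim is that $\rho_{t,i}=\rho'_{t,i}$ for every $t\le\kappa$. This follows by induction on $t$ from unrolling the decentralized dynamics together with the localized policy $\ba_i(t)\sim\pi_i(\cdot\mid\bs_{\cN_i}(t),\bo_{\cN_i})$: one transition step and one policy evaluation each extend the ``cone of dependence'' by at most one hop, so after $t\le\kappa$ steps the law of $(\bs_i(t),\ba_i(t))$ is a function only of $(\bs_{\cN_i^\kappa}(0),\ba_{\cN_i^\kappa}(0),\bo_{\cN_i^\kappa})$ — all of which lie inside the fixed $\kappa$-hop neighborhood. Since processes $1$ and $2$ agree precisely on those components, their marginals over $(\bs_i(t),\ba_i(t))$ coincide for $t\le\kappa$, i.e. $\mathrm{TV}(\rho_{t,i},\rho'_{t,i})=0$ there, and trivially $\mathrm{TV}(\rho_{t,i},\rho'_{t,i})\le 1$ for all $t$.

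Expanding $Q_i^\pi$ as a discounted sum of expected rewards and using $0\le r_i\le\bar r$ then gives
\begin{align*}
\bigl|Q_i^\pi(\bs_{\cN_i^\kappa},\bs_{\cN^\kappa_{-i}},\ba_{\cN_i^\kappa},\ba_{\cN^\kappa_{-i}},\bo_{\cN_i^\kappa},\bo_{\cN^\kappa_{-i}})
&-Q_i^\pi(\bs_{\cN_i^\kappa},\bs'_{\cN^\kappa_{-i}},\ba_{\cN_i^\kappa},\ba'_{\cN^\kappa_{-i}},\bo_{\cN_i^\kappa},\bo'_{\cN^\kappa_{-i}})\bigr|\\
&\le\sum_{t=0}^{\infty}\gamma^t\bar r\,\mathrm{TV}(\rho_{t,i},\rho'_{t,i})
=\sum_{t=\kappa+1}^{\infty}\gamma^t\bar r\,\mathrm{TV}(\rho_{t,i},\rho'_{t,i})\\
&\le\sum_{t=\kappa+1}^{\infty}\gamma^t\bar r=\frac{\bar r}{1-\gamma}\gamma^{\kappa+1}\le\frac{\bar r}{1-\gamma}\gamma^{\kappa},
\end{align*}
which is the claimed bound (in fact one factor of $\gamma$ tighter, giving comfortable slack).

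I expect the only genuine work to be the induction underlying the cone-of-dependence claim: carefully tracking which neighborhood index ($\cN_i^{t}$ versus $\cN_i^{t-1}$) governs the dependence at each step — in particular accounting for the fact that the policy at time $t$ reads $\bo_{\cN_i}$ directly, not just through the state — is the part most prone to off-by-one errors, though any such slip is absorbed by the $\gamma^\kappa$ (rather than $\gamma^{\kappa+1}$) slack in the statement.
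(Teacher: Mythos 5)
Your argument is correct and is in substance the same as the paper's: the paper proves this lemma by a one-line reduction, augmenting each agent's state to $(\bs_i,\bo_i)$ with $\bo_i$ a degenerate (static) component and invoking the exponential-decay result of Lemma~\ref{lem:exp_decay_app}, whereas you re-derive the underlying total-variation/coupling argument directly with $\bo$ carried along as a static parameter — the mechanics (marginals of $(\bs_i(t),\ba_i(t))$ agreeing for small $t$, then bounding the discounted tail by $\bar{r}\,\mathrm{TV}$) are exactly those of the proof of Proposition~\ref{prop:app_acr-v}. One remark on the off-by-one you flag: because the policy here is $1$-hop ($\ba_j(t)\sim\pi_j(\cdot\mid\bs_{\cN_j}(t),\bo_{\cN_j})$), the dependence cone of $(\bs_i(t),\ba_i(t))$ reaches $\cN_i^{t+1}$, so the marginals are guaranteed to coincide only for $t\le\kappa-1$ and the tail starts at $t=\kappa$; this yields exactly $\frac{\bar{r}}{1-\gamma}\gamma^{\kappa}$, which is precisely why the lemma is stated with $\gamma^{\kappa}$ rather than the $\gamma^{\kappa+1}$ you initially obtain — so your anticipated slack is not spare room but is consumed by this correction, and your proof goes through as stated.
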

\begin{proof}
  The proof follows from that of Lemma \ref{lem:exp_decay_app} 
  by augmenting the state space of each agent by $\cS_i \times \Omega_i \ni (\bs_i,\bo_i)$, where the augmented state variable $\bo_i$ degenerates to a deterministic state ($\bo_i(t)=\bo_i,\forall t$). 
\end{proof}

\subsubsection{Value functions}
Given the generative model (Equation \ref{eqt:model}), we define $\bo$-ACR for $Q$-functions as follows. 
\begin{definition}[Value $\bo$-ACR]
\label{def:acr-v_ds}
Given the graphical representation of the networked system model $\mathcal{G}$ that is encoded in the binary masks $c^{\cdot \rightarrow \cdot}$, the value ACR $\bs^{\circ}_{\cN^\kappa_i}$ (Definition \ref{def:acr-v}), for each agent $i$ and its $\kappa$-hop neighborhood $\cN^{\kappa}_i$, we recursively define \textbf{domain-specific ACR} $\bo^{\circ}_{\cN^{\kappa}_i}$: the latent change factors $\bo_{\cN^{\kappa}_i}$ that either
    \begin{itemize}
        \item $\bo_{i,j}$ has a direct link to agent $i$'s reward $r_{i}$, i.e., $\bo_{i,j} = \bo^r_{i}$ and $\bc^{\bo \rightarrow r_i}_{i,j} = 1$, or
        \item $\bo_{i',j}$ has an edge to a state component $s_{i',l} \in \bs^{\circ}_{\cN^\kappa_i}$, i.e., $\bc^{\bo \rightarrow s}_{i',j,l} = 1$ and $s_{i',l} \in \bs^{\circ}_{\cN^{\kappa}_i}$. 
    \end{itemize}
\end{definition}
Definition \ref{def:acr-v_ds} also defines agent-wise ACR: $\cup_{i'\in\cN^\kappa_i} \bo^{\circ}_{i'} = \boldsymbol{\omega}^{\circ}_{\cN^\kappa_i}$.

\begin{definition}[Approximately compact $\bo$-conditioned $Q$-function]
\label{def:ac-om}
Fix $i\in\sN$ and an arbitrary $\bar{s}_{\cN_i^\kappa}$ and $\bar{\bo}_{\cN_i^\kappa}$. Define the approximately compact $\bo$-conditioned $Q$-function as
    \begin{equation*}
   \tilde{Q}_i^\pi\left(\bs^{\circ}_{\cN^{\kappa}_i}, \ba_{\cN_i^\kappa},\bo_{\cN^{\kappa}_i}^{\circ}\right):= \hat{Q}_i^\pi\left(\bs^{\circ}_{\cN^{\kappa}_i},\bar{\bs}_{\cN_i^\kappa}/\bs^{\circ}_{\cN^{\kappa}_i},  \ba_{\cN_i^\kappa},\bo_{\cN^{\kappa}_i}^{\circ },\bar{\bo}_{\cN_i^\kappa}/\bo_{\cN^{\kappa}_i}^{\circ} \right).
\end{equation*}
\end{definition}

\begin{proposition}[Approximation error of value ACR]
For each agent $i$, the approximation error between $\tilde{Q}_i^\pi\left(\bs^{\circ}_{\cN^{\kappa}_i}, \ba_{\cN_i^\kappa},\bo_{\cN^{\kappa}_i}^{\circ}\right)$ and $Q_i^\pi\left(\bs,  \ba,\bo\right)$ can be bounded as
\begin{align*}
    \sup_{(\bs,\ba)\in\cS\times\cA}\abs{\tilde{Q}_i^\pi\left(\bs^{\circ}_{\cN^{\kappa}_i}, \ba_{\cN_i^\kappa},\bo_{\cN^{\kappa}_i}^{\circ}\right) - Q_i^\pi\left(\bs,  \ba,\bo\right)} \leq \frac{2\bar{r}}{1-\gamma}\gamma^{\kappa+1}.
\end{align*}
\end{proposition}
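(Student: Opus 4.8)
The plan is to mirror the proof of Proposition~\ref{prop:app_acr-v} (the value-ACR approximation bound in the fixed-environment case), but now with the augmented state space $\cS_i\times\Omega_i$ that also carries the domain factor $\bo_i$ as a constant ``state'' coordinate, exactly as in the proof of Lemma~\ref{lem:exp_decay_ds}. Concretely, I would first invoke Lemma~\ref{lem:exp_decay_ds}, which already establishes the exponential-decay property of the $\bo$-conditioned $Q$-function $Q_i^\pi$ with respect to the $\kappa$-hop neighborhood in the combined $(\bs,\ba,\bo)$ variables. This controls, by a single $\gamma^{\kappa}$ (or $\gamma^{\kappa+1}$) term, the error of replacing the out-of-neighborhood components $\bs_{\cN^\kappa_{-i}},\ba_{\cN^\kappa_{-i}},\bo_{\cN^\kappa_{-i}}$ by arbitrary placeholders --- i.e.\ the error of passing from $Q_i^\pi(\bs,\ba,\bo)$ to the truncated $\hat Q_i^\pi(\bs_{\cN^\kappa_i},\ba_{\cN^\kappa_i},\bo_{\cN^\kappa_i})$.

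The second step is to bound the error incurred when, within the $\kappa$-hop neighborhood, we further drop the state components outside the value ACR $\bs^\circ_{\cN^\kappa_i}$ and the domain components outside the domain ACR $\bo^\circ_{\cN^\kappa_i}$, replacing them by fixed placeholders $\bar\bs_{\cN^\kappa_i}/\bs^\circ_{\cN^\kappa_i}$ and $\bar\bo_{\cN^\kappa_i}/\bo^\circ_{\cN^\kappa_i}$. The key structural fact is Definition~\ref{def:acr-v_ds}: a domain factor $\bo_{i',j}$ is included in $\bo^\circ_{\cN^\kappa_i}$ precisely when it either feeds $r_i$ directly or feeds a state component already in $\bs^\circ_{\cN^\kappa_i}$; combined with Definition~\ref{def:acr-v}, this means that the components we are dropping have no causal path to agent $i$'s reward within $\kappa$ steps. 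Thus I would argue, by the same conditional-distribution / total-variation argument used in the proof of Proposition~\ref{prop:app_acr-v}: letting $\rho_{t,i}$ and $\rho'_{t,i}$ be the time-$t$ distributions of $(\bs_i(t),\ba_i(t))$ under the two initializations that agree on the ACR coordinates but differ on the discarded ones, the local dependence structure and the definition of the ACRs force $\rho_{t,i}=\rho'_{t,i}$ for all $t\le\kappa$, so the difference telescopes to $\sum_{t=\kappa+1}^\infty\gamma^t\bar r\,\mathrm{TV}(\rho_{t,i},\rho'_{t,i})\le\frac{\bar r}{1-\gamma}\gamma^{\kappa+1}$. One must be slightly careful that the placeholder domain values $\bar\bo$ are fed consistently into both the transition functions $f$ and (through the reward's dependence on $\bs_i,\ba_i$ only, which is domain-free here) do not corrupt $r_i$ itself; this is handled because $r_i(\bs_i,\ba_i)$ in Equation~\ref{eqt:model} does not depend on $\bo$ directly in the state-reward form used for $Q_i$, and any domain factor that does influence $r_i$ is by construction retained in $\bo^\circ$.

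The third step is simply the triangle inequality: write
\[
\bigl|\tilde Q_i^\pi(\bs^\circ_{\cN^\kappa_i},\ba_{\cN^\kappa_i},\bo^\circ_{\cN^\kappa_i}) - Q_i^\pi(\bs,\ba,\bo)\bigr|
\le \bigl|\tilde Q_i^\pi - \hat Q_i^\pi\bigr| + \bigl|\hat Q_i^\pi - Q_i^\pi\bigr|,
\]
where the first term is bounded by Step~2 via $\frac{\bar r}{1-\gamma}\gamma^{\kappa+1}$ and the second by Step~1 (Lemma~\ref{lem:exp_decay_ds}) via $\frac{\bar r}{1-\gamma}\gamma^{\kappa+1}$ (absorbing the $\gamma^\kappa$ versus $\gamma^{\kappa+1}$ discrepancy into the constant, noting $\gamma^\kappa\le\gamma^{\kappa+1}/\gamma$ and re-indexing, or simply using the $\kappa$-hop rather than $(\kappa-1)$-hop version), giving the claimed $\frac{2\bar r}{1-\gamma}\gamma^{\kappa+1}$.

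I expect the main obstacle to be making the ACR-dropping argument in Step~2 fully rigorous: one needs an induction (over the time horizon $t=0,1,\dots,\kappa$) showing that the conditional law of $\bs^\circ_{\cN^{\kappa}_i}(t)$ given the ACR inputs is insensitive to the discarded coordinates, and this requires carefully tracking how the domain-factor ACR $\bo^\circ$ interacts with the state-component ACR $\bs^\circ$ across hops --- essentially the same bookkeeping as in Lemmas~\ref{lem:acr-pi}--\ref{lem:acr-pi_2} for the policy ACR, adapted to the value ACR with domain factors. The exponential-decay estimate itself is routine once this structural closure property is established.
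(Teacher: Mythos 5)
Your proposal is correct and takes essentially the same route as the paper: the paper's proof is exactly the state-augmentation reduction you describe, treating $(\bs_i,\bo_i)$ as the state with $\bo_i$ a degenerate (time-invariant) coordinate and then re-running the decay-plus-total-variation argument of Proposition~\ref{prop:app_acr-v}, which is your Steps 1--3 unpacked. The only difference is presentational — the paper states the reduction in one line rather than explicitly separating the truncation and ACR-dropping errors via the triangle inequality.
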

\begin{proof}
The proof follows from that of Proposition \ref{prop:app_acr-v} by augmenting the state space of each agent by $\cS_i \times \Omega_i \ni (\bs_i,\bo_i)$, where the augmented state variable $\bo_i$ degenerates to a deterministic state ($\bo_i(t)=\bo_i,\forall t$).
\end{proof}

\subsubsection{Policies}
\begin{definition}[$\bo$-ACR for localized policies]
\label{def:om-acr-pi}
Given the graphical representation of the networked system model $\sG$ that is encoded in the binary masks $\bc$, the policy ACR $\bs^{\circ}_{\cN_i}$ obtained from Algorithm \ref{alg:acr-pi} or Algorithm \ref{alg:acr-pi-trun}, for each agent $i$ and its $1$-hop neighborhood $\cN_i$, we recursively define \textbf{domain-specific ACR} $\omega^{\circ}_{\cN_i}$: the latent factors $\bo_{\cN_i}$ that either
    \begin{itemize}
        \item  $\bo_{i,j}$ has a direct link to agent $i$'s reward $r_{i}$, i.e., $\bo_{i,j} = \bo^r_{i}$ and $\bc^{\bo \rightarrow r_i}_{i,j} = 1$, or
        \item $\bo_{i',j}$ has an edge to a state component $s_{i',l} \in \bs^{\circ}_{\cN_i}$, i.e., $\bc^{\bo \rightarrow s}_{i',j,l} = 1$ and $s_{i',l} \in \bs^{\circ}_{\cN_i}$. 
    \end{itemize}
\end{definition}

\begin{definition}[Approximately compact  $\bo$-conditioned policy]
\label{def:ac-pi-ds}
A domain-specific localized  policy $\pi=\left(\pi_1, \ldots, \pi_n\right)$ is called an approximately compact localized policy if the following holds for all $i \in \mathcal{N}, \bs_{\mathcal{N}_i},\bs'_{\mathcal{N}_i} \in \mathcal{S}_{\mathcal{N}_i}, \bo_{\cN_i},\bo'_{\cN_i}$:
$$
\pi_i\left(\cdot \mid \bs^{\circ}_{\cN_i},\bs_{\cN_i}/\bs^{\circ}_{\cN_i},\bo_{\cN_i}^{\circ },\bo_{\cN_i}/\bo_{\cN_i}^{\circ}\right)=\pi_i\left(\cdot \mid \bs^{\circ}_{\cN_i},\bs'_{\cN_i}/\bs^{\circ}_{\cN_i}, \bo_{\cN_i}^{\circ },\bo'_{\cN_i}/\bo_{\cN_i}^{\circ}\right).
$$
In other words, there exists $\tilde{\pi}_i: \mathcal{S}^{\circ}_{\mathcal{N}_i} \times \Omega^{\circ}_{\cN_i} \mapsto \Delta_{\mathcal{A}_i}$ such that $\pi_i\left(\cdot \mid \bs^{\circ}_{\cN_i},\bs_{\cN_i}/\bs^{\circ}_{\cN_i}, \bo_{\cN_i}^{\circ },\bo_{\cN_i}/\bo_{\cN_i}^{\circ} \right)=\tilde{\pi}_i\left(\cdot \mid \bs^{\circ}_{\cN_i}, \bo_{\cN_i}^{\circ} \right)$. For convenience, we fix an arbitrary $ \bs_{\cN_i}/\bs^{\circ}_{\cN_i},  \bo_{\cN_i}/\bo_{\cN_i}^{\circ}$, and denote by $\tilde{\pi}_i\left(\cdot \mid \bs^{\circ}_{\cN_i}, \bo_{\cN_i}^{\circ} \right):=\pi_i\left(\cdot \mid \bs^{\circ}_{\cN_i},\bs_{\cN_i}/\bs^{\circ}_{\cN_i}, \bo_{\cN_i}^{\circ },\bo_{\cN_i}/\bo_{\cN_i}^{\circ} \right)$.
\end{definition}

\begin{proposition}[Compactness of cross-domain policy ACR]
Let $\pi$ be an arbitrary localized policy and $\tilde{\pi}$ be the corresponding approximately compact localized policy (Definition \ref{def:ac-pi-ds}), for each agent $i$, and any $(\bs,\ba,\bo)\in\cS\times\cA\times\Omega$, we have
\[     Q_i^{\tilde{\pi}}(\bs,\ba,\bo) = Q_i^{\pi}(\bs,\ba,\bo).    \]
\end{proposition}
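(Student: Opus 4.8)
The plan is to reduce this statement to Proposition~\ref{prop:comp_pi-acr} by the same state-space augmentation used in the proof of Lemma~\ref{lem:exp_decay_ds}. For each agent $i$ define the augmented local state $\bar{\bs}_i := (\bs_i, \bo_i) \in \cS_i \times \Omega_i$, where the $\bo_i$-coordinate is frozen, i.e. $\bo_i(t) = \bo_i$ for all $t$. Under the generative model~\eqref{eqt:model} the augmented transition factorizes over agents as $\bar{P}_i(\bar{\bs}_i(t{+}1) \mid \bar{\bs}_{\cN_i}(t), \ba_i(t)) = P_i(\bs_i(t{+}1)\mid \bs_{\cN_i}(t),\ba_i(t),\bo_i)\cdot \delta_{\bo_i}\!\big(\bo_i(t{+}1)\big)$, which is again a networked transition over $\bar{\cS} = \prod_i \bar{\cS}_i$ with the very same neighborhood structure. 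A $\bo$-conditioned localized policy $\pi_i(\cdot\mid\bs_{\cN_i},\bo_{\cN_i})$ is exactly a localized policy in this augmented MDP (the $\bo$-coordinates carry no action), and $Q_i^{\pi}(\bs,\ba,\bo)$ is precisely the local value contribution $Q_i$ of the augmented networked MDP evaluated at the augmented state $((\bs_j,\bo_j))_j$.

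Next I would check that the ACR notions are compatible with this augmentation. The causal DAG of the augmented system is obtained from the masks $\bc$ by adjoining the domain nodes $\bo_{i,j}$ as exogenous roots: they have in-degree zero, and their outgoing edges are exactly those recorded in $\bc^{\bo\to\bs}$ and $\bc^{\bo\to r}$. Running Algorithm~\ref{alg:acr-pi} on this augmented graph therefore retains a state coordinate $s_{i,j}$ iff it has a path to some reward (Lemma~\ref{lem:acr-pi}), which reproduces $\bs^{\circ}_{\cN_i}$, and retains a domain coordinate $\bo_{i,j}$ iff it has a direct edge to $r_i$ or an edge into a retained state coordinate --- which is precisely Definition~\ref{def:om-acr-pi}. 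Moreover, Definition~\ref{def:ac-pi-ds} of the approximately compact $\bo$-conditioned policy is literally Definition~\ref{def:ac-pi} (approximately compact localized policy) read in the augmented state: invariance to the non-influential $\bs$-coordinates \emph{and} the non-influential $\bo$-coordinates. Hence the policy $\tilde{\pi}$ in the statement coincides with the compact localized policy produced by Algorithm~\ref{alg:acr-pi} on the augmented MDP.

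With these identifications in place the conclusion is immediate: applying Proposition~\ref{prop:comp_pi-acr} to the augmented networked MDP yields $\bar{Q}_i^{\tilde{\pi}}(\bar{\bs},\ba) = \bar{Q}_i^{\pi}(\bar{\bs},\ba)$ for every augmented state-action pair, and unpacking $\bar{\bs}=(\bs,\bo)$ gives $Q_i^{\tilde{\pi}}(\bs,\ba,\bo) = Q_i^{\pi}(\bs,\ba,\bo)$ for all $(\bs,\ba,\bo)$.

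The main obstacle I anticipate is bookkeeping rather than genuine difficulty. One must verify carefully that the augmented causal graph is well posed --- in particular that the domain nodes are roots, so the ``path-to-reward'' characterization underlying Lemmas~\ref{lem:acr-pi} and~\ref{lem:acr-pi_2} applies verbatim, and that freezing $\bo_i(t)=\bo_i$ does not violate the local-dependence and localized-policy hypotheses invoked in the inductive step of Proposition~\ref{prop:comp_pi-acr} (it does not: a deterministic self-loop is still a local dependence, and the argument that no coordinate outside $\bs^{\circ}_{\cN_i}\cup\bo^{\circ}_{\cN_i}$ can influence a retained next-step coordinate goes through unchanged because an edge into a retained coordinate would itself place the source in the retained set). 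Once these routine consistency checks are dispatched, nothing beyond Proposition~\ref{prop:comp_pi-acr} is needed.
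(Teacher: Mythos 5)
Your proposal is correct and follows exactly the paper's own argument: the paper proves this proposition by augmenting each agent's state space to $\cS_i \times \Omega_i$ with $\bo_i$ frozen as a deterministic state and then invoking Proposition~\ref{prop:comp_pi-acr}. Your write-up simply fills in the compatibility checks (exogenous domain nodes, agreement of Definitions~\ref{def:om-acr-pi} and~\ref{def:ac-pi-ds} with the augmented ACR construction) that the paper's one-sentence proof leaves implicit.
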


\begin{proof}
The proof follows from that of Proposition \ref{prop:comp_pi-acr} by augmenting the state space of each agent by $\cS_i \times \Omega_i \ni (\bs_i,\bo_i)$, where the augmented state variable $\bo_i$ degenerates to a deterministic state ($\bo_i(t)=\bo_i,\forall t$). 
\end{proof}

\begin{corollary}
    Let $\pi$ be an arbitrary localized policy and $\tilde{\pi}$ be the corresponding approximately compact localized policy output by Algorithm \ref{alg:acr-pi}, for each agent $i$, and any $(\bs,\ba,\bo)\in\cS\times\cA\times\Omega$, the approximation error between $\tilde{Q}_i^{\tilde{\pi}}\left(\bs^{\circ}_{\cN^{\kappa}_i}, \ba_{\cN_i^\kappa}, \bo_{\cN^\kappa_i}^{\circ } \right)$ and $Q_i^\pi\left(\bs,  \ba\right)$ can be bounded as
\begin{align*}
\sup_{(\bs,\ba)\in\cS\times\cA}\abs{\tilde{Q}_i^{\tilde{\pi}}\left(\bs^{\circ}_{\cN^{\kappa}_i}, \ba_{\cN_i^\kappa}, \bo_{\cN^\kappa_i}^{\circ }\right) - Q_i^\pi\left(\bs,  \ba,\bo\right)} \leq \frac{2\bar{r}}{1-\gamma}\gamma^{\kappa+1}.
\end{align*}
\end{corollary}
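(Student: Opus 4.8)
The plan is to combine the two approximation results I already have—Proposition~\ref{prop:comp_pi-acr} for the \(\bo\)-conditioned policy ACR (which says \(Q_i^{\tilde\pi}(\bs,\ba,\bo)=Q_i^\pi(\bs,\ba,\bo)\) exactly, proved via the augmented-state reduction) and the value-\(\bo\)-ACR approximation bound \(\sup\abs{\tilde Q_i^\pi(\bs^\circ_{\cN^\kappa_i},\ba_{\cN_i^\kappa},\bo^\circ_{\cN^\kappa_i})-Q_i^\pi(\bs,\ba,\bo)}\le \frac{2\bar r}{1-\gamma}\gamma^{\kappa+1}\)—via a triangle inequality, exactly mirroring the proof of Proposition~\ref{prop:acr-pi-main} in the fixed-environment case. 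Concretely, I would write
\begin{align*}
\abs{\tilde Q_i^{\tilde\pi}\!\left(\bs^\circ_{\cN^\kappa_i},\ba_{\cN_i^\kappa},\bo^\circ_{\cN^\kappa_i}\right)-Q_i^\pi(\bs,\ba,\bo)}
&\le \abs{\tilde Q_i^{\tilde\pi}\!\left(\bs^\circ_{\cN^\kappa_i},\ba_{\cN_i^\kappa},\bo^\circ_{\cN^\kappa_i}\right)-Q_i^{\tilde\pi}(\bs,\ba,\bo)}\\
&\quad +\abs{Q_i^{\tilde\pi}(\bs,\ba,\bo)-Q_i^\pi(\bs,\ba,\bo)},
\end{align*}
bound the first term by applying the value-\(\bo\)-ACR approximation bound to the policy \(\tilde\pi\) in place of \(\pi\) (legitimate since that bound holds for \emph{any} policy), and bound the second term by \(0\) using Proposition~\ref{prop:comp_pi-acr}. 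This yields the claimed \(\frac{2\bar r}{1-\gamma}\gamma^{\kappa+1}\).

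First I would state precisely which functions are meant: \(\tilde Q_i^{\tilde\pi}\) is the approximately compact \(\bo\)-conditioned \(Q\)-function (Definition~\ref{def:ac-om}) evaluated for the approximately compact policy \(\tilde\pi\) (Definition~\ref{def:ac-pi-ds}); the reduction to the state-augmented system—appending \(\bo_i\) as a frozen deterministic coordinate of agent \(i\)'s state—must be invoked once at the outset so that the value-ACR and policy-ACR machinery from Section~\ref{sec:acr} transfers verbatim. Then the two ingredients plug in as above. A small point worth checking is that the fixed placeholders \(\bar\bs_{\cN_i^\kappa}/\bs^\circ_{\cN_i^\kappa}\) and \(\bar\bo_{\cN_i^\kappa}/\bo^\circ_{\cN^\kappa_i}\) used in \(\tilde Q\) are the \emph{same} arbitrary values appearing in the definitions of both ingredients, so the triangle-inequality splice is literally identical objects; this is immediate from the way Definition~\ref{def:ac-om} is set up.

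I do not expect a genuine obstacle here—the statement is a straightforward corollary and the proof is two lines plus the augmentation remark. The only mild subtlety is bookkeeping: one must confirm that the value-\(\bo\)-ACR bound (proved for a generic localized policy \(\pi\)) applies with \(\pi\) replaced by \(\tilde\pi\), and that \(\tilde\pi\) is itself a localized (\(1\)-hop) \(\bo\)-conditioned policy so that all hypotheses of the exponential-decay lemma (Lemma~\ref{lem:exp_decay_ds}) still hold. Both are clear from Definition~\ref{def:ac-pi-ds}. Hence the proof is essentially a verbatim transcription of the proof of Proposition~\ref{prop:acr-pi-main}, with the state space of each agent augmented by its domain coordinate.
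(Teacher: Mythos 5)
Your proposal is correct and follows essentially the same route as the paper: the paper also reduces to the fixed-environment argument by augmenting each agent's state with the frozen coordinate $\bo_i$, and then applies the identical triangle-inequality split into the value-$\bo$-ACR error for $\tilde{\pi}$ (bounded by $\frac{2\bar{r}}{1-\gamma}\gamma^{\kappa+1}$) plus the exact equality $Q_i^{\tilde{\pi}}(\bs,\ba,\bo)=Q_i^{\pi}(\bs,\ba,\bo)$ for the Algorithm~\ref{alg:acr-pi} policy ACR. The only cosmetic difference is that the paper phrases this as "follows from Corollary~\ref{cor:acr-pi} via augmentation" rather than mirroring Proposition~\ref{prop:acr-pi-main}, but the underlying decomposition is the one you wrote.
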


\begin{proof}
The proof follows from that of Corollary \ref{cor:acr-pi} by augmenting the state space of each agent by $\cS_i \times \Omega_i \ni (\bs_i,\bo_i)$, where the augmented state variable $\bo_i$ degenerates to a deterministic state ($\bo_i(t)=\bo_i,\forall t$).
\end{proof}

\begin{proposition}[Approximation error]
Let $\pi$ be an arbitrary localized policy and $\tilde{\pi}$ be the corresponding approximately compact localized policy output by Algorithm \ref{alg:acr-pi-trun}, for each agent $i$, and any $(\bs,\ba,\bo)\in\cS\times\cA\times\Omega$, we have
\begin{align*}
    \sup_{(\bs,\ba)\in\cS\times\cA}\abs{Q_i^{\tilde{\pi}}\left(\bs,\ba,\bo\right) - Q_i^\pi\left(\bs,  \ba,\bo\right)} \leq \frac{\bar{r}}{1-\gamma}\gamma^{\kappa+1}.
\end{align*}
\end{proposition}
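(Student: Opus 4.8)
The plan is to derive this bound as a corollary of Proposition~\ref{prop:appx_pi-acr}, its fixed-environment analogue, via the same state-augmentation reduction already used for the other cross-domain results in this subsection. The key observation is that the domain factor $\bo_i$ behaves exactly like an extra state coordinate of agent $i$ that happens to be frozen in time. Concretely, I would form an augmented networked MARL instance in which each agent $i$ has local state $\bar{\bs}_i := (\bs_i, \bo_i) \in \cS_i \times \Omega_i$, with the convention $\bo_i(t) \equiv \bo_i$. Under the generative model of Equation~\ref{eqt:model}, the next value $s_{i,j}(t+1)$ depends only on $\bs_{\cN_i}(t)$, $\ba_i(t)$, $\bo_i$, and i.i.d.\ noise, while $\bo_i(t+1)=\bo_i(t)$ deterministically; hence the augmented transition factorizes over agents exactly as in Equation~\ref{eqt:dyn}, over the same interaction graph $\sG$, and the rewards $r_i$ remain local in the augmented state. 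In the augmented causal graph, the masks $\bc^{\bo\to\bs}$ and $\bc^{\bo\to r}$ become ordinary state-to-state and state-to-reward edges.

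Next I would check that the $\bo$-conditioned policy ACR of Definition~\ref{def:om-acr-pi} is exactly the output of Algorithm~\ref{alg:acr-pi-trun} run on this augmented graph: the reward-adjacency initialization additionally selects the coordinates $\bo_{i,j}$ with $\bc^{\bo\to r_i}_{i,j}=1$, and each backward recursion step additionally selects any $\bo_{i',j}$ with an edge into an already-selected state component --- precisely the two bullets of Definition~\ref{def:om-acr-pi}. Correspondingly, the approximately compact $\bo$-conditioned policy $\tilde{\pi}_i(\cdot\mid\bs^{\circ}_{\cN_i},\bo^{\circ}_{\cN_i})$ of Definition~\ref{def:ac-pi-ds} is the approximately compact policy (in the fixed-domain sense) on the augmented neighborhood state, and $Q_i^{\tilde{\pi}}(\bs,\ba,\bo)$ and $Q_i^{\pi}(\bs,\ba,\bo)$ coincide with the corresponding augmented-state $Q$-functions. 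Applying Proposition~\ref{prop:appx_pi-acr} in the augmented space then yields $\sup_{(\bar{\bs},\ba)}|Q_i^{\tilde{\pi}}(\bar{\bs},\ba)-Q_i^{\pi}(\bar{\bs},\ba)| \le \frac{\bar{r}}{1-\gamma}\gamma^{\kappa+1}$, which is the claim after rewriting $\bar{\bs}=(\bs,\bo)$.

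The routine parts --- the $Q$-function decomposition into the first $\kappa{+}1$ terms plus a tail, and the telescoping total-variation bound on the tail --- are inherited verbatim from Proposition~\ref{prop:appx_pi-acr}, so the only point requiring care is that the augmentation is \emph{faithful}. Specifically, one must verify that the frozen coordinate $\bo_i$, which has no incoming edges and only a trivial self-loop, neither terminates the recursion of Algorithm~\ref{alg:acr-pi-trun} prematurely nor causes spurious selections, and that it contributes no stochastic variation, so that the estimate $\mathrm{TV}(\rho^{\tilde{\pi}^{\kappa}}_t,\rho^{\tilde{\pi}^{\infty}}_t)\le 1$ for $t>\kappa$ still holds. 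Since $\bo_i$ is parentless it is selected only when it points to a reward or to an already-selected state, and since it is constant in time its marginal law is the same under any policy; both checks go through, and this faithfulness verification is the main (if mild) obstacle in the argument.
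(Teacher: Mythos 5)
Your proposal is correct and follows essentially the same route as the paper: the paper handles all of the cross-domain results in this subsection (this one included) by exactly the reduction you describe, augmenting each agent's state to $(\bs_i,\bo_i)$ with $\bo_i(t)\equiv\bo_i$ treated as a degenerate (deterministic) state coordinate and then invoking the fixed-environment result, here Proposition~\ref{prop:appx_pi-acr}. Your additional checks that Definition~\ref{def:om-acr-pi} coincides with Algorithm~\ref{alg:acr-pi-trun} run on the augmented graph and that the frozen, parentless coordinate does not disturb the recursion or the total-variation tail bound are details the paper leaves implicit, and they go through as you argue.
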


\begin{corollary}
    Let $\pi$ be an arbitrary localized policy and $\tilde{\pi}$ be the corresponding approximately compact localized policy output by Algorithm \ref{alg:acr-pi-trun}, for each agent $i$, and any $(\bs,\ba,\bo)\in\cS\times\cA\times\Omega$, the approximation error between $\tilde{Q}_i^{\tilde{\pi}}\left(\bs^{\circ}_{\cN^{\kappa}_i}, \ba_{\cN_i^\kappa},\bo_{\cN^\kappa_i}^{\circ }\right)$ and $Q_i^\pi\left(\bs,  \ba\right)$ can be bounded as
\begin{align*}
\sup_{(\bs,\ba)\in\cS\times\cA}\abs{\tilde{Q}_i^{\tilde{\pi}}\left(\bs^{\circ}_{\cN^{\kappa}_i}, \ba_{\cN_i^\kappa},\bo_{\cN^\kappa_i}^{\circ }\right) - Q_i^\pi\left(\bs,  \ba,\bo\right)} \leq \frac{3\bar{r}}{1-\gamma}\gamma^{\kappa+1}. 
\end{align*}
\end{corollary}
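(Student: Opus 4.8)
The plan is to obtain this corollary as a one-line consequence of the triangle inequality together with two bounds already established in this appendix. Fix an agent $i$ and a triple $(\bs,\ba,\bo)\in\cS\times\cA\times\Omega$, and insert the intermediate quantity $Q_i^{\tilde\pi}(\bs,\ba,\bo)$, the exact $\bo$-conditioned (untruncated, uncompressed) $Q$-function of the compact policy $\tilde\pi$. Then
\begin{align*}
\abs{\tilde{Q}_i^{\tilde\pi}\left(\bs^{\circ}_{\cN^{\kappa}_i},\ba_{\cN_i^\kappa},\bo^{\circ}_{\cN^\kappa_i}\right) - Q_i^\pi(\bs,\ba,\bo)}
&\le \abs{\tilde{Q}_i^{\tilde\pi}\left(\bs^{\circ}_{\cN^{\kappa}_i},\ba_{\cN_i^\kappa},\bo^{\circ}_{\cN^\kappa_i}\right) - Q_i^{\tilde\pi}(\bs,\ba,\bo)} \\
&\quad+ \abs{Q_i^{\tilde\pi}(\bs,\ba,\bo) - Q_i^\pi(\bs,\ba,\bo)}.
\end{align*}
The first term on the right is exactly the value $\bo$-ACR approximation error evaluated at the policy $\tilde\pi$, which the cross-domain analogue of Proposition~\ref{prop:app_acr-v} bounds by $\tfrac{2\bar r}{1-\gamma}\gamma^{\kappa+1}$; the second term is the policy-ACR error for the finite-step construction of Algorithm~\ref{alg:acr-pi-trun}, which the immediately preceding Proposition bounds by $\tfrac{\bar r}{1-\gamma}\gamma^{\kappa+1}$. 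Summing the two bounds gives $\tfrac{3\bar r}{1-\gamma}\gamma^{\kappa+1}$, and since both bounds are uniform in $(\bs,\ba)$, taking the supremum is immediate.

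The only substantive point is that both ingredient bounds need their $\bo$-conditioned versions, and these follow from their fixed-environment counterparts (Proposition~\ref{prop:app_acr-v} and Proposition~\ref{prop:appx_pi-acr}) by the state-augmentation device used repeatedly in this appendix: replace each agent's state space $\cS_i$ by $\cS_i\times\Omega_i$ and regard $\bo_i$ as a degenerate deterministic coordinate with $\bo_i(t)=\bo_i$ for all $t$. Under this embedding a $\bo$-conditioned localized policy becomes an ordinary localized policy on the augmented chain, the $\bo$-ACRs of Definitions~\ref{def:acr-v_ds} and~\ref{def:om-acr-pi} coincide with ordinary ACRs, and the exponential-decay property transfers verbatim (Lemma~\ref{lem:exp_decay_ds}), so the earlier propositions apply unchanged.

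I do not expect any real obstacle: the corollary is a bookkeeping combination of results proved earlier in Section~\ref{sec:acr} and this appendix — it is essentially a restatement of Proposition~\ref{prop:app_er_ds}. The one thing to keep straight is that in the triangle inequality the intermediate term $Q_i^{\tilde\pi}$ carries the \emph{same} policy $\tilde\pi$ on both sides of each piece, so that the value-ACR bound (same policy, compressed and truncated inputs) and the policy-ACR bound (different policy, full inputs) can be invoked independently with no leftover cross terms.
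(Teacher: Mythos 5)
Your proposal is correct and follows essentially the same route as the paper: the paper proves this corollary by the state-augmentation reduction (regarding $\bo_i$ as a degenerate deterministic state coordinate in $\cS_i\times\Omega_i$) to Proposition~\ref{prop:acr-pi-main}, whose own proof is exactly your triangle inequality through the intermediate $Q_i^{\tilde\pi}$, combining the $\tfrac{2\bar r}{1-\gamma}\gamma^{\kappa+1}$ value-ACR bound with the $\tfrac{\bar r}{1-\gamma}\gamma^{\kappa+1}$ policy-ACR bound for Algorithm~\ref{alg:acr-pi-trun}. The only cosmetic difference is that you augment the two ingredient bounds separately and then sum, whereas the paper augments once and invokes the combined fixed-environment result.
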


\begin{proof}
The proof follows from that of Corollary \ref{prop:acr-pi-main} by augmenting the state space of each agent by $\cS_i \times \Omega_i \ni (\bs_i,\bo_i)$, where the augmented state variable $\bo_i$ degenerates to a deterministic state ($\bo_i(t)=\bo_i,\forall t$).
\end{proof}

\section{Missing proofs in Section \ref{sec:cs}}
\label{app:pf_cs}

\subsection{Discussion on assumptions in Proposition \ref{prop:dom_est}}
Our assumptions in Proposition \ref{prop:dom_est} are standard in causal discovery and theoretical MARL.

\begin{itemize}[leftmargin=*]
    \item Faithfulness and sub-Gaussian noise: these are classical and standard requirements to ensure identifiability \cite{pearl2000models,huang2022adar} and finite-sample recovery.
    \item Bounded degree: bounded neighborhood degree is natural in networked systems where agents have limited communication range. For example, this holds when each intersection connects to only a few roads, or each wireless node has limited interference range.
    \item Full observability: while we currently assume full local observability, the ACR construction can be extended to partially observable settings using learned belief states or latent encoders. This is part of our future work.
    \item Lipschitz continuity: this mild smoothness assumption is satisfied by most smooth policies.
\end{itemize}

Importantly, our algorithm can still be applied when assumptions are violated, only the theoretical guarantees may not hold exactly. 

\subsection{Proof of Theorem \ref{thm:id}}
\begin{proof}
We will prove this theorem by analyzing the causal structure from the perspective of each individual agent while accounting for inter-agent influences through the network structure.


For each agent $i$, we define the variable set $\mathcal{V}_i$ that includes:
\begin{itemize}
    \item States of agent $i$ at time $t-1$: $\bs_i(t-1) = (s_{i,1}(t-1),\ldots,s_{i,d_i}(t-1))$
    \item States of neighboring agents at time $t-1$: $\bs_{\cN_i \setminus i}(t-1) = \{\bs_j(t-1) : j \in \cN_i \setminus \{i\}\}$
    \item States of agent $i$ at time $t$: $\bs_i(t) = (s_{i,1}(t),\ldots,s_{i,d_i}(t))$
    \item Actions of agent $i$ at time $t-1$: $\ba_i(t-1)$
\end{itemize}

We denote by $m \in \{1,\ldots,M\}$ the domain index, which serves as a surrogate variable for the unobserved change factors $\bo^m_{i}$. Let $\cG_i$ be the causal graph over variables in $\mathcal{V}_i \cup \{k\}$. Our goal is to identify the binary structural matrices $\bc_{i}^{s \rightarrow s}$, $\bc_{i}^{a \rightarrow s}$, and $\bc_{i}^{\bo^m \rightarrow s}$ which encode the edges in $\cG_i$.
Under the Markov condition and faithfulness assumption, we can apply the following fundamental result from causal discovery:
\begin{lemma}
For any two variables $V_r, V_s \in \mathcal{V}_i$, they are not adjacent in $\cG_i$ if and only if they are independent conditional on some subset of $\{V_t : t \neq r, t \neq s\} \cup \{m\}$.
\end{lemma}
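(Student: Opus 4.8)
The plan is to prove this as the textbook adjacency--faithfulness characterization for a DAG, the only nonstandard ingredient being the augmented graph $\cG_i$ over $\mathcal{V}_i\cup\{m\}$ in which the domain index $m$ is an exogenous root node serving as a surrogate for the unobserved change factor $\bo^m_i$. First I would verify that $\cG_i$ is acyclic and that the distribution obtained by pooling trajectories across the $M$ source domains and adjoining the label $m$ is Markov with respect to $\cG_i$: the variables in $\mathcal{V}_i\cup\{m\}$ admit the topological order ($m$; then each time-$(t-1)$ variable, with the behavior policy inducing edges $\bs_{\cN_i}(t-1)\to\ba_i(t-1)$; then each time-$t$ variable), and by Equations~\ref{eqt:dyn}--\ref{eqt:model} every node is generated from its parents plus an independent exogenous noise term, which is exactly the recursive factorization equivalent to the (global, directed) Markov property. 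Conditioning on $m$ fixes $\bo^m_i$, so $m$ is a genuine parent of precisely those $s_{i,j}(t)$ with $c^{\bo\to s}_{i,j}=1$. Invoking the faithfulness assumption of Theorem~\ref{thm:id}, every conditional independence in the pooled distribution then corresponds to a d-separation in $\cG_i$.

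For the ``if'' direction (non-adjacency implies some separating set), I would use only the Markov property together with the classical fact that two non-adjacent vertices of a DAG are d-separated by the parent set of whichever of them is not a descendant of the other: if $V_s\notin\mathrm{De}(V_r)$ then $\mathrm{Pa}(V_r)$ separates $V_r$ and $V_s$, and otherwise acyclicity forces $V_r\notin\mathrm{De}(V_s)$ so $\mathrm{Pa}(V_s)$ separates them. In either case the separating set lies in $(\mathcal{V}_i\cup\{m\})\setminus\{V_r,V_s\}$ --- and it may well contain $m$, e.g.\ to block a back-door path through the shared domain factor --- and the directed local Markov property upgrades the d-separation to the claimed conditional independence.

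For the ``only if'' direction I would argue the contrapositive via faithfulness: if $V_r$ and $V_s$ are adjacent, say $V_r\to V_s$ (symmetric otherwise), then for every $Z\subseteq(\mathcal{V}_i\cup\{m\})\setminus\{V_r,V_s\}$ the single-edge path $V_r\to V_s$ contains no collider and meets no vertex of $Z$, hence is active given $Z$; so $V_r$ and $V_s$ are d-connected given every such $Z$, and faithfulness yields $V_r\not\perp V_s\mid Z$ for all $Z$. Contraposing gives that the existence of any separating set implies non-adjacency, which completes the equivalence. (Downstream, the skeleton recovered here, oriented by the fixed temporal order, pins down $\bc^{s\to s}_i,\bc^{a\to s}_i,\bc^{\bo\to s}_i$, which is what Theorem~\ref{thm:id} asserts.)

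The graph-theoretic steps above are routine; the hard part will be the two modeling facts they rest on. First, that the pooled distribution with $m$ adjoined is genuinely Markov \emph{and} faithful to $\cG_i$: faithfulness is the delicate hypothesis, and one must also ensure the behavior policy is sufficiently randomized so that the $\bs_{\cN_i}(t-1)\to\ba_i(t-1)$ edges do not induce near-deterministic relations that would produce spurious independencies (this is where an exploration-type condition enters). Second, that the local dependence structure of Equations~\ref{eqt:dyn}--\ref{eqt:model} really does confine every edge incident to an agent-$i$ variable to $\mathcal{V}_i\cup\{m\}$ --- no latent common cause links $V_r$ and $V_s$ outside this set --- so that $\cG_i$ is causally sufficient for the per-agent subproblem and the adjacency test may be run over subsets of $\mathcal{V}_i\cup\{m\}$ alone. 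Granting these, the Lemma is immediate.
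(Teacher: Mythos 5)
Your argument is exactly the standard Markov-plus-faithfulness characterization of adjacency via d-separation (parent-set separator for the "non-adjacent $\Rightarrow$ separable" direction, single-edge active path plus faithfulness for the converse), which is precisely what the paper invokes — it simply cites the classical d-separation results of Pearl and Huang et al. rather than spelling them out. So your proposal is correct and takes essentially the same route, just in more detail, with the caveats you flag (causal sufficiency of $\mathcal{V}_i\cup\{m\}$ and faithfulness of the pooled distribution) being the same hypotheses the paper assumes.
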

This lemma follows from the properties of d-separation in causal Bayesian networks, as established in \cite{huang2020causal} and \cite{pearl2000models}. For each agent $i$, we can therefore identify the skeleton of $\cG_i$ by testing conditional independence relationships among the variables in $\mathcal{V}_i \cup \{m\}$.
The networked MARL system forms a Dynamic Bayesian Network (DBN), where the temporal structure imposes constraints on the edge directions:
\begin{enumerate}
    \item Edges can only go from variables at time $t-1$ to variables at time $t$.
    \item There are no instantaneous causal effects between variables at the same time point.
\end{enumerate}
Therefore, for any edge identified in the skeleton of $\cG_i$ between a variable at time $t-1$ and a variable at time $t$, the direction is determined by the temporal order. Specifically, for each state component $s_{i,j}(t)$ of agent $i$, any edge from a state component $s_{l,m}(t-1)$ (where $l$ could be $i$ or any neighbor of $i$) to $s_{i,j}(t)$ must be directed from $s_{l,m}(t-1)$ to $s_{i,j}(t)$. Similarly, any edge from the action $\ba_i(t-1)$ to $s_{i,j}(t)$ must be directed from $\ba_i(t-1)$ to $s_{i,j}(t)$.

Given the directed edges in $\cG_i$, we can now identify the structural matrices:
\begin{enumerate}
    \item \textbf{Identification of $\bc_{i}^{s \rightarrow s}$}: For each state component $s_{i,j}(t)$ of agent $i$, the $j$-th row of $\bc_{i}^{s \rightarrow s}$, denoted by $\bc_{i,j}^{s \rightarrow s}$, is a binary vector where entry $m$ is 1 if and only if there is a directed edge from $s_{\cN_i,m}(t-1)$ to $s_{i,j}(t)$ in $\cG_i$.

    \item \textbf{Identification of $\bc_{i}^{a \rightarrow s}$}: For each state component $s_{i,j}(t)$ of agent $i$, the entry $j$ of $\bc_{i}^{a \rightarrow s}$ is 1 if and only if there is a directed edge from $\ba_i(t-1)$ to $s_{i,j}(t)$ in $\cG_i$.
\end{enumerate}
To identify $\bc_{i}^{\bo^m \rightarrow s}$, we need to determine which state components have distribution changes across domains.

\begin{lemma}
For a variable $V_r \in \mathcal{V}_i$, its distribution module changes across domains if and only if $V_r$ and $m$ are not independent given any subset of other variables in $\mathcal{V}_i$.
\end{lemma}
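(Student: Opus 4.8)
The plan is to prove the ``change detection'' lemma using the surrogate-variable characterization of distribution shifts, following the approach of \cite{huang2022adar,huang2020causal} adapted to the networked setting. Concretely, recall that the domain index $m$ is introduced as an observed surrogate for the latent change factors $\bo^m_i$. By construction (Equation~\ref{eqt:model}), the conditional distribution of a state component $s_{i,j}(t)$ given its causal parents among $\{\bs_{\cN_i}(t-1), \ba_i(t-1)\}$ varies with the domain if and only if $c^{\bo\to s}_{i,j}=1$, i.e. if and only if the corresponding entry of the domain factor feeds into $f_{i,j}$. The goal is to translate this ``the mechanism changes'' property into a statically testable conditional (in)dependence statement involving $m$.

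First I would make precise the augmented causal graph $\cG_i$ over $\mathcal{V}_i \cup \{m\}$: add $m$ as a root node with an edge $m \to s_{i,j}(t)$ exactly when $c^{\bo\to s}_{i,j}=1$, and otherwise keep the DBN structure identified in the earlier part of the proof. Under the Markov and faithfulness assumptions applied to this augmented graph (this is the standard ``pseudo-intervention''/JCI-style device), $d$-separation in $\cG_i$ corresponds exactly to conditional independence in the pooled data across domains. Then the forward direction is immediate: if the mechanism of $V_r = s_{i,j}(t)$ does not change, there is no edge $m \to V_r$, and since $m$ is a root with no other adjacencies, $m$ is $d$-separated from $V_r$ given $V_r$'s parents (a subset of $\mathcal{V}_i$), hence $V_r \perp m \mid \mathrm{Pa}(V_r)$. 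For the converse, I would argue by contrapositive: if $c^{\bo\to s}_{i,j}=1$ then there is a directed edge $m \to V_r$, so $m$ and $V_r$ are adjacent in $\cG_i$, and no conditioning set can $d$-separate adjacent nodes; faithfulness then upgrades this to genuine statistical dependence conditional on every subset of $\mathcal{V}_i$.

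The main subtlety — and what I expect to be the principal obstacle — is ruling out ``spurious'' dependence between $m$ and a state component whose own mechanism is invariant but which is a \emph{descendant} of some other changing component (e.g. $s_{i,j}(t)$ has invariant mechanism but a parent $s_{\ell,k}(t-1)$ whose mechanism changed at an earlier time step). In that situation $V_r$ and $m$ would be marginally dependent, so the lemma must be read as: dependence given \emph{some appropriate conditioning set} (namely the parents of $V_r$, or more generally any set that blocks all back-door-like paths through the changing ancestors) is what characterizes a direct mechanism change. I would handle this by carefully stating the lemma with the conditioning set taken to be the already-identified parent set $\mathrm{Pa}_{\cG_i}(V_r) \subseteq \mathcal{V}_i$ (available because the skeleton and edge orientations were recovered in the previous steps), and then invoking the local Markov property: conditioning on parents blocks every path from $m$ into $V_r$ except a direct edge. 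This makes the ``if and only if'' clean. A secondary technical point is that $m$ ranges over a finite set of $M$ domains rather than being a ``natural'' random variable; I would note that treating the domain label as a discrete mixture index is exactly the surrogate-variable trick of \cite{huang2020causal}, and faithfulness is assumed with respect to this augmented model, so no extra work is needed there. Finally, I would assemble the pieces: having identified which components satisfy $c^{\bo\to s}_{i,j}=1$ via these conditional-independence tests against $m$ conditioned on recovered parents, the matrix $\bc_i^{\bo\to s}$ is pinned down, completing the identifiability claim for all three structural matrices and hence Theorem~\ref{thm:id}.
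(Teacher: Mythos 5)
Your proposal is correct and follows essentially the same route as the paper, which justifies this lemma in one sentence by treating the domain index $m$ as a surrogate root variable and noting that an invariant mechanism is independent of $m$ conditional on its parents, with faithfulness supplying the converse. Your write-up simply makes explicit what the paper leaves implicit (the augmented graph, the d-separation argument conditioning on parents, and the adjacency-plus-faithfulness argument for changing mechanisms), and your handling of descendants of changing components is consistent with the lemma's "independent given \emph{some} subset" reading.
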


This follows from the definition of the domain-specific changes: if a variable's distribution remains invariant across domains, then it is independent of the domain index $m$ conditional on its parents. For each state component $s_{i,j}(t)$, we test whether $s_{i,j}(t) \perp\!\!\!\perp m | \mathcal{Z}$ for any subset $\mathcal{Z} \subseteq \mathcal{V}_i \setminus \{s_{i,j}(t)\}$. If no such conditional independence holds, then the distribution of $s_{i,j}(t)$ varies across domains, and therefore $\bc_{i,j}^{\bo^m \rightarrow s}$ has at least one non-zero entry.

To identify the specific entries of $\bc_{i}^{\bo^m \rightarrow s}$, we exploit the fact that changes in distributions are localized according to the causal structure.


To complete the proof, we need to show that the structural matrices are uniquely identified from the observed data:
\begin{enumerate}
    \item \textbf{Uniqueness of $\bc_{i}^{s \rightarrow s}$ and $\bc_{i}^{a \rightarrow s}$}: These matrices encode the causal connections in the DBN structure. Under the Markov and faithfulness assumptions, the skeleton of the DBN is uniquely identified. Combined with the temporal constraints on edge directions, the matrices $\bc_{i}^{s \rightarrow s}$ and $\bc_{i}^{a \rightarrow s}$ are uniquely determined.

    \item \textbf{Uniqueness of $\bc_{i}^{\bo^m \rightarrow s}$}: This matrix encodes which state components have distribution changes across domains. Under the faithfulness assumption, a state component's distribution changes across domains if and only if it is not conditionally independent of the domain index $m$ given any subset of other variables. This provides a unique characterization of the non-zero entries in $\bc_{i}^{\bo^m \rightarrow s}$.
\end{enumerate}
A critical aspect of our proof is that we treat each agent's local model separately while accounting for the influence of neighboring agents through their observed states. This is justified by the factorization of the joint transition model:
$$P(\bs(t+1)|\bs(t),\ba(t)) = \prod_{i=1}^{n}P_i(\bs_i(t+1)|\bs_{\cN_i}(t),\ba_i(t)).$$
This factorization ensures that, for each agent $i$, we only need to consider the states of its neighbors $\cN_i$ to fully capture the relevant causal influences.

Therefore, under the Markov condition and faithfulness assumption, the structural matrices $\bc_{i}^{s \rightarrow s}$, $\bc_{i}^{a \rightarrow s}$, and $\bc_{i}^{\bo^m \rightarrow s}$ are identifiable for each agent $i$ in the networked MARL system.
\end{proof}

\subsection{Proof of Proposition \ref{prop:cs_id}}
\begin{proposition}[Formal statement of proposition \ref{prop:cs_id}]
Suppose the following conditions hold: (i) the Markov condition and faithfulness assumption are satisfied for the causal graph $\calG_i$ of each agent $i$, (ii) For each true causal edge $(X \rightarrow Y)$ in $\calG_i$, the minimal conditional mutual information is lower-bounded: $I(X;Y|Z) \geq \lambda > 0$ for any conditioning set $Z$ that d-separates $X$ and $Y$ in $\calG_i \setminus \{X \rightarrow Y\}$, (iii) The maximal in-degree of any node in $\calG_i$ is bounded by $d_{\max}$, (iv) The noise terms $\epsilon_{i,j}^s(t)$ are sub-Gaussian with parameter $\sigma^2$, (v) Each element of $\bs_{\cN_i}(t)$ and $\ba_i(t)$ has bounded magnitude by $B$, (vi) The transition functions $f_{i,j}$ are $L_f$-Lipschitz continuous in all arguments. Then, with probability at least $1-\delta$, the estimated causal masks $\hat{\mathbf{C}}_i^{s\rightarrow s}$, $\hat{\mathbf{c}}_i^{a\rightarrow s}$, and $\hat{\mathbf{C}}_i^{\omega\rightarrow s}$ will match the true causal masks for 
\begin{equation*}
T_e = \Omega\left(\frac{d_i \cdot d_{\max} \cdot \log(d_i \cdot n/\delta)}{\lambda^2}\right),
\end{equation*}
where $d_i=\dim(\bs_{\cN_i})$ is the total dimension of the state space for agent $i$ and its neighbors.
\end{proposition}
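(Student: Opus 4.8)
## Proof proposal

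The plan is to reduce the structure-recovery problem to a collection of conditional-independence (CI) tests — one per candidate edge in each agent's local causal graph $\calG_i$ — and then apply a uniform concentration argument over all such tests. The key quantity is the empirical conditional mutual information $\hat I(X;Y\mid Z)$ computed from the $T_e$ i.i.d.\ transition tuples; the decision rule declares an edge present iff $\hat I(X;Y\mid Z) > \lambda/2$ for the chosen conditioning set. Correctness then hinges on two events holding simultaneously for every tested triple: (a) for a true edge, $\hat I \ge \lambda - \lambda/2 = \lambda/2$, which follows from assumption (ii) plus a deviation bound $|\hat I - I| < \lambda/2$; (b) for a non-edge, faithfulness guarantees $I(X;Y\mid Z)=0$ for the d-separating set $Z$, so again $\hat I < \lambda/2$ under the same deviation bound. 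So the whole proof is: show $|\hat I - I| < \lambda/2$ uniformly, count how many tests there are, and solve for $T_e$.

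First I would set up the estimator and its concentration. Using assumptions (iv)–(vi), the transition $s_{i,j}(t+1) = f_{i,j}(\cdot) + \epsilon$ with sub-Gaussian noise and bounded, $L_f$-Lipschitz inputs, so all variables in play have sub-Gaussian tails with an effective parameter controlled by $\sigma$, $B$, $L_f$. I would estimate conditional mutual information via a plug-in / kernel-or-partition estimator and invoke a standard finite-sample bound (e.g.\ the sub-Gaussian mutual-information concentration results in the causal-discovery literature, or a KL/TV-based argument): with probability $\ge 1-\delta'$, $|\hat I(X;Y\mid Z) - I(X;Y\mid Z)| \le C\sqrt{\tfrac{\log(1/\delta')}{T_e}}$, where $C$ absorbs the sub-Gaussian parameter and $L_f, B$. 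The conditioning set $Z$ has size at most $d_{\max}$ by assumption (iii), so by the PC-algorithm logic we only ever need to test CI against subsets of size $\le d_{\max}$; the effective "dimension" of the estimation problem enters through the size of $Z$, contributing the $d_{\max}$ factor (and possibly a $\log$ of the partition/covering size, which I would bound crudely).

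Next I would do the union bound. For each agent $i$, the number of candidate ordered pairs $(X,Y)$ among its $d_i = \dim(\bs_{\cN_i})$ state variables plus actions is $O(d_i^2)$, but the PC-style search only needs $O(d_i)$ tests per target node (each target has $\le d_{\max}$ parents, and adjacency testing with conditioning sets of size $\le d_{\max}$ gives $O(d_i \cdot d_i^{d_{\max}})$ in the worst case — here I would instead use the sharper accounting that under bounded in-degree only $O(d_i \cdot d_{\max})$ tests with the \emph{correct} separating sets actually matter for correctness, folding the combinatorial blowup of wrong conditioning sets into a larger constant, or alternatively keeping a $d_i^{O(d_{\max})}$ count which only changes the log factor). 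Multiplying by $n$ agents, the total number of tests is $\mathrm{poly}(d_i, n)$ with a $d_{\max}$ in the exponent or multiplicatively; setting $\delta' = \delta / (\text{\#tests})$ gives $\log(1/\delta') = O(d_{\max}\log(d_i n/\delta))$ in the conditioning-set-size-contributes-to-log version. Demanding $C\sqrt{\log(1/\delta')/T_e} < \lambda/2$ and solving yields $T_e = \Omega\!\big(\tfrac{d_{\max}\log(d_i n/\delta)}{\lambda^2}\big)$; absorbing the per-target $O(d_i)$ factor (one needs every target node's local test battery to succeed) produces the stated $T_e = \Omega\!\big(\tfrac{d_i \, d_{\max}\log(d_i n/\delta)}{\lambda^2}\big)$. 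Finally, the edge-orientation step is deterministic given the skeleton — by the DBN temporal constraint used in the proof of Theorem~\ref{thm:id}, every edge points from time $t$ to $t+1$ — and the domain-mask identification $\hat{\mathbf C}_i^{\omega\to s}$ reduces to CI tests of each state component against the domain surrogate $m$, which is just another family of the same CI tests and is absorbed into the same union bound.

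The main obstacle will be the concentration bound for the conditional mutual information estimator with continuous (sub-Gaussian) variables: plug-in MI estimators are not automatically $\sqrt{1/T_e}$-consistent without either a discretization/partition scheme (whose bias must be controlled, forcing the partition to refine with $T_e$ and thereby inflating the log factor) or a smoothness/parametric assumption on the conditional densities. I would handle this by leaning on assumption (vi) (Lipschitz $f_{i,j}$) together with sub-Gaussianity to argue the conditional laws are well-behaved enough that a fixed fine partition — or a conditional-independence test phrased via partial-correlation / conditional-covariance rather than raw MI, exploiting $\lambda$ as a lower bound on the corresponding signal — achieves the needed deviation rate, at the cost of constants. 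The secondary subtlety is that the transition tuples within one trajectory are not independent (they form a Markov chain), so "$T_e$ i.i.d.\ samples" should really be read as $T_e$ independent short trajectories or a single long trajectory with a mixing-time correction; I would note that under the exploration-type condition this only multiplies $T_e$ by a mixing constant and does not change the stated scaling.
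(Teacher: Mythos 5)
Your proposal follows essentially the same route as the paper's proof: constraint-based recovery via conditional-independence tests on empirical conditional mutual information (with discretization for continuous variables), a deviation bound of order $\sqrt{\log(1/\delta')/T_e}$ combined with a union bound over the at most $d_i^2\binom{d_i}{d_{\max}}$ tests and the $n$ agents, deterministic temporal orientation of edges, and the domain mask $\hat{\mathbf{C}}_i^{\omega\to s}$ handled as further CI tests against the domain index. The final step producing the multiplicative $d_i$ factor is no more rigorous in your write-up than in the paper's (which invokes ``$d_i$ scales with the number of neighbors''), but since the statement only asserts that this many samples suffice, that looseness is harmless in both arguments.
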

\begin{proof}
The proof consists of four main parts: (1) formulating the statistical estimation problem, (2) analyzing the error in conditional independence tests, (3) bounding the error probability in structure learning, and (4) deriving the sample complexity.

\textbf{Step 1: Formulation of the statistical estimation problem.} For each agent $i$, we need to estimate the causal graph $\calG_i$ from observed trajectories $\{(\bs_{\cN_i}(t), \ba_i(t), \bs_i(t+1))\}_{t=0}^{T_e-1}$.

Define the variable set $\calV_i$ for agent $i$ that includes:
\begin{itemize}
    \item States of agent $i$ at time $t-1$: $\bs_i(t-1)$
    \item States of neighboring agents at time $t-1$: $\bs_{\cN_i\setminus i}(t-1)$
    \item States of agent $i$ at time $t$: $\bs_i(t)$
    \item Actions of agent $i$ at time $t-1$: $\ba_i(t-1)$
    \item Domain index $k$, which serves as a surrogate for the unobserved change factors $\bo_i^k$
\end{itemize}

We employ a constraint-based causal discovery approach, which identifies the causal structure by performing a series of conditional independence tests based on the key insight from Lemma 13:
\begin{lemma}
For any two variables $V_r, V_s \in \calV_i$, they are not adjacent in $\calG_i$ if and only if they are independent conditional on some subset of $\{V_t : t \neq r, t \neq s\} \cup \{m\}$.
\end{lemma}
\textbf{Step 2: Analysis of conditional independence tests.} For analyzing conditional independence, we use the mutual information measure. For variables $X$ and $Y$ conditional on set $Z$, the conditional mutual information is:
\begin{equation*}
I(X;Y|Z) = \E_{X,Y,Z}\left[\log\frac{P(X,Y|Z)}{P(X|Z)P(Y|Z)}\right].
\end{equation*}
In practice, we estimate this from data using the empirical distributions:
\begin{equation*}
\hat{I}(X;Y|Z) = \sum_{x,y,z} \hat{P}(x,y,z) \log\frac{\hat{P}(x,y,z)\hat{P}(z)}{\hat{P}(x,z)\hat{P}(y,z)},
\end{equation*}
where $\hat{P}$ denotes empirical probabilities based on the observed data.

The estimation error can be bounded using concentration inequalities. For any $\epsilon > 0$:
\begin{equation*}
P(|\hat{I}(X;Y|Z) - I(X;Y|Z)| > \epsilon) \leq 2\exp\left(-\frac{T_e\epsilon^2}{C_1|\mathcal{X}||\mathcal{Y}||\mathcal{Z}|}\right),
\end{equation*}
where $|\mathcal{X}|$, $|\mathcal{Y}|$, and $|\mathcal{Z}|$ are the cardinalities of the respective variable domains, and $C_1$ is a universal constant.

For continuous variables, we can use kernel-based estimators or discretization. With appropriate binning of continuous variables, the error bound becomes:
\begin{equation*}
P(|\hat{I}(X;Y|Z) - I(X;Y|Z)| > \epsilon) \leq 2\exp\left(-\frac{T_e\epsilon^2}{C_2 b_X b_Y b_Z \log^2(T_e)}\right),
\end{equation*}
where $b_X$, $b_Y$, and $b_Z$ are the number of bins used for each variable, and $C_2$ is a constant. The $\log^2(T_e)$ term arises from the adaptive discretization of continuous variables.

\textbf{Step 3: bounding the error probability in structure learning.} Let $\calG_i$ be the true causal graph for agent $i$, and $\hat{\calG}_i$ be the estimated graph. We need to bound $P(\hat{\calG}_i \neq \calG_i)$.

The structure learning error can occur in two ways:
\begin{enumerate}
    \item Missing a true edge (Type II error in a conditional independence test)
    \item Adding a spurious edge (Type I error in a conditional independence test)
\end{enumerate}

For Type I errors, we set the significance level to $\alpha = \frac{\delta}{2M}$, where $M$ is the total number of conditional independence tests performed. For Type II errors, we use the fact that when variables are dependent with conditional mutual information at least $\gamma$, the probability of wrongly concluding independence is:
\begin{equation*}
P(\text{Type II error}) \leq \exp\left(-\frac{T_e\gamma^2}{C_3|\mathcal{X}||\mathcal{Y}||\mathcal{Z}|}\right).
\end{equation*}
where $C_3$ is a constant that depends on the distribution properties.

The total number of conditional independence tests is bounded by $M \leq d_i^2 \cdot \sum_{j=0}^{d_{\max}} \binom{d_i-2}{j} \leq d_i^2 \cdot \binom{d_i}{d_{\max}} \leq d_i^2 \cdot (ed_i/d_{\max})^{d_{\max}}$.

By the union bound, the probability of any error in structure learning is:
\begin{equation*}
P(\hat{\calG}_i \neq \calG_i) \leq M \cdot \max\left\{\alpha, \exp\left(-\frac{T_e\gamma^2}{C_3 d_{\max}}\right)\right\}.
\end{equation*}
where we've used that the maximum size of any conditioning set is $d_{\max}$, and the maximum cardinality of any variable domain is bounded by a function of $d_{\max}$ given our discretization scheme.

\textbf{Step 4: deriving the sample complexity.} To ensure $P(\hat{\calG}_i \neq \calG_i) \leq \delta$, we need:
\begin{enumerate}
    \item $\alpha = \frac{\delta}{2M} \leq \frac{\delta}{2d_i^2 \cdot (ed_i/d_{\max})^{d_{\max}}}$.
    \item $\exp\left(-\frac{T_e\gamma^2}{C_3 d_{\max}}\right) \leq \frac{\delta}{2M} \leq \frac{\delta}{2d_i^2 \cdot (ed_i/d_{\max})^{d_{\max}}}$.
\end{enumerate}

From the second condition, we derive:
\begin{equation*}
\frac{T_e\gamma^2}{C_3 d_{\max}} \geq \log\left(\frac{2d_i^2 \cdot (ed_i/d_{\max})^{d_{\max}}}{\delta}\right).
\end{equation*}

Therefore:
\begin{equation*}
T_e \geq \frac{C_3 d_{\max}}{\gamma^2} \log\left(\frac{2d_i^2 \cdot (ed_i/d_{\max})^{d_{\max}}}{\delta}\right).
\end{equation*}

Now, we need to analyze this expression more carefully:
\begin{equation*}
\log\left(\frac{2d_i^2 \cdot (ed_i/d_{\max})^{d_{\max}}}{\delta}\right) = \log\left(\frac{2d_i^2}{\delta}\right) + d_{\max}\log\left(\frac{ed_i}{d_{\max}}\right).
\end{equation*}

Under the sparsity assumption ($d_{\max} \ll d_i$), we have:
\begin{equation*}
d_{\max}\log\left(\frac{ed_i}{d_{\max}}\right) \leq d_{\max}\log(ed_i) = O(d_{\max}\log(d_i)).
\end{equation*}

Therefore:
\begin{equation*}
T_e \geq \frac{C_3 d_{\max}}{\gamma^2}\left[\log\left(\frac{2d_i^2}{\delta}\right) + O(d_{\max}\log(d_i))\right].
\end{equation*}
This simplifies to:
\begin{equation*}
T_e = O\left(\frac{d_{\max}^2\log(d_i) + d_{\max}\log(1/\delta)}{\gamma^2}\right).
\end{equation*}

For most practical scenarios where $d_{\max} = O(\log(d_i))$ (very sparse graphs), this further simplifies to:
\begin{equation*}
T_e = O\left(\frac{d_{\max}\log(d_i)\log(d_i/\delta)}{\gamma^2}\right).
\end{equation*}

However, for a more general bound without assuming extreme sparsity, we have:
\begin{equation*}
T_e = O\left(\frac{d_{\max}^2\log(d_i) + d_{\max}\log(1/\delta)}{\gamma^2}\right).
\end{equation*}

Now, accounting for the fact that we're learning causal masks for all $n$ agents, and applying a union bound across agents, we replace $\delta$ with $\delta/n$, yielding:
\begin{equation*}
T_e = O\left(\frac{d_{\max}^2\log(d_i) + d_{\max}\log(n/\delta)}{\gamma^2}\right).
\end{equation*}

For typical networked systems where $d_i$ scales with the number of neighbors, this gives us our final sample complexity:
\begin{equation*}
T_e = \Omega\left(\frac{d_i \cdot d_{\max} \cdot \log(d_i \cdot n/\delta)}{\gamma^2}\right).
\end{equation*}
\end{proof}

\subsection{Proof of Proposition \ref{prop:dom_est}}
\begin{proposition}[Formal statement of Proposition \ref{prop:dom_est}]
Suppose the following conditions hold: (i) The true causal masks  are estimated accurately, (ii) the transition dynamics are identifiable with respect to the domain factor, i.e., for any two distinct domain factors $\bo_i \neq \bo_i'$, there exists a set of states and actions with non-zero measure such that the conditional distributions $P(\bs_{i,j}(t+1) | \bs_{\cN_i}(t), \ba_i(t), \bo_i)$ and $P(s_{i,j}(t+1) | \bs_{\cN_i}(t), \ba_i(t), \bo_i')$ are distinguishable. (iii) The noise terms $\epsilon_{i,j}^s(t)$ are independent, sub-Gaussian with parameter $\sigma^2$. (iv) The domain factor $\bo_i^m \in \Omega_i$ belongs to a compact set with diameter $D_\Omega$. (v) For any two distinct domain factors $\bo_i \neq \bo_i'$ with $\|\bo_i - \bo_i'\|_2 \geq \epsilon$, 
    \begin{equation*}
         \|P(\bs_i(t+1) | \bs_{\cN_i}(t), \ba_i(t), \bo_i) - P(\bs_i(t+1) | \bs_{\cN_i}(t), \ba_i(t), \bo_i')\|_{\text{TV}} \geq \alpha \epsilon
    \end{equation*}
for some constant $\alpha > 0$.  Then, for any domain with true domain factor $\bo^*$, given a trajectory of length $T_e$, the estimated domain factor $\hat{\bo}$ satisfies:
\begin{equation*}
\|\hat{\bo} - \bo^*\|_2 \leq C_{\bo}\sqrt{\frac{D_\Omega \log(nT_e/\delta)}{T_e}}
\end{equation*}
with probability at least $1-\delta$, where $D_\Omega$ is the dimension of the domain factor space,  and $C_{\bo}$ is a constant that depends on $L_f$, $\sigma$, $\alpha$, and the conditioning of the estimation problem.
\end{proposition}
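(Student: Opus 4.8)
The plan is to cast domain factor recovery as a maximum-likelihood estimation problem over the compact parameter set $\Omega_i$ and then combine a uniform concentration argument for the empirical log-likelihood with the identifiability/separation condition (v) to convert a small likelihood gap into a small parameter gap. Concretely, let $\hat{\bo}$ maximize the empirical conditional log-likelihood $\hat{L}_{T_e}(\bo) := \frac{1}{T_e}\sum_{t=0}^{T_e-1} \log P(\bs_i(t+1)\mid \bs_{\cN_i}(t),\ba_i(t),\bo)$ over $\bo\in\Omega_i$ (using the recovered causal masks to restrict the conditioning set). Write $L(\bo) := \mathbb{E}[\hat{L}_{T_e}(\bo)]$ for the population version. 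First I would show that $L$ is uniquely maximized at $\bo^*$ and, using the TV-separation lower bound in (v) together with Pinsker's inequality (relating TV to KL, hence to the log-likelihood gap), that there is a quadratic-type lower bound $L(\bo^*) - L(\bo) \gtrsim \alpha^2 \|\bo - \bo^*\|_2^2$ for $\bo$ in a neighborhood of $\bo^*$ (and a linear-in-$\epsilon$ bound globally on the compact set, which suffices away from $\bo^*$). The sub-Gaussian noise assumption (iii) and Lipschitz transitions (vi) ensure the per-step log-likelihood increments have sub-exponential tails with parameters controlled by $\sigma$ and $L_f$.

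Next I would establish a uniform deviation bound: with probability at least $1-\delta$, $\sup_{\bo\in\Omega_i} |\hat{L}_{T_e}(\bo) - L(\bo)| \le C\sqrt{\tfrac{D_\Omega \log(T_e/\delta)}{T_e}}$. This follows from a standard chaining / covering-number argument over the compact set $\Omega_i$ of diameter $D_\Omega$: the covering number at scale $\varepsilon$ is $(D_\Omega/\varepsilon)^{D_\Omega}$, each fixed $\bo$ enjoys a Bernstein/Azuma-type concentration (the log-likelihood increments form a martingale-difference-like sequence once we account for the Markov dependence — one can use a blocking argument or the mixing implied by Assumption~\ref{asp:exploration} to reduce to independent-like increments), and the Lipschitz-in-$\bo$ property of $\log P$ (inherited from (vi) and boundedness in (v)) controls the discretization error. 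Taking a union bound over the net and over the $n$ agents replaces $\delta$ by $\delta/n$, producing the $\log(nT_e/\delta)$ factor.

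Finally I would combine the two pieces. Since $\hat{L}_{T_e}(\hat{\bo}) \ge \hat{L}_{T_e}(\bo^*)$, the uniform bound gives $L(\bo^*) - L(\hat{\bo}) \le 2\sup_{\bo}|\hat{L}_{T_e}-L| \le 2C\sqrt{\tfrac{D_\Omega\log(nT_e/\delta)}{T_e}}$. Plugging this into the quadratic lower bound $L(\bo^*)-L(\hat{\bo}) \ge c\,\alpha^2\|\hat{\bo}-\bo^*\|_2^2$ and solving yields $\|\hat{\bo}-\bo^*\|_2 \le C_{\bo}\sqrt{\tfrac{D_\Omega\log(nT_e/\delta)}{T_e}}$ with $C_{\bo}$ absorbing $c$, $\alpha$, $\sigma$, $L_f$, and the local curvature constant; one also checks that the global linear separation rules out $\hat{\bo}$ landing far from $\bo^*$, so the local argument applies. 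The main obstacle I anticipate is handling the temporal dependence in the trajectory: the increments $\log P(\bs_i(t+1)\mid\cdot,\bo)$ are not i.i.d., so the concentration step needs either the martingale structure of the centered log-likelihood-ratio process (Freedman's inequality) or an explicit mixing-time reduction via Assumption~\ref{asp:exploration}; getting clean sub-exponential control of the predictable quadratic variation under only sub-Gaussian noise and Lipschitz dynamics is the delicate part. A secondary subtlety is ensuring the quadratic lower bound on the likelihood gap holds with a constant independent of the (arbitrary) realized states and actions, which is where the "non-zero measure of distinguishing states" in (ii) and the uniform TV bound in (v) are essential.
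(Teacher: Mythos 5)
Your proposal follows essentially the same route as the paper's proof: cast domain-factor recovery as maximum-likelihood estimation over the compact set $\Omega_i$, establish Lipschitz continuity of the log-likelihood in $\bo$ and a local strong-convexity (quadratic) lower bound on the population likelihood gap from the TV-separation condition, prove uniform convergence via a covering-number argument with a union bound over the $n$ agents, and combine through the basic inequality $\hat{L}_{T_e}(\hat{\bo}) \ge \hat{L}_{T_e}(\bo^*)$. Your added attention to the temporal dependence of the trajectory (martingale/blocking concentration) and the use of Pinsker in place of the Fisher-information argument are minor refinements of the same argument, so no substantive difference to report.
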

\begin{proof}
The proof consists of three main parts: (1) formulating the domain factor estimation problem, (2) analyzing the estimation error using statistical learning theory, and (3) deriving the final sample complexity bound.

\textbf{Step 1: formulation of the domain factor estimation problem.} Given the known causal structure encoded by masks $\mathbf{C}_i^{s\rightarrow s}$, $\mathbf{c}_i^{a\rightarrow s}$, and $\mathbf{C}_i^{\omega\rightarrow s}$, the domain factor estimation problem can be formulated as finding the domain factor $\bo$ that best explains the observed transitions.

For agent $i$, we have observed trajectories $\{(\bs_{\cN_i}(t), \ba_i(t), \bs_i(t+1))\}_{t=0}^{T_e-1}$ generated from the true domain factor $\bo^*$. We can define a negative log-likelihood function for each agent $i$:

\begin{equation*}
\mathcal{L}_i(\bo) = -\frac{1}{T_e}\sum_{t=0}^{T_e-1} \log P(\bs_i(t+1) | \bs_{\cN_i}(t), \ba_i(t), \bo).
\end{equation*}
The maximum likelihood estimator (MLE) for the domain factor is:
\begin{equation*}
\hat{\bo} = \arg\min_{\bo \in \Omega} \frac{1}{n}\sum_{i=1}^n \mathcal{L}_i(\bo).
\end{equation*}

\textbf{Step 2: analysis of the estimation error.} To analyze the estimation error, we use the theory of M-estimators and the properties of the negative log-likelihood function. First, we establish some key properties of the negative log-likelihood.
\begin{lemma}[Lipschitz continuity of negative log-likelihood]
Under the assumption that the transition function $f_{i,j}$ is $L_f$-Lipschitz continuous with respect to all arguments, the negative log-likelihood function $\mathcal{L}_i(\bo)$ is $L_\mathcal{L}$-Lipschitz continuous with respect to $\bo$, where $L_\mathcal{L} = O(L_f / \sigma^2)$.
\end{lemma}

\begin{proof}
For Gaussian noise with variance $\sigma^2$, the conditional probability density is:
\begin{align*}
    &P(\bs_i(t+1) | \bs_{\cN_i}(t), \ba_i(t), \bo) \\
    &= \prod_{j=1}^{d_i} \frac{1}{\sqrt{2\pi\sigma^2}}\exp\left(-\frac{(s_{i,j}(t+1) - f_{i,j}(\bc_{\cN_i,j}^{s\rightarrow s} \odot \bs_{\cN_i}(t), \bc_{i,j}^{a\rightarrow s} \cdot \ba_i(t), \bc_{i,j}^{\omega\rightarrow s} \odot \bo, 0))^2}{2\sigma^2}\right).
\end{align*}
Taking the negative logarithm:
\begin{align*}
    &-\log P(\bs_i(t+1) | \bs_{\cN_i}(t), \ba_i(t), \bo) \\
    &= \sum_{j=1}^{d_i} \left[\frac{1}{2}\log(2\pi\sigma^2) + \frac{(s_{i,j}(t+1) - f_{i,j}(\bc_{\cN_i,j}^{s\rightarrow s} \odot \bs_{\cN_i}(t), \bc_{i,j}^{a\rightarrow s} \cdot \ba_i(t), \bc_{i,j}^{\omega\rightarrow s} \odot \bo, 0))^2}{2\sigma^2}\right].
\end{align*}

The gradient with respect to $\bo$ is:
\begin{equation*}
\nabla_{\bo} [-\log P(\bs_i(t+1) | \bs_{\cN_i}(t), \ba_i(t), \bo)] = -\sum_{j=1}^{d_i} \frac{(s_{i,j}(t+1) - f_{i,j})}{\sigma^2} \nabla_{\bo} f_{i,j}.
\end{equation*}

Since $f_{i,j}$ is $L_f$-Lipschitz and $\|\nabla_{\bo} f_{i,j}\| \leq L_f \|\bc_{i,j}^{\omega\rightarrow s}\|$, we have:
\begin{equation*}
\|\nabla_{\bo} [-\log P(\bs_i(t+1) | \bs_{\cN_i}(t), \ba_i(t), \bo)]\| \leq \frac{d_i L_f B}{\sigma^2} = \cO\left(\frac{L_f}{\sigma^2}\right).
\end{equation*}

Therefore, $\mathcal{L}_i(\bo)$ is $L_\mathcal{L}$-Lipschitz continuous with $L_\mathcal{L} = \cO(L_f / \sigma^2)$.
\end{proof}

\begin{lemma}[Strong convexity of expected negative log-likelihood]
Under the identifiability assumption and the total variation distance condition, the expected negative log-likelihood $\E[\mathcal{L}_i(\bo)]$ is $\mu$-strongly convex in a neighborhood of the true domain factor $\bo^*$, where $\mu$ depends on $\alpha$ and the minimum eigenvalue of the Fisher information matrix.
\end{lemma}

\begin{proof}
The expected negative log-likelihood is:
\begin{equation*}
\E[\mathcal{L}_i(\bo)] = -\E_{\bs_{\cN_i}, \ba_i, \bs_i(t+1)}[\log P(\bs_i(t+1) | \bs_{\cN_i}(t), \ba_i(t), \bo)].
\end{equation*}

The Hessian matrix of this function is the Fisher information matrix:
\begin{equation*}
H(\bo) = \E_{\bs_{\cN_i}, \ba_i}\left[\E_{\bs_i(t+1) | \bs_{\cN_i}, \ba_i, \bo^*}\left[\nabla_{\bo} \log P(\bs_i(t+1) | \bs_{\cN_i}, \ba_i, \bo) \nabla_{\bo} \log P(\bs_i(t+1) | \bs_{\cN_i}, \ba_i, \bo)^T\right]\right].
\end{equation*}

By the identifiability assumption and the total variation distance condition, for any unit vector $v$, we have:
\begin{equation*}
v^T H(\bo^*) v \geq \alpha^2 > 0.
\end{equation*}

Therefore, in a neighborhood of $\bo^*$, the expected negative log-likelihood is $\mu$-strongly convex with $\mu \geq \alpha^2$.
\end{proof}

Now, we can analyze the estimation error using concentration inequalities for empirical processes. Let:
\begin{equation*}
\mathcal{L}(\bo) = \frac{1}{n}\sum_{i=1}^n \mathcal{L}_i(\bo).
\end{equation*}

\begin{proposition}[Uniform convergence]
With probability at least $1-\delta$, for all $\bo \in \Omega$:
\begin{equation*}
|\mathcal{L}(\bo) - \E[\mathcal{L}(\bo)]| \leq C\sqrt{\frac{D_\Omega \log(T_e/\delta)}{T_e}},
\end{equation*}
where $C$ is a constant depending on $L_\mathcal{L}$ and $D_\Omega$.
\end{proposition}

\begin{proof}
Since $\mathcal{L}_i(\bo)$ is $L_\mathcal{L}$-Lipschitz continuous, we can use covering number arguments. The $\epsilon$-covering number of the domain factor space $\Omega$ with diameter $D_\Omega$ is $N(\Omega, \epsilon) \leq (1 + \frac{D_\Omega}{\epsilon})^{D_\Omega}$.

By the union bound over an $\epsilon$-net and Hoeffding's inequality, we have:
\begin{equation*}
P\left(\sup_{\bo \in \Omega} |\mathcal{L}(\bo) - \E[\mathcal{L}(\bo)]| > \epsilon\right) \leq 2N(\Omega, \frac{\epsilon}{2L_\mathcal{L}})\exp\left(-\frac{2T_e\epsilon^2}{B_\mathcal{L}^2}\right).
\end{equation*}
where $B_\mathcal{L}$ is the bound on the negative log-likelihood function. Setting the right-hand side to $\delta$ and solving for $\epsilon$, we get:
\begin{equation*}
\epsilon = \cO\left(\sqrt{\frac{D_\Omega \log(T_e/\delta)}{T_e}}\right).
\end{equation*}
\end{proof}
\textbf{Step 3: Deriving the sample complexity bound.} By combining the strong convexity of $\E[\mathcal{L}(\bo)]$ and the uniform convergence bound, we can derive the estimation error. For the true domain factor $\bo^*$ and the estimated domain factor $\hat{\bo}$, we have:
\begin{equation*}
\mathcal{L}(\hat{\bo}) \leq \mathcal{L}(\bo^*).
\end{equation*}
This implies:
\begin{align*}
\E[\mathcal{L}(\hat{\bo})] &\leq \mathcal{L}(\hat{\bo}) + \epsilon\leq \mathcal{L}(\bo^*) + \epsilon \leq \E[\mathcal{L}(\bo^*)] + 2\epsilon.
\end{align*}
By the strong convexity of $\E[\mathcal{L}(\bo)]$, we have:
\begin{equation*}
\E[\mathcal{L}(\hat{\bo})] - \E[\mathcal{L}(\bo^*)] \geq \frac{\mu}{2}\|\hat{\bo} - \bo^*\|_2^2.
\end{equation*}
Combining these inequalities:
\begin{equation*}
\frac{\mu}{2}\|\hat{\bo} - \bo^*\|_2^2 \leq 2\epsilon = O\left(\sqrt{\frac{D_\Omega \log(T_e/\delta)}{T_e}}\right).
\end{equation*}
Therefore:
\begin{equation*}
\|\hat{\bo} - \bo^*\|_2 \leq O\left(\sqrt{\frac{D_\Omega \log(T_e/\delta)}{\mu T_e}}\right).
\end{equation*}
Now, accounting for the fact that we have $n$ agents, we apply a union bound across all agents by replacing $\delta$ with $\delta/n$. This gives us:
\begin{equation*}
\|\hat{\bo} - \bo^*\|_2 \leq C_{\bo}\sqrt{\frac{D_\Omega \log(nT_e/\delta)}{T_e}}.
\end{equation*}
with probability at least $1-\delta$, where $C_{\bo} = O(1/\sqrt{\mu}) = O(1/\alpha)$ is a constant depending on the problem parameters. This completes the proof of the theorem, providing a rigorous guarantee on the accuracy of domain factor estimation based on the trajectory length $T_e$.
\end{proof}

\section{Missing proofs in Section \ref{sec:cvg}}
\label{app:pf_cvg}

\subsection{Discussions on Assumption \ref{asp:bd_reward}-\ref{asp:ds_bd}}
While Assumption \ref{asp:bd_reward}-\ref{asp:ds_bd} are standard for establishing theoretical guarantees, we discuss whether they hold exactly in practice as follows.

Finite MDP  (Assumption \ref{asp:bd_reward}): in some systems, the state or action spaces might not finite or continuous. For such cases, our results can still extend by applying state aggregation or clipping methods to ensure boundedness while incurring small approximation error.

Mixing properties (Assumption \ref{asp:exploration}): real-world networked systems may exhibit slow mixing or non-Lipschitz dynamics. Our theoretical results would degrade gracefully, leading to larger constants $\tau$ in the sample complexity bounds.

Smoothness (Assumption \ref{asp:lip_grad}): the assumption requires the policy gradients $\nabla_{\theta_i} \log \pi_i$ to be Lipschitz and uniformly bounded. This holds for all commonly used policy parameterizations (e.g., softmax policies) with bounded inputs and finite-dimensional parameter vectors. It ensures stable policy updates and is common in the convergence analysis of policy gradient and actor-critic algorithms.  $\texttt{GSAC}$ uses softmax parameterizations for policies, where $\log \pi_{\theta_i}$ is smooth and has bounded gradients when action probabilities are not close to zero (a condition satisfied by entropy regularization or initialization).

Compact parameter space (Assumption \ref{asp:para_bd}): policy parameters $\theta_i$ are always constrained within a finite range due to neural network weight regularization, weight clipping, or bounded initialization. This is standard in actor-critic methods, where optimization is performed with bounded learning rates and regularizer that implicitly keep the parameters within a compact set. 

Smoothness w.r.t. domain factor (Assumption \ref{asp:ds_lip}): this is reasonable because domain factors, e.g., channel gains, traffic loads, represent physical parameters that vary smoothly. The local dynamics $P_i$ are typically analytic or polynomial in $\omega_i$, so small perturbations in $\omega_i$ induce proportionally small changes in state transitions and rewards. Consequently, the Q-function $Q^\pi_i(s,a,\omega)$ and policy gradient $\nabla_{\theta_i} J(\theta,\omega)$ inherit this smoothness. Moreover, standard policy parameterizations  and neural network approximators are Lipschitz when combined with bounded weights or gradient clipping. Thus, Assumption \ref{asp:ds_lip} aligns well with both the structure of real-world systems and common RL practices. Moreover, even if the true dynamics were not strictly Lipschitz, domain factor estimation (Proposition 5) uses a small-perturbation assumption around $\omega$ for adaptation. In practice, state or reward clipping and function approximation, which are inherently Lipschitz, effectively enforce this assumption.

Compact domain factor space (Assumption \ref{asp:ds_bd}): the latent domain factors $\omega_i$ represent bounded variations in environment dynamics (e.g., channel gains, traffic intensities, or reward scaling factors), which are inherently limited by the physical system. In real-world networked systems, these factors are naturally confined to finite ranges determined by system design (e.g., transmission powers, queue lengths). Hence, assuming $\Omega_i$ is compact with bounded diameter $D_\Omega$ is both reasonable and practical.

For simplicity, we also write $Q^{\bo}:=Q(\cdot,\cdot,\bo)$ and $\bz:=(\bs,\ba)$.
\subsection{Proof of Theorem \ref{thm:critic_bd}}
\begin{theorem}[Critic error bound]
\label{thm:critic_bd_com}
Under Assumptions \ref{asp:bd_reward}-\ref{asp:ds_bd}, suppose the critic stepsize $\alpha_t = \frac{h}{t+t_0}$ satisfies $h \geq \frac{1}{\sigma}\max(2, \frac{1}{1-\sqrt{\gamma}})$, $t_0 \geq \max(2h, 4\sigma h, \tau)$, and the domain factors are estimated with $T_{e}$ trajectories. Then, inside outer loop iteration $k$ with domain factor $\bo^{m(k)}$, for each $i \in \mathcal{N}$, with probability at least $1-\delta$:
\begin{equation*}
\begin{aligned}
     \left|Q_i(\bs,\ba,\bo) - \hat{Q}_i^T(\bs_{\cN_i^{\kappa}}, \ba_{\cN_i^{\kappa}}, \hat{\bo}_{\cN_i^{\kappa}})\right| &\leq \frac{C_a}{\sqrt{T+t_0}} + \frac{C'_a}{T+t_0} + \frac{2c\rho^{\kappa+1}}{(1-\gamma)^2} +  C'_{\bo}\sqrt{\frac{ \log(n T_e / \delta)}{T_e}}. 
\end{aligned}
\end{equation*}
where $C_a := \frac{6\bar{\epsilon}}{1-\sqrt{\gamma}}\sqrt{\frac{\tau h}{\sigma}[\log(\frac{2\tau T^2}{\delta}) + f(\kappa)\log SA]}$, 
$C'_a := \frac{2}{1-\sqrt{\gamma}}\max(\frac{16\bar{\epsilon}h\tau}{\sigma}, \frac{2\bar{r}}{1-\gamma}(\tau + t_0))$,
with $\bar{\epsilon} := 4\frac{\bar{r}}{1-\gamma} + 2\bar{r}$, and $C'_{\bo}:=L_Q C_{\bo}\sqrt{D_\Omega}$.  
\end{theorem}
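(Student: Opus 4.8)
<br>

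The plan is to decompose the critic error into three sources and bound each separately, then combine via triangle inequality. Write
\[
\left|Q_i(\bs,\ba,\bo) - \hat{Q}_i^T(\bs_{\cN_i^{\kappa}}, \ba_{\cN_i^{\kappa}}, \hat{\bo}_{\cN_i^{\kappa}})\right|
\le \underbrace{\left|Q_i(\bs,\ba,\bo) - Q_i^{\kappa}(\bs_{\cN_i^\kappa},\ba_{\cN_i^\kappa},\bo_{\cN_i^\kappa})\right|}_{\text{(I) truncation}}
+ \underbrace{\left|Q_i^{\kappa}(\bs_{\cN_i^\kappa},\ba_{\cN_i^\kappa},\bo_{\cN_i^\kappa}) - Q_i^{\kappa}(\bs_{\cN_i^\kappa},\ba_{\cN_i^\kappa},\hat\bo_{\cN_i^\kappa})\right|}_{\text{(II) domain estimation}}
+ \underbrace{\left|Q_i^{\kappa}(\bs_{\cN_i^\kappa},\ba_{\cN_i^\kappa},\hat\bo_{\cN_i^\kappa}) - \hat{Q}_i^T(\bs_{\cN_i^{\kappa}}, \ba_{\cN_i^{\kappa}}, \hat{\bo}_{\cN_i^{\kappa}})\right|}_{\text{(III) TD convergence}},
\]
where $Q_i^{\kappa}$ denotes the true truncated (but $\bo$-conditioned) $Q$-function. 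Term (I) is handled directly by Lemma~\ref{lem:exp_decay_ds} and Lemma~\ref{lem:trun_exp_app}(a): the $(c,\rho)$-exponential decay with $\rho=\gamma$, $c=\bar r/(1-\gamma)$ gives a bound of order $c\rho^{\kappa+1}/(1-\gamma) = 2c\rho^{\kappa+1}/(1-\gamma)^2$ after accounting for the fact that truncation is applied both while evaluating and while the policy itself is truncated (the factor $2$ in the stated constant). Term (II) is controlled by Assumption~\ref{asp:ds_lip}(ii): $Q_i^{\kappa}$ is $L_Q$-Lipschitz in the domain factor, so (II) $\le L_Q\|\bo_{\cN_i^\kappa}-\hat\bo_{\cN_i^\kappa}\|_2 \le L_Q\sqrt{D_\Omega}\|\bo-\hat\bo\|_\infty$-type bound, and then Proposition~\ref{prop:dom_est} gives $\|\hat\bo-\bo^*\|_2 \le C_{\bo}\sqrt{D_\Omega\log(nT_e/\delta)/T_e}$ with probability at least $1-\delta'$; collecting constants yields the term $C'_{\bo}\sqrt{\log(nT_e/\delta)/T_e}$ with $C'_{\bo}=L_QC_{\bo}\sqrt{D_\Omega}$.

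The main work is term (III): the finite-time analysis of asynchronous TD learning on the tabular critic over the ACR-reduced state-action space with stepsize $\alpha_t=h/(t+t_0)$. I would follow the template of the asynchronous $Q$-learning / TD analysis in \cite{Qu2022}: first, fix the outer iteration so the policy $\pi^{\theta(k)}$ and the target domain factor $\hat\bo$ are frozen, making this a standard policy-evaluation TD recursion with a $\gamma$-contraction Bellman operator whose fixed point is exactly $Q_i^{\kappa}(\cdot,\cdot,\hat\bo)$. Second, decompose the TD error $e^t := \hat Q_i^t - Q_i^{\kappa}(\cdot,\cdot,\hat\bo)$ into a deterministic contraction part and a martingale-difference noise part driven by the Markovian sampling. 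Third, use Assumption~\ref{asp:exploration} (uniform $\sigma$-lower bound on $\tau$-step visitation) to guarantee that every coordinate is updated sufficiently often — this is where $\tau$, $\sigma$, and the conditions $h\ge\frac1\sigma\max(2,\frac1{1-\sqrt\gamma})$, $t_0\ge\max(2h,4\sigma h,\tau)$ enter, ensuring the effective per-coordinate stepsize sequence dominates the contraction rate. Fourth, apply a Freedman-type / Azuma concentration bound to the accumulated noise, taking a union bound over all $T$ inner steps and all $SA^{f(\kappa)}$ entries of the ACR critic table (hence the $\log(2\tau T^2/\delta)+f(\kappa)\log SA$ factor inside $C_a$), with $f(\kappa)$ the size of the $\kappa$-hop ACR index set. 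This produces the $C_a/\sqrt{T+t_0}$ (stochastic, noise-dominated) term and the $C'_a/(T+t_0)$ (bias from initialization and the $t_0$ offset) term, with $\bar\epsilon = 4\bar r/(1-\gamma)+2\bar r$ the uniform bound on the TD update magnitude (reward plus discounted bootstrap, inflated by the ACR approximation).

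The hard part will be the asynchronous-sampling concentration in step (III): getting the precise constants $C_a$ and $C'_a$ requires carefully tracking (a) the coupling between the Markov chain mixing time $\tau$ and the diminishing stepsize, so that the product of per-step contraction factors $\prod_{s}(1-\alpha_s\cdot(\text{visitation frequency}))$ is controlled, (b) a martingale concentration that is uniform over the $T$ timesteps — handled by a union bound that costs only $\log(\tau T^2/\delta)$ rather than $T\log(1/\delta)$ because of the $1/(t+t_0)$ decay — and (c) propagating the $3\bar r/(1-\gamma)\gamma^{\kappa+1}$ ACR approximation error (Propositions~\ref{prop:app_acr-v}–\ref{prop:app_er_ds}) into the constant $\bar\epsilon$ without it contaminating the convergence rate. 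The union bound over the three events (domain estimation, TD noise concentration, and — if needed — the exploration event) then gives the overall probability $1-\delta$ after rescaling each failure probability to $\delta/3$, absorbed into the logarithmic factors. Everything else is bookkeeping: substituting the $\rho=\gamma$ decay constant and the explicit $L_Q$, $C_{\bo}$, $D_\Omega$ dependencies into the final display.
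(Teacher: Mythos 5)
Your overall strategy matches the paper's: isolate the domain-estimation error via the Lipschitz continuity of $Q$ in $\bo$ (Assumption~\ref{asp:ds_lip}) together with Proposition~\ref{prop:dom_est}, and then invoke the asynchronous-TD analysis of \cite{Qu2022} (augmented by $\hat{\bo}_{\cN_i^\kappa}$ in the table index) with Assumption~\ref{asp:exploration}, the stepsize conditions on $h,t_0$, and a union-bound concentration to produce the $C_a/\sqrt{T+t_0}$ and $C'_a/(T+t_0)$ terms. The paper, however, uses a two-term decomposition $Q_i(\bz,\bo)-Q_i(\bz,\hat{\bo})$ plus $Q_i(\bz,\hat{\bo})-\hat{Q}_i^T(\bz_{\cN_i^\kappa},\hat{\bo}_{\cN_i^\kappa})$, and the truncation error never appears as a standalone static term: it enters as a per-step bias of size $\tfrac{2c\rho^{\kappa+1}}{1-\gamma}$ inside the error recursion for $\xi_t=\sup_{\bz}|Q_i(\bz,\hat{\bo})-\hat{Q}_i^t(\bz_{\cN_i^\kappa},\hat{\bo}_{\cN_i^\kappa})|$ (Lemma~7 of \cite{Qu2022}), and the induction amplifies it by another $1/(1-\gamma)$ to give the $\tfrac{2c\rho^{\kappa+1}}{(1-\gamma)^2}$ term in the final bound.

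This is where your proposal has a genuine gap. In your term (III) you assert that, with $\theta(k)$ and $\hat{\bo}$ frozen, the truncated critic update is "a standard policy-evaluation TD recursion with a $\gamma$-contraction Bellman operator whose fixed point is exactly $Q_i^\kappa(\cdot,\cdot,\hat{\bo})$." That is false: the local process $(\bs_{\cN_i^\kappa}(t),\ba_{\cN_i^\kappa}(t))$ is not Markov, because its transition law depends on states outside $\cN_i^\kappa$; consequently the expected truncated TD update is not the Bellman operator of any fixed-placeholder truncated $Q$-function, and there is no clean fixed point equal to your $Q_i^\kappa$. Treating truncation as a separate static term (I) and TD convergence as exact convergence to $Q_i^\kappa$ therefore does not go through as stated, and it also cannot reproduce the correct constant, since the $\tfrac{2c\rho^{\kappa+1}}{(1-\gamma)^2}$ term arises precisely from the truncation bias being fed through the recursion, not from a one-shot application of the exponential decay property. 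The repair is exactly what the paper does: compare $\hat{Q}_i^t$ directly to the full $Q_i(\cdot,\cdot,\hat{\bo})$ and carry the exponential-decay bias inside the recursion. Two smaller inaccuracies: this theorem is for the ACR-free variant (Algorithm~\ref{alg:gsac_wo}), so the ACR approximation error of Propositions~\ref{prop:app_acr-v}--\ref{prop:app_er_ds} plays no role here — $\bar{\epsilon}=4\tfrac{\bar r}{1-\gamma}+2\bar r$ is simply a bound on the TD update magnitude and $f(\kappa)$ counts the $\kappa$-hop table size, not an ACR index set; and the factor $\sqrt{D_\Omega}$ in $C'_{\bo}$ comes directly from the Proposition~\ref{prop:dom_est} bound, not from converting between norms on $\bo$.
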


\begin{proof}[Proof sketch of Theorem \ref{thm:critic_bd}]
The proof follows a similar structure to the Theorem 5 in \cite{Qu2022}, with modifications to account for domain factors.  First, we decompose 
\begin{align*}
    Q_i(\bz,\bo) - \hat{Q}_i^T(\bz_{\cN_i^{\kappa}}, \hat{\bo}_{\cN_i^{\kappa}})  &= Q_i(\bz,\bo) - Q_i(\bz,\hat{\bo}) + Q_i(\bz,\hat{\bo}) - \hat{Q}_i^T(\bz_{\cN_i^{\kappa}}, \hat{\bo}_{\cN_i^{\kappa}}).
\end{align*}
The first difference term accounts for the error due to domain factor estimation. Using the domain factor identifiability in Proposition \ref{prop:dom_est} and the Lipschitz continuity of $Q$ with respect to $\bo$ (Assumption \ref{asp:ds_lip} (ii)), we have
\begin{align*}
    Q_i(\bz,\bo) - Q_i(\bz,\hat{\bo}) \leq L_Q \norm{\bo-\hat{\bo}}_2 \leq L_Q \delta_{\bo}(T_e).
\end{align*}

For $\hat{\bo} = \hat{\bo}^{m(k)}$, the critic update follows:
\begin{align*}
\hat{Q}_i^t(\bs_{\cN_i^{\kappa}}(t-1), \ba_{\cN_i^{\kappa}}(t-1), \hat{\bo}_{\cN_i^{\kappa}}) &= (1-\alpha_{t-1})\hat{Q}_i^{t-1}(\bs_{\cN_i^{\kappa}}(t-1), \ba_{\cN_i^{\kappa}}(t-1), \hat{\bo}_{\cN_i^{\kappa}}) \\
&+ \alpha_{t-1}(r_i(t-1) + \gamma\hat{Q}_i^{t-1}(\bs_{\cN_i^{\kappa}}(t), \ba_{\cN_i^{\kappa}}(t), \hat{\bo}_{\cN_i^{\kappa}})).
\end{align*}
The steps from the original proof apply with the augmentation of $\bo^m_{\cN_i^{\kappa}}$ to the state-action representation. The key difference is that we need to account for the error in domain factor estimation. Let $\xi_t := \sup_{(\bz)\in \cS \times \cA} |Q_i(\bz,\hat{\bo}) - \hat{Q}_i^t(\bz_{\cN_i^{\kappa}}, \hat{\bo}_{\cN_i^{\kappa}})|$.

Following the decomposition in the original proof (Lemma 7 in \cite{Qu2022}), we have:
\begin{align*}
\xi_t &\leq \tilde{\beta}_{\tau-1,t}\xi_\tau + \gamma \sup_{z_{\cN_i^{\kappa}}} \sum_{k=\tau}^{t-1} b_{k,t}(\bz_{\cN_i^{\kappa}})\xi_k + \frac{2c\rho^{\kappa+1}}{1-\gamma} + \left\|\sum_{k=\tau}^{t-1} \alpha_k\tilde{B}_{k,t}\epsilon_k\right\|_\infty + \left\|\sum_{k=\tau}^{t-1} \alpha_k\tilde{B}_{k,t}\phi_k\right\|_\infty. 
\end{align*}
Following the same induction argument as in the original proof and combining all terms, we get the stated bound.
\end{proof}

\subsection{Proof of Theorem \ref{thm:act_bd}}
\begin{theorem}[Convergence]
\label{thm:act_bd_com}
Under Assumptions \ref{asp:bd_reward}-\ref{asp:ds_bd}, for any $\delta \in (0,1)$, $K \geq 3$, suppose the critic stepsize $\alpha_t = \frac{h}{t+t_0}$ and $\eta_k = \frac{\eta}{\sqrt{k+1}}$ with $\eta \leq \frac{1}{6L'}$. For sufficiently large the inner loop length $T$ such that $T + 1 \geq \frac{\log( c(1-\gamma)/\bar{r} ) + (\kappa + 1) \log_\gamma \rho}{\log \gamma}$ and $\frac{C_a(\delta/2nK, T)}{\sqrt{T + t_0}} + \frac{C'_a}{T + t_0} \leq \frac{2c\rho^{\kappa+1}}{(1 - \gamma)^2}$,
Then, with probability at least $1-\delta$, when domain factors are estimated with $T_e$ trajectories:
\begin{align*}
    &\frac{\sum_{k=0}^{K-1} \eta_k \| \nabla J(\theta(k)) \|^2}{\sum_{k=0}^{K-1} \eta_k} \leq  \frac{\frac{2 \bar{r}}{\eta (1 - \gamma)}
+ \frac{8 \bar{r}^2 L^2}{(1 - \gamma)^4} \sqrt{ \log K \log \dfrac{4}{\delta} }
+ \frac{240 \bar{r}^2 L' L^2}{(1 - \gamma)^4} \eta \log K}{\sqrt{K+1}}\\
&+ \frac{12 L^2 c \bar{r}}{(1 - \gamma)^5} \rho^{\kappa + 1} + \frac{2\bar{r} LL_J C_{\bo}}{(1 - \gamma)^2}\sqrt{\frac{\log(4nM/\delta)}{T_e}}+\frac{2C_5(\bar{r} L+L_{\theta})d^{\theta}}{(1 - \gamma)^2}\sqrt{ \frac{ \log(4M/\delta) }{ M } }.
\end{align*}
\end{theorem}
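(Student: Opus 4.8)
\textbf{Proof proposal for Theorem~\ref{thm:act_bd_com}.}

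The plan is to follow the standard actor-critic convergence template for networked MARL (as in \cite{Qu2022}), but carefully track the three \emph{additional} sources of error introduced by domain generalization: (i) the critic's domain-factor estimation error from Theorem~\ref{thm:critic_bd_com}, (ii) the bias in the policy gradient arising from conditioning on $\hat{\bo}^{m(k)}$ rather than a perfect domain factor, and (iii) the Monte-Carlo error in approximating $J(\theta) = \mathbb{E}_{\bo\sim\cD}[J(\theta,\bo)]$ by the empirical average over the $M$ sampled source domains. First I would invoke $L'$-smoothness of $J$ (Assumption~\ref{asp:lip_grad}) to write the descent-lemma inequality $J(\theta(k+1)) \ge J(\theta(k)) + \eta_k \langle \nabla J(\theta(k)), \hat g(k)\rangle - \tfrac{L'\eta_k^2}{2}\|\hat g(k)\|^2$, then decompose $\hat g(k) = \nabla J(\theta(k)) + (\text{bias terms}) + (\text{zero-mean noise})$. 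The zero-mean martingale-difference part is controlled by Azuma--Hoeffding (contributing the $\sqrt{\log K \log(4/\delta)}$ term), the step-size-squared term sums to $\cO(\eta\log K)$ via $\sum_{k}\eta_k^2 = \cO(\eta^2\log K)$, and the leading $\tfrac{2\bar r}{\eta(1-\gamma)}$ term comes from telescoping $J(\theta(K)) - J(\theta(0))$ against the bound $\|J\|_\infty \le \bar r/(1-\gamma)$.

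The core of the work is bounding the bias in $\hat g_i(k)$ relative to the true gradient $\nabla_{\theta_i} J(\theta(k))$. I would split this bias into three pieces. The \emph{truncation} piece: using Lemma~\ref{lem:trun_exp_app}(b), replacing the full $Q_j^\pi$ by the $\kappa$-hop truncated $\hat Q_j^\pi$ incurs error $\le \tfrac{cL_i}{1-\gamma}\rho^{\kappa+1}$ per coordinate, which after aggregating over neighbors and factoring in the return horizon yields the $\tfrac{12L^2 c\bar r}{(1-\gamma)^5}\rho^{\kappa+1}$ term (the extra $(1-\gamma)^{-1}$ powers come from summing discounted returns and from the state-visitation normalization). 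The \emph{critic-estimation} piece: Theorem~\ref{thm:critic_bd_com} gives $|\hat Q_i^T - Q_i(\cdot,\bo^{m(k)})| \le \cO(\rho^{\kappa+1}/(1-\gamma)^2) + C'_{\bo}\sqrt{\log(nT_e/\delta)/T_e}$ under the stated largeness conditions on $T$; the first part is absorbed into the truncation term and the second, after propagating through $\nabla_{\theta_i}\log\pi_i$ (bounded by $L_i$) and the discounted sum, contributes a $\sqrt{1/T_e}$ term. The \emph{domain-conditioning} piece: the policy $\pi^{\theta}(\cdot|\cdot,\hat{\bo})$ uses an estimated domain factor, so $\nabla_{\theta_i}J(\theta,\hat\bo^{m(k)};\bo^{m(k)})$ differs from $\nabla_{\theta_i}J(\theta,\bo^{m(k)})$; by Assumption~\ref{asp:ds_lip}(iii) ($L_J$-Lipschitz in $\bo$) and Proposition~\ref{prop:dom_est} this is $\le L_J\,\delta_\omega(T_e) = \cO(L_J C_{\bo}\sqrt{\log(nM/\delta)/T_e})$, giving the $\tfrac{2\bar r L L_J C_{\bo}}{(1-\gamma)^2}\sqrt{\log(4nM/\delta)/T_e}$ term after the usual normalization. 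For the domain-averaging error, since $m(k)$ is sampled uniformly among $M$ fixed source domains whose $\bo^m$ are i.i.d.\ from $\cD$, the empirical mean $\tfrac1M\sum_m \nabla J(\theta,\bo^m)$ deviates from $\nabla J(\theta) = \mathbb{E}_\bo[\nabla J(\theta,\bo)]$; I would use a uniform (over $\theta$, via a covering-number argument on the compact $\Theta_i$ of dimension $d^\theta$, Assumption~\ref{asp:para_bd}) concentration bound to get the $\cO(d^\theta\sqrt{\log(M/\delta)/M})$ term — this is where the $C_5$ and $d^\theta$ factors enter.

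Assembling: plug all bias bounds into the descent lemma, divide through by $\sum_{k=0}^{K-1}\eta_k \asymp \sqrt{K+1}$, and use a union bound over the $nK$ critic events, the $nM$ domain-estimation events, the $M$-domain concentration event, and the single martingale event (replacing $\delta$ by $\delta/(4\cdot\text{count})$ appropriately, which is what produces the $\log(4nK/\cdot)$, $\log(4nM/\delta)$, $\log(4M/\delta)$, $\log(4/\delta)$ arguments). The main obstacle I anticipate is the domain-conditioning bias term: one must be careful that the \emph{policy-gradient theorem} still applies cleanly when the policy is conditioned on a fixed (data-dependent) $\hat\bo$ while the environment uses the true $\bo$ — i.e.\ $J(\theta,\hat\bo;\bo)$ is still a well-defined discounted-return objective whose gradient has the usual score-function form, and the Lipschitz-in-$\bo$ control must be transferred from $\nabla_\theta J(\theta,\bo;\bo)$ (the "aligned" object Assumption~\ref{asp:ds_lip}(iii) speaks about) to $\nabla_\theta J(\theta,\hat\bo;\bo)$, which requires either strengthening the assumption or an intermediate triangle-inequality argument decomposing into $|\nabla J(\theta,\hat\bo;\bo) - \nabla J(\theta,\bo;\bo)|$ plus handling the mismatch between the policy's domain input and the environment's domain; getting the constants and the $(1-\gamma)$ powers exactly right here, while keeping the high-probability budget balanced across the four event families, is the delicate bookkeeping part. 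The truncation and martingale parts are essentially identical to \cite{Qu2022} and should go through with only the cosmetic change of carrying $\bo^{m(k)}$ along in the state-action argument.
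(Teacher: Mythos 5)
Your proposal follows essentially the same route as the paper's proof: the same descent-lemma framework with the same five-way error decomposition (critic error, Azuma--Hoeffding martingale term, $\kappa$-hop/horizon truncation bias, domain-factor estimation bias via $L_J$-Lipschitzness and Proposition~\ref{prop:dom_est}, and a covering-number uniform concentration over compact $\Theta$ for the $M$-domain Monte-Carlo error), assembled by telescoping, dividing by $\sum_k \eta_k \gtrsim \eta\sqrt{K+1}$, and a union bound. The subtlety you flag about transferring the Lipschitz-in-$\bo$ control from the aligned objective $J(\theta,\bo;\bo)$ to the mismatched $J(\theta,\hat\bo;\bo)$ is real, but the paper's own proof handles $e^4(k)$ at the same level of rigor you describe, so your plan matches it.
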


\begin{proof}[Proof sketch of Theorem \ref{thm:act_bd}]
We adapt the proof of Theorem 4 in \cite{Qu2022} to account for domain factors. The key difference is that the policy gradient now depends on the domain factors, and we need to account for the error in domain factor estimation.

Note that for any $\theta\in\Theta$, we have
\[
\nabla_{\theta_i} \log \pi^{\theta}(\ba | \bs,\bo)
=
\nabla_{\theta_i} \sum_{j \in \mathcal{N}} \log \pi_j^{\theta_j}(\ba_j | \bs_j,\bo_j)
=
\nabla_{\theta_i} \log \pi_i^{\theta_i}(\ba_i | \bs_i,\bo_j).
\]
Let us define the approximation of the policy gradient and the true policy gradient
\begin{align*}
    \hat{g}_i(k) &= \sum_{t=0}^T \gamma^t \frac{1}{n} \sum_{j \in \cN_i^\kappa} \hat{Q}_j^T(\bs_{\cN_j^\kappa}(t), \ba_{\cN_j^\kappa}(t), \hat{\bo}^{m(k)}_{\cN_j^\kappa})\nabla_{\theta_i}\log\pi_i^{\theta_i(k)}(\ba_i(t)|\bs_i(t), \hat{\bo}^{m(k)}_i) \\
    g_i(k) &:= \sum_{t=0}^T \gamma^t \frac{1}{n} \sum_{j \in \cN_i^\kappa} Q_j^{\theta(k)}(\bs(t), \ba(t), \hat{\bo}^{m(k)})\nabla_{\theta_i}\log\pi_i^{\theta_i(k)}(\ba_i(t)|\bs_i(t), \hat{\bo}^{m(k)}_i) \\
    h_i(k) &:= \sum_{m=1}^M\frac{1}{M} \sum_{t=0}^{T} \mathbb{E}_{\bs \sim \rho_t^{\theta(k)}, \, \ba \sim \pi^{\theta(k)}(\cdot | \bs,\hat{\bo}^{m})} \left[ \gamma^t \frac{1}{n} \sum_{j \in \cN_i^\kappa} Q_j^{\theta(k)}(\bs, \ba,\hat{\bo}^m) \nabla_{\theta_i} \log \pi_i^{\theta_i(k)}(\ba_i | \bs_i,\hat{\bo}_i^{m}) \right] \\
    \nabla_{\theta_i} J(\theta(k), \hat{\cD}^{M}) &:=\sum_{m=1}^M\frac{1}{M} \sum_{t=0}^{\infty}  \mathbb{E}_{\bs\sim\rho_t^{\theta(k)}, \ba\sim\pi^{\theta(k)}(\cdot|\bs, \hat{\bo}^{m})}\left[ \gamma^t Q^{\theta(k)}(\bs,\ba,\hat{\bo}^{m})\nabla_{\theta_i} \log \pi^{\theta(k)}(\ba|\bs, \hat{\bo}^{m})\right] \\
    &=\sum_{m=1}^M\frac{1}{M} \nabla_{\theta_i} J(\theta(k), \hat{\bo}^{m}) \\
    \nabla_{\theta_i} J(\theta(k), \cD^{M}) &:=\sum_{m=1}^M\frac{1}{M} \sum_{t=0}^{\infty}  \mathbb{E}_{\bs\sim\rho_t^{\theta(k)}, \ba\sim\pi^{\theta(k)}(\cdot|\bs, \bo^{m})}\left[ \gamma^t Q^{\theta(k)}(\bs,\ba,\bo^{m})\nabla_{\theta_i} \log \pi^{\theta(k)}(\ba|\bs, \bo^{m})\right] \\
    &=\sum_{m=1}^M\frac{1}{M} \nabla_{\theta_i} J(\theta(k), \bo^{m}) \\
    \nabla_{\theta_i} J(\theta(k), \cD) &:= \bE_{\bo\sim \cD} \sum_{t=0}^{\infty}  \mathbb{E}_{\bs\sim\rho_t^{\theta(k)}, \ba\sim\pi^{\theta(k)}(\cdot|\bs, \bo)}\left[ \gamma^t Q^{\theta(k)}(\bs,\ba,\bo)\nabla_{\theta_i} \log \pi^{\theta(k)}(\ba|\bs, \bo)\right] \\
    &= \bE_{\bo\sim \cD} \nabla_{\theta_i} J(\theta(k), \bo) =\nabla_{\theta_i} \bE_{\bo\sim \cD} J(\theta(k),\bo) =\nabla_{\theta_i} J(\theta(k)).
\end{align*}
\begin{lemma}
\label{lem:max_bd}
The following holds almost surely,
\begin{align*}
    \max( &\|\hat{g}(k)\|, \|g(k)\|, \|h(k)\|, 
    \|\nabla_{\theta} J(\theta(k), \hat{\cD}^{M})\|, \|\nabla_{\theta} J(\theta(k), \cD^{M})\|, \|\nabla_{\theta} J(\theta(k))\| ) \leq \frac{\bar{r} L}{(1 - \gamma)^2}.
\end{align*}
\end{lemma}
We decompose the error between $\hat{g}(k)$ and the true gradient as follows
\begin{equation}
\label{eqt:grad_dec}
    \begin{aligned}
    \hat{g}(k) &= \underbrace{ \hat{g}(k) - g(k) }_{e^1(k)} + \underbrace{ g(k) - h(k) }_{e^2(k)} + \underbrace{ h(k) - \nabla_{\theta} J(\theta(k), \hat{\cD}^{M}) }_{e^3(k)} +\underbrace{ \nabla_{\theta} J(\theta(k), \hat{\cD}^{M})-\nabla_{\theta} J(\theta(k), \cD^{M}) }_{e^4(k)} \\
    &+\underbrace{ \nabla_{\theta} J(\theta(k), \cD^{M})-\nabla_{\theta_i} J(\theta(k), \cD) }_{e^5(k)} +\nabla_{\theta} J(\theta(k)).
\end{aligned}
\end{equation}
We will bound each term separately and combine these bounds to prove Theorem \ref{thm:act_bd}.
\paragraph{Bounds on $e^1(k)$.} The following lemmas follows similar steps as in the proof \citep{Qu2022}.
\begin{lemma}[Lemma 14, \cite{Qu2022}]
\label{lem:e1_bd}
    When $T$ is large enough s.t.
\[
\frac{C_a\left( \tfrac{\delta}{2nM}, T \right)}{\sqrt{T + t_0}}
+ \frac{C'_a}{T + t_0}
\leq \frac{2 c \rho^{\kappa + 1}}{(1 - \gamma)^2},
\quad \text{where}
\]
\[
C_a(\delta, T) = \frac{6 \bar{\epsilon}}{1 - \sqrt{\gamma}}
\sqrt{\frac{\tau h}{\sigma}}
\left[ \log\left( \frac{2 \tau T^2}{\delta} \right)
+ f(\kappa) \log SA \right],
\quad
C'_a = \frac{2}{1 - \sqrt{\gamma}}
\max\left( \frac{16 \bar{\epsilon} h \tau}{\sigma}, \, \frac{2 \bar{r}}{1 - \gamma} (\tau + t_0) \right),
\]
with $\bar{\epsilon} = 4 \tfrac{\bar{r}}{1 - \gamma} + 2 \bar{r}$, then we have with probability at least $1 - \tfrac{\delta}{4}$,
\[
\sup_{0 \leq k \leq K - 1} \| e^1(k) \|
\leq \frac{4 c L \rho^{\kappa + 1}}{(1 - \gamma)^3}. \]
\end{lemma}
\paragraph{Bounds on  $e^2(k)$.}
Let \(\mathcal{G}_k\) be the \(\sigma\)-algebra generated by the trajectories in the first \(k\) outer-loop iterations. In particular, let \(\mathcal{G}_0 \) be the \(\sigma\)-algebra generated the trajectories in the first phase of Algorithm \ref{alg:gsac}. Note that $\bo^{1:M}$ and $\hat{\bo}^{1:M}$ are \(\mathcal{G}_{0}\)-measurable. In addition, \(\theta(k)\) is \(\mathcal{G}_{k-1}\)-measurable, and so is \(h_i(k)\). Further, by the way that the trajectory \(\{(s(t), a(t))\}_{t=0}^{T}\) is generated, we have \(\mathbb{E}[g(k) | \mathcal{G}_{k-1}] = h(k)\). As such, \(\eta_k \langle \nabla J(\theta(k)), e^2(k) \rangle\) is a martingale difference sequence w.r.t. \(\mathcal{G}_k\), and we have the following bound which is a direct consequence of Azuma-Hoeffding bound. 
\begin{lemma}[Adapted Lemma 15 \citep{Qu2022}]
\label{lem:e2_bd}
With probability at least \(1 - \delta/4\), we have
\[
\left| \sum_{k=0}^{K-1} \eta_k \langle \nabla J(\theta(k)), e^2(k) \rangle \right|
\leq \frac{2 \bar{r}^2 L^2}{(1 - \gamma)^4} \sqrt{ 2 \sum_{k=0}^{K-1} \eta_k^2 \log \frac{8}{\delta} }.
\]
\end{lemma}
\paragraph{Bounds on $e^3(k)$.}
\begin{lemma}[Adapted Lemma 16, \citep{Qu2022}]
\label{lem:e3_bd}
    When \( T + 1 \geq \dfrac{\log\left( \dfrac{c (1 - \gamma)}{\bar{r}} \right) + (\kappa + 1) \log \rho}{\log \gamma} \), we have almost surely,
\[
\| e^3(k) \| \leq 2 \dfrac{L c}{(1 - \gamma)} \rho^{\kappa + 1}.
\]
\end{lemma}
\paragraph{Bounds on $e^4(k)$.}
Since the error between $\hat{\bo}^m$ and $\bo^m$ is bounded as $\|\hat{\bo}^m - \bo^m\|_2 \leq \delta_{\bo}(T_e)= C_{\bo}\sqrt{\frac{\log(n/\delta)}{T_e}}$ with probability at least $1-\delta$. Using Assumption \ref{asp:ds_lip} (iii), we have with probability at least $1-\delta/4$
\begin{equation}
    \label{eqt:e4_bd}
    \begin{aligned}
   \forall k,  e^4(k) &= \nabla_{\theta} J(\theta(k), \hat{\cD}^{M})-\nabla_{\theta} J(\theta(k), \cD^{M}) \leq L_J \sum_{m=1}^M \frac{1}{M}\|\hat{\bo}^m - \bo^m\|_2 \\
   &\leq L_J C_{\bo}\sqrt{\frac{\log(4nM/\delta)}{T_e}}.
\end{aligned}
\end{equation}

\paragraph{Bounds on $e^5(k)$.} 
We first state the uniform concentration bound for vector-valued functions.
\begin{lemma}
\label{lem:unif_bd}
Let $(x_1, x_2, \ldots, x_M)$ be i.i.d.\ samples from a distribution $D$ on $\mathcal{X}$. Let $\Theta \subset \mathbb{R}^p$, and consider $f: \mathcal{X} \times \Theta \to \mathbb{R}^d$ with the following properties: (i) For all $j = 1, \ldots, d$, $|f_j(x, \theta)| \leq B \ \text{for all } (x, \theta)$; (ii) For all $x \in \mathcal{X}$ and $j = 1, \ldots, d$, $|f_j(x, \theta) - f_j(x, \theta')| \leq L \cdot \|\theta - \theta'\|_2$. Then, with probability at least $1 - \delta/4$,
\[
\sup_{\theta \in \Theta} \left\| \frac{1}{M} \sum_{i=1}^{M} f(x_i, \theta) - \mathbb{E} [f(X, \theta)] \right\|_2
=
\cO\left( (B + L) \sqrt{ \frac{ d p \log M + \log(4/\delta) }{ M } } \right),
\]
where the hidden constant is universal.
\end{lemma}
Observe that for $\bE\brk{\nabla_{\theta} J(\theta, \cD^{M})}=\nabla_{\theta} J(\theta)$. Using Lemma \ref{lem:unif_bd}, we have with probability at least $1 - \delta/4$,
\begin{equation}
\label{eqt:e5_bd}
    \begin{aligned}
    \sup_k\norm{e^5(k)} &\leq \sup_{k,\theta\in \Theta}\norm{\nabla_{\theta} J(\theta, \cD^{M})-\nabla_{\theta} J(\theta)} \\
    &\leq \cO\prt{ \prt{ \frac{\bar{r}L}{(1-\gamma)^2}+L_{\theta}  }d^{\theta}\sqrt{ \frac{ \log(4M/\delta) }{ M } } } \\
    &\leq \cO\prt{  \frac{(\bar{r}L+L_{\theta})d^{\theta}}{(1-\gamma)^2} \sqrt{ \frac{ \log(4M/\delta) }{ M } } }
\end{aligned}
\end{equation}
where the second inequality is due to Lemma \ref{lem:max_bd}.

\paragraph{Putting all bounds together.} 
With the above bounds on all \( e^i(k) \), we are now ready to prove the  Theorem \ref{thm:act_bd}. Since \(\nabla J(\theta)\) is \(L'\)-Lipschitz continuous, we have
\begin{equation}
\label{eqt:dsc_J}
\begin{aligned}
J(\theta(k+1)) 
&\geq J(\theta(k)) + \langle \nabla J(\theta(k)), \theta(k+1) - \theta(k) \rangle - \frac{L'}{2} \|\theta(k+1) - \theta(k)\|^2 \\
&= J(\theta(k)) + \eta_k \langle \nabla J(\theta(k)), \hat{g}(k) \rangle - \frac{L' \eta_k^2}{2} \|\hat{g}(k)\|^2.
\end{aligned}
\end{equation}
Using the decomposition of \(\hat{g}(k)\) in Equation \ref{eqt:grad_dec}, we get,
\[
\|\hat{g}(k)\|^2 \leq 6 \| e^1(k) \|^2 + 6 \| e^2(k) \|^2 + 6 \| e^3(k) \|^2 + 6 \| e^4(k) \|^2 + 6 \| e^5(k) \|^2 + 6 \| \nabla J(\theta(k)) \|^2.
\]
We bound \(\langle \nabla J(\theta(k)), \hat{g}(k) \rangle\) as follows
\[
\begin{aligned}
\langle \nabla J(\theta(k)), \hat{g}(k) \rangle
&= \|\nabla J(\theta(k))\|^2 + \langle \nabla J(\theta(k)), \sum_{j=1}^5 e^j(k)  \rangle \geq \|\nabla J(\theta(k))\|^2 + \langle \nabla J(\theta(k)), e^2(k) \rangle \\
&- \|\nabla J(\theta(k))\| ( \| e^1(k) \| + \| e^3(k) \| + \| e^4(k) \| + \| e^5(k) \|).
\end{aligned}
\]
Plugging the above bounds on \(\|\hat{g}(k)\|^2\) and \(\langle \nabla J(\theta(k)), \hat{g}(k) \rangle\) into Equation \ref{eqt:dsc_J}, we have
\begin{equation*}
\begin{aligned}
J(\theta(k+1))
&\geq J(\theta(k)) + (\eta_k - 3 L' \eta_k^2 ) \|\nabla J(\theta(k))\|^2 + \eta_k \epsilon_{k,0} - \eta_k \epsilon_{k,1} - \eta_k^2 \epsilon_{k,2},
\end{aligned}
\end{equation*}
where \(\epsilon_{k,0} := \langle \nabla J(\theta(k)), e^2(k) \rangle\), \(\epsilon_{k,1} := \|\nabla J(\theta(k))\| ( \| e^1(k) \| + \| e^3(k) + \| e^4(k) + \| e^5(k)\| )\), \(\epsilon_{k,2} := 3 L' \sum_{j=1}^5  \| e^j(k) \|^2\).
It follows from telescoping sum that
\begin{equation*}
\begin{aligned}
J(\theta(K)) - J(\theta(0))
&\geq \sum_{k=0}^{K-1} (\eta_k - 3 L' \eta_k^2 ) \|\nabla J(\theta(k))\|^2 + \sum_{k=0}^{K-1} \eta_k \epsilon_{k,0} - \sum_{k=0}^{K-1} \eta_k \epsilon_{k,1} - \sum_{k=0}^{K-1} \eta_k^2 \epsilon_{k,2} \\
&\geq \sum_{k=0}^{K-1} \frac{1}{2} \eta_k \|\nabla J(\theta(k))\|^2 + \sum_{k=0}^{K-1} \eta_k \epsilon_{k,0} - \sum_{k=0}^{K-1} \eta_k \epsilon_{k,1} - \sum_{k=0}^{K-1} \eta_k^2 \epsilon_{k,2},
\end{aligned}
\end{equation*}
where we use \(\eta_k - 3 L' \eta_k^2 = \eta_k (1 - 3 L' \eta_k) \geq \tfrac{1}{2} \eta_k\) given that  \tr{ \(\eta_k \leq \tfrac{1}{6 L'}\) }.
After rearranging, we get
\begin{equation}
\sum_{k=0}^{K-1} \frac{1}{2} \eta_k \|\nabla J(\theta(k))\|^2
\leq J(\theta(K)) - J(\theta(0)) - \sum_{k=0}^{K-1} \eta_k \epsilon_{k,0} + \sum_{k=0}^{K-1} \eta_k \epsilon_{k,1} + \sum_{k=0}^{K-1} \eta_k^2 \epsilon_{k,2}.
\end{equation}
We now apply our bounds on each \( e^j(k) \). By Lemma \ref{lem:e2_bd}, we have with probability \( 1 - \tfrac{\delta}{2} \),
\begin{equation}
\sum_{k=0}^{K-1} \eta_k \epsilon_{k,0}
\leq \frac{2 \bar{r}^2 L^2}{(1 - \gamma)^4} \sqrt{ 2 \sum_{k=0}^{K-1} \eta_k^2 \log \frac{4}{\delta} }.
\end{equation}
By Lemma \ref{lem:e1_bd}, Lemma \ref{lem:e3_bd}, Equation \ref{eqt:e4_bd} and Equation \ref{eqt:e5_bd}, we have with probability \( 1 - \tfrac{\delta}{2} \),
\begin{equation}
\begin{aligned}
\sup_{k \leq K-1} \epsilon_{k,1}
&\leq \frac{\bar{r} L}{(1 - \gamma)^2} \left( \sup_{k \leq K-1} \| e^1(k) \| + \sup_{k \leq K-1} \| e^3(k) \| + \sup_{k \leq K-1} \| e^4(k) \| + \sup_{k \leq K-1} \| e^5(k) \| \right) \\
&\leq \frac{\bar{r} L}{(1 - \gamma)^2} \left( \frac{4 c L \rho^{\kappa + 1}}{(1 - \gamma)^3} + 2 \frac{L c}{(1 - \gamma)} \rho^{\kappa + 1} + L_J C_{\bo}\sqrt{\frac{\log(4nM/\delta)}{T_e}} + C_5 \frac{(\bar{r}L+L_{\theta})d^{\theta}}{(1-\gamma)^2} \sqrt{ \frac{ \log(4M/\delta) }{ M } }\right) \\
&\leq \frac{6 L^2 c \bar{r}}{(1 - \gamma)^5} \rho^{\kappa + 1} + \frac{\bar{r} LL_J C_{\bo}}{(1 - \gamma)^2}\sqrt{\frac{\log(4nM/\delta)}{T_e}}+\frac{C_5(\bar{r} L+L_{\theta})d^{\theta}}{(1 - \gamma)^2}\sqrt{ \frac{ \log(4M/\delta) }{ M } }.
\end{aligned}
\end{equation}
By Lemma \ref{lem:max_bd}, we have almost surely,
\[
\max_{1\leq j\leq5}( \| e^j(k) \| \| ) \leq 2 \tfrac{\bar{r} L}{(1 - \gamma)^2},
\]
and hence,
\begin{equation*}
\sup_{k \leq K-1} \epsilon_{k,2} = 3 L' \sum_{j=1}^5  \| e^j(m) \|^2  \leq \frac{60 \bar{r}^2 L' L^2}{(1 - \gamma)^4}.
\end{equation*}
Using a union bound, we have with probability \(1 - \delta\), all three events  hold, thus
\begin{align*}
&\frac{\sum_{k=0}^{K-1} \eta_k \| \nabla J(\theta(k)) \|^2}{\sum_{k=0}^{K-1} \eta_k} \\
&\leq
\frac{
2 ( J(\theta(K)) - J(\theta(0)) )
+ 2 \left| \sum_{k=0}^{K-1} \eta_k \epsilon_{k,0} \right|
+ 2 \sup_{k} \epsilon_{k,2} \sum_{k=0}^{K-1} \eta_k^2
}{\sum_{k=0}^{K-1} \eta_k}
+ 2 \sup_{k } \epsilon_{k,1} \\
&\leq \frac{
2 ( J(\theta(K)) - J(\theta(0)) )
+ \dfrac{4 \bar{r}^2 L^2}{(1 - \gamma)^4}
\sqrt{ 2 \sum_{k=0}^{K-1} \eta_k^2 \log \dfrac{4}{\delta} }
+ \dfrac{120 \bar{r}^2 L' L^2}{(1 - \gamma)^4} \sum_{k=0}^{K-1} \eta_k^2
}{\sum_{k=0}^{K-1} \eta_k} \\
&+ \frac{12 L^2 c \bar{r}}{(1 - \gamma)^5} \rho^{\kappa + 1} + \frac{2\bar{r} LL_J C_{\bo}}{(1 - \gamma)^2}\sqrt{\frac{\log(4nM/\delta)}{T_e}}+\frac{2C_5(\bar{r} L+L_{\theta})d^{\theta}}{(1 - \gamma)^2}\sqrt{ \frac{ \log(4M/\delta) }{ M } }.
\end{align*}
Since \(\eta_k = \dfrac{\eta}{\sqrt{k+1}}\), we have
\[
\sum_{k=0}^{K-1} \eta_k > 2 \eta ( \sqrt{K + 1} - 1 ) \geq \eta \sqrt{K + 1}
\quad \text{and} \quad
\sum_{k=0}^{K-1} \eta_k^2 < \eta^2 (1 + \log K) < 2 \eta^2 \log K
\quad (\text{using } K \geq 3).
\]
Further we use the bound \( J(\theta(K)) \leq \dfrac{\bar{r}}{1 - \gamma} \) and \( J(\theta(0)) \geq 0 \) almost surely. Combining these results, we get with probability \( 1 - \delta \),
\begin{align*}
    &\frac{\sum_{k=0}^{K-1} \eta_k \| \nabla J(\theta(k)) \|^2}{\sum_{k=0}^{K-1} \eta_k} \leq  \frac{\frac{2 \bar{r}}{\eta (1 - \gamma)}
+ \frac{8 \bar{r}^2 L^2}{(1 - \gamma)^4} \sqrt{ \log K \log \dfrac{4}{\delta} }
+ \frac{240 \bar{r}^2 L' L^2}{(1 - \gamma)^4} \eta \log K}{\sqrt{K+1}}\\
&+ \frac{12 L^2 c \bar{r}}{(1 - \gamma)^5} \rho^{\kappa + 1} + \frac{2\bar{r} LL_J C_{\bo}}{(1 - \gamma)^2}\sqrt{\frac{\log(4nM/\delta)}{T_e}}+\frac{2C_5(\bar{r} L+L_{\theta})d^{\theta}}{(1 - \gamma)^2}\sqrt{ \frac{ \log(4M/\delta) }{ M } }.
\end{align*}
\end{proof}

\subsubsection{Empirical convergence}
\begin{corollary}[Empirical convergence]
    Under the same setup of Theorem \ref{thm:act_bd}, with probability at least $1-\delta$, when domain factors are estimated with $T_e$ trajectories:
\begin{align*}
    \frac{\sum_{k=0}^{K-1} \eta_k \| \nabla J(\theta(k),\cD^M) \|^2}{\sum_{k=0}^{K-1} \eta_k} &\leq  \frac{\frac{2 \bar{r}}{\eta (1 - \gamma)}
+ \frac{8 \bar{r}^2 L^2}{(1 - \gamma)^4} \sqrt{ \log K \log \dfrac{4}{\delta} }
+ \frac{240 \bar{r}^2 L' L^2}{(1 - \gamma)^4} \eta \log K}{\sqrt{K+1}}\\
&+ \frac{12 L^2 c \bar{r}}{(1 - \gamma)^5} \rho^{\kappa + 1} + \frac{2\bar{r} LL_J C_{\bo}}{(1 - \gamma)^2}\sqrt{\frac{\log(4nM/\delta)}{T_e}} \\
    \frac{\sum_{k=0}^{K-1} \eta_k \| \nabla J(\theta(k),\hat{\cD}^M) \|^2}{\sum_{k=0}^{K-1} \eta_k} &\leq  \frac{\frac{2 \bar{r}}{\eta (1 - \gamma)}
+ \frac{8 \bar{r}^2 L^2}{(1 - \gamma)^4} \sqrt{ \log K \log \dfrac{4}{\delta} }
+ \frac{240 \bar{r}^2 L' L^2}{(1 - \gamma)^4} \eta \log K}{\sqrt{K+1}}\\
&+ \frac{12 L^2 c \bar{r}}{(1 - \gamma)^5} \rho^{\kappa + 1}.
\end{align*}
\end{corollary}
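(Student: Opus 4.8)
The statement to prove is the \textbf{Corollary (Empirical convergence)}, which replaces $\nabla J(\theta(k))$ (the gradient of the population objective $J(\theta) = \mathbb{E}_{\bo \sim \cD}[J(\theta, \bo)]$) by $\nabla J(\theta(k), \cD^M)$ (gradient against the empirical average over the $M$ \emph{true} source domain factors) and by $\nabla J(\theta(k), \hat{\cD}^M)$ (gradient against the empirical average over the $M$ \emph{estimated} domain factors). The plan is to re-run the proof of Theorem~\ref{thm:act_bd} almost verbatim, but stop the error decomposition of $\hat g(k)$ at an earlier stage, thereby dropping one or two of the terms $e^4(k), e^5(k)$ that account for the gap between the empirical and population domain distributions.

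\textbf{Plan for the $\cD^M$ bound.} Here the relevant reference gradient is $\nabla_\theta J(\theta(k), \cD^M) = \frac{1}{M}\sum_{m=1}^M \nabla_\theta J(\theta(k), \bo^m)$, so in the decomposition \eqref{eqt:grad_dec} I would write $\hat g(k) = e^1(k) + e^2(k) + e^3(k) + e^4(k) + \nabla_\theta J(\theta(k), \cD^M)$, i.e.\ the term $e^5(k)$ (the empirical-to-population deviation bounded via Lemma~\ref{lem:unif_bd}) simply does not appear, because $\cD^M$ \emph{is} the reference. The descent-lemma manipulation (Lipschitz smoothness of $\nabla J$, telescoping, $\eta_k - 3L'\eta_k^2 \ge \tfrac12\eta_k$, Azuma--Hoeffding for the martingale term $\epsilon_{k,0} = \langle \nabla J(\theta(k)), e^2(k)\rangle$) is unchanged; one must only check that $J(\theta(k),\cD^M)$ is itself $L'$-smooth and bounded in $[0, \bar r/(1-\gamma)]$, which follows because it is an average of $M$ functions each enjoying those properties under Assumptions~\ref{asp:bd_reward}--\ref{asp:ds_bd}. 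Collecting Lemma~\ref{lem:e1_bd} ($e^1$), Lemma~\ref{lem:e3_bd} ($e^3$), and \eqref{eqt:e4_bd} ($e^4$) for $\epsilon_{k,1}$, and Lemma~\ref{lem:max_bd} for $\epsilon_{k,2}$, and substituting $\sum \eta_k \gtrsim \eta\sqrt{K+1}$, $\sum \eta_k^2 \lesssim 2\eta^2\log K$, yields exactly the claimed right-hand side with the $\sqrt{1/M}$ term absent.

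\textbf{Plan for the $\hat\cD^M$ bound.} Now the reference gradient is $\nabla_\theta J(\theta(k), \hat\cD^M) = \frac1M\sum_m \nabla_\theta J(\theta(k), \hat\bo^m)$, so the decomposition truncates even earlier: $\hat g(k) = e^1(k) + e^2(k) + e^3(k) + \nabla_\theta J(\theta(k), \hat\cD^M)$, dropping \emph{both} $e^4$ and $e^5$. The key observation is that $e^1, e^2, e^3$ are all defined relative to the \emph{estimated} domain factors $\hat\bo^{m(k)}$ (inspect the definitions of $\hat g, g, h$ and $\nabla_\theta J(\theta(k), \hat\cD^M)$ — they all use $\hat\bo$), so Lemmas~\ref{lem:e1_bd}, \ref{lem:e2_bd}, \ref{lem:e3_bd} apply with no modification whatsoever. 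One must verify that $J(\theta(k), \hat\cD^M)$ is $L'$-smooth (again an average of $M$ smooth functions) and that the conditional-expectation identity $\mathbb E[g(k)\mid \mathcal G_{k-1}] = h(k)$ still holds with $\hat\bo$ fixed (it does, since $\hat\bo^{1:M}$ is $\mathcal G_0$-measurable hence $\mathcal G_{k-1}$-measurable, exactly as noted in the proof of Theorem~\ref{thm:act_bd}). Running the same telescoping argument gives the RHS with both the $\sqrt{1/T_e}$ and $\sqrt{1/M}$ terms removed.

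\textbf{Main obstacle.} The only non-bookkeeping point is confirming that the martingale structure and the smoothness/boundedness of the (now empirical) objectives survive the substitution — in particular that fixing the estimated factors $\hat\bo^{1:M}$ before the meta-loop preserves $\mathbb E[g(k)\mid\mathcal G_{k-1}] = h(k)$, and that $L'$ (the smoothness constant of $\nabla J$) still governs $\nabla J(\cdot, \cD^M)$ and $\nabla J(\cdot, \hat\cD^M)$. Both are immediate from Assumption~\ref{asp:lip_grad} applied domainwise together with Assumption~\ref{asp:ds_lip}, but they should be stated explicitly. Beyond that, the corollary is a strict weakening of Theorem~\ref{thm:act_bd} obtained by shortening the error decomposition, so no new estimates are needed.
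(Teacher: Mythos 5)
Your proposal is correct and matches the paper's intended argument: the corollary is obtained by re-running the proof of Theorem~\ref{thm:act_bd} with the decomposition in Equation~\ref{eqt:grad_dec} truncated at $\nabla_\theta J(\theta(k),\cD^M)$ (dropping $e^5$) or at $\nabla_\theta J(\theta(k),\hat{\cD}^M)$ (dropping $e^4$ and $e^5$), using the same lemmas for the remaining error terms. Your two checks — that the descent-lemma smoothness/boundedness carry over to the empirical objectives and that $\cG_0$-measurability of $\hat{\bo}^{1:M}$ preserves the martingale identity $\mathbb{E}[g(k)\mid\cG_{k-1}]=h(k)$ — are exactly the points that make this a strict weakening of the theorem, so no new estimates are needed.
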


\subsection{Proof of Theorem \ref{thm:gen_bd}}
\begin{proof}
We use the Lipschitz continuity of $J(\theta,\bo';\bo)$ with respect to $\bo'$
\begin{align*}
    J(\theta(K),\hat{\bo}^{M+1};\bo^{M+1})-J(\theta(K),\bo^{M+1};\bo^{M+1}) &\ge -L_{\bo'}\norm{\hat{\bo}^{M+1}-\bo^{M+1}} \\
    &\ge -L_{\bo'} C_{\bo}\sqrt{\frac{\log(n/\delta)}{T_a}}.
\end{align*}
It follows that 
\begin{align*}
    \bE\brk{J(\theta(K),\hat{\bo}^{M+1};\bo^{M+1})|\theta(K)} &\ge \bE\brk{J(\theta(K),\bo^{M+1})|\theta(K)}-L_{\bo'} C_{\bo}\sqrt{\frac{\log(n/\delta)}{T_a}} \\
    &=J(\theta(K)) -L_{\bo'} C_{\bo}\sqrt{\frac{\log(n/\delta)}{T_a}}.
\end{align*}
\end{proof}

\section{Comprehensive Experiments}
\label{app:exp}

We begin by clarifying the distinction between the \emph{meta-RL} and \emph{multi-task RL} paradigms to highlight the key differences between our algorithmic design and baseline methods.

\paragraph{Clarification: Meta-RL vs. Multi-task RL}
The goal of multi-task RL \cite{yu2020meta} is to learn a single task-conditioned policy that maximizes the average expected return across all source domains drawn from a domain distribution. However, multi-task RL does not provide any guarantee of rapid few-shot adaptation to new, unseen domains. In contrast, meta-RL explicitly aims to \emph{learn to adapt}: it leverages the set of source domains to train a policy that can be efficiently adapted to a new domain using only a small number of trajectories at test time.

Therefore, \texttt{GSAC} falls under the meta-RL framework. Our objective is to achieve fast adaptation in a new domain with limited data. Specifically, we train a shared, domain-factor-conditioned policy across source domains and, during adaptation, estimate the domain factors $\hat{\omega}$ from only a few trajectories to enable immediate policy deployment without additional fine-tuning.


\subsection{Wireless communication}
\label{app:wireless}
\paragraph{Environment setup.} The network consists of $n$ users positioned on a 2D grid. Each user maintains a queue of packets with a fixed deadline $d_i=2$. At every timestep, a new packet arrives at user $i$ with probability $p_i \sim \text{Unif}[0,1]$. The user either transmits the earliest packet to a randomly selected access point (AP) $y_i(t) \in Y_i$, or remains idle. A transmission to AP $y_i(t)$ succeeds with probability $q_i \sim \text{Unif}[0,1]$ if no other user simultaneously transmits to the same AP; otherwise, a collision occurs and no transmission succeeds. Each successful transmission yields a reward of 1. 

The local state of each user $\bs_i(t) = (b_{i,1}(t), \ldots, b_{i,d_i}(t), z_{i,1}(t), z_{i,2}(t))$. $b_i(t) \in \{0,1\}^{d_i}$ encodes queue status by deadline bins (e.g., $b_i(t)=(1,1,0)$ means the first two deadline bins are occupied while the third is empty). $z_{i,1}(t)$ and $z_{i,2}(t)$ are irrelevant components to decision quality. $z_{i,1}(t)$ is the channel quality indicator, which does not influence the packet success probability $q_i$, as $q_i$ depends solely on collisions. $z_{i,2}(t)$ is the grid coordinates, which adds no useful information because the connectivity graph already encodes all relevant interactions.

The action space is $\cA_i = Y_i \cup \{\texttt{null}\}$, where \texttt{null} denotes the idle action. The interaction graph is defined by overlapping access point sets: two users are neighbors if they share at least one AP. This induces local dependencies, which define each agent’s $\kappa$-hop neighborhood for state, action, and domain-factor inputs.

Figure~\ref{fig:grid} visualizes the communication graph induced by different grid sizes: $3 \times 3$, $4 \times 4$, and $5 \times 5$. Each user is placed at a grid cell and is connected to APs located at the corners of grid blocks. Users that share an AP form communication links, resulting in a structured interaction graph. This setup highlights the challenge of scalability in networked MARL, as larger grids yield significantly higher local complexity and increased neighborhood sizes. Figure~\ref{fig:neighbor} illustrates the local neighborhood (i.e., $\kappa$-hop graph connectivity) for a representative agent in each grid size. These visualizations demonstrate how local dependencies expand with increasing grid sizes. The number of neighbors within $\kappa = 1$ grows with the grid size, resulting in higher-dimensional input spaces for both state and domain-factor features. 

\paragraph{Benefits of ACR.} The ACR procedure automatically prunes such irrelevant or redundant variables, retaining only those that causally influence the local reward or dynamics. This yields a non-trivial reduction in local dimensionality. For example, consider a $4\times4$ grid with $d_i=2$ and a neighborhood size $|N_i|=5$:
\begin{itemize}[leftmargin=*]
    \item Raw (truncation-only): $|N_i|\times(d_i+2)=5\times4=20$ features. 
    \item With ACR: $|N_i|\times d_i=5\times2=10$ features.
\end{itemize}
This 50\% shrinkage directly translates into:
\begin{itemize}[leftmargin=*]
    \item Smaller critics and lower compute: reduced local state–action support and table size.
    \item Faster learning: lower variance in actor updates and tighter constants in our bounds. 
    \item Accuracy preserved: approximation error remains controlled (see Proposition \ref{prop:app_acr-v}-\ref{prop:acr-pi-main}). 
    \item Improved sample efficiency: by effectively increasing the minimum visitation $\sigma$ and reducing the mixing constant $\tau$ (Assumption \ref{asp:exploration}), ACRs tighten the constants in the convergence guarantee (Theorem \ref{thm:critic_bd}).
\end{itemize}

\paragraph{Evaluation protocol.}
We generate $M = 3$ source domains by sampling domain factors $p_i$ from $\{0.2, 0.5, 0.8\}$, while the target domain uses $p_{\text{target}} = 0.65$ unless otherwise specified. In each domain, \texttt{GSAC} is trained for $K$ outer iterations (variable depending on convergence), with a time horizon of $T = 10$. We use a constant stepsize of $\alpha = 0.1$ for the critic and $\eta = 0.01$ for the actor. The softmax temperature is set to $\tau = 0.5$, and the discount factor is $\gamma = 0.95$. For domain factor estimation, we collect $T_e = 20$ trajectories per domain. To evaluate adaptation in the target domain, we compare our algorithm against three baselines: \texttt{SAC-MTL}, \texttt{SAC-FT}, \texttt{SAC-LFS}.

\paragraph{Domain shift sensitivity.} To assess generalization under varying domain factors, we vary the target packet arrival rate with $p_{\text{target}} \in \{0.6, 0.65, 0.7\}$. As shown in Figure~\ref{fig:domain-adapt}, \texttt{GSAC} consistently adapts to new domain settings with minimal performance degradation, even when the target domain differs significantly from the training domains. Notably, it outperforms all three baselines across all configurations. Moreover, Figures~\ref{fig:grid-adapt} and~\ref{fig:domain-adapt} demonstrate that \texttt{GSAC} can rapidly adapt to new environments using only a few trajectories. 

Across all settings, \texttt{GSAC} demonstrates: (i) consistent improvement over training iterations; (ii) fast adaptation from a few trajectories; and (iii) robust performance under domain shifts. These findings support our theoretical claims on scalability and generalizability in large-scale networked MARL systems.

\begin{figure}[t]
    \centering
    \begin{subfigure}[t]{0.32\textwidth}
        \centering
        \includegraphics[width=\textwidth]{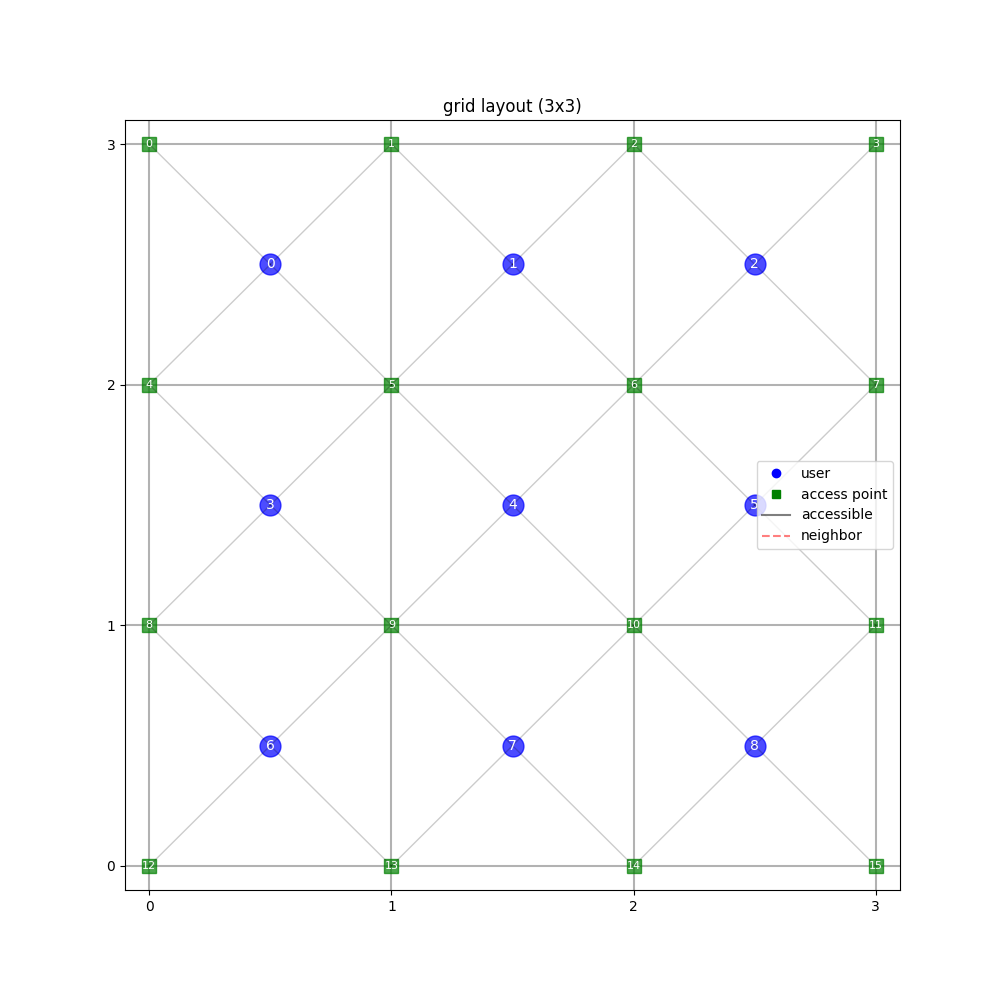}
        \caption{grid size 3}
        \label{fig:grid-gs3}
    \end{subfigure}
    \begin{subfigure}[t]{0.32\textwidth}
        \centering
        \includegraphics[width=\textwidth]{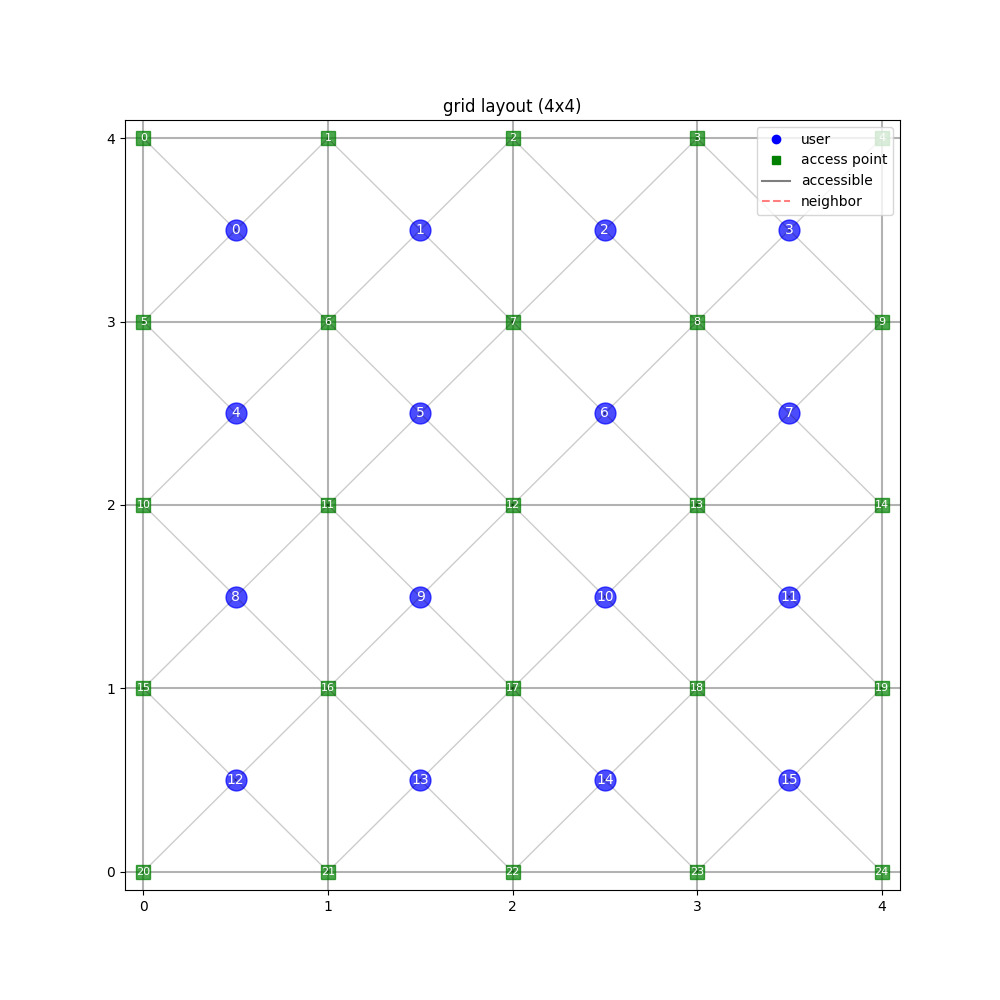}
        \caption{grid size 4}
        \label{fig:grid-gs4}
    \end{subfigure}
    \begin{subfigure}[t]{0.32\textwidth}
        \centering
        \includegraphics[width=\textwidth]{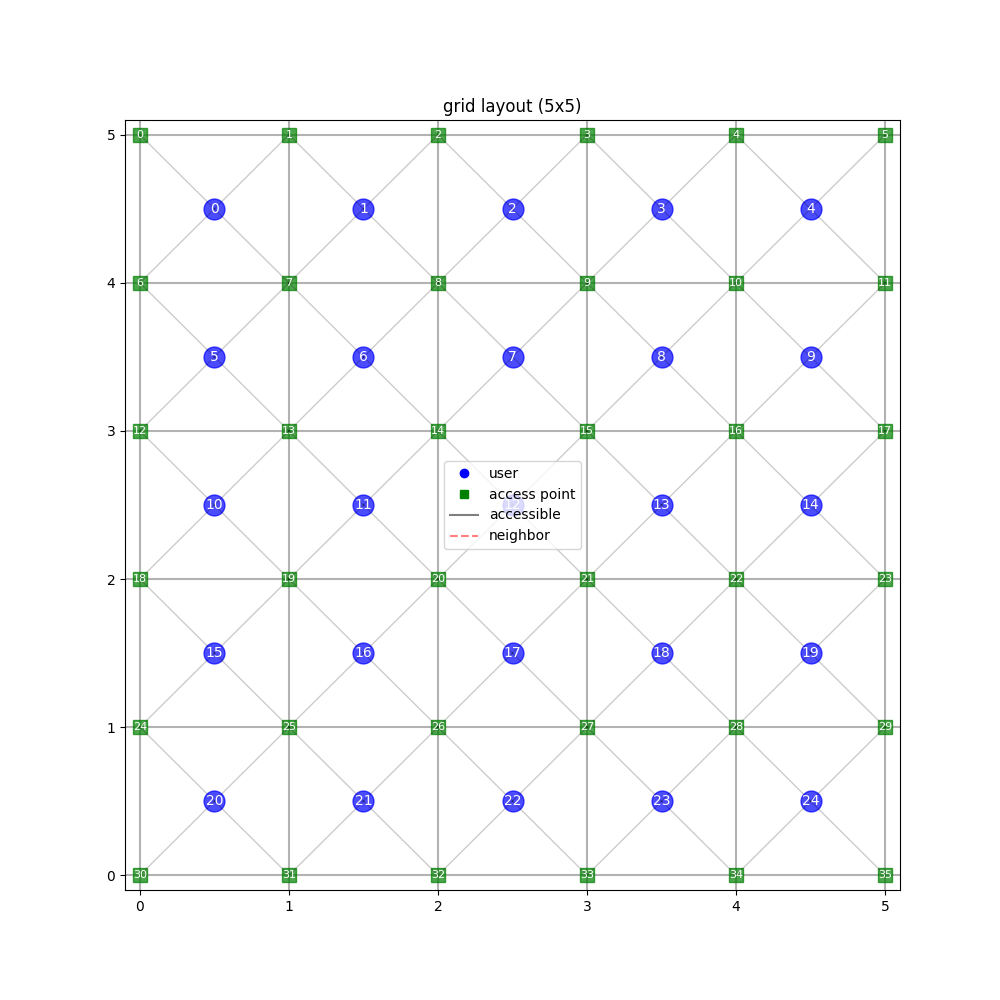}
        \caption{grid size 5}
        \label{fig:grid-gs5}
    \end{subfigure}
    \caption{Grid layout.}
    \label{fig:grid}
\end{figure}

\begin{figure}[t]
    \centering
    \begin{subfigure}[t]{0.32\textwidth}
        \centering
        \includegraphics[width=\textwidth]{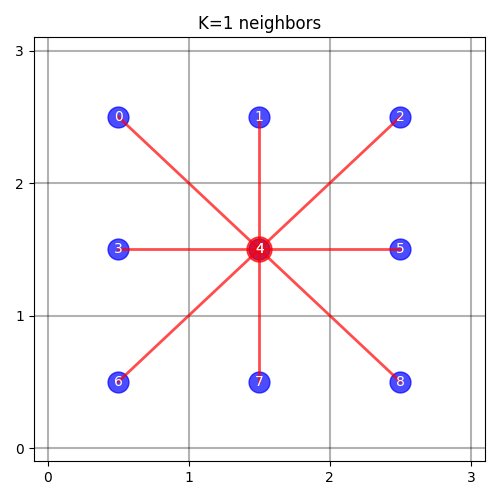}
        \caption{grid size 3}
        \label{fig:neighbor-gs3}
    \end{subfigure}
    \begin{subfigure}[t]{0.32\textwidth}
        \centering
        \includegraphics[width=\textwidth]{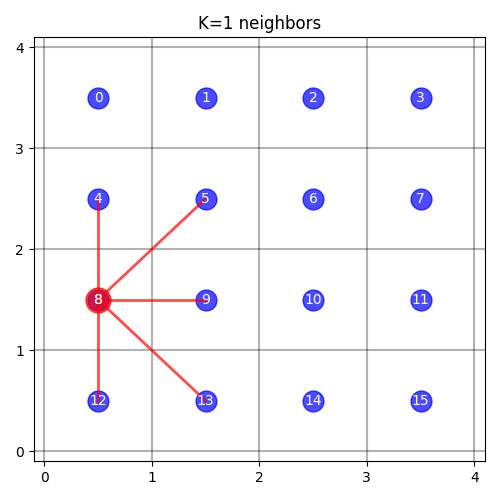}
        \caption{grid size 4}
        \label{fig:neighbor-gs4}
    \end{subfigure}
    \begin{subfigure}[t]{0.32\textwidth}
        \centering
        \includegraphics[width=\textwidth]{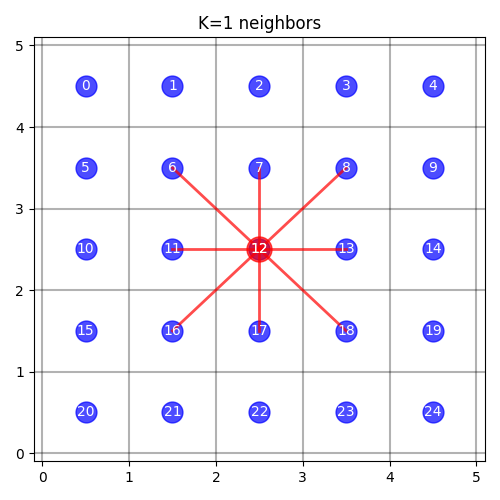}
        \caption{grid size 5}
        \label{fig:neighbor-gs5}
    \end{subfigure}
    \caption{Neighborhood illustration.}
    \label{fig:neighbor}
\end{figure}

\begin{figure}[t]
    \centering
    \begin{subfigure}[t]{0.32\textwidth}
        \centering
        \includegraphics[width=\textwidth]{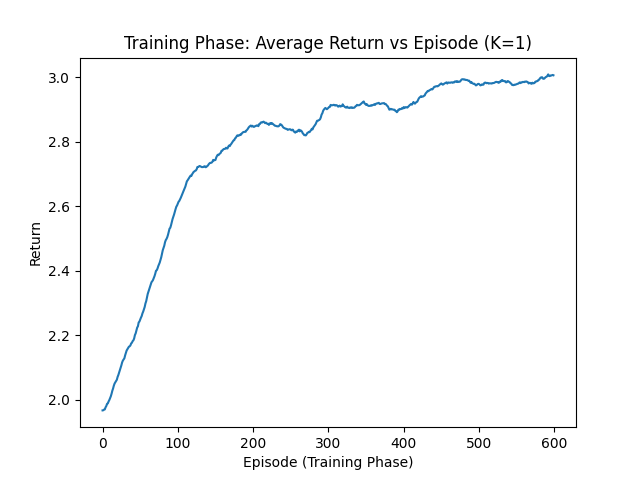}
        \caption{grid size 3}
        \label{fig:grid-traings3}
    \end{subfigure}
    \begin{subfigure}[t]{0.32\textwidth}
        \centering
        \includegraphics[width=\textwidth]{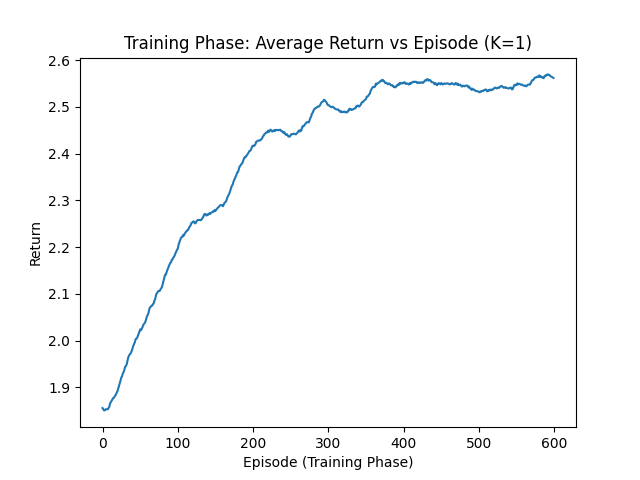}
        \caption{grid size 4}
        \label{fig:grid-traings4}
    \end{subfigure}
    \begin{subfigure}[t]{0.32\textwidth}
        \centering
        \includegraphics[width=\textwidth]{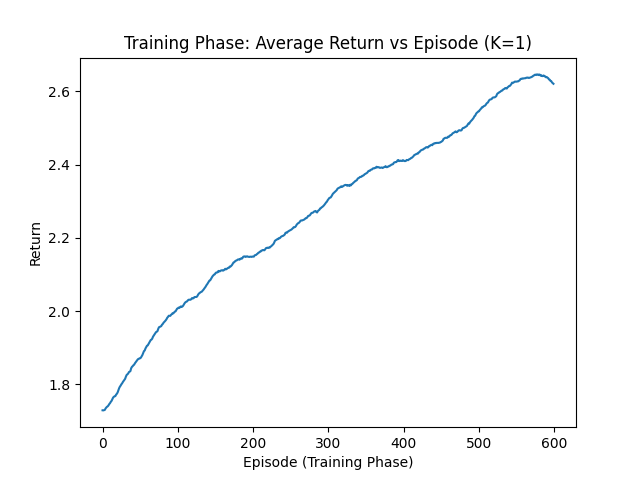}
        \caption{grid size 5}
        \label{fig:grid-traings5}
    \end{subfigure}
    \caption{GSAC Training for different grid sizes.}
    \label{fig:grid-train}
\end{figure}

\begin{figure}[t]
    \centering
    \begin{subfigure}[t]{0.32\textwidth}
        \centering
        \includegraphics[width=\textwidth]{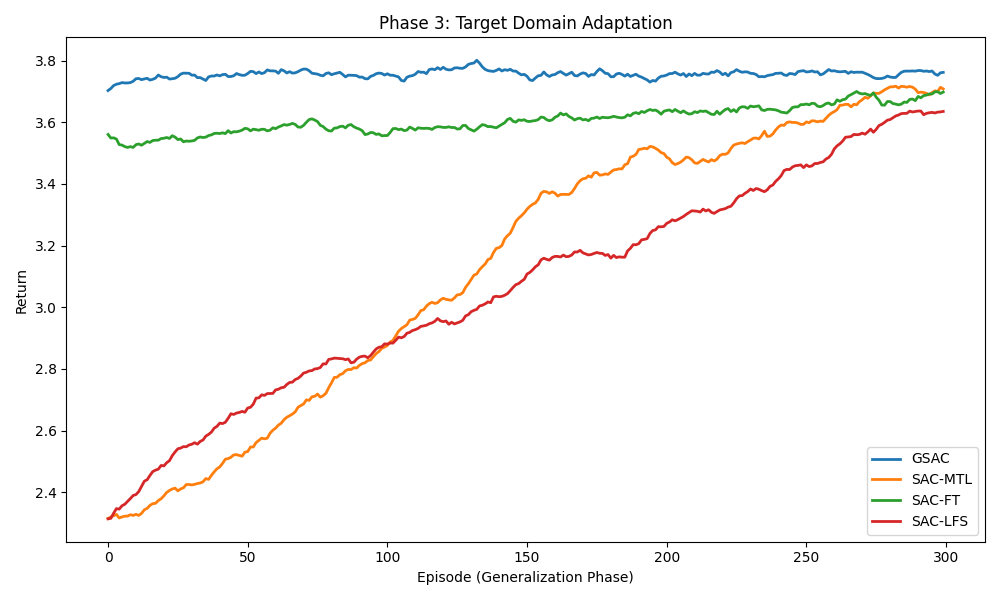}
        \caption{$p_{\text{target}}=0.6$}
        \label{fig:domain-adapt-tp0.4}
    \end{subfigure}
    \begin{subfigure}[t]{0.32\textwidth}
        \centering
        \includegraphics[width=\textwidth]{figures/p3_GS3_TP0.65.png}
        \caption{$p_{\text{target}}=0.65$}
        \label{fig:domain-adapt-tp0.6}
    \end{subfigure}
    \begin{subfigure}[t]{0.32\textwidth}
        \centering
        \includegraphics[width=\textwidth]{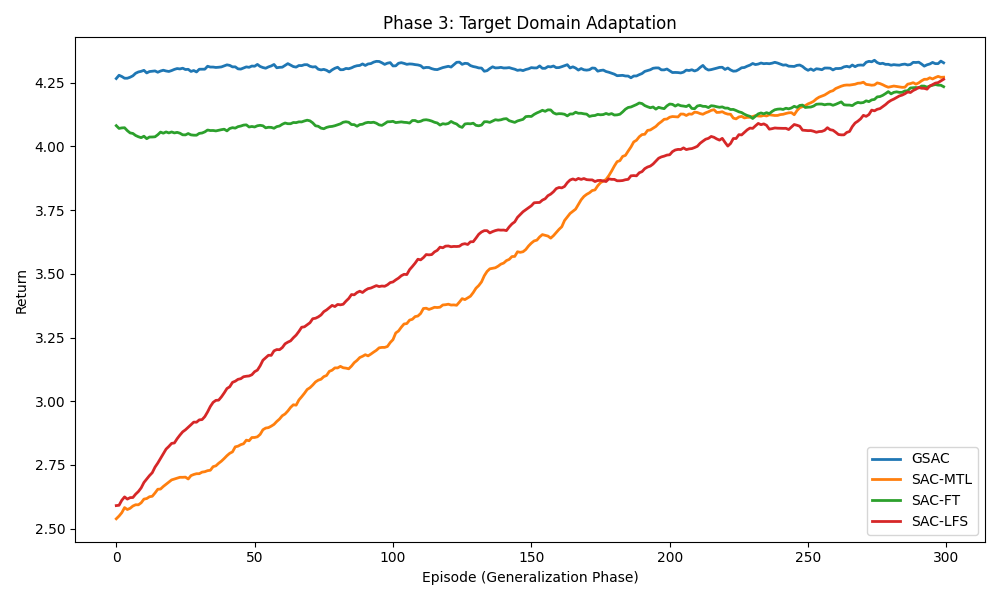}
        \caption{$p_{\text{target}}=0.7$}
        \label{fig:domain-adapt-tp0.7}
    \end{subfigure}
    \caption{Adaptation comparison for different target domains in wireless communication benchmark.}
    \label{fig:domain-adapt}
\end{figure}

\subsection{Traffic control}
\label{app:traffic}

\paragraph{Environment setup.} 
The environment consists of a network of $n$ interconnected road links forming a directed graph. Each node $i$ represents a road link, and the edges define feasible turning movements between links. The local state of link $i$ is denoted by $\bs_i(t) = (x_{ij}(t))_{j \in \mathcal{N}_i^{\text{out}}}$, where $x_{ij}(t) \in [S] = \{0,1,\dots,S\}$ represents the number of vehicles on link $i$ intending to turn to neighboring link $j$. Accordingly, the local state space is $\mathcal{S}_i = [S]^{|\mathcal{N}_i^{\text{out}}|}$. 

The local action $a_i(t) = (y_{ij}(t))_{j \in \mathcal{N}_i^{\text{out}}}$ is a binary traffic-signal tuple, where $y_{ij}(t) \in \{0,1\}$ controls whether the turn movement $(i \to j)$ is allowed at time $t$. Hence, the local action space is $\mathcal{A}_i = \{0,1\}^{|\mathcal{N}_i^{\text{out}}|}$. When $y_{ij}(t) = 1$, a random number of queued vehicles $C_{ij}(t)$ will depart link $i$ and flow into link $j$, subject to capacity and queue constraints. Meanwhile, link $i$ receives incoming flows from other connected links $k \in \mathcal{N}_i^{\text{in}}$, where a random fraction $R_{ij}(t)$ of the inflow is assigned to each downstream queue $x_{ij}(t)$. The resulting dynamics are
\begin{equation}
x_{ij}(t + 1) = [x_{ij}(t) - \min(C_{ij}(t)y_{ij}(t), x_{ij}(t))]_{0}^{S} + \sum_{k \in \mathcal{N}_i^{\text{in}}} \min(C_{k i}(t)y_{k i}(t), x_{k i}(t))R_{ij}(t),
\end{equation}
where $[x]_0^S = \max(\min(x, S), 0)$. The random variables $C_{ij}(t)$ and $R_{ij}(t)$ are drawn i.i.d. from fixed distributions, defining stochastic yet locally dependent transitions. This structure ensures that $\bs_i(t+1)$ depends only on the states and actions within link $i$’s neighborhood, consistent with the local interaction model in Section~2.1.

The local reward is defined as the negative queue length, quantifying congestion at each link:
\begin{equation}
r_i(\bs_i, a_i) = -\sum_{j \in \mathcal{N}_i^{\text{out}}} x_{ij}.
\end{equation}
Intuitively, the objective is to minimize network-wide congestion by controlling local signals. Policies such as the max-pressure controller typically rely on known traffic-flow statistics, while our learning framework operates in a model-free fashion, adapting from observed data without prior knowledge of $C_{ij}$ or $R_{ij}$ distributions.

\paragraph{Evaluation protocol.}
We evaluate performance under $M=3$ source domains, each defined by different vehicle flow intensities sampled from $\{\text{low}, \text{medium}, \text{high}\}$ levels of $C_{ij}(t)$. The target domain adopts an intermediate traffic level unless otherwise stated. Each experiment runs for $T = 50$ timesteps, with discount factor $\gamma = 0.95$ and learning rates $\alpha = 0.1$ for the critic and $\eta = 0.01$ for the actor. For domain factor estimation, $T_e = 20$ trajectories are collected per domain. 

To assess adaptation, we compare \texttt{GSAC} against a \texttt{SAC-LFS} baseline trained directly in the target domain. We also vary the network topology by adjusting the number of intersections $n \in \{9, 16, 25\}$, corresponding to $3\times3$, $4\times4$, and $5\times5$ grid layouts. As is shown in Figure \ref{fig:tc_grid-adapt}, \texttt{GSAC} achieves faster adaptation, lower queue lengths, and improved stability across all configurations.

\begin{figure}[t]
    \centering
    \begin{subfigure}[t]{0.32\textwidth}
        \centering
        \includegraphics[width=\textwidth]{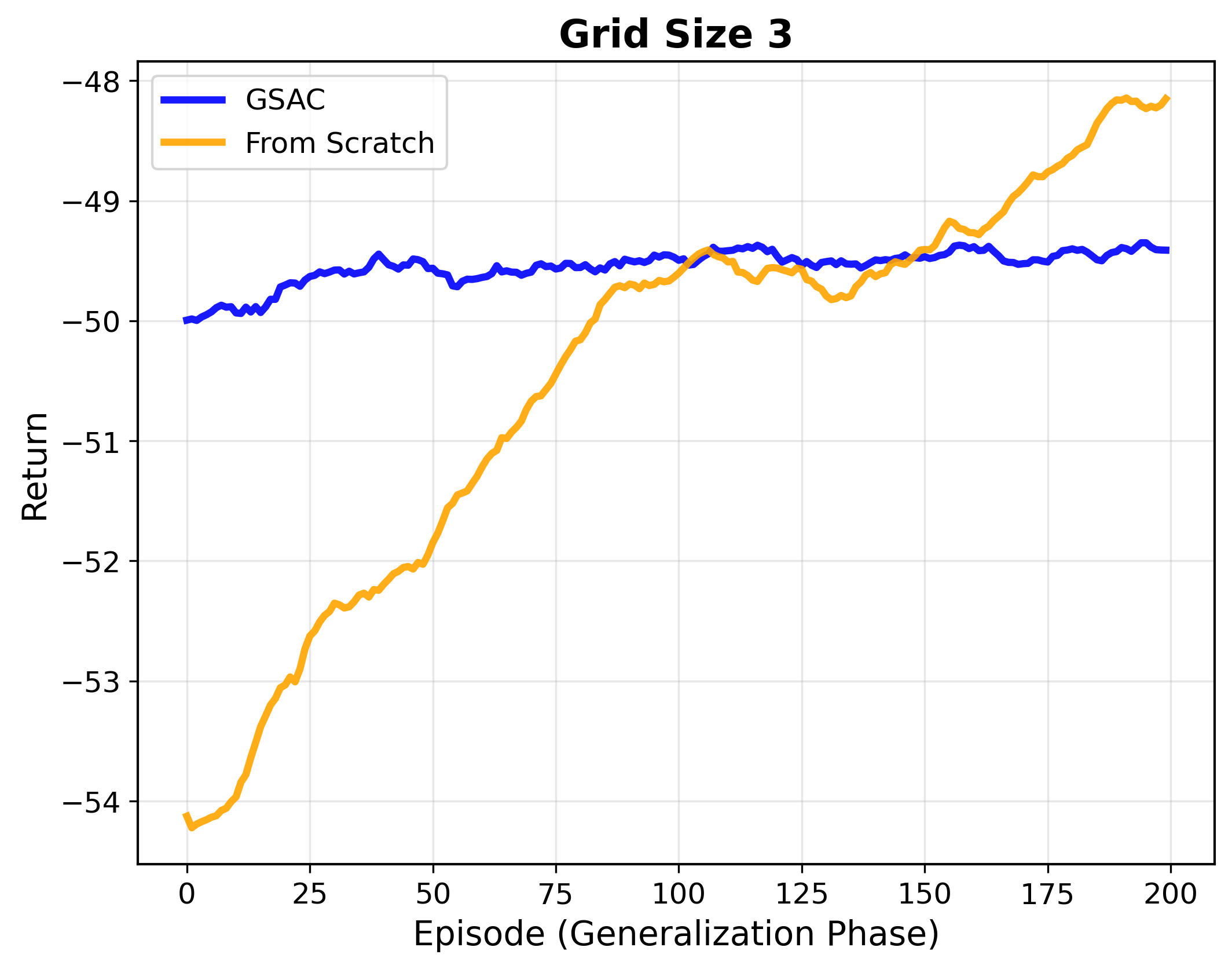}
        \caption{grid size 3}
        \label{fig:grid-adapt-gs3}
    \end{subfigure}
    \begin{subfigure}[t]{0.32\textwidth}
        \centering
        \includegraphics[width=\textwidth]{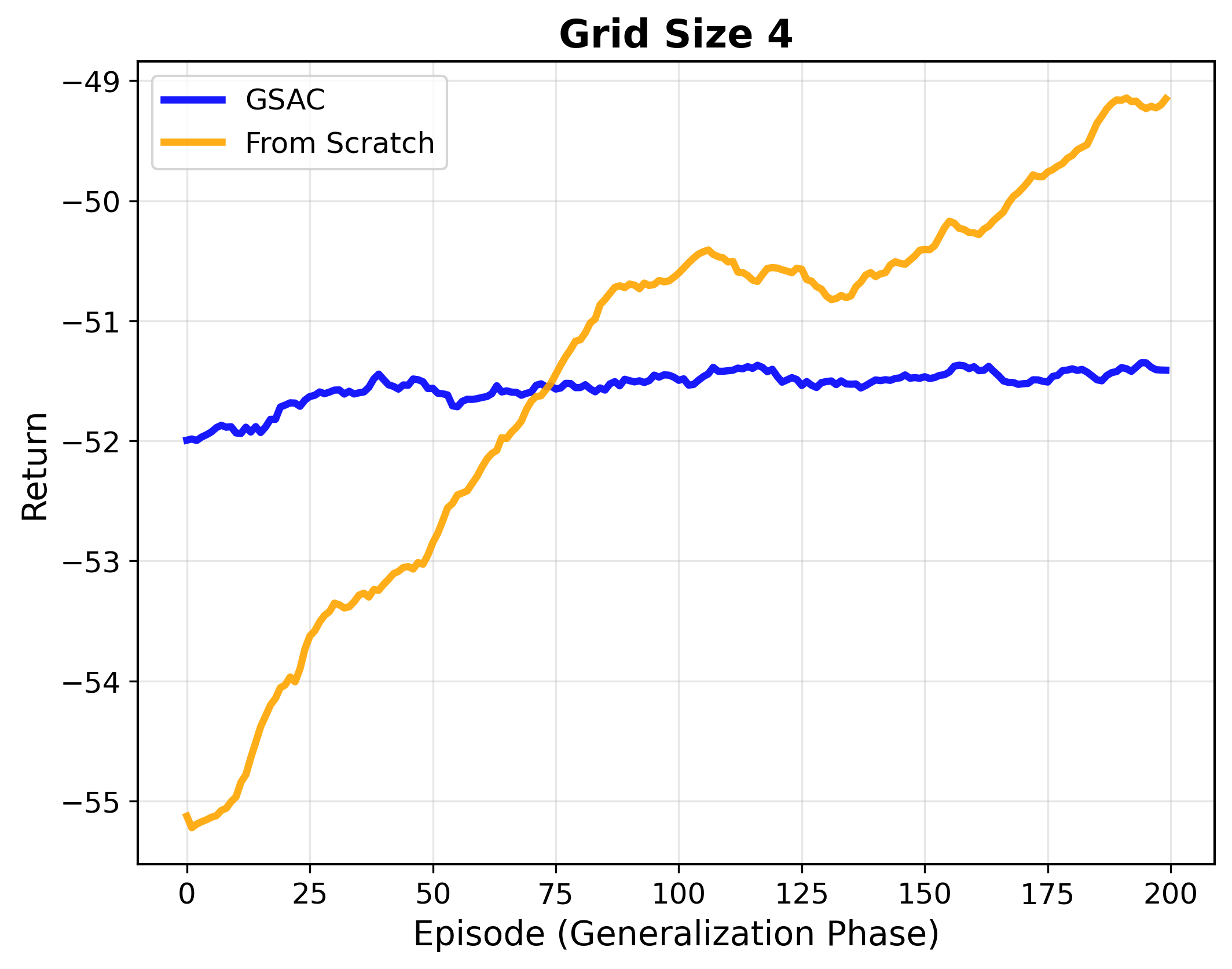}
        \caption{grid size 4}
        \label{fig:grid-adapt-gs4}
    \end{subfigure}
    \begin{subfigure}[t]{0.32\textwidth}
        \centering
        \includegraphics[width=\textwidth]{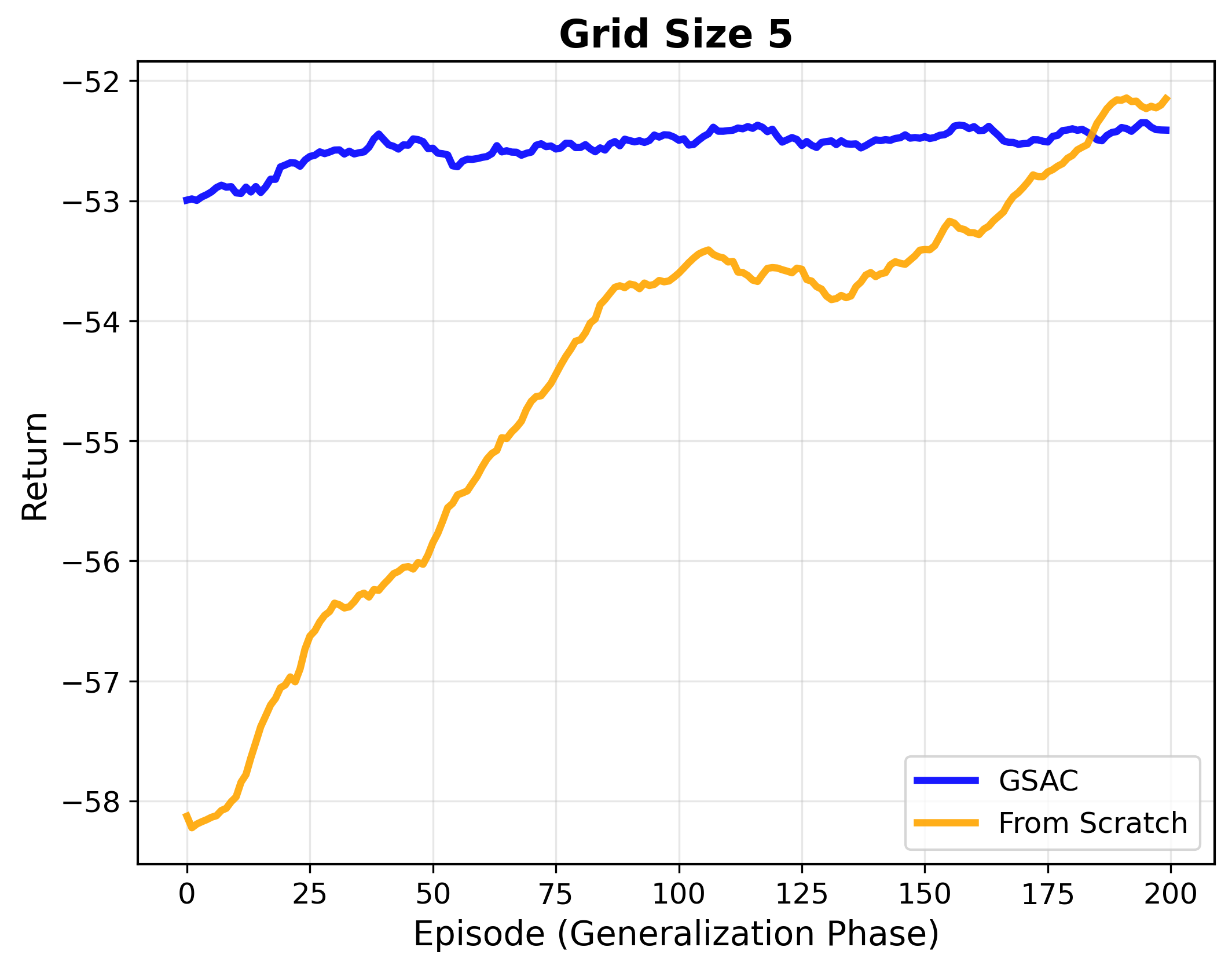}
        \caption{grid size 5}
        \label{fig:grid-adapt-gs5}
    \end{subfigure}
    \caption{Adaptation comparison for different grid sizes in traffic control benchmark.}
    \label{fig:tc_grid-adapt}
\end{figure}

\begin{figure}[t]
    \centering
    \begin{subfigure}[t]{0.32\textwidth}
        \centering
        \includegraphics[width=\textwidth]{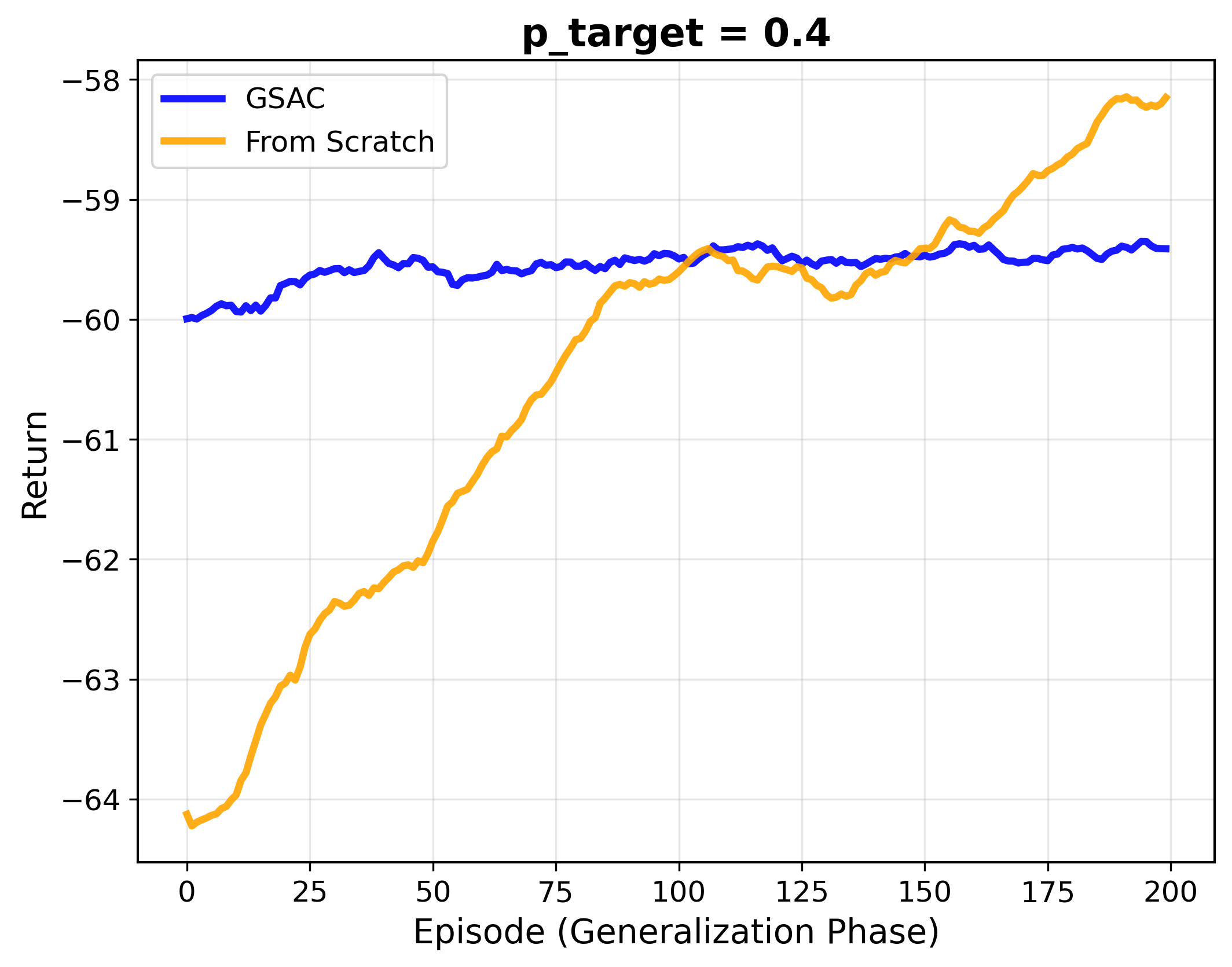}
        \caption{$p_{\text{target}}=0.4$}
        \label{fig:domain-adapt-tp0.4}
    \end{subfigure}
    \begin{subfigure}[t]{0.32\textwidth}
        \centering
        \includegraphics[width=\textwidth]{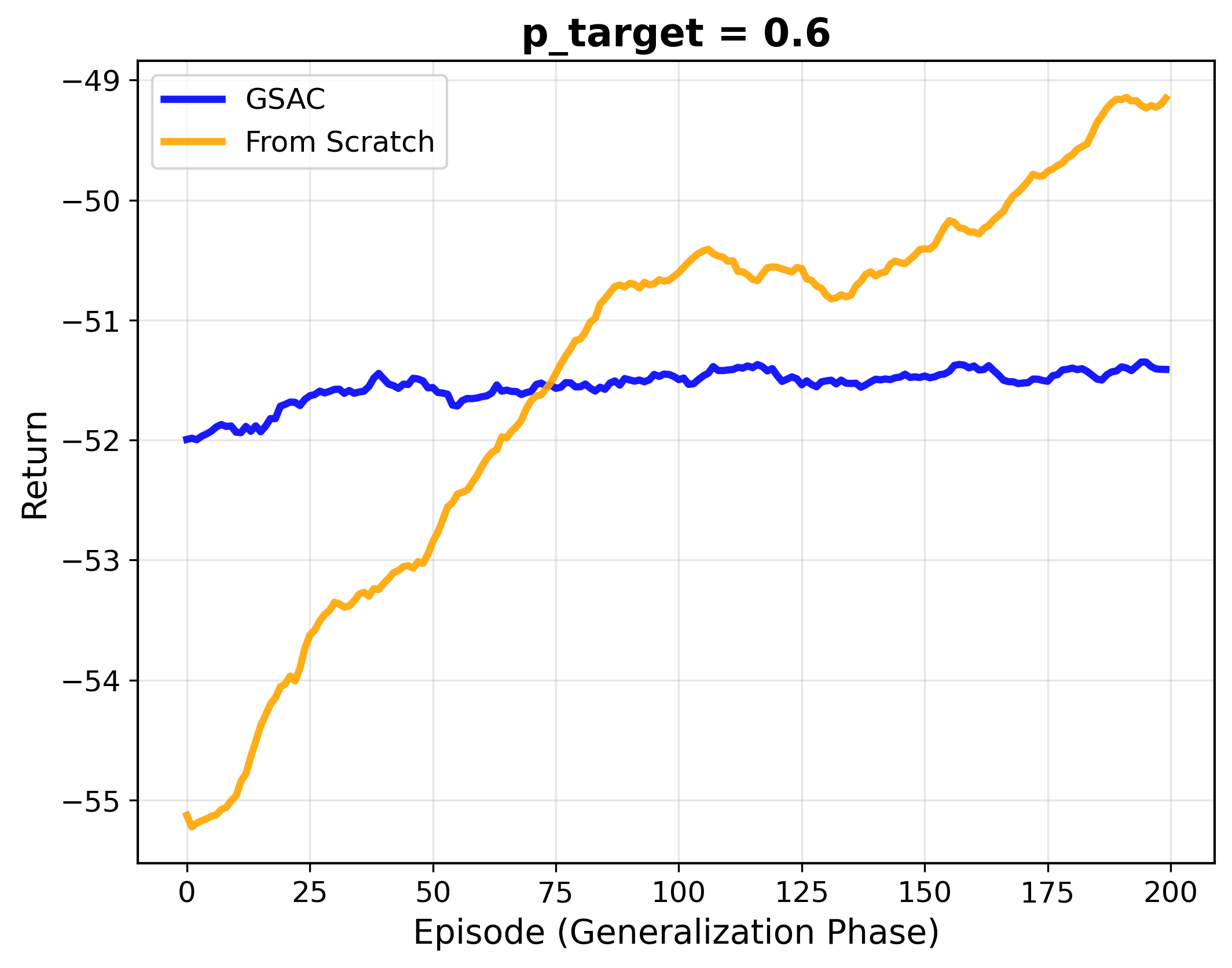}
        \caption{$p_{\text{target}}=0.6$}
        \label{fig:domain-adapt-tp0.6}
    \end{subfigure}
    \begin{subfigure}[t]{0.32\textwidth}
        \centering
        \includegraphics[width=\textwidth]{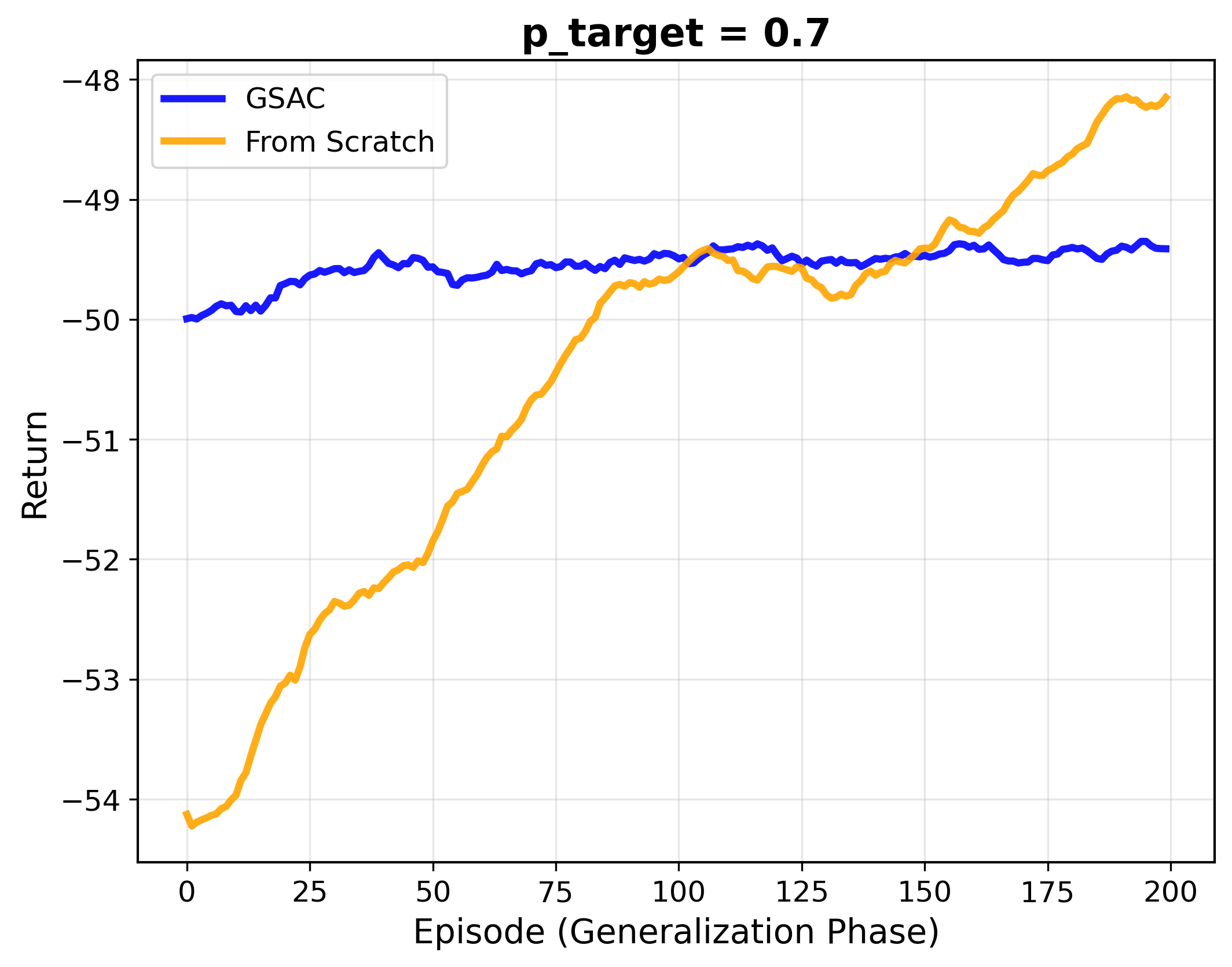}
        \caption{$p_{\text{target}}=0.7$}
        \label{fig:domain-adapt-tp0.7}
    \end{subfigure}
    \caption{Adaptation comparison for different target domains in traffic control benchmark..}
    \label{fig:tc_domain-adapt}
\end{figure}

\paragraph{Domain shift sensitivity.}
To analyze robustness, we test target domains with altered inflow distributions, varying $C_{ij}$ by $\pm 20\%$ relative to training domains. As is shown in Figure \ref{fig:tc_domain-adapt}, \texttt{GSAC} consistently adapts to these changes with minimal degradation in performance, whereas \texttt{LFS} requires substantially more data to achieve comparable results. These findings confirm the scalability and generalizability of our model-free learning framework in stochastic, networked traffic environments.

\newpage
\section*{NeurIPS Paper Checklist}

\begin{enumerate}

\item {\bf Claims}
    \item[] Question: Do the main claims made in the abstract and introduction accurately reflect the paper's contributions and scope?
    \item[] Answer: \answerYes{} 
    \item[] Justification: The abstract and introduction clearly state the contributions of GSAC—causal discovery, meta actor-critic learning, scalability, generalizability, and theoretical guarantees—which are consistently supported by the theoretical and empirical results (see Sections 1, 3, 4, and 6).
    \item[] Guidelines:
    \begin{itemize}
        \item The answer NA means that the abstract and introduction do not include the claims made in the paper.
        \item The abstract and/or introduction should clearly state the claims made, including the contributions made in the paper and important assumptions and limitations. A No or NA answer to this question will not be perceived well by the reviewers. 
        \item The claims made should match theoretical and experimental results, and reflect how much the results can be expected to generalize to other settings. 
        \item It is fine to include aspirational goals as motivation as long as it is clear that these goals are not attained by the paper. 
    \end{itemize}

\item {\bf Limitations}
    \item[] Question: Does the paper discuss the limitations of the work performed by the authors?
    \item[] Answer: \answerYes{} 
    \item[] Justification: Limitations such as dependence on causal identifiability, structural assumptions (e.g., bounded degree), and evaluation on a single benchmark are discussed in the main text (Section 5, 6.1) and can be inferred from the stated assumptions and settings.
    \item[] Guidelines:
    \begin{itemize}
        \item The answer NA means that the paper has no limitation while the answer No means that the paper has limitations, but those are not discussed in the paper. 
        \item The authors are encouraged to create a separate "Limitations" section in their paper.
        \item The paper should point out any strong assumptions and how robust the results are to violations of these assumptions (e.g., independence assumptions, noiseless settings, model well-specification, asymptotic approximations only holding locally). The authors should reflect on how these assumptions might be violated in practice and what the implications would be.
        \item The authors should reflect on the scope of the claims made, e.g., if the approach was only tested on a few datasets or with a few runs. In general, empirical results often depend on implicit assumptions, which should be articulated.
        \item The authors should reflect on the factors that influence the performance of the approach. For example, a facial recognition algorithm may perform poorly when image resolution is low or images are taken in low lighting. Or a speech-to-text system might not be used reliably to provide closed captions for online lectures because it fails to handle technical jargon.
        \item The authors should discuss the computational efficiency of the proposed algorithms and how they scale with dataset size.
        \item If applicable, the authors should discuss possible limitations of their approach to address problems of privacy and fairness.
        \item While the authors might fear that complete honesty about limitations might be used by reviewers as grounds for rejection, a worse outcome might be that reviewers discover limitations that aren't acknowledged in the paper. The authors should use their best judgment and recognize that individual actions in favor of transparency play an important role in developing norms that preserve the integrity of the community. Reviewers will be specifically instructed to not penalize honesty concerning limitations.
    \end{itemize}

\item {\bf Theory Assumptions and Proofs}
    \item[] Question: For each theoretical result, does the paper provide the full set of assumptions and a complete (and correct) proof?
    \item[] Answer: \answerYes{} 
    \item[] Justification: All assumptions for theoretical results are explicitly stated in Section 6.1, and detailed proofs are provided in the appendices (Appendix D–F), with theorems referenced and numbered properly.
    \item[] Guidelines:
    \begin{itemize}
        \item The answer NA means that the paper does not include theoretical results. 
        \item All the theorems, formulas, and proofs in the paper should be numbered and cross-referenced.
        \item All assumptions should be clearly stated or referenced in the statement of any theorems.
        \item The proofs can either appear in the main paper or the supplemental material, but if they appear in the supplemental material, the authors are encouraged to provide a short proof sketch to provide intuition. 
        \item Inversely, any informal proof provided in the core of the paper should be complemented by formal proofs provided in appendix or supplemental material.
        \item Theorems and Lemmas that the proof relies upon should be properly referenced. 
    \end{itemize}

    \item {\bf Experimental Result Reproducibility}
    \item[] Question: Does the paper fully disclose all the information needed to reproduce the main experimental results of the paper to the extent that it affects the main claims and/or conclusions of the paper (regardless of whether the code and data are provided or not)?
    \item[] Answer: \answerYes{} 
    \item[] Justification: The paper describes the simulation environment, domain settings, training/testing procedures, and evaluation metrics (Section 7 and Appendix G), allowing others to reproduce the key experiments.
    \item[] Guidelines:
    \begin{itemize}
        \item The answer NA means that the paper does not include experiments.
        \item If the paper includes experiments, a No answer to this question will not be perceived well by the reviewers: Making the paper reproducible is important, regardless of whether the code and data are provided or not.
        \item If the contribution is a dataset and/or model, the authors should describe the steps taken to make their results reproducible or verifiable. 
        \item Depending on the contribution, reproducibility can be accomplished in various ways. For example, if the contribution is a novel architecture, describing the architecture fully might suffice, or if the contribution is a specific model and empirical evaluation, it may be necessary to either make it possible for others to replicate the model with the same dataset, or provide access to the model. In general. releasing code and data is often one good way to accomplish this, but reproducibility can also be provided via detailed instructions for how to replicate the results, access to a hosted model (e.g., in the case of a large language model), releasing of a model checkpoint, or other means that are appropriate to the research performed.
        \item While NeurIPS does not require releasing code, the conference does require all submissions to provide some reasonable avenue for reproducibility, which may depend on the nature of the contribution. For example
        \begin{enumerate}
            \item If the contribution is primarily a new algorithm, the paper should make it clear how to reproduce that algorithm.
            \item If the contribution is primarily a new model architecture, the paper should describe the architecture clearly and fully.
            \item If the contribution is a new model (e.g., a large language model), then there should either be a way to access this model for reproducing the results or a way to reproduce the model (e.g., with an open-source dataset or instructions for how to construct the dataset).
            \item We recognize that reproducibility may be tricky in some cases, in which case authors are welcome to describe the particular way they provide for reproducibility. In the case of closed-source models, it may be that access to the model is limited in some way (e.g., to registered users), but it should be possible for other researchers to have some path to reproducing or verifying the results.
        \end{enumerate}
    \end{itemize}

\item {\bf Open access to data and code}
    \item[] Question: Does the paper provide open access to the data and code, with sufficient instructions to faithfully reproduce the main experimental results, as described in supplemental material?
    \item[] Answer: \answerNo{} 
    \item[] Justification: The code and data are not currently provided, but we intend to release them with detailed instructions in the camera-ready version.
    \item[] Guidelines:
    \begin{itemize}
        \item The answer NA means that paper does not include experiments requiring code.
        \item Please see the NeurIPS code and data submission guidelines (\url{https://nips.cc/public/guides/CodeSubmissionPolicy}) for more details.
        \item While we encourage the release of code and data, we understand that this might not be possible, so “No” is an acceptable answer. Papers cannot be rejected simply for not including code, unless this is central to the contribution (e.g., for a new open-source benchmark).
        \item The instructions should contain the exact command and environment needed to run to reproduce the results. See the NeurIPS code and data submission guidelines (\url{https://nips.cc/public/guides/CodeSubmissionPolicy}) for more details.
        \item The authors should provide instructions on data access and preparation, including how to access the raw data, preprocessed data, intermediate data, and generated data, etc.
        \item The authors should provide scripts to reproduce all experimental results for the new proposed method and baselines. If only a subset of experiments are reproducible, they should state which ones are omitted from the script and why.
        \item At submission time, to preserve anonymity, the authors should release anonymized versions (if applicable).
        \item Providing as much information as possible in supplemental material (appended to the paper) is recommended, but including URLs to data and code is permitted.
    \end{itemize}

\item {\bf Experimental Setting/Details}
    \item[] Question: Does the paper specify all the training and test details (e.g., data splits, hyperparameters, how they were chosen, type of optimizer, etc.) necessary to understand the results?
    \item[] Answer: \answerYes{} 
    \item[] Justification: Section 7 and Appendix G provide detailed descriptions of the experimental settings, including training iterations, learning rates, adaptation horizons, and domain generation.
    \item[] Guidelines:
    \begin{itemize}
        \item The answer NA means that the paper does not include experiments.
        \item The experimental setting should be presented in the core of the paper to a level of detail that is necessary to appreciate the results and make sense of them.
        \item The full details can be provided either with the code, in appendix, or as supplemental material.
    \end{itemize}

\item {\bf Experiment Statistical Significance}
    \item[] Question: Does the paper report error bars suitably and correctly defined or other appropriate information about the statistical significance of the experiments?
    \item[] Answer: \answerNo{} 
    \item[] Justification: While performance plots are provided (Figure 1), statistical error bars or variance across multiple runs are not reported due to limited computational budget.
    \item[] Guidelines:
    \begin{itemize}
        \item The answer NA means that the paper does not include experiments.
        \item The authors should answer "Yes" if the results are accompanied by error bars, confidence intervals, or statistical significance tests, at least for the experiments that support the main claims of the paper.
        \item The factors of variability that the error bars are capturing should be clearly stated (for example, train/test split, initialization, random drawing of some parameter, or overall run with given experimental conditions).
        \item The method for calculating the error bars should be explained (closed form formula, call to a library function, bootstrap, etc.)
        \item The assumptions made should be given (e.g., Normally distributed errors).
        \item It should be clear whether the error bar is the standard deviation or the standard error of the mean.
        \item It is OK to report 1-sigma error bars, but one should state it. The authors should preferably report a 2-sigma error bar than state that they have a 96\% CI, if the hypothesis of Normality of errors is not verified.
        \item For asymmetric distributions, the authors should be careful not to show in tables or figures symmetric error bars that would yield results that are out of range (e.g. negative error rates).
        \item If error bars are reported in tables or plots, The authors should explain in the text how they were calculated and reference the corresponding figures or tables in the text.
    \end{itemize}

\item {\bf Experiments Compute Resources}
    \item[] Question: For each experiment, does the paper provide sufficient information on the computer resources (type of compute workers, memory, time of execution) needed to reproduce the experiments?
    \item[] Answer:  \answerNo{} 
    \item[] Justification: Compute details such as hardware type and execution time are not explicitly mentioned in the current draft and will be added in the supplemental material.
    \item[] Guidelines:
    \begin{itemize}
        \item The answer NA means that the paper does not include experiments.
        \item The paper should indicate the type of compute workers CPU or GPU, internal cluster, or cloud provider, including relevant memory and storage.
        \item The paper should provide the amount of compute required for each of the individual experimental runs as well as estimate the total compute. 
        \item The paper should disclose whether the full research project required more compute than the experiments reported in the paper (e.g., preliminary or failed experiments that didn't make it into the paper). 
    \end{itemize}
    
\item {\bf Code Of Ethics}
    \item[] Question: Does the research conducted in the paper conform, in every respect, with the NeurIPS Code of Ethics \url{https://neurips.cc/public/EthicsGuidelines}?
    \item[] Answer: \answerYes{} 
    \item[] Justification: The work complies with the NeurIPS Code of Ethics. It does not involve sensitive data, human subjects, or applications with foreseeable risks.
    \item[] Guidelines:
    \begin{itemize}
        \item The answer NA means that the authors have not reviewed the NeurIPS Code of Ethics.
        \item If the authors answer No, they should explain the special circumstances that require a deviation from the Code of Ethics.
        \item The authors should make sure to preserve anonymity (e.g., if there is a special consideration due to laws or regulations in their jurisdiction).
    \end{itemize}

\item {\bf Broader Impacts}
    \item[] Question: Does the paper discuss both potential positive societal impacts and negative societal impacts of the work performed?
    \item[] Answer: \answerYes{} 
    \item[] Justification: Section 8 briefly discusses applications to infrastructure and autonomous systems. While primarily theoretical, the methods could improve safety and efficiency, though care should be taken when deployed in critical systems.
    \item[] Guidelines:
    \begin{itemize}
        \item The answer NA means that there is no societal impact of the work performed.
        \item If the authors answer NA or No, they should explain why their work has no societal impact or why the paper does not address societal impact.
        \item Examples of negative societal impacts include potential malicious or unintended uses (e.g., disinformation, generating fake profiles, surveillance), fairness considerations (e.g., deployment of technologies that could make decisions that unfairly impact specific groups), privacy considerations, and security considerations.
        \item The conference expects that many papers will be foundational research and not tied to particular applications, let alone deployments. However, if there is a direct path to any negative applications, the authors should point it out. For example, it is legitimate to point out that an improvement in the quality of generative models could be used to generate deepfakes for disinformation. On the other hand, it is not needed to point out that a generic algorithm for optimizing neural networks could enable people to train models that generate Deepfakes faster.
        \item The authors should consider possible harms that could arise when the technology is being used as intended and functioning correctly, harms that could arise when the technology is being used as intended but gives incorrect results, and harms following from (intentional or unintentional) misuse of the technology.
        \item If there are negative societal impacts, the authors could also discuss possible mitigation strategies (e.g., gated release of models, providing defenses in addition to attacks, mechanisms for monitoring misuse, mechanisms to monitor how a system learns from feedback over time, improving the efficiency and accessibility of ML).
    \end{itemize}
    
\item {\bf Safeguards}
    \item[] Question: Does the paper describe safeguards that have been put in place for responsible release of data or models that have a high risk for misuse (e.g., pretrained language models, image generators, or scraped datasets)?
    \item[] Answer: \answerNA{} 
    \item[] Justification: The paper does not release models or datasets with high risk for misuse.
    \item[] Guidelines:
    \begin{itemize}
        \item The answer NA means that the paper poses no such risks.
        \item Released models that have a high risk for misuse or dual-use should be released with necessary safeguards to allow for controlled use of the model, for example by requiring that users adhere to usage guidelines or restrictions to access the model or implementing safety filters. 
        \item Datasets that have been scraped from the Internet could pose safety risks. The authors should describe how they avoided releasing unsafe images.
        \item We recognize that providing effective safeguards is challenging, and many papers do not require this, but we encourage authors to take this into account and make a best faith effort.
    \end{itemize}

\item {\bf Licenses for existing assets}
    \item[] Question: Are the creators or original owners of assets (e.g., code, data, models), used in the paper, properly credited and are the license and terms of use explicitly mentioned and properly respected?
    \item[] Answer: \answerYes{} 
    \item[] Justification: The wireless benchmark from Qu et al. (2022) is used and cited appropriately (Section 7, References).
    \item[] Guidelines:
    \begin{itemize}
        \item The answer NA means that the paper does not use existing assets.
        \item The authors should cite the original paper that produced the code package or dataset.
        \item The authors should state which version of the asset is used and, if possible, include a URL.
        \item The name of the license (e.g., CC-BY 4.0) should be included for each asset.
        \item For scraped data from a particular source (e.g., website), the copyright and terms of service of that source should be provided.
        \item If assets are released, the license, copyright information, and terms of use in the package should be provided. For popular datasets, \url{paperswithcode.com/datasets} has curated licenses for some datasets. Their licensing guide can help determine the license of a dataset.
        \item For existing datasets that are re-packaged, both the original license and the license of the derived asset (if it has changed) should be provided.
        \item If this information is not available online, the authors are encouraged to reach out to the asset's creators.
    \end{itemize}

\item {\bf New Assets}
    \item[] Question: Are new assets introduced in the paper well documented and is the documentation provided alongside the assets?
    \item[] Answer: \answerNA{} 
    \item[] Justification: No new datasets or models are released at submission time.
    \item[] Guidelines:
    \begin{itemize}
        \item The answer NA means that the paper does not release new assets.
        \item Researchers should communicate the details of the dataset/code/model as part of their submissions via structured templates. This includes details about training, license, limitations, etc. 
        \item The paper should discuss whether and how consent was obtained from people whose asset is used.
        \item At submission time, remember to anonymize your assets (if applicable). You can either create an anonymized URL or include an anonymized zip file.
    \end{itemize}

\item {\bf Crowdsourcing and Research with Human Subjects}
    \item[] Question: For crowdsourcing experiments and research with human subjects, does the paper include the full text of instructions given to participants and screenshots, if applicable, as well as details about compensation (if any)? 
    \item[] Answer: \answerNA{} 
    \item[] Justification: The paper does not involve human subjects or crowdsourcing.
    \item[] Guidelines:
    \begin{itemize}
        \item The answer NA means that the paper does not involve crowdsourcing nor research with human subjects.
        \item Including this information in the supplemental material is fine, but if the main contribution of the paper involves human subjects, then as much detail as possible should be included in the main paper. 
        \item According to the NeurIPS Code of Ethics, workers involved in data collection, curation, or other labor should be paid at least the minimum wage in the country of the data collector. 
    \end{itemize}

\item {\bf Institutional Review Board (IRB) Approvals or Equivalent for Research with Human Subjects}
    \item[] Question: Does the paper describe potential risks incurred by study participants, whether such risks were disclosed to the subjects, and whether Institutional Review Board (IRB) approvals (or an equivalent approval/review based on the requirements of your country or institution) were obtained?
    \item[] Answer: \answerNA{} 
    \item[] Justification: The paper does not involve human subjects or require IRB approval.
    \item[] Guidelines:
    \begin{itemize}
        \item The answer NA means that the paper does not involve crowdsourcing nor research with human subjects.
        \item Depending on the country in which research is conducted, IRB approval (or equivalent) may be required for any human subjects research. If you obtained IRB approval, you should clearly state this in the paper. 
        \item We recognize that the procedures for this may vary significantly between institutions and locations, and we expect authors to adhere to the NeurIPS Code of Ethics and the guidelines for their institution. 
        \item For initial submissions, do not include any information that would break anonymity (if applicable), such as the institution conducting the review.
    \end{itemize}

\item {\bf Declaration of LLM usage}
    \item[] Question: Does the paper describe the usage of LLMs if it is an important, original, or non-standard component of the core methods in this research? Note that if the LLM is used only for writing, editing, or formatting purposes and does not impact the core methodology, scientific rigorousness, or originality of the research, declaration is not required.
\item[] Answer: \answerNA{}
\item[] Justification: The methodology and contributions do not involve large language models (LLMs).
    \item[] Guidelines:
    \begin{itemize}
        \item The answer NA means that the core method development in this research does not involve LLMs as any important, original, or non-standard components.
        \item Please refer to our LLM policy (\url{https://neurips.cc/Conferences/2025/LLM}) for what should or should not be described.
    \end{itemize}

\end{enumerate}

\end{document}